\pdfoutput=1

\documentclass{article}

\usepackage{arxiv}
\usepackage{natbib}

\usepackage{amsmath,amssymb,amsfonts,amsthm,mathtools}

\usepackage[utf8]{inputenc}
\usepackage[T1]{fontenc}

\usepackage[hidelinks]{hyperref}
\usepackage{url}
\usepackage{booktabs}
\usepackage{nicefrac}
\usepackage{microtype}
\usepackage{xcolor}
\usepackage{graphicx}
\usepackage{subcaption}
\usepackage{multirow}
\usepackage{algorithm}
\usepackage{algorithmic}

\usepackage[capitalize,noabbrev]{cleveref}

\theoremstyle{plain}
\newtheorem{theorem}{Theorem}
\newtheorem{proposition}{Proposition}
\newtheorem{lemma}{Lemma}
\newtheorem{corollary}{Corollary}
\theoremstyle{definition}
\newtheorem{definition}{Definition}
\newtheorem{assumption}{Assumption}
\theoremstyle{remark}
\newtheorem{remark}{Remark}

\theoremstyle{plain}
\newtheorem*{theoremrestate}{Theorem}
\newtheorem*{propositionrestate}{Proposition}

\crefname{assumption}{Assumption}{Assumptions}
\Crefname{assumption}{Assumption}{Assumptions}

\newcommand{\E}{\mathbb{E}}
\newcommand{\Prob}{\mathbb{P}}
\newcommand{\Var}{\mathrm{Var}}
\DeclareMathOperator{\KL}{KL}
\DeclareMathOperator{\TV}{TV}

\DeclareMathOperator{\clip}{clip}
\newcommand{\Bern}{\mathrm{Bern}}

\newcommand{\cX}{\mathcal{X}}
\newcommand{\cA}{\mathcal{A}}
\newcommand{\cK}{\mathcal{K}}
\newcommand{\cW}{\mathcal{W}}
\newcommand{\cF}{\mathcal{F}}

\newcommand{\Sacc}{S_\lambda}
\newcommand{\RQ}{R_Q}
\newcommand{\CQ}{C_Q}
\newcommand{\GQ}{G_Q}
\newcommand{\betastar}{\beta^{*}}
\newcommand{\betastarhat}{\hat{\beta}^{*}}
\newcommand{\budget}{\varphi}
\newcommand{\lat}{\Lambda}
\newcommand{\nb}{\rho_\lambda}
\newcommand{\vtwo}{\mathrm{v2}}
\newcommand{\dtol}{\delta_{\mathrm{tol}}}
\newcommand{\nocert}{\perp}
\newcommand{\locfun}[1]{\frac{\E_P\!\left[w^2 #1\right]}{\left(\E_P\!\left[w #1\right]\right)^2}}

\DeclareMathOperator{\UCB}{UCB}
\DeclareMathOperator{\LCB}{LCB}

\makeatletter
\def\thm@space@setup{\thm@preskip=5pt plus 1pt minus 1pt
\thm@postskip=\thm@preskip}
\makeatother

\makeatletter
\@ifundefined{endkeywords}{%
  \@ifundefined{keywords}{\let\jmlr@kw@def\newenvironment}%
                         {\let\jmlr@kw@def\renewenvironment}%
  \jmlr@kw@def{keywords}{%
    \par\addvspace\medskipamount\noindent
    {\bfseries\itshape Keywords:}\enspace\ignorespaces
  }{\par}%
}{}%
\makeatother

\title{Certify or Refuse: A Cross-Model Map for Selective Risk Control with Coverage Floors under Covariate Shift}

\author{%
  Jiamiao Liu \quad Dewen Qiao \quad Yu Zhang \quad Xuetao Chen\thanks{Corresponding author. Code and data: \url{https://github.com/yahiko-l/certify-or-refuse}.} \\
  \normalfont Department of Information, Xinqiao Hospital \\
  \normalfont Army Medical University (Third Military Medical University) \\
  \normalfont Chongqing 400037, China \\
  \normalfont \texttt{xuetao@tmmu.edu.cn}
}
\date{}

\begin{document}

\renewcommand{\thefootnote}{\fnsymbol{footnote}}
\maketitle
\renewcommand{\thefootnote}{\arabic{footnote}}
\setcounter{footnote}{0}

\begin{abstract}
Selective predictors with certified risk attain whatever coverage they attain; operators, however, impose an automation floor: answer at least a $\beta$-fraction of shifted target traffic while keeping at most an $\alpha$-fraction of returned answers wrong.
Under bounded-ratio covariate shift, we prove the \emph{Floor Certification Map}: once that hard coverage floor $\beta$ must be certified alongside the selection-conditioned risk level $\alpha$, certification acquires a feasibility frontier and a two-resource sample-complexity map, additive up to constants: risk paid in labeled source samples, the floor in unlabeled target samples.
Bounded-ratio shift alone buys the frontier; the two-resource rates are local, holding under a regular frontier margin and at slack below the local-regime threshold, with the upper bounds on a pre-registered nested-threshold lattice meeting a lattice margin condition and the lower bound on a compatible lattice constructed per slack. The displayed resource split is the operational route, since under oracle weights the floor additionally admits a labeled-source estimate that only weakens the requirement.
We give the map as three model-tagged results: a lower bound (Model-B), a matching oracle-weight upper bound (Model-A), and an implementable upper bound (Model-B$'$), valid under a pre-registered \emph{exact} stratified-shift model with nuisance cost priced explicitly.
The match is \emph{across} these three models rather than a single-model minimax theorem, and necessarily so: over the full bounded-ratio class no unknown-weight procedure matches at any sample size (Model-B is inconsistent, witnessed at $\alpha = \beta = \tfrac12$). So no single-model law holds over that class, and the cross-model map is the correct object, not a substitute for one. The nuisance's necessity is only \emph{partially} settled.
The upper bound's variance term is an accepted-region quantity bounded by a localized second moment, and the lower-bound hard slices localize to the same region, with no global effective-sample-size (ESS) term appearing on either side, though a fixed-ESS separation theorem is left open; both lower-bound axes vanish as $\beta \to 0$, so the floor creates the map.
Empirically, the registered bite family diverges with log--log slope $-2.002$ (within its pre-registered band); a $1{,}024$-cell audit records 0 violations where the formal certificates fire; and a single-corpus SQuAD$\to$NewsQA feasibility audit returns \emph{honest refusal}.

\end{abstract}

\begin{keywords}
selective risk control, covariate shift, conformal risk control, distribution-free certification, sample complexity lower bounds
\end{keywords}

\section{Introduction}
\label{sec:intro}

\looseness=-1 An operator deploying a selective question-answering system commits to answering automatically at least a $\beta$-fraction of target traffic, with at most an $\alpha$-fraction of the automatically returned answers wrong.
Given $n$ labeled samples from the calibration distribution and $m$ unlabeled queries from the shifted target, when can this double promise be \emph{certified}, and when is the only honest output a refusal?

\looseness=-1 Selection-conditioned risk control exists in several forms (linear-expectation reformulation \citep{wang2025lec}, weighted e-values \citep{bai2026score}, cascaded calibration \citep{jia2026balancerag}), built on selective classification \citep{geifman2017selective}, risk control as multiple testing \citep{angelopoulos2021learn}, and weighted conformal prediction \citep{tibshirani2019conformal}; heuristic confidence deferral provably degrades under shift \citep{jitkrittum2023when}.
Yet all of these certify the risk of \emph{whatever coverage they attain}.
The classical reject-option line does characterize a risk--coverage frontier, but only the \emph{population} frontier under a guaranteed-coverage (bounded-abstention) objective \citep{franc2023optimal}.
What is missing is the \emph{finite-sample} object: no prior work proves lower bounds indexed by a coverage floor under covariate shift, nor maps what that floor costs in labeled versus unlabeled data; the nearest distribution-free lower-bound theory \citep{aldirawi2026conformal} is floor-free: one axis, no coverage-floor parameter anywhere.
Concretely, a risk-only certificate can be vacuously safe by abstaining on nearly all queries; it bounds the error of the few answers it does return while never guaranteeing that the operator's $\beta$-fraction of target traffic is automated at all; the floor is exactly the promise these certificates cannot make.
\looseness=-1 Figure~\ref{fig:hero} previews both halves of the result: the map's two-resource certifiable region in the $(n, m)$ plane (left), and the full-grid validity audit where the formal certificates fire with zero violations (right).

\begin{figure}[t]
    \centering
    \begin{subfigure}[b]{0.575\textwidth}
        \centering
        \includegraphics[width=\linewidth]{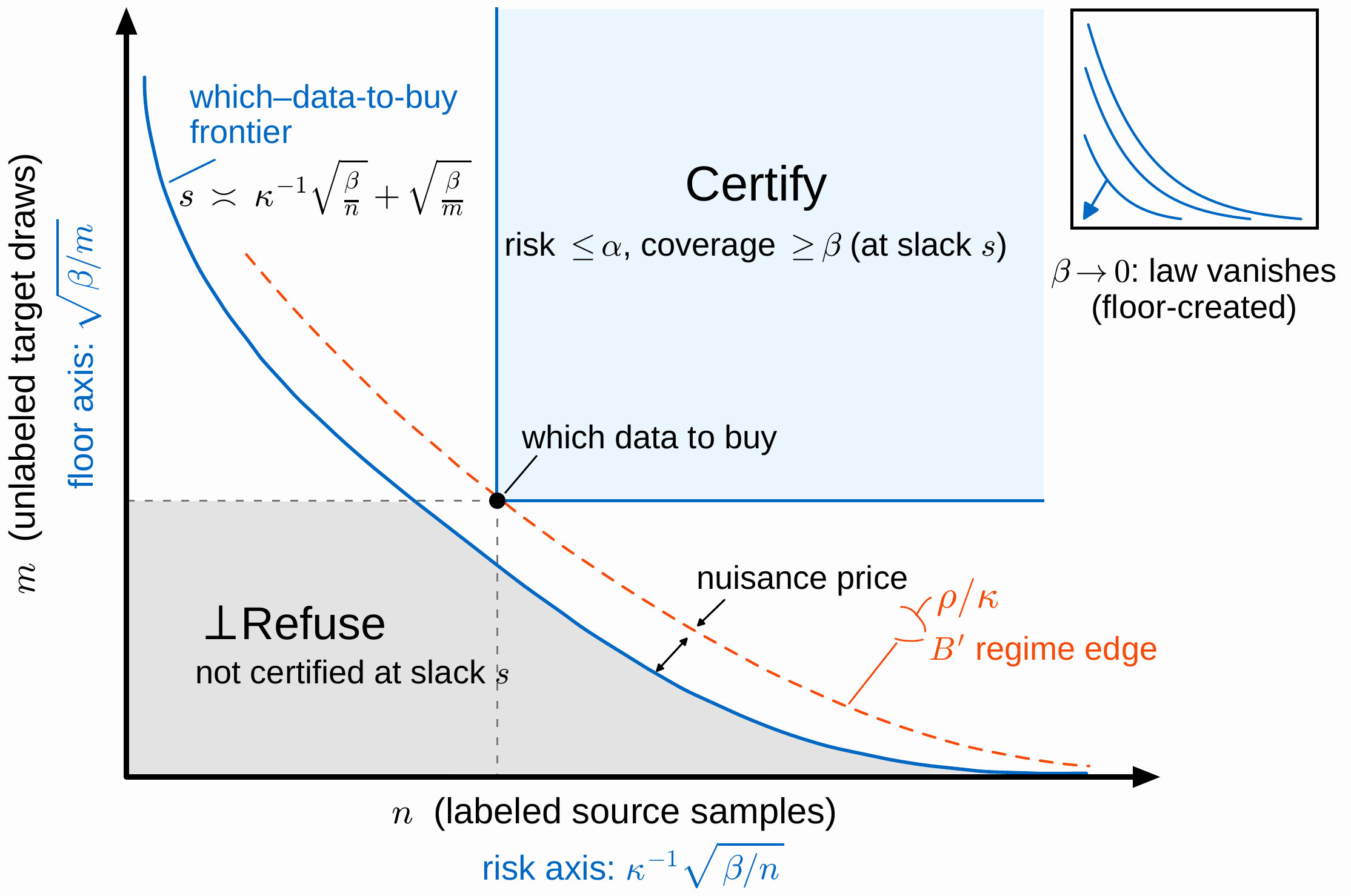}
        \caption{}
    \end{subfigure}
    \hfill
    \begin{subfigure}[b]{0.41\textwidth}
        \centering
        \includegraphics[width=\linewidth]{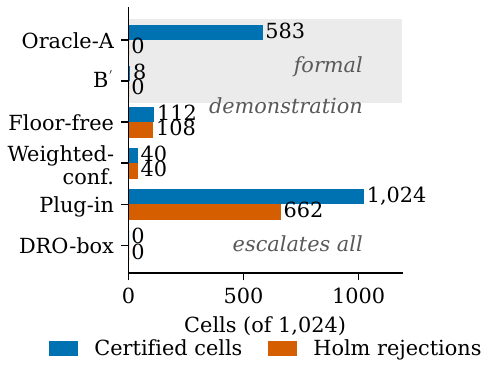}
        \caption{}
    \end{subfigure}
    \caption{\textbf{The Floor Certification Map and its validity at scale.}
    (a)~Certifying both the selection-conditioned risk $R_Q \le \alpha$ and a hard
    coverage floor $C_Q \ge \beta$ under bounded-ratio covariate shift splits the
    sample cost into two resources: at floor slack $s$, the certifiable region in the
    $(n, m)$ plane contains a quadrant whose corner traces
    $s \asymp \kappa^{-1}\sqrt{\beta/n} + \sqrt{\beta/m}$ (the canonical two-axis route shown; in Model-A a source-weighted floor branch enlarges it by an $n$-only half-strip); the implementable
    Model-B$'$ certificate adds an explicitly priced shift-model nuisance axis
    ($\rho_\lambda \lesssim \kappa s$; $K$-free core necessary (lattice-valued), unknown-$\eta$ open). Both lower-bound axes vanish
    as $\beta \to 0$: the map is created by the floor. (b)~Full-grid validity audit
    (1{,}024 cells): the formal arms certify 583/1{,}024 (oracle-A) and 8/1{,}024 (B$'$)
    cells with 0 violations and 0 Holm rejections; three demonstration arms
    without a joint risk-and-floor guarantee under shift violate at scale
    (Holm rejections: floor-free 108, weighted-conformal 40, plug-in 662);
    the distributionally robust optimization (DRO)-box baseline certifies 0. B$'$'s conservatism relative to oracle-A is
    the visible practical price of estimated weights. (Audit protocol and per-arm
    Clopper--Pearson bounds: Table~\ref{tab:validity_constants}.)}
    \label{fig:hero}
\end{figure}

This paper shows the floor changes the statistical problem.
We prove the Floor Certification Map: a hard coverage floor under covariate shift splits certification cost into a labeled-source risk axis and an unlabeled-target floor axis.
We also run a single-corpus SQuAD$\to$NewsQA feasibility audit of the certificate's honest refusal.
Certifying means selecting one policy $\lambda$ from a pre-fixed lattice of thresholds and returning the certificate ``$\RQ(\lambda) \le \alpha$ and $\CQ(\lambda) \ge \beta$ with confidence $1-\delta$,'' or else issuing an honest refusal $\nocert$ when no policy can be certified.
The frontier $\betastar(\alpha, Q)$ is the largest coverage certifiable at risk $\alpha$.

Three models tag how much is known about the covariate-shift weights $w$: Model-A (oracle weights, $w$ given exactly: the rate reference), Model-B (unknown weights, only $w \le B$ known: the lower-bound model), and Model-B$'$ (weights estimated under a pre-registered \emph{exact} stratified-shift model, nuisance priced explicitly).
The map states three model-tagged bounds: a Model-B lower bound, a Model-A oracle upper bound attaining the same rates, and an implementable Model-B$'$ upper bound.

At feasibility-frontier slack $s$, a canonical rate template organizes all three bounds:
\begin{equation}
\label{eq:law}
\underbrace{\;s^{\,\mathrm{certifiable}}\;}_{\text{slack threshold}} \;\;\asymp\;\; \underbrace{\kappa^{-1}\sqrt{\beta/n}}_{\text{labeled: risk}} \;+\; \underbrace{\sqrt{\beta/m}}_{\text{unlabeled: floor}} \;+\; \underbrace{\rho/\kappa}_{\text{shift-model nuisance}} .
\end{equation}
Here $\kappa$ is the local frontier margin and $\rho$ the accepted-region shift-model nuisance, both made precise in \S\ref{sec:setup}.
The display is an organizing template, not a same-model law: each model-tagged claim refines it.

\textbf{Scope of the map.}
The displayed axes hold at a fixed bounded-ratio constant $B$, whose factors are folded into the Model-A constants and appear explicitly in the Model-B$'$ nuisance axis ($B^2 K$, where $K$ is the number of pre-registered shift cells; Claim 3).
Theorem~\ref{thm:t3-target} shows this axis is the histogram estimator's price, not a proved-necessary axis. The necessary core is $K$-free (lattice-valued), leaving only the unknown-$\eta$ edge open.
The displayed split (risk in labeled source, floor in unlabeled target) is the operational Model-B/B$'$ route; under oracle weights (Model-A) the floor additionally admits a source-weighted (pooled) estimate, which only weakens the requirement (Appendix~\ref{app:modelA}).

Two structural facts shape the map.
First, the lower bounds are Le~Cam pairs at bounded likelihood ratio, so the impossibility vanishes as $\beta \to 0$: the map is floor-created, not overlap-created.
Second, the upper-bound variance proxy is driven by a localized accepted-region second moment $\E_P[w^2 \Sacc]$, not by global effective sample size (ESS); its normalized reading $\locfun{\Sacc}$ is heuristic (Remark~\ref{rmk:localized}).
The lower-bound hard-slice geometry localizes to the same accepted region, so the correspondence runs through the matched bound structure rather than a standalone argument.
A fixed-ESS separation theorem is left open.

\textbf{Contributions} (what the floor changes; how the cost splits across resources; what functional governs hardness; what the certificate honestly does on real shift).
\begin{itemize}
\setlength{\itemsep}{1pt}
\setlength{\topsep}{2pt}
\setlength{\parsep}{0pt}
\item \looseness=-1 \emph{(1) The Floor Certification Map.}
Theorem~\ref{thm:law} proves the two-axis map as three model-tagged bounds, with the Model-B$'$ nuisance budget explicitly priced ($\nb \lesssim \kappa s$).
The bounds match across models, and a single-model match over the full bounded-ratio class is provably impossible (Model-B inconsistency, Theorem~\ref{thm:modelB-incon}).
A within-B$'$ refinement sharpens the two resource axes (Corollary~\ref{cor:bprime-minimax}), and a necessary $K$-free core for the nuisance is established (Theorem~\ref{thm:t3-target}) [\S\ref{sec:theory}].
\item \emph{(2) The geometry.}
Experiments identify the localized accepted-region functional $\locfun{\Sacc}$, not global ESS, as the operative complexity proxy for B$'$ required-$n$ (family-1 Spearman $.936$ vs.\ $.026$; the restriction is part of the claim) [\S\ref{sec:synthetic}].
\item \emph{(3) Mechanism and validity at scale.}
In the synthetic study, the \emph{Registered-family} reproduces the map's central prediction: the labeled-sample requirement blows up near the certifiable boundary at a near-quadratic rate (the ``bite'' divergence, log--log slope $-2.002$ inside the pre-registered exponent band; near-quadratic across three structurally distinct families).
A full-grid validity audit records 0 violations and 0 Holm rejections where the formal certificate arms fire, against 108/40/662 for the non-formal demonstration arms (floor-free/weighted-conformal/plug-in), and characterizes the certification envelope including B$'$'s $K$-premium [\S\ref{sec:synthetic}].
\item \emph{(4) Real-workload feasibility audit (honest refusal).}
On a real-workload SQuAD$\to$NewsQA audit, the certificate returns honest refusal-with-attribution, consuming no evaluation labels, and a post-hoc diagnosis places the \emph{estimated} frontier below the operational floor across the tested plane [\S\ref{sec:real}].
\end{itemize}

\looseness=-1 Full proofs and extended experimental records are in Appendices~\ref{app:frontier}--\ref{app:necessity}.

\section{Related Work and Theorem-Level Comparison}
\label{sec:related}

\looseness=-1 We delimit our claim sharply against the strongest neighbors. What is new is the covariate-shift theory the floor creates, the $\beta$-indexed lower bounds, the two-resource map, the localized complexity, and the budgeted-nuisance analysis, none of which these lines produce: no lower bound indexed by a coverage floor, no $\beta$-indexed matching rates, no frontier and feasibility theory. The certificate's \emph{validity} half is not new, and we say so concretely: a floor-augmented e-value construction in the spirit of SCoRE \citep{bai2026score}, or BalanceRAG-like cascaded-calibration machinery \citep{jia2026balancerag}, recovers a U1-style certificate, and in the i.i.d.\ setting a joint finite-sample certificate for selection-conditioned risk and an acceptance floor is already available \citep{yu2026joint}; we therefore claim no novelty for the floor certificate itself, only for the four items above. Consistent with this scoping, our SCoRE-inspired betting arm (a risk-test ablation inside B$'$'s construction, not a source-faithful SCoRE implementation) is valid and even modestly \emph{more} powerful than Algorithm~\ref{alg:bprime} in observed certification frequency (0 violations, certify-frequency $\ge$ B$'$'s across all 1{,}024 audited cells at the \emph{same} powered-cell count; \S\ref{sec:validity}, Appendix~\ref{app:experiments}): a finite-sample power advantage for the certificate is neither claimed nor needed. A separate i.i.d.\ selective conformal risk-control line combines confident-sample selection with conformal risk control (CRC) under exchangeability/PAC-style guarantees \citep{xu2025selective}; it does not address bounded-ratio covariate shift, $\beta$-indexed lower bounds, or the labeled-source/unlabeled-target floor-certification map studied here.
Table~\ref{tab:theory_comparison} (expanded row by row in Appendix~\ref{app:nonderiv}) compares at theorem level.

\paragraph{Selection-conditioned and ratio risk control.}
\looseness=-1 The accepted-answer ratio risk $\E[L S]/\E[S]$ is now an active neighborhood (linear-expectation reformulation \citep{wang2025lec}, weighted e-values for general selective risks \citep{bai2026score}, cascaded lattice calibration \citep{jia2026balancerag}, risk-aware routing with abstention \citep{hao2026racer}), with the lineage running back to selective classification and classification with a reject option \citep{chow1970optimum, elyaniv2010foundations, bartlett2008classification, geifman2017selective, franc2023optimal} and conformal selection, whose false discovery rate is itself a selected-set ratio \citep{jin2023selection, jin2023weighted}.
These certify whatever coverage the policy attains rather than a certified floor under shift; the reject-option formulations \citep{franc2023optimal} do target a guaranteed-coverage (bounded-abstention) objective, but characterize the \emph{population} optimum, without finite-sample floor-indexed certificates or rates under covariate shift.

\paragraph{Risk control under covariate shift.}
\looseness=-1 Importance weighting under covariate shift \citep{shimodaira2000improving} and weighted conformal prediction, a covariate-shift extension of conformal prediction \citep{vovk2022algorithmic}, introduced the density-ratio reweighting we build on \citep{tibshirani2019conformal}, extended beyond exchangeability \citep{barber2023conformal} and to (high-probability) risk control under shift \citep{zecchin2025generalization, almeida2025high}, with direct ratio estimators and doubly robust, debiased, clipped, fine-grained, and kernel-matched weights making estimation practical \citep{kanamori2009least, yang2024doubly, kato2023double, wang2026weight, laghuvarapu2026kmmcp, ai2024finegrained}.
This family certifies \emph{marginal} risk or coverage, never the selected-set ratio with a floor, with global ESS-type complexity rather than our localized functional.

\paragraph{Lower bounds and feasibility for distribution-free certification.}
\looseness=-1 A lower-bound culture exists: matching minimax bounds for non-monotone conformal risk control \citep{aldirawi2026conformal}, tight abstention bounds under adversarial injections \citep{edelman2026reliable}, overlap obstructions \citep{damour2021overlap}, prediction sets adaptive to unknown shift \citep{qiu2023prediction}, and tolerant learning with abstention \citep{goel2024tolerant}.
All are floor-free: no $\beta$ in any rate, hence no frontier, no two-resource $(n,m)$ map. Our Claim-1 constructions live at bounded ratio, so they are not overlap results in disguise (Appendix~\ref{app:lower}).

\paragraph{Learnability and estimability under arbitrary covariate shift.}
\looseness=-1 Closest in spirit to our impossibility leg are two lines asking when shift is tractable at all: \emph{reject-to-learn}, where abstaining on unlabeled target points recovers efficient learnability under \emph{arbitrary} covariate shift through a reliable-learner oracle \citep{kalai21a}; and \emph{covariate-shifted mean estimation}, which recovers $\E_Q[f]$ for an unknown bounded $f$ from labeled source and unlabeled target samples under structural conditions \citep{adil26a}. Our Model-B inconsistency (Theorem~\ref{thm:modelB-incon}) is the \emph{certification} analogue of the estimability boundary these navigate: over the unrestricted bounded-ratio class no $(n, m)$-uniform certificate exists, and the pre-registered $K$-cell Model-B$'$ is the structural restriction that restores it, paralleling the structure those works impose. Neither subsumes floor-indexed certification (no coverage floor, no $\beta$-indexed rate, no risk--coverage frontier), but they locate why the floor is hard under shift.

\paragraph{Retrieval-augmented generation (RAG) and cascade deployment.}
\looseness=-1 Certified RAG generation risk \citep{kang2024crag}, conformal retrieval-augmented question answering \citep{li2024traq}, conformal factuality \citep{mohri2024language}, conformal abstention \citep{yadkori2024mitigating}, and chance-constrained hallucination control \citep{mohandas2026chance} supply the loss substrate and baselines for our real-workload audit; the proof that confidence-based deferral fails under shift \citep{jitkrittum2023when}, together with selective question answering under domain shift via a learned calibrator \citep{kamath2020selective}, supplies its motivation; none certify a coverage floor or price what the floor costs.

\begin{table}[t]
\centering
\caption{\textbf{Theorem-level comparison with the nearest verified lines}
(non-derivability analysis). $\checkmark$ = present, $(\checkmark)$ = partial,
$\times$ = absent. SCoRE controls the selection-conditioned ratio under covariate
shift with an estimated-weight robustness analysis but, as published, proves no lower bounds and
certifies no floor; LEC and BalanceRAG own ratio control in the exchangeable (i.i.d.)\ setting;
the weighted/marginal CRC-under-shift line owns the weight machinery for
\emph{marginal} risk/coverage, while \citet{almeida2025high} reaches a
selection-conditioned ratio under shift without a co-certified floor or lower bound;
non-monotone CRC \citep{aldirawi2026conformal} proves a matching
single-axis lower bound without a floor (nearest in lower-bound theory); no floor means no
second resource. This paper treats the hard coverage floor $C_Q \ge \beta$ as a
\emph{finite-sample} certified object under covariate shift (the reject-option
line poses the floor only as a population objective; \citealp{franc2023optimal}); its B$'$ upper is priced by an explicit nuisance $\rho_\lambda$
under a pre-registered stratified-shift model (nuisance necessity partially characterized:
$K$-free core necessary, histogram $K$ not proved necessary, unknown-$\eta$ open), and no
distribution-free unknown-weight matching claim is made.}
\label{tab:theory_comparison}
\small
\resizebox{0.96\textwidth}{!}{%
\begin{tabular}{lcccccc}
\toprule
& \begin{tabular}{@{}c@{}}Ratio risk\\under shift\end{tabular}
& \begin{tabular}{@{}c@{}}Hard floor as\\certified object\end{tabular}
& \begin{tabular}{@{}c@{}}$\beta$-indexed\\lower bound\end{tabular}
& \begin{tabular}{@{}c@{}}Two-resource\\$(n,m)$ law\end{tabular}
& \begin{tabular}{@{}c@{}}Localized\\functional\end{tabular}
& \begin{tabular}{@{}c@{}}Nuisance-priced\\upper\end{tabular} \\
\midrule
Reject-option classifiers \citep{franc2023optimal} & $\times$ & $(\checkmark)$\textsuperscript{g} & $\times$ & $\times$ & $\times$ & $\times$ \\
SCoRE \citep[\S6]{bai2026score} & $\checkmark$ & $\times$ & $\times$ & $\times$ & $\times$ & $(\checkmark)$\textsuperscript{a} \\
LEC \citep{wang2025lec} / BalanceRAG \citep{jia2026balancerag} & $\times$\textsuperscript{b} & $\times$ & $\times$ & $\times$ & $\times$ & $\times$ \\
Weighted/marginal CRC under shift\textsuperscript{c} & $\times$\textsuperscript{d} & $\times$ & $\times$ & $\times$ & $\times$ & $(\checkmark)$\textsuperscript{d} \\
High-probability risk control under shift \citep{almeida2025high} & $\checkmark$\textsuperscript{i} & $\times$ & $\times$ & $\times$ & $\times$ & $(\checkmark)$\textsuperscript{i} \\
Non-monotone CRC \citep{aldirawi2026conformal} & $\times$ & $\times$ & $\times$\textsuperscript{e} & $\times$ & $\times$ & $\times$ \\
\textbf{This paper} & $\checkmark$ & $\checkmark$ & $\checkmark$ & $\checkmark$ & $\checkmark$\textsuperscript{h} & $\checkmark$\textsuperscript{f} \\
\bottomrule
\end{tabular}}

\vspace{0.25em}
{\footnotesize
\textsuperscript{a}Estimated-$w$ robustness analysis only; no rate theory.
\textsuperscript{b}Ratio control in the exchangeable (i.i.d.)\ setting; no shift
($\alpha$-infeasibility declaration $\neq$ coverage floor).
\textsuperscript{c}\citet{tibshirani2019conformal}; weighted CRC under shift
\citep{zecchin2025generalization}.
\textsuperscript{d}Marginal estimand (not the selection-conditioned ratio);
estimated-$w$ analyses exist for the marginal object; as analyzed in those works,
complexity is reported via global ESS-type quantities rather than a localized
accepted-region functional.
\textsuperscript{e}Proves a matching minimax lower bound (single-axis excess risk),
but it is not indexed by a coverage floor: no coverage constraint and no
coverage-floor parameter.
\textsuperscript{f}Model-B$'$ upper with explicitly priced nuisance
$\rho_\lambda \lesssim \kappa s$ under the pre-registered stratified-shift model;
nuisance necessity partially characterized: $K$-free core necessary, histogram $K$ not
proved necessary, unknown-$\eta$ open.
\textsuperscript{g}Bounded-abstention model: guaranteed coverage with minimal
selective risk, a \emph{population} risk--coverage frontier, not a finite-sample
certificate under covariate shift, and no $\beta$-indexed rate.
\textsuperscript{h}A localized accepted-region second moment governs the upper-bound
variance and the lower-bound hard slices (Remark~\ref{rmk:localized}); its normalized
effective-sample-size reading is heuristic, and a separation theorem at fixed global
ESS is left open.
\textsuperscript{i}Theorem 9 of \citet{almeida2025high} gives high-probability
control under target covariate shift for a \emph{conditional} risk
$\E_{Q}[L(X, Y; \lambda) \mid (X, Y) \in A(\lambda)]$ and instantiates it for the
false discovery rate; taking $A(\lambda)$ to be the accepted set yields exactly the
selection-conditioned ratio $\E[LS_\lambda]/\E[S_\lambda]$, so this capability is
present. What is absent is the rest of the row: no floor is co-certified, no
$\beta$-indexed lower bound, no two-resource law, and complexity is not reported
through a localized accepted-region functional. Its estimated-weight treatment
proceeds under an assumed weight accuracy rather than a finite-sample
weight-estimation-error guarantee, so the nuisance is not priced as an explicit
increment against a matching lower bound.}
\end{table}

\paragraph{What is new here, exactly.}
\looseness=-2 The novelty is not the coverage floor as a population object (the reject-option line already studies guaranteed-coverage selective risk \citep{franc2023optimal}) but the \emph{finite-sample} theory the floor creates under covariate shift: $\beta$-indexed lower bounds, the two-resource (labeled/unlabeled) map, the localized accepted-region complexity, and the B$'$ budgeted-nuisance analysis, \emph{not} weighted testing, ratio linearization, or cascaded-RAG calibration, which we inherit and credit \citep{bates2021distribution, angelopoulos2021learn, angelopoulos2024conformal, laufergoldshtein2023efficiently, jia2026balancerag, wang2025lec, tsybakov2009introduction}.

\section{Problem, Setup, and the Certificate}
\label{sec:setup}

\subsection{Problem, notation, and the certify-or-refuse output}
\looseness=-1 Labeled pairs $(X, Y) \sim P$ carry a bounded loss $L \in [0,1]$ with conditional mean $\eta(x) = \E[L \mid X = x]$; the target supplies unlabeled covariates $X' \sim Q_X$.
We assume covariate shift (the conditional map of $(Y, \text{outputs}, L)$ given $X$ is identical under source and target) and a density ratio $w = dQ_X/dP_X \le B$; write $\cW_B$ for this class.
Since $\int w \, dP_X = 1$, membership forces $P_X(w > 0) \ge 1/B$: positivity holds throughout; no construction here uses unsupported target mass.
A selective policy is a point $\lambda$ on a pre-registered nested-threshold lattice $\lat$ over a frozen router score, with acceptance indicator $\Sacc \in \{0,1\}$ (abstention and escalation mean $\Sacc = 0$). The certified functionals are the selection-conditioned risk $\RQ(\lambda) = \E_{Q_X}[\eta\, \Sacc]/\E_{Q_X}[\Sacc]$ and the coverage $\CQ(\lambda) = \E_{Q_X}[\Sacc]$. We adopt the convention $\RQ(\lambda) := +\infty$ when $\CQ(\lambda) = 0$, so a zero-coverage policy is never certifiable. Reject-all is thus infeasible and the frontier suprema below range over $\{\CQ > 0\}$; escalate-all is a no-automatic-answer guarantee, not a success.
\begin{definition}[Procedure, validity, and power]
\label{def:valid}
From $n$ i.i.d.\ labeled draws from $P$ and $m$ i.i.d.\ unlabeled draws from $Q_X$ (independent), a \emph{procedure} $\cA$ outputs a point of $\lat$ or an honest refusal $\nocert$. It is \emph{valid at $(\alpha, \beta, \delta)$} if in every world $\Prob\big(\cA \ne \nocert \wedge (\RQ(\cA) > \alpha \vee \CQ(\cA) < \beta)\big) \le \delta$, and \emph{powered at a world} if it certifies there with probability $\ge 1/2$.
\end{definition}

\looseness=-1 Table~\ref{tab:notation} collects the notation used throughout the paper.
\begin{table}[t]
\centering
\caption{\textbf{Notation}, grouped by role, with the location of first use
(models A/B/B$'$ are defined in \S\ref{sec:models}). Defining formulas are given
in the cited statements.}
\label{tab:notation}
\small
\begin{tabular}{@{}l l l@{}}
\toprule
Symbol & Meaning & First use \\
\midrule
\multicolumn{3}{@{}l}{\emph{Data, loss, and covariate shift}}\\
$P,\ Q_X$ & source distribution; shifted target covariate distribution & \S\ref{sec:setup}\\
$(X,Y),\ L$ & covariate--label pair; bounded loss $L\in[0,1]$ & \S\ref{sec:setup}\\
$\eta$ & conditional loss mean $\eta(x)=\E[L\mid X=x]$ & \S\ref{sec:setup}\\
$w,\ B$ & density ratio $w=dQ_X/dP_X$, bounded $w\le B$ & \S\ref{sec:setup}\\
$\cW_B$ & bounded-ratio covariate-shift class & \S\ref{sec:setup}\\
$n,\ m$ & labeled-source / unlabeled-target sample sizes & Def.~\ref{def:valid}\\
\addlinespace
\multicolumn{3}{@{}l}{\emph{Policy and certified functionals}}\\
$\lambda,\ \lat$ & selective policy; pre-registered threshold lattice & \S\ref{sec:setup}\\
$\Sacc$ & acceptance indicator $S_\lambda\in\{0,1\}$ ($0$: abstain/escalate) & \S\ref{sec:setup}\\
$\RQ$ & selection-conditioned risk $\E_{Q_X}[\eta\Sacc]/\E_{Q_X}[\Sacc]$ & \S\ref{sec:setup}\\
$\CQ$ & target coverage (the floor functional) $\E_{Q_X}[\Sacc]$ & \S\ref{sec:setup}\\
$\GQ$ & linearized risk functional $\E_P[w\Sacc(L-\alpha)]$ & \S\ref{sec:theory}\\
$\alpha,\ \beta,\ \delta$ & risk level; coverage floor; confidence & \S\ref{sec:setup}\\
$\nocert$ & honest refusal (the certify-or-refuse output) & \S\ref{sec:setup}\\
\addlinespace
\multicolumn{3}{@{}l}{\emph{Frontier, slack, and margins}}\\
$\budget$ & budget functional $\budget(c)=\alpha c-\int_0^c\eta^*_Q(u)\,du$ & Def.~\ref{def:frontier}\\
$\eta^*_Q$ & increasing rearrangement of $\eta$ under $Q_X$ & Def.~\ref{def:frontier}\\
$\betastar,\ \betastar_{\lat}$ & relaxed frontier; lattice frontier ($\betastar_{\lat}\le\betastar$) & Def.~\ref{def:frontier}\\
$s$ & floor slack $s=\betastar-\beta$ (local regime $s\le s_0\wedge c_0\beta$) & Def.~\ref{def:regmargin}\\
$\kappa,\ s_0$ & regular frontier margin: height; radius & Def.~\ref{def:regmargin}\\
$\mathrm{LR}_{\mathrm{margin}}(s)$ & lattice margin condition (assumed for the upper bounds) & Def.~\ref{def:lrmargin}\\
\addlinespace
\multicolumn{3}{@{}l}{\emph{Information models and stratified shift (Model B$'$)}}\\
A / B / B$'$ & oracle-$w$ / unknown-$w$ / $K$-cell estimable-$w$ models & \S\ref{sec:models}\\
$\cK,\ K$ & pre-registered partition $\{\cX_1,\dots,\cX_K\}$ ($w$ cell-constant) & \S\ref{sec:setup}\\
$w_k,\ \hat w_k$ & cell ratio $q_k/p_k$ ($q_k=Q_X(\cX_k),\,p_k=P_X(\cX_k)$); estimate & \S\ref{sec:setup}\\
$\nb$ & localized nuisance budget (accepted-cell weight error) & Alg.~\ref{alg:bprime}\\
$D_w^P,D_r;\,D_w^Q,D_f$ & source / target weight- and test-splits & Alg.~\ref{alg:bprime}\\
\addlinespace
\multicolumn{3}{@{}l}{\emph{Complexity proxy}}\\
$\E_P[w^2\Sacc]$ & localized accepted-region second moment (vs.\ global ESS) & Rmk.~\ref{rmk:localized}\\
\bottomrule
\end{tabular}
\end{table}

\subsection{The feasibility frontier and floor slack}
\begin{definition}[Budget functional and frontier]
\label{def:frontier}
Let $\eta^*_Q(u)$, $u \in [0,1]$, be the increasing rearrangement of $\eta$ under $Q_X$; define the \emph{budget functional} $\budget(c) = \alpha c - \int_0^c \eta^*_Q(u)\, du$ and the \emph{frontier} $\betastar(\alpha, Q) = \sup\{c : \budget(c) \ge 0\}$, the largest certifiable coverage under randomized acceptance (a fractional-knapsack value attained by the increasing-rearrangement threshold rule, the generalized Neyman--Pearson solution \citep{dantzig1951fundamental}; for atomic $Q_X$ the deterministic-selector frontier may be strictly smaller, Appendix~\ref{app:frontier}).
\end{definition}
\begin{figure}[t]
\centering
\includegraphics[width=0.95\textwidth]{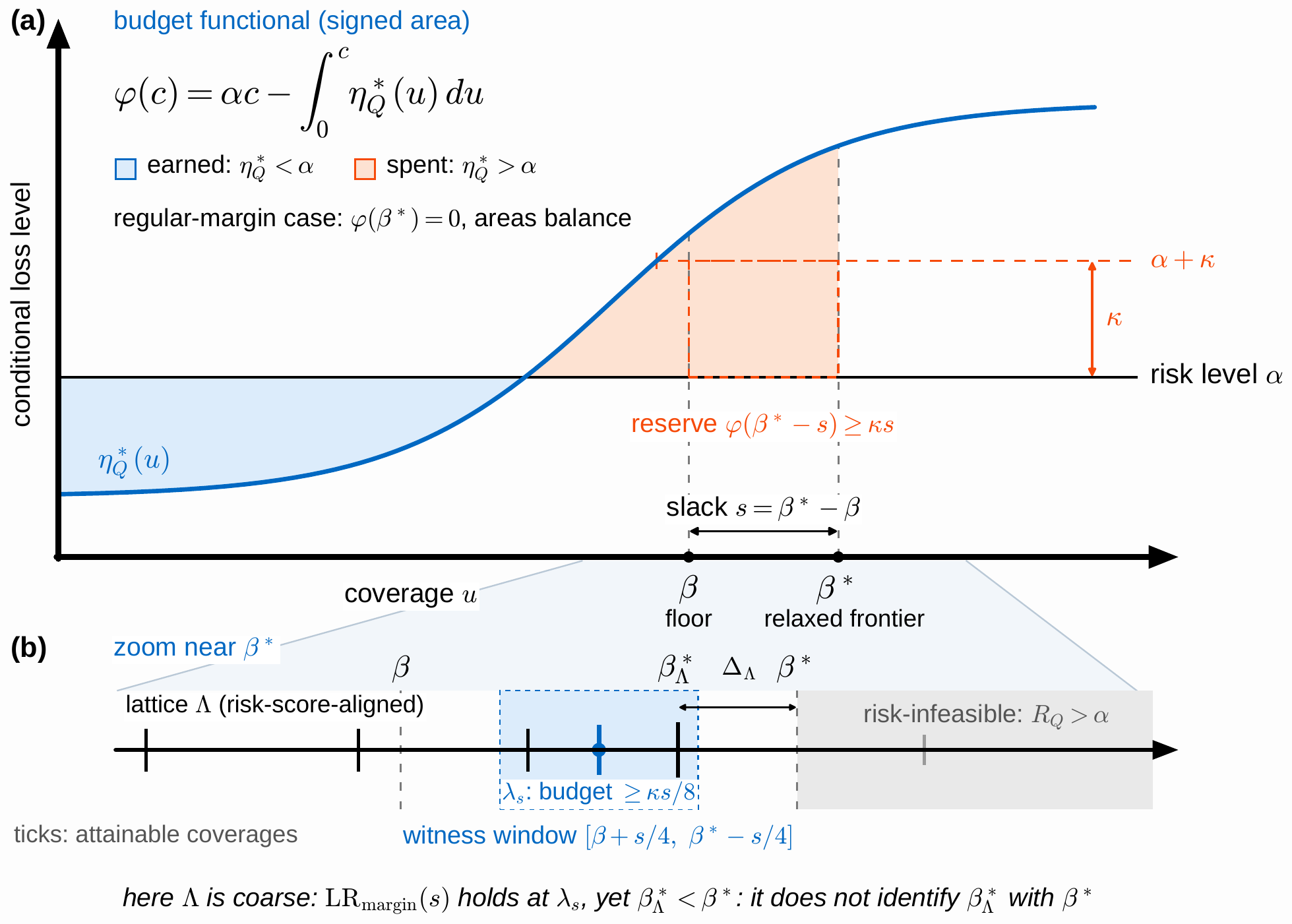}
\caption{\textbf{Frontier geometry and the relaxed-vs-lattice distinction.}
(a)~The budget functional $\budget(c) = \alpha c - \int_0^c \eta^*_Q(u)\,du$ is the signed area between the risk level $\alpha$ and the increasing rearrangement $\eta^*_Q$: earned where $\eta^*_Q < \alpha$, spent where $\eta^*_Q > \alpha$. The relaxed frontier is $\betastar = \sup\{c : \budget(c) \ge 0\}$ (Definition~\ref{def:frontier}); in the depicted regular-margin case the areas balance there, $\budget(\betastar) = 0$. The regular frontier margin (Definition~\ref{def:regmargin}) keeps $\eta^*_Q \ge \alpha + \kappa$ a.e.\ on $[\betastar - s_0, \betastar]$, so a floor $\beta = \betastar - s$ in the local regime retains the linear reserve $\budget(\betastar - s) \ge \kappa s$ (dashed rectangle), the single inequality behind both the bite divergence and the achievability margins.
(b)~A magnified view of the frontier neighborhood on the same coverage axis, with a risk-score-aligned finite lattice $\lat$ (ticks: attainable coverages; under alignment, risk feasibility is exactly $\CQ(\lambda) \le \betastar$). The operative lattice frontier $\betastar_{\lat}$ is the largest risk-feasible attainable coverage; a grid whose last feasible coverage stops short of $\betastar$ leaves the discretization gap $\Delta_{\lat} = \betastar - \betastar_{\lat} > 0$. The condition $\mathrm{LR}_{\mathrm{margin}}(s)$ (Definition~\ref{def:lrmargin}) asks only for an interior witness $\lambda_s$ with coverage in $[\beta + s/4,\, \betastar - s/4]$ and risk budget at least $\kappa s/8$, so it can hold on a coarse lattice with $\betastar_{\lat} < \betastar$: it does not identify the two frontiers. Coverages beyond $\betastar$ are risk-infeasible for every policy. Drawn values are schematic; the population-infeasibility statements quote $\betastar_{\lat}$, the power theorems measure slack against the relaxed $\betastar$ and touch the lattice only through the witness $\lambda_s$, and the lower bound supplies its own compatible lattice per $s$ rather than fixing one (Theorem~\ref{thm:law}).}
\label{fig:frontier_geometry}
\end{figure}

\looseness=-1 On a finite lattice the operative object is the \emph{lattice frontier} $\betastar_{\lat} = \sup\{\CQ(\lambda) : \lambda \in \lat,\, \RQ(\lambda) \le \alpha\} \le \betastar$. We keep the two symbols distinct ($\betastar$ for the relaxed frontier, $\betastar_{\lat}$ for the operative lattice object; Figure~\ref{fig:frontier_geometry}b) and write $\Delta_{\lat} := \betastar - \betastar_{\lat} \ge 0$ for the \emph{discretization gap}, flagging any coarse-lattice construction where it is nonzero (Appendix~\ref{app:frontier}). The map's slack is the relaxed $s = \betastar - \beta$. We do \emph{not} claim a bound uniform over a slack range on a single fixed lattice: the lower bound is existential in $\lat$ (a compatible lattice supplied per $s$; Theorem~\ref{thm:lower}, Corollary~\ref{cor:bprime-minimax}), while a deployed fixed $\lat$ certifies the operative slack $\betastar_{\lat} - \beta = s - \Delta_{\lat}$. The inconsistency theorem is not existential in this sense: it fixes two absolute thresholds and holds on every lattice containing them (Theorem~\ref{thm:modelB-incon}, step R4).

\looseness=-1 The population-infeasibility statements are for the lattice frontier $\betastar_{\lat}$, while the power theorems measure slack against the relaxed frontier, $s = \betastar - \beta$, and touch the finite lattice only through the explicit $\mathrm{LR}_{\mathrm{margin}}(s)$ witness $\lambda_s$ below.

\looseness=-1 The condition $\mathrm{LR}_{\mathrm{margin}}$ should \emph{not} be read as implying $\betastar = \betastar_{\lat}$. The two coincide up to grid resolution only under a sufficient lattice-design condition: a risk-score-aligned lattice whose coverage mesh (the largest gap between attainable coverages) is $< s/8$ \emph{and} whose attainable coverages extend to within that mesh of the frontier $\betastar$ itself. Fineness alone does not suffice, nor does reaching only the interior point $\betastar - s/2$: a grid stopping short of $\betastar$ leaves $\betastar - \betastar_{\lat}$ as large as the terminal gap. By contrast, $\mathrm{LR}_{\mathrm{margin}}(s)$ alone needs only an interior witness near $\betastar - s/2$, so it can hold on a coarse lattice with $\betastar_{\lat} < \betastar$ (Appendix~\ref{app:frontier}).
\begin{definition}[Regular frontier margin]
\label{def:regmargin}
A world has a \emph{regular frontier margin} $(\kappa, s_0)$ if $\budget(\betastar) = 0$, $\kappa > 0$, $0 < s_0 \le \betastar$, and $\eta^*_Q(u) \ge \alpha + \kappa$ for a.e.\ $u \in [\betastar - s_0, \betastar]$.
\end{definition}
\looseness=-1 This margin is generic. With $\betastar < 1$, it fails only in the degenerate tie where $\budget \equiv 0$ on $[0, \betastar]$ (equivalently $\eta^*_Q = \alpha$ a.e.\ there), so any world with $\eta^*_Q < \alpha$ on a set of positive measure has the margin (Lemma~\ref{lem:genericity}). Under the margin, with floor slack $s = \betastar(\alpha, Q) - \beta \in (0,\, s_0 \wedge c_0 \beta]$ (fixed $c_0 \in (0, 1]$, so the local regime has $s \le \beta$), the risk budget is \emph{linear}: $\budget(\betastar - s) \ge \kappa s$ (Figure~\ref{fig:frontier_geometry}a). This single inequality drives both the bite divergence and the achievability margins (Appendix~\ref{app:frontier}, eq.~\ref{eq:budgetlaw_app}).
Genericity is \emph{qualitative}; it asserts a positive margin exists, not that it is large: $\kappa$ and $s_0$ may be arbitrarily small, and since the certificate's sample complexity scales as $\kappa^{-2}$ (Theorem~\ref{thm:modelA}), a small frontier margin makes certification expensive without violating any assumption here.
\begin{definition}[Lattice margin condition $\mathrm{LR}_{\mathrm{margin}}(s)$]
\label{def:lrmargin}
On a finite lattice, $\mathrm{LR}_{\mathrm{margin}}(s)$ holds if some $\lambda_s \in \lat$ has coverage in $[\beta + s/4,\, \betastar - s/4]$ \emph{and} risk budget $\E_Q[\Sacc(\alpha - \eta)] \ge \kappa s/8$.
\end{definition}
\looseness=-1 We additionally assume $\mathrm{LR}_{\mathrm{margin}}(s)$ on a finite lattice for the upper bounds; it is automatic for risk-score-aligned nested-threshold lattices whose coverage grid is finer than $s/8$ and brackets $\betastar - s/2$ (Appendix~\ref{app:frontier}), while on a coarse lattice the certifiable slack plateaus within cells, and zero-margin cells drive the required $n$ to infinity, a lattice-design diagnostic we also observe empirically (Appendices~\ref{app:exponent}, \ref{app:latticediag}).

\subsection{Information models: A, B, and B$'$}
\looseness=-1 \label{sec:models}
The map is stated against a three-way model split, introduced once:
\textbf{A} (oracle weights) receives $w$ exactly: clean upper bound, rate reference;
\textbf{B} (unknown weights) knows only $w \le B$: the lower-bound model, in which the floor is estimable label-free while the risk is not;
\textbf{B$'$} (estimable-ratio structure) knows $w \le B$ plus a pre-registered $K$-cell partition on which $w$ is constant: the implementable upper bound, with an explicit nuisance price.

\subsection{Stratified-shift assumptions and data splits}
Model-B$'$ assumes, on top of $w \le B$, a \emph{shift model}: a pre-registered finite partition $\cK = \{\cX_1, \dots, \cX_K\}$ of $\cX$ on which $w$ is constant: stratified covariate shift; the $K$ numbers $w_k = q_k/p_k$ are learned from finite samples at the explicit nuisance price below ($K$ is a dial, not a disguise: Remark~\ref{rmk:dial}).

\begin{assumption}[Stratified-shift certification setting]
\label{ass:bprime}
\looseness=-1 \emph{Conditions (i), (iii), (iv) are structural (class-defining); (ii) is an algorithm-feasibility requirement on the run's labeled split size $n_w$, not a property of the world.} (i) $\cK$ is fixed before any data used by the run; (ii) $p_k = P_X(\cX_k) \ge p_{\min} \ge 8\log(4K/\delta_w)/n_w$ for all $k$, so $K$ may grow only up to order $n_w/\log$; (iii) each acceptance region $\{\Sacc = 1\}$ is a union of cells ($K$ counts score-refined cells); (iv) $L \in [0,1]$.
\end{assumption}

\subsection{Algorithm: the budgeted floor-aware certificate}
\looseness=-1 Algorithm~\ref{alg:bprime} pairs an upper confidence bound (UCB) on the linearized risk with a lower confidence bound (LCB) on coverage, and certifies only when both clear their margins.
\begin{algorithm}[!ht]
\caption{Budgeted floor-aware certificate (Model-B$'$)}
\label{alg:bprime}
\footnotesize
\begin{algorithmic}[1]
\STATE \textbf{Split:} source $\to (D_w^P, D_r)$; \quad target $\to (D_w^Q, D_f)$ \hfill // four-block splitting
\STATE $\hat w_k \gets \clip(\hat q_k / \hat p_k,\, 0,\, B)$ on $D_w$ \ ($\hat w_k := B$ if $\hat p_k = 0$) \hfill // $K$-cell histogram ratio
\STATE $L_w \gets \log\frac{4(|\lat|+1)}{\delta_w}$ \hfill // simultaneity constant (loop-invariant)
\FOR{$\lambda \in \lat$, one shared budget $\delta = \delta_w + |\lat|\delta_r' + |\lat|\delta_f'$}
    \STATE $\nb \gets$ localized nuisance budget over $\lambda$'s accepted-cell masses (Proposition~\ref{prop:I1}, via $L_w$; plug-in corrected)
    \STATE risk-pass \ iff \ $\UCB_{D_r}\!\big(\hat w \cdot \Sacc \cdot (L - \alpha)\big) \le -\nb$ \hfill // budgeted linear test (empirical Bernstein)
    \STATE floor-pass iff \ $\LCB_{D_f}\!\big(\Sacc\big) \ge \beta$ \ and $> 0$ \hfill // \textbf{weight-free} floor (Bernstein)
\ENDFOR
\STATE \textbf{Output:} the floor-LCB-maximal $\lambda$ passing both (ties broken by larger risk-UCB slack, then earliest lattice index); \ \textbf{else} $\nocert$ with observable test-side attribution (priority and optional diagnoses in \S\ref{sec:notcertify})
\end{algorithmic}
\end{algorithm}

\looseness=-2 Algorithm~\ref{alg:bprime} linearizes the ratio constraint at fixed $\alpha$ \citep{wang2025lec} and tests it with a Maurer--Pontil empirical-Bernstein UCB \citep{maurer2009empirical} at margin $-\nb$, which absorbs the weight-estimation error; the floor LCB never touches weights, and selection among simultaneously valid tests is free in the learn-then-test pattern \citep{angelopoulos2021learn, laufergoldshtein2023efficiently}. The shared confidence budget allocates $\delta_w$ to weight estimation and $\delta_r', \delta_f'$ to each of the $|\lat|$ per-$\lambda$ risk and floor tests.
The displayed budget $\nb$ is the radius the power analysis is proved for; the released implementation computes a different registered closed form, valid at the same $\delta_w$ and used for every reported B$'$ certification frequency and by the SCoRE arm that shares the radius, stated and proved in Remark~\ref{rmk:releasedradius}.

\section{The Floor Certification Map: Guarantees and Limits}
\label{sec:theory}

\subsection{Main result: the map theorem}
\looseness=-1 \emph{The map is a model-split statement, not a single-model law.} Because Model-B is inconsistent over the full unknown-weight class $\cW_B$ (Theorem~\ref{thm:modelB-incon}), the three rates match \emph{across} models (a Model-A oracle-weight upper, the Model-B lower, and a \emph{restricted} Model-B$'$ implementable upper) rather than as a single-model minimax law. We state the three claims first; their precise scope (where the match is cross-model versus within-Model-B$'$, and the exact status of the nuisance necessity) is collected in Remark~\ref{rmk:scope} immediately after the theorem. The three claims refine the headline law~\eqref{eq:law}; per-claim model (Models A/B/B$'$ of \S\ref{sec:models}), assumptions, and status are summarized in Table~\ref{tab:claims}, and Figure~\ref{fig:map_structure} draws the architecture of the match.

\begin{theorem}[The Floor Certification Map]
\label{thm:law}
\looseness=-1 \emph{Standing assumptions:} fix risk level $\alpha \in [0,1]$, confidence $\delta \le 1/8$, bounded ratio $w \le B$, and a regular frontier margin $(\kappa, s_0)$; let the floor $\beta$ have slack $s = \betastar(\alpha, Q) - \beta \in (0,\, s_0 \wedge c_0\beta]$ (the local regime). The upper bounds (Claims 2--3) additionally fix a nested-threshold lattice $\lat$ with $\mathrm{LR}_{\mathrm{margin}}(s)$; the lower bound (Claim 1) is \emph{existential} in the lattice: for each $s$ the hard instance supplies its own compatible $\lat$ (Theorem~\ref{thm:lower}), so the match is not uniform over a single fixed $\lat$ (Corollary~\ref{cor:bprime-minimax}). The Model-B inconsistency theorem is not existential in this sense: it holds on every lattice containing two fixed absolute thresholds (Theorem~\ref{thm:modelB-incon}).

\textbf{Claim 1: lower (Model-B: unknown weights, only $w \le B$).}
On the parameter rectangle $B \ge 2$, $\alpha \in [0.2, 0.9]$, $\beta \in (0, 0.6]$, $\kappa < \tfrac12\min\{\alpha, 1-\alpha\}$ of Theorem~\ref{thm:lower} (the non-vacuous regime; the lower-bound constants are uniform over $B \ge 2$, since a larger ratio bound only enlarges $\cW_B$ and strengthens the impossibility), no valid procedure can certify every slack-$s$ world once
$s \,\lesssim\, \kappa^{-1}\sqrt{\beta/n} + \sqrt{\beta/m}$:
two necessary conditions witnessed inside one construction family per $(n, m)$, whose max form yields the additive display up to a factor of $2$ (each axis separately on the general class).

\textbf{Claim 2: upper (Model-A: oracle weights).}
A floor-aware budgeted certificate (margin-bearing lattice) certifies every slack-$s$ world once
$n \,\gtrsim\, \beta \log(|\lat|/\delta)/(\kappa s)^2$ and $m \,\gtrsim\, \beta \log(|\lat|/\delta)/s^2$:
the same two axes (at fixed bounded-ratio constant $B$, folded into the constants; the explicit $B$-dependent variance and range terms are in Theorem~\ref{thm:modelA}); the A-upper attains the B-lower's rates up to constants and logarithms (a cross-model match).

\textbf{Claim 3: upper (Model-B$'$: estimated weights under a pre-registered $K$-cell stratified shift).}
The same certificate with a clipped histogram ratio and localized nuisance budget $\nb$ certifies once additionally $\nb \lesssim \kappa s$: a sufficient nuisance condition, not a proved-necessary axis (schematically $n_w \wedge m_w \gtrsim B^2 K \cdot \mathrm{mass}(A_\lambda)/(\kappa s)^2$, with the precise per-split $m_w$ and $n_w$ conditions given by Theorem~\ref{thm:U2}).
Validity is conditional on the stratified-shift model, and the nuisance's necessity is only \emph{partially} settled: a $K$-free core necessary, the histogram $B^2 K$ rate only sufficient, the unknown-$\eta$ case open (Remark~\ref{rmk:scope}, Theorem~\ref{thm:t3-target}, Appendix~\ref{app:necessity}).
\end{theorem}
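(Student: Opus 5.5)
The theorem is an umbrella over three model-tagged claims, so the plan is to prove each claim separately and then read off the cross-model match by comparing rates. The common input is the linear budget inequality $\budget(\betastar - s) \ge \kappa s$, which follows from the regular frontier margin (Definition~\ref{def:regmargin}) and drives both sides. For Claims 2--3 I would use it only through the witness $\lambda_s$ of $\mathrm{LR}_{\mathrm{margin}}(s)$. That witness has coverage $\CQ(\lambda_s) \ge \beta + s/4$ and linearized risk $\GQ(\lambda_s) = \E_P[w S_{\lambda_s}(L-\alpha)] = -\E_Q[S_{\lambda_s}(\alpha-\eta)] \le -\kappa s/8$. The upper bounds therefore reduce to showing the algorithm's two tests pass at $\lambda_s$ with probability $\ge 1/2$, while validity holds at every $\lambda$ simultaneously.

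\textbf{Claim 2 (Model-A).}
\begin{itemize}
\item \emph{Validity.} Take a union bound over $\lat$ of an empirical-Bernstein UCB on $w\Sacc(L-\alpha)$ and a Bernstein LCB on $\Sacc$ from target samples. On the good event, any certified $\lambda$ has $\GQ(\lambda) \le 0$, hence $\RQ(\lambda) \le \alpha$ (using $\CQ > 0$, enforced by the LCB $> 0$), and $\CQ(\lambda) \ge \beta$.
\item \emph{Power on the risk side.} The variance of $w\Sacc(L-\alpha)$ is at most $\E_P[w^2\Sacc] \le B\,\CQ(\lambda_s) \lesssim B\beta$, using $\betastar \le \beta + s \le 2\beta$ in the local regime. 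Bernstein then gives a UCB deviation of $\sqrt{B\beta\log(|\lat|/\delta)/n} + B\log(|\lat|/\delta)/n$. This is $\le \kappa s/16$ once $n \gtrsim \beta\log/(\kappa s)^2$, and the range term is dominated because $\kappa s \lesssim \beta$.
\item \emph{Power on the floor side.} The floor deviation is $\sqrt{\beta\log/m} \le s/8$ once $m \gtrsim \beta\log/s^2$.
\end{itemize}

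\textbf{Claim 3 (Model-B$'$).} I would add Proposition~\ref{prop:I1}: on the $\delta_w$ event, $|\E_P[(\hat w - w)\Sacc(L-\alpha)]| \le \nb$ uniformly over $\lat$. This uses that accepted regions are unions of cells (Assumption~\ref{ass:bprime}(iii)) and relative-error concentration of $\hat q_k, \hat p_k$ under the $p_{\min}$ condition. The test at margin $-\nb$ is then valid for the true $\GQ$. It is powered whenever $2\nb + \text{(Bernstein radius)} \le \kappa s/8$, which gives the sufficient condition $\nb \lesssim \kappa s$. Summing the per-cell errors $B\sqrt{\log/(n_w p_k)}\cdot p_k$ over the accepted cells and applying Cauchy--Schwarz yields the schematic $B^2K\,\mathrm{mass}(A_\lambda)/(\kappa s)^2$. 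I will invoke Theorem~\ref{thm:U2} for the precise splits. The floor test is weight-free, so the floor part is unchanged from Claim 2.

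\textbf{Claim 1 (Model-B).} This is the main obstacle. I would build two Le~Cam pairs at bounded ratio, one per axis, and bound total variation via KL.
\begin{itemize}
\item \emph{Floor axis.} Construct two target laws, each realized as a bounded reweighting of a common $P$, such that $Q_X$ moves mass of order $s$ on a region of mass about $\beta$. In one world a fixed lattice threshold has $\CQ \ge \beta$; in the other it has $\CQ < \beta$ while $\betastar$ shifts by $\asymp s$. The $P$-samples are identical and the $Q$-samples have KL $\asymp m s^2/\beta$, so the worlds are indistinguishable when $s \lesssim \sqrt{\beta/m}$.
\item \emph{Risk axis.} Keep $Q_X$ fixed and perturb $\eta$ on an accepted region of $Q$-mass about $\beta$ by $\kappa s/\beta$ in $L$-mean. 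This flips the sign of $\budget$ at coverage $\betastar - s$, which needs the margin height $\kappa$ to convert a coverage shift into a risk shift. The Bernoulli KL over $n$ labeled samples, with $P$-mass of order $\beta$ on that region, is $\asymp n(\kappa s)^2/\beta$, giving $s \lesssim \kappa^{-1}\sqrt{\beta/n}$.
\item \emph{Contradiction.} A valid procedure that is powered in the good world must certify with probability $\ge 1/2$ there, hence with probability $\ge 1/2 - \mathrm{TV} > \delta$ in the bad world, contradicting validity since $\delta \le 1/8$.
\end{itemize}

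The delicate part is making both perturbations inside one family while preserving the regular margin $(\kappa, s_0)$, $w \le B$ with $B \ge 2$, and the parameter rectangle, and choosing a compatible lattice per $s$. I would first try $\alpha, \kappa$ bounded away from $0$ and $1$, so the Bernoulli KL stays $\asymp$ (difference)$^2$. The max of the two necessary conditions then gives the additive display up to a factor $2$, and Theorem~\ref{thm:lower} supplies the constants.
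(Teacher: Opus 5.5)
Your proposal is correct and follows essentially the paper's own route. The paper proves Theorem~\ref{thm:law} by assembling three results. First, Theorem~\ref{thm:lower}: a $w\equiv 1$ label-perturbation pair on the inframarginal safe block, and a $Q_X$-only pair moving $\varepsilon\asymp s$ of target mass out of that block, sharing the base world, with max $\ge$ half-sum. Second, Theorem~\ref{thm:modelA}: an empirical-Bernstein UCB plus floor LCB at the $\mathrm{LR}_{\mathrm{margin}}(s)$ witness, with variance envelope $\bar V_{\lambda_s}\le B\,\CQ(\lambda_s)\lesssim B\beta$ and the range term absorbed via $\kappa s\lesssim\beta$. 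Third, Theorems~\ref{thm:U1}--\ref{thm:U2} with Proposition~\ref{prop:I1} for Model-B$'$.

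The one point to make explicit when you fill in Claim~1 is that each bad world must be infeasible at \emph{every} floor-clearing lattice point, i.e.\ its frontier is pushed below $\beta$. The paper gets this from the uniform budget drop on the inframarginal block. It is not enough to flip the budget sign at a single coverage level or push a single threshold below the floor.
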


\begin{figure}[t]
\centering
\includegraphics[width=\textwidth]{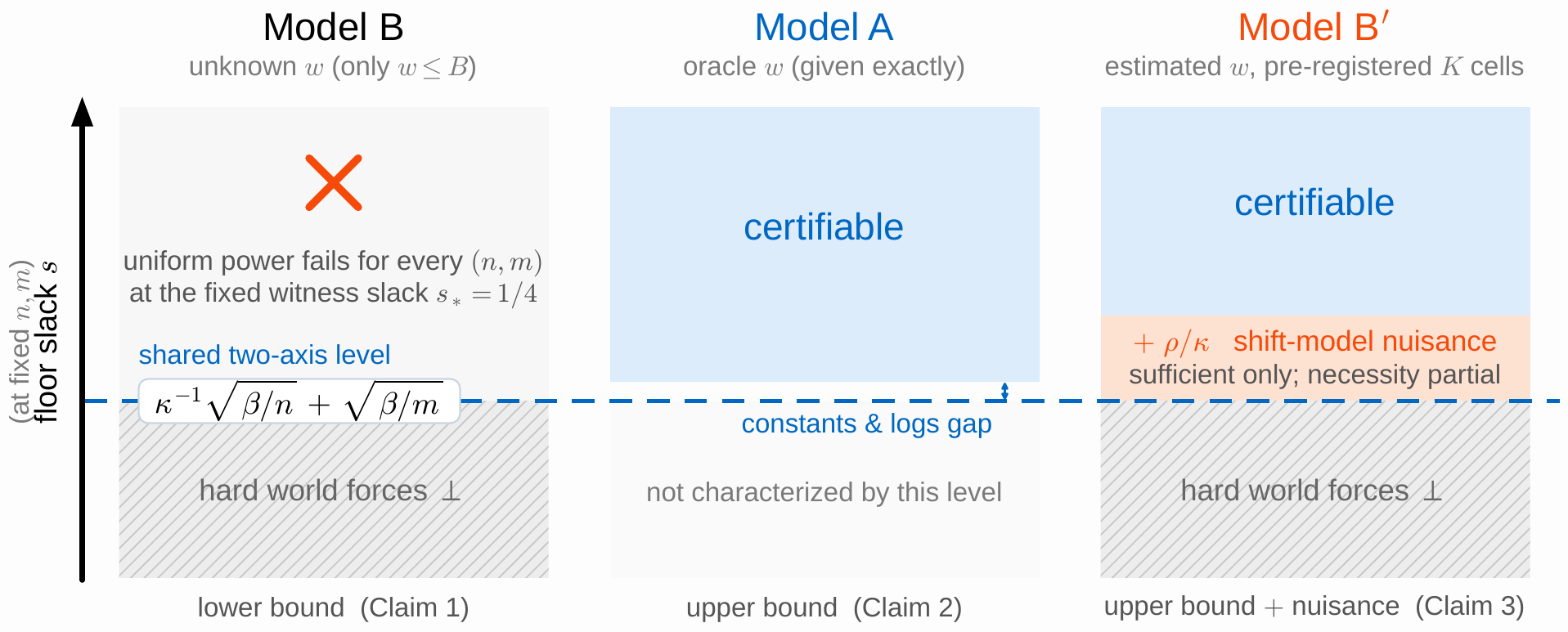}
\caption{\textbf{The Floor Certification Map as a level diagram (Theorem~\ref{thm:law}).}
Vertical axis: floor slack $s$ at fixed $(n, m)$; the dashed level is the two-axis rate $\kappa^{-1}\sqrt{\beta/n} + \sqrt{\beta/m}$ of the template~\eqref{eq:law}, and hatching marks regions with no uniform guarantee: some hard world is refused with probability at least $\tfrac{1}{2}$ there. \emph{Model B} (unknown weights, only $w \le B$): below the level refusal is forced on a hard world (Claim 1, Theorem~\ref{thm:lower}; one hard construction family per $(n, m)$, existential in the lattice with a compatible $\lat$ per $s$); moreover uniform power fails already at the fixed constant witness slack $s_* = \tfrac{1}{4}$ for every $(n, m)$, since Model B is inconsistent over $\cW_B$ at the operating point $\alpha = \beta = \tfrac{1}{2}$ ($B \ge 2$, $\delta \le 1/8$, on every finite nested-threshold lattice containing the theorem's two fixed thresholds; Theorem~\ref{thm:modelB-incon}), so no shrinking uniform certifiable threshold exists and no certifiable region is asserted for this column. \emph{Model A} (oracle weights): the budgeted certificate certifies above the same level up to constants and logarithms (Claim 2, Theorem~\ref{thm:modelA}); this shared level is the cross-model match, while no matching Model-A floor-axis lower bound is claimed and the below-level region is left uncharacterized (Remark~\ref{rmk:msplit}). \emph{Model B$'$} (estimated weights on a pre-registered $K$-cell partition): certifies above the level plus the explicitly priced nuisance increment $\nb/\kappa$ (Claim 3, Theorems~\ref{thm:U1}--\ref{thm:U2}); the lower bound re-runs within B$'$ under a compatible partition and lattice pre-registered for the target $s$ (not uniform in $s$ or over arbitrary partitions), and together with the upper bound it yields a correspondence on the totals when the weight splits do not dominate, pointwise and instance-dependent rather than a uniform minimax (Corollary~\ref{cor:bprime-minimax}); nuisance necessity is only partial: $K$-free core necessary, histogram $B^2 K$ rate only sufficient, unknown-$\eta$ open (Remark~\ref{rmk:scope}, Theorem~\ref{thm:t3-target}). The map is therefore necessarily model-split, and the cross-model match is the correct object rather than a substitute for an in-class theorem. Per-claim assumptions and precise scope are collected in Table~\ref{tab:claims}.}
\label{fig:map_structure}
\end{figure}

\begin{remark}[Scope of the map]
\label{rmk:scope}
\looseness=-1 \emph{Cross-model vs.\ within-model.} The match is \emph{across} models (Model-A upper vs.\ Model-B lower) over the full class. On the two resource axes it additionally holds \emph{within} Model-B$'$, as an instance-dependent (pointwise-on-totals) rate correspondence under a compatible pre-registered partition, not a uniform minimax over the class, and only a componentwise correspondence on the test sub-blocks $(n_r, m_f)$ (Remark~\ref{rmk:bprime-match}, Corollary~\ref{cor:bprime-minimax}), leaving only the histogram nuisance axis cross-model.

\looseness=-1 \emph{Nuisance necessity (partial).} The B$'$ nuisance is the histogram estimator's price, \emph{not} a proved-necessary axis. Feasibility is $p$-independent: the linearized risk $\GQ(\lambda) = \E_P[w \Sacc(L - \alpha)]$ equals $\sum_{k:\,\cX_k \subseteq A_\lambda} q_k(\bar\eta_k - \alpha)$ under the partition ($\bar\eta_k = \E_Q[\eta \mid X \in \cX_k]$), a sum over \emph{accepted} cells with no separate $p_k$, so no lower bound forces a separate \emph{population} source-covariate ($p$-estimation) split. The necessary core is $K$-free (lattice-valued certification, the standing output model of \S\ref{sec:setup}): a target $\Omega(q(A)/s^2)$, the floor $m$-axis localized, proved in the \emph{known-$\eta$} submodel of Theorem~\ref{thm:t3-target}, plus the labeled $n$-axis \emph{varying-$\eta$} reduction. The histogram $B^2 K$ rate is only sufficient; only the unknown-$\eta$ case stays open, with no complete minimax characterization claimed (Appendix~\ref{app:necessity}).
\end{remark}

\begin{table}[t]
\centering
\caption{The three claims of the Floor Certification Map (Theorem~\ref{thm:law}) by information model. \emph{Standing assumptions} (all claims): confidence $\delta \le 1/8$, bounded ratio $w \le B$, a nested-threshold lattice with $\mathrm{LR}_{\mathrm{margin}}$ (pre-registered for the upper bounds; constructed per $s$ for the lower), a regular frontier margin $(\kappa, s_0)$, and floor slack in the local regime $s = \betastar - \beta \in (0,\, s_0 \wedge c_0\beta]$. The match is \emph{across} models (B-lower vs.\ A-upper) over the full class, and additionally \emph{within} Model-B$'$ on the two resource axes (instance-dependent, not a uniform minimax over the class; the nuisance axis stays cross-model); the precise scope is Remark~\ref{rmk:scope} (formally Corollary~\ref{cor:bprime-minimax}). The Model-A side is shown achievable but a matching Model-A floor-axis lower bound is not claimed (Remark~\ref{rmk:msplit}).}
\label{tab:claims}
\footnotesize
\begin{tabular}{@{}p{0.10\textwidth} p{0.16\textwidth} p{0.22\textwidth} p{0.20\textwidth} p{0.18\textwidth}@{}}
\toprule
Claim & Model (weights) & Adds beyond standing & Proves & Status / open gap \\
\midrule
1 (lower) & B: unknown, only $w \le B$ & rectangle $B \ge 2$, $\alpha \in [.2,.9]$, $\beta \in (0,.6]$, $\kappa < \tfrac12\min\{\alpha,1-\alpha\}$ & two-axis impossibility $s \lesssim \kappa^{-1}\sqrt{\beta/n} + \sqrt{\beta/m}$ & no matching Model-B upper (Thm~\ref{thm:modelB-incon}) \\
2 (upper) & A: oracle $w$ known & $w$ supplied exactly & certificate attains Claim-1 rates (constants, logs) & match is cross-model only (Thm~\ref{thm:modelB-incon}) \\
3 (upper) & B$'$: $K$-cell stratified & pre-registered $\cK$; union-of-cells \& $p_{\min}$ (Algorithm~\ref{alg:bprime} feasibility) & implementable certificate; sufficient nuisance condition $\nb \lesssim \kappa s$ & necessity \emph{partial} (Thm~\ref{thm:t3-target}): $K$-free core necessary (lattice-valued, known-$\eta$ submodel), histogram $B^2 K$ not proved necessary, unknown-$\eta$ open; validity cond.\ on $\cK$ \\
\bottomrule
\end{tabular}
\end{table}

\paragraph{Geometry.}
\looseness=-1 The upper-bound variance proxy and the lower-bound hard-slice construction both localize to the accepted region and point to the localized second-moment family $\E_P[w^2 \Sacc]$ (an accepted-region quantity, not global ESS) as the operative complexity proxy in this analysis (Remark~\ref{rmk:localized}; its normalized effective-sample-size reading $\locfun{\Sacc}$ is heuristic, not separately derived here). This identification rests on the matched bound structure and the family-1 experiments (\S\ref{sec:synthetic}); a standalone theorem separating two worlds at fixed global ESS and frontier is left open, as is exact two-sided constant matching (Remark~\ref{rmk:localized}).

\begin{remark}[The map is floor-created]
\label{rmk:floorcreated}
\looseness=-1 As $\beta \to 0$ both lower-bound axes in Claim 1 vanish, each carrying $\sqrt{\beta}$, in the joint local regime $s \le c_0\beta$, so the slack vanishes \emph{with} $\beta$ (a vestigial sample requirement to confirm positive coverage persists). This vanishing is of the certifiable \emph{absolute} slack threshold $s \lesssim \kappa^{-1}\sqrt{\beta/n} + \sqrt{\beta/m}$ at fixed $(n,m)$; holding the \emph{relative} slack $s/\beta$ fixed instead, the requirements $n \gtrsim \beta/(\kappa s)^2$, $m \gtrsim \beta/s^2$ grow as $\Omega(1/\beta)$, so certifying a fixed relative slack does \emph{not} become free as $\beta \to 0$. Meanwhile $\cW_B$ and its overlap structure are unchanged: the positivity non-identifiability of \citet{damour2021overlap} persists at $\beta = 0$, and the $s$-scaled hardness exists only because the floor forces acceptance up to the frontier; Appendix~\ref{app:overlap} draws the contrasting overlap-created boundary.
The map is created by the floor, not by overlap (Remarks~\ref{rmk:beta0}, \ref{rmk:lossdriven}); the registered consistency diagnostic (monotone vanishing of the smallest certifiable slack as $\beta \to 0$) is Figure~\ref{fig:beta_vanishing} (Appendix~\ref{app:beta0}).
\end{remark}

\looseness=-2 Formal versions: Theorem~\ref{thm:lower} (Claim 1), Theorem~\ref{thm:modelA} (Claim 2, phase-diagram corollaries), Theorems~\ref{thm:U1}--\ref{thm:U2}/Proposition~\ref{prop:I1} (Claim 3; the parenthetical nuisance display above is schematic; the per-split form is Theorem~\ref{thm:U2}'s); assembled in Appendix~\ref{app:modelA}, pilot exponent artifact resolved in Appendix~\ref{app:exponent}. Each leg is elaborated in turn below.

\subsection{Model-B two-axis lower bound, and no matching Model-B upper}
\looseness=-1 Claim 1 is the impossibility leg; its formal statement is Theorem~\ref{thm:lower} (Appendix~\ref{app:lower}). Over the full unknown-weight class there is no matching Model-B upper: Model-B is inconsistent over $\cW_B$, with a hard world at every $(n,m)$ at the operating point $\alpha=\beta=\tfrac12$ (Theorem~\ref{thm:modelB-incon}, Appendix~\ref{app:modelB-incon}).
This inconsistency is why the map is \emph{necessarily} model-split, not a defect of the analysis: a single-model minimax statement over $\cW_B$ is provably unattainable (no valid procedure certifies a constant-slack world at any sample size), so the operative question is not the (nonexistent) $\cW_B$ minimax rate but which complexity restriction restores consistency and at what price, answered by Model-A (oracle weights) and Model-B$'$ (a pre-registered partition). The restriction is thus load-bearing, and the cross-model match is the correct object, not a substitute for an in-class theorem that cannot exist (Figure~\ref{fig:map_structure}).

\paragraph{Proof sketch.}
\looseness=-1 \emph{(i) The $n$-axis pair} needs no shift ($w \equiv 1$): raising the loss on an inframarginal slice (mass $\rho = \beta/2$) by $\Delta = 4\kappa s/\beta$ flips every coverage-$\ge\beta$ threshold to infeasible while the unlabeled law is untouched, at per-sample Kullback--Leibler (KL) cost $\lesssim \rho\Delta^2 \asymp (\kappa s)^2/\beta$; so Le~Cam forces refusal once $s \lesssim \kappa^{-1}\sqrt{\beta/n}$.
\emph{(ii) The $m$-axis pair} inverts the channels: shared $(P_X, \eta)$, $Q_X$ moves mass $\varepsilon \asymp s$ out of the safe block at $\chi^2$ cost $\asymp \varepsilon^2/\beta$ per draw, so the $\sqrt{\beta}$ is intrinsic, matching the variance-aware floor-LCB rate $\sqrt{\beta(1-\beta)/m}$ on the achievability side (Appendix~\ref{app:maxis}).
\emph{(iii) Additive combination:} both pairs share the base world, so one family per $(n,m)$ witnesses both axes, and the maximum of the two widths is at least half their sum: the additive display is this max form up to a factor of $2$ (Appendix~\ref{app:additive}).

\subsection{Model-A oracle achievability and the phase diagram}
\looseness=-1 Claim 2's oracle-weight certificate (Theorem~\ref{thm:modelA}, Appendix~\ref{app:modelA}) attains the Claim-1 rates up to constants and logarithms, the cross-model match.

\paragraph{Proof sketch.}
\emph{Model-A achievability:} an empirical-Bernstein test of the linearized risk $\GQ(\lambda) = \E_P[w \Sacc (L - \alpha)] \le 0$ \citep{maurer2009empirical} plus a variance-aware floor LCB, one shared budget over $\lat$ \citep{angelopoulos2021learn}, certifies once $\mathrm{LR}_{\mathrm{margin}}(s)$'s margins ($\kappa s/8$ risk, $s/4$ floor) absorb the deviations.

\begin{corollary}[Phase diagram]
\label{cor:phase}
\looseness=-2 At slack $s$ the \emph{sufficient} guaranteed-certifiable region in the $(n, m)$ plane (the set on which Theorem~\ref{thm:modelA}'s thresholds are met) contains a quadrant up to constants; in Model-A the source-weighted floor branch enlarges it (the labeled $n$-route can clear the floor even when the $m$-axis is starved), so the region is the union of the $m$-route quadrant $\{n \ge N_{\mathrm{risk}},\, m \ge M_{\mathrm{floor}}\}$ and an $n$-only half-strip $\{n \ge \max(N_{\mathrm{risk}}, N_{\mathrm{floor}})\}$ (thresholds in Theorem~\ref{thm:modelA}). Its corner is a \emph{sufficient-guarantee} boundary (schematic up to the unknown $\gtrsim$-constants, not a tight necessity threshold) indicating which purchase \emph{extends the guarantee}: left of it, buy labeled source; below it, buy unlabeled target; when the nuisance axis binds, buy weight-block samples $n_w, m_w$ (the nuisance radius shrinks at Proposition~\ref{prop:I1}'s rate); only shift-model \emph{misspecification} (weights not $\cK$-measurable) is uncurable by samples (sufficient boundary: Corollary~\ref{cor:buy}; feasibility readout: Corollary~\ref{cor:feasibility}).
\end{corollary}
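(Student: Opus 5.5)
The plan is to read the corollary directly off the sufficient conditions of Theorem~\ref{thm:modelA}. I will treat that theorem as supplying two certification routes for each fixed lattice point. The first is the \emph{risk} route: the empirical-Bernstein UCB on $\GQ(\lambda_s)$ falls below $0$ with probability $\ge 1-\delta/4$ once $n \ge N_{\mathrm{risk}} \asymp \beta\log(|\lat|/\delta)/(\kappa s)^2$, with $B$-dependent range terms folded into the constant. The second is the \emph{floor} route, which comes in two variants. The target variant is the weight-free LCB on $D_f$, which clears $\beta$ once $m \ge M_{\mathrm{floor}} \asymp \beta\log(|\lat|/\delta)/s^2$. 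The source-weighted variant is an LCB of $\E_P[w\Sacc]$ built from labeled draws. Under Model-A oracle weights this variant is unbiased for $\CQ$, and it clears $\beta$ once $n \ge N_{\mathrm{floor}}$, where $N_{\mathrm{floor}}$ is again of order $\beta\log(|\lat|/\delta)/s^2$ up to a $B$-factor from $\E_P[w^2\Sacc]\le B\,\CQ$.

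\textbf{Step 1.} I will verify each route at the witness $\lambda_s$ given by $\mathrm{LR}_{\mathrm{margin}}(s)$. This witness has risk budget at least $\kappa s/8$ and coverage at least $\beta+s/4$. The variance of $w\Sacc(L-\alpha)$ is at most $B\,\E_Q[\Sacc]\lesssim\beta$, using $\CQ\le\betastar\le\beta+s\le 2\beta$ in the local regime. The variance of $\Sacc$ under $Q$ is at most $\CQ\lesssim\beta$. So an empirical-Bernstein radius of the form $\sqrt{\beta\log/n}+B\log/n$ is below $\kappa s/8$ as soon as $n\ge N_{\mathrm{risk}}$. The lower-order range term is dominated because $\kappa s\le 1$ and $s\le\beta$. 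The floor radii are handled the same way.

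\textbf{Step 2.} I will take the union bound over the shared budget and the two routes. This gives that the certificate outputs some non-$\nocert$ point with probability $\ge 1/2$ whenever the risk test and at least one floor test pass at $\lambda_s$. Validity is inherited from Theorem~\ref{thm:modelA} and needs no new argument.

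\textbf{Step 3.} I will collect the parameter set. Since the risk test must pass in both cases, the certifiable region contains
\[
\{n\ge N_{\mathrm{risk}},\,m\ge M_{\mathrm{floor}}\}\cup\{n\ge \max(N_{\mathrm{risk}},N_{\mathrm{floor}})\}.
\]
The first set is a quadrant whose corner is $(N_{\mathrm{risk}},M_{\mathrm{floor}})$. Inverting the thresholds puts the corner on the curve $s\asymp\kappa^{-1}\sqrt{\beta/n}+\sqrt{\beta/m}$. The second set is an $n$-only half-strip with no constraint on $m$.

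\textbf{Step 4.} The purchase reading then follows from monotonicity. Left of the corner, only $N_{\mathrm{risk}}$ fails, so labeled source data is what is missing. Below the corner, the $m$-route fails, so unlabeled target data is what is missing, unless the half-strip is reached. When the nuisance binds, I will invoke Proposition~\ref{prop:I1}: its radius $\nb$ decreases in $n_w,m_w$, so weight-block samples are what is missing. Misspecification is different. If $w$ is not $\cK$-measurable, $\GQ$ is evaluated with a biased weight. That bias does not shrink with samples, so I will point to Corollary~\ref{cor:buy} for this part rather than reprove it.

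The main obstacle is Step 1 for the source-weighted floor branch. The variance of $w\Sacc$ must be shown to be $\lesssim\beta$, or else the dependence on $B$ must be charged explicitly to $N_{\mathrm{floor}}$. I would first try the bound $\E_P[w^2\Sacc]\le B\,\E_Q[\Sacc]$ and absorb the resulting $B$ into the constant, in line with Theorem~\ref{thm:modelA}'s convention.
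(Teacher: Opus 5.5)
Your proposal is correct and follows the paper's own route: the region is just $\{n \gtrsim N_{\mathrm{risk}}\} \cap \big(\{m \gtrsim M_{\mathrm{floor}}\} \cup \{n \gtrsim N_{\mathrm{floor}}\}\big)$, read off Theorem~\ref{thm:modelA}'s disjunctive sufficient conditions, so your Steps~1--2 only re-derive that theorem's power argument (including the bound $\E_P[w^2\Sacc] \le B\,\CQ$ charged to $N_{\mathrm{floor}}$, which is exactly how the paper handles your ``main obstacle''). Two small fixes: the range-term domination needs $\kappa s \le \beta$, which follows from $\kappa \le 1-\alpha$ (or from the witness bound $\kappa s/8 \le -\GQ(\lambda_s) \le \CQ(\lambda_s)$) rather than from $\kappa s \le 1$ and $s \le \beta$, and the misspecification clause rests on Remark~\ref{rmk:misspec} (the $\cK$-projection bias $b_\cK$ is a population quantity that no sample size removes), not on Corollary~\ref{cor:buy}, which only compares the radii $s_n$ and $s_m$.
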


\subsection{Model-B$'$ validity, power, and the nuisance price}
\looseness=-1 The implementable certificate (Algorithm~\ref{alg:bprime}, \S\ref{sec:setup}) realizes Claim 3 under the pre-registered stratified-shift model; we now state its validity and power.

\begin{theorem}[U1: validity over the stratified-shift class]
\label{thm:U1}
\looseness=-1 Under Assumption~\ref{ass:bprime}, for every world in $\cW_B$ with $\cK$-measurable $w$, Algorithm~\ref{alg:bprime} satisfies
$\Prob\big(\text{certify } \hat\lambda \,\wedge\, (\RQ(\hat\lambda) > \alpha \,\vee\, \CQ(\hat\lambda) < \beta)\big) \le \delta$.
\end{theorem}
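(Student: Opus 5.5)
The plan is a learn-then-test union bound over the lattice, built on one "good event" for the weight block and one per-$\lambda$ event for each test. The risk and floor tests are handled separately, and the weight-estimation error is absorbed into the risk margin $\nb$.

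\emph{Step 1 (weight event).} Using only $(D_w^P, D_w^Q)$, which are independent of $(D_r, D_f)$ by the four-block split, define the event $E_w$ on which, simultaneously for every $\lambda \in \lat$,
\[
\Big|\sum_{k:\,\cX_k \subseteq A_\lambda} p_k(\hat w_k - w_k)(\bar\eta_k^P - \alpha)\Big| \le \nb ,
\qquad\text{where } \bar\eta_k^P = \E_P[L \mid X \in \cX_k].
\]
We take $\Prob(E_w) \ge 1-\delta_w$ directly from Proposition~\ref{prop:I1}, with the constant $L_w = \log(4(|\lat|+1)/\delta_w)$ supplying simultaneity over the lattice. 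Assumption~\ref{ass:bprime}(ii) keeps $\hat p_k$ bounded away from $0$, and clipping to $[0,B]$ only moves $\hat w_k$ toward $w_k \in [0,B]$. Assumption~\ref{ass:bprime}(iii) lets the linearized risk decompose cellwise as $\GQ(\lambda) = \sum_{k:\,\cX_k\subseteq A_\lambda} p_k w_k(\bar\eta^P_k - \alpha)$. Because $w$ is $\cK$-measurable, this equals $\E_Q[\Sacc(\eta-\alpha)]$; this is exactly where validity is conditional on the stratified-shift model. The quantity $\hat G(\lambda) := \E_P[\hat w\,\Sacc(L-\alpha)]$ differs from $\GQ(\lambda)$ by the displayed sum, so on $E_w$ we get $\GQ(\lambda) \le \hat G(\lambda) + \nb$.

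\emph{Step 2 (risk test).} Condition on the weight block, so that $\hat w$, and hence $\nb$, is fixed. The summands $\hat w(X)\Sacc(X)(L-\alpha)$ over $D_r$ are then i.i.d.\ and lie in $[-B, B]$. The Maurer--Pontil empirical-Bernstein bound gives $\Prob(\UCB_{D_r} < \hat G(\lambda)) \le \delta_r'$ for each $\lambda$. On $E_w$ intersected with this event, passing the test implies
\[
\GQ(\lambda) \le \UCB_{D_r} + \nb \le 0 ,
\]
i.e.\ $\E_Q[\eta\Sacc] \le \alpha\,\CQ(\lambda)$. To turn this into $\RQ(\lambda) \le \alpha$ we also need $\CQ(\lambda) > 0$, which Step 3 supplies through the strict floor condition.

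\emph{Step 3 (floor test).} Bernstein's inequality on $D_f$, whose entries are i.i.d.\ from $Q_X$ and involve no weights, gives $\Prob(\LCB_{D_f}(\Sacc) > \CQ(\lambda)) \le \delta_f'$. On the complement of that event, passing the floor test implies $\CQ(\lambda) \ge \beta$ and $\CQ(\lambda) > 0$, which closes the gap left in Step 2.

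\emph{Step 4 (assembly).} Let $F$ be the intersection of $E_w$ with all $2|\lat|$ per-$\lambda$ test events. A union bound gives $\Prob(F^c) \le \delta_w + |\lat|\delta_r' + |\lat|\delta_f' = \delta$. On $F$, every $\lambda$ that passes both tests satisfies both claims, so the data-dependent selection rule (floor-LCB-maximal, with tie-breaks) never outputs a violating $\lambda$. This is the learn-then-test argument of \citet{angelopoulos2021learn}.

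The main obstacle is Step 1: making sure that the radius $\nb$ actually computed from the data (including the plug-in correction) dominates the weight error uniformly in $\lambda$. The difficulty is that $\nb$ itself depends on $D_w$, and the error term involves the unknown $\bar\eta_k^P$. I would handle this by bounding the error with $\sum_{k \subseteq A_\lambda} p_k|\hat w_k - w_k|$, using $|\bar\eta^P_k - \alpha| \le 1$, and controlling $\sum_{k\subseteq A_\lambda}|\hat q_k - q_k|$ and the ratio error in $\hat p_k$ with multiplicative Chernoff bounds valid under $p_{\min}$. Proposition~\ref{prop:I1} is invoked as the statement that the plug-in-corrected $\nb$ upper-bounds this quantity on an event of probability at least $1-\delta_w$. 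For the released closed-form radius, Remark~\ref{rmk:releasedradius} plays the same role.
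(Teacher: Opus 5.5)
Your proposal is correct and follows the paper's proof essentially step for step: the same three failure families (the weight good event from Proposition~\ref{prop:I1}, per-$\lambda$ empirical-Bernstein risk UCBs conditional on $D_w$, per-$\lambda$ weight-free floor LCBs), the same bias identity absorbed by the $-\nb$ margin, and the same union bound with learn-then-test selection. The only difference is cosmetic. You define $E_w$ directly as the signed-bias bound, whereas the paper defines it as $r_\lambda=\E_P[|\hat w-w|\,\Sacc]\le\nb$ and then derives the signed bound using $|\eta-\alpha|\le 1$. Your event therefore contains the paper's and inherits the same $1-\delta_w$ guarantee.
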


\looseness=-1 \emph{Proof sketch.}
Three failure events (nuisance budget, risk bounds, floor bounds) share the budget $\delta_w + |\lat|\delta_r' + |\lat|\delta_f' = \delta$; on their complement, the bias identity $\E_P[\hat w \Sacc(\eta - \alpha)] = \GQ(\lambda) + \E_P[(\hat w - w)\Sacc(\eta - \alpha)]$ bounds the weight-estimation bias by $\nb$ (Proposition~\ref{prop:I1}), which the budgeted test's margin absorbs exactly; the floor side is a binomial bound (Appendix~\ref{app:bprime}). \hfill$\square$

\begin{theorem}[U2: power]
\label{thm:U2}
\looseness=-1 Under Assumption~\ref{ass:bprime}, $\mathrm{LR}_{\mathrm{margin}}(s)$, and a regular margin $(\kappa, s_0)$ with $s \le s_0 \wedge c_0\beta$, Algorithm~\ref{alg:bprime} certifies every world with slack exactly $s = \betastar - \beta$ (the world's actual slack, not a lower bound on it) in two forms.
\emph{(i) Sharp (conditional) form.} Condition on $D_w$, and write $Z^\lambda = \hat w \Sacc(L - \alpha)$ for the estimated-weight risk integrand. Let the realized estimated-weight variance proxy be
$\bar V_{\lambda_s} = \Var_P(Z^{\lambda_s} \mid D_w) \le \E_P[\hat w^2 S_{\lambda_s}]$ (random, $D_w$-measurable). Fix any realization $D_w \in E_w$ with $\rho_{\lambda_s} \le \kappa s/32$, and suppose the $D_w$-measurable size event
\[
  n_r \gtrsim \bar V_{\lambda_s}\log(|\lat|/\delta)/(\kappa s)^2 + B\log(|\lat|/\delta)/(\kappa s),
  \qquad
  m_f \gtrsim \beta\log(|\lat|/\delta)/s^2 + \log(|\lat|/\delta)/s
\]
holds. Then the conditional refusal probability satisfies $\Prob\big(\text{Algorithm~\ref{alg:bprime} outputs } \nocert \mid D_w\big) \le \delta_r' + \delta_f'$ (when $\lambda_s$ passes both tests the algorithm certifies its floor-LCB-maximal passing point, not necessarily $\lambda_s$). Marginally,
\[
  \Prob(\nocert) \le \delta_w + \Prob(\rho_{\lambda_s} > \kappa s/32) + \Prob(\text{size event fails}) + \delta_r' + \delta_f';
\]
form (ii) discharges the two middle terms at deterministic split sizes.
\emph{(ii) Deterministic, unconditional.} At the population-mass split sizes $n_w \gtrsim B^2 K_{\lambda_s} p(A_{\lambda_s})/(\kappa s)^2$ and $m_w \gtrsim K_{\lambda_s} q(A_{\lambda_s})/(\kappa s)^2$ (plus $L_w$-terms and the Assumption~\ref{ass:bprime}(ii) feasibility floor $n_w \gtrsim \log(4K/\delta_w)/p_{\min}$), with $n_r \gtrsim B\beta\log(|\lat|/\delta)/(\kappa s)^2 + B\log(|\lat|/\delta)/(\kappa s)$ and $m_f$ as in (i), the Proposition~\ref{prop:I1} good event secures both $\rho_{\lambda_s} \le \kappa s/32$ and the envelope $\bar V_{\lambda_s} \le \tfrac54 B\,\CQ(\lambda_s) \lesssim B\beta$ at no extra confidence budget, so the certificate succeeds with probability $\ge 1 - \delta$ \emph{unconditionally} (the random budgets of (i) folded into $\delta$; Appendix~\ref{app:bprime}).
\end{theorem}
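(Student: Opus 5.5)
The plan is to reduce power to showing that the witness point $\lambda_s$ from $\mathrm{LR}_{\mathrm{margin}}(s)$ passes both tests with high probability. If it does, the output rule of Algorithm~\ref{alg:bprime} returns some passing point rather than $\nocert$. Which point it returns is irrelevant to power, and validity is Theorem~\ref{thm:U1}'s business. So the whole argument lives at the single index $\lambda_s$, and the union bound over $\lat$ enters only through the per-test levels $\delta_r',\delta_f'$ already built into $L_w$ and the UCB/LCB radii.

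\emph{Form (i), conditional on $D_w$.} Fix $D_w \in E_w$ with $\rho_{\lambda_s} \le \kappa s/32$. The bias identity from the U1 sketch gives $\E_P[Z^{\lambda_s}\mid D_w] = \GQ(\lambda_s) + \E_P[(\hat w - w)S_{\lambda_s}(\eta-\alpha)]$.
\begin{itemize}
\item Since $\GQ(\lambda_s) = -\E_Q[S_{\lambda_s}(\alpha-\eta)] \le -\kappa s/8$ by $\mathrm{LR}_{\mathrm{margin}}(s)$, and the bias term is at most $\rho_{\lambda_s}$ on $E_w$ (Proposition~\ref{prop:I1}), the conditional mean is at most $-\kappa s/8 + \kappa s/32$.
\item The Maurer--Pontil UCB exceeds the mean by at most $c\sqrt{\bar V_{\lambda_s}\log(|\lat|/\delta)/n_r} + cB\log(|\lat|/\delta)/n_r$. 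This uses that the empirical variance concentrates to within a constant factor of $\bar V_{\lambda_s}$, and that the range is at most $B$ because $\hat w \le B$ after clipping.
\item The stated $n_r$ condition makes this excess at most $\kappa s/32$. Then $\UCB \le -\kappa s/16 + \kappa s/32 < -\nb$ whenever $\nb \le \kappa s/32$, except with probability $\delta_r'$.
\end{itemize}
One must check that the algorithm's plug-in-corrected $\nb$ is dominated by the population $\rho_{\lambda_s}$ up to a constant on $E_w$. I expect this to follow from Proposition~\ref{prop:I1}'s two-sided control of the accepted-cell mass estimates.

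For the floor test, $\CQ(\lambda_s) \ge \beta + s/4$. A variance-aware Bernstein LCB for a Bernoulli with mean $\le \betastar \lesssim \beta$ (local regime, $s\le c_0\beta$) deviates by at most $c\sqrt{\beta\log/m_f} + c\log/m_f$. The $m_f$ condition makes this below $s/4$, so $\LCB \ge \beta > 0$ except with probability $\delta_f'$. Summing the two failure probabilities gives the conditional bound. The marginal statement follows by integrating over $D_w$ and adding the probabilities of $E_w^c$ ($\le\delta_w$), of $\rho_{\lambda_s} > \kappa s/32$, and of the size event failing.

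\emph{Form (ii).} Here I discharge the two middle terms deterministically on the Proposition~\ref{prop:I1} good event.
\begin{itemize}
\item \emph{Nuisance radius.} The nuisance radius scales like $\sqrt{B^2K_{\lambda_s}p(A_{\lambda_s})L_w/n_w} + \sqrt{K_{\lambda_s}q(A_{\lambda_s})L_w/m_w}$ plus lower-order $L_w$ terms. This is a cellwise relative-error bound for $\hat p_k,\hat q_k$, made multiplicative by the $p_{\min}$ floor of Assumption~\ref{ass:bprime}(ii) and summed over accepted cells with Cauchy--Schwarz. The stated split sizes then give $\rho_{\lambda_s} \le \kappa s/32$.
\item \emph{Variance envelope.} On the same event $\hat w_k \le \frac54 w_k$ in relative terms (up to the clip at $B$). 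Hence $\bar V_{\lambda_s} \le \E_P[\hat w^2 S_{\lambda_s}] \le B\,\E_P[\hat w S_{\lambda_s}] \le \frac54 B\,\CQ(\lambda_s) \lesssim B\beta$. The size event is therefore implied by the deterministic $n_r$ condition, with no further confidence spent.
\end{itemize}
The main obstacle is this multiplicative accuracy of the histogram ratio on small accepted cells. Both the radius and the $\frac54$ envelope rely on it. I would first try a relative Chernoff bound per cell at level $\delta_w/(4K)$, using $p_k \ge 8\log(4K/\delta_w)/n_w$ so that the relative error is at most $\tfrac{1}{\sqrt2}$-type constants. If the constant $\frac54$ does not fall out directly, I would tighten it by enlarging the constant in $p_{\min}$.
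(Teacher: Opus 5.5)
Your form (i) follows the paper's argument: the witness $\lambda_s$, the bias identity, the empirical-Bernstein overshoot with range $B$, the variance-aware floor LCB, and the sum of the two test tails. Note that the form (i) hypothesis is already stated on the realized computable radius, so the domination check you flag there is not needed; it belongs to form (ii).

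\textbf{The gap.} It sits in form (ii), at the step you call the main obstacle. The cellwise bound $\hat w_k\le\tfrac54 w_k$ on accepted cells is not available.
\begin{itemize}
\item It needs relative accuracy of $\hat q_k$ on every accepted cell, i.e.\ $m_w q_k\gtrsim\log(K/\delta_w)$.
\item Assumption~\ref{ass:bprime}(ii) lower-bounds only the \emph{source} masses $p_k$.
\item The class imposes only $w\le B$, so $q_k=w_kp_k$ may be arbitrarily small.
\item The stated $m_w$ condition carries no per-cell term.
\end{itemize}
For $q_k\ll\log(K/\delta_w)/m_w$, no tail bound at level $\delta_w/(4K)$ yields $\hat q_k\le(1+\epsilon)q_k$: a single target hit already gives $\hat q_k=1/m_w\gg q_k$. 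Enlarging the constant in $p_{\min}$ does nothing on the target side.

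\textbf{The fix: no cellwise control is needed.}
\begin{itemize}
\item On the good event, the importance identity and the $L^1$ domination give $\E_P[\hat w S_{\lambda_s}]\le\E_P[wS_{\lambda_s}]+r_{\lambda_s}\le\CQ(\lambda_s)+\rho_{\lambda_s}$, where $r_{\lambda_s}=\E_P[|\hat w-w|S_{\lambda_s}]$ is the true error.
\item The witness margin gives $\kappa s/8\le-\GQ(\lambda_s)=\E_Q[S_{\lambda_s}(\alpha-\eta)]\le\CQ(\lambda_s)$.
\item Hence $\rho_{\lambda_s}\le\kappa s/32\le\CQ(\lambda_s)/4$. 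That is exactly the $\tfrac54$, so the variance envelope is a corollary of the radius bound rather than a separate obstacle.
\end{itemize}

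\textbf{The radius step.} $\rho_{\lambda_s}$ is the algorithm's fixed formula~\eqref{eq:nuisancebudget} in the plug-in masses $\tilde q,\tilde p$, so you should not re-derive a scaling. Yours multiplies $K_{\lambda_s}q$ by $L_w$, which would put a multiplicative log into the split sizes instead of the additive $L_w$-terms in the statement. What you need instead:
\begin{itemize}
\item Use the reverse direction of the mass control. It holds on the full event $G_w$ from the proof of Proposition~\ref{prop:I1}, not on $E_w$ alone: $\tilde q(A)\le 2q(A)+2D^q_A+4L_w/m_w$ with $D^q_A=\sqrt{K_{\lambda_s}q(A)/m_w}+\sqrt{2L_w/m_w}$, and likewise for $\tilde p$.
\item Substituting the stated split sizes then makes each of the four terms of $\rho_{\lambda_s}$ of order $\kappa s/\sqrt C$.
\item This works only because the corrections $2D^q_A+4L_w/m_w$ are absorbed into $q(A_{\lambda_s})$ through the same witness relation: $q(A_{\lambda_s})\ge\kappa s/8$, and hence $p(A_{\lambda_s})\ge\kappa s/(8B)$. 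Your sketch never invokes this lower bound on the accepted mass.
\end{itemize}
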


\begin{proposition}[I1: stratified-shift histogram estimator]
\label{prop:I1}
Under Assumption~\ref{ass:bprime}, with probability $\ge 1 - \delta_w$ simultaneously for every region $A \in \{A_\lambda\}_{\lambda \in \lat} \cup \{\cX\}$,
$\E_P[\,|\hat w - w| \mathbf{1}_A\,] \le 2\big[\sqrt{|A|_\cK\, q(A)/m_w} + \sqrt{2L_w/m_w}\big] + 2B\big[\sqrt{|A|_\cK\, p(A)/n_w} + \sqrt{2L_w/n_w}\big]$, and the same bound holds with $(p, q)$ replaced by plug-in corrected empirical masses, making the budgets $\nb$ computable.
\end{proposition}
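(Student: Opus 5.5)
The plan is to reduce the region-wise $L^1$ error to per-cell count deviations, and then control the summed deviations with an expectation bound plus a bounded-differences concentration step, followed by a union bound over the $|\lat|+1$ regions. By Assumption~\ref{ass:bprime}(iii), every $A\in\{A_\lambda\}\cup\{\cX\}$ is a union of cells. Both $w$ and $\hat w$ are $\cK$-measurable, so
\[
\E_P[|\hat w - w|\mathbf{1}_A]=\sum_{k:\,\cX_k\subseteq A} p_k\,|\hat w_k - w_k| .
\]
Here $|A|_\cK$ is the number of such cells. The estimates $\hat p_k,\hat q_k$ are the empirical cell frequencies on $D_w^P$ (size $n_w$) and $D_w^Q$ (size $m_w$).

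\emph{Step 1 (deterministic per-cell inequality).} I would show
\[
p_k|\hat w_k - w_k|\le |\hat q_k - q_k| + 2B|\hat p_k - p_k| .
\]
\begin{itemize}
\item Since $w_k\in[0,B]$, clipping to $[0,B]$ is a contraction toward $w_k$. Hence $|\hat w_k - w_k|\le|\hat q_k/\hat p_k - w_k|$ whenever $\hat p_k>0$.
\item Split $p_k|\hat w_k-w_k| = \hat p_k|\hat w_k - w_k| + (p_k-\hat p_k)|\hat w_k-w_k|$.
\item For the first term, $\hat p_k|\hat q_k/\hat p_k - w_k| = |\hat q_k - w_k\hat p_k|\le|\hat q_k-q_k| + w_k|\hat p_k-p_k|$, using $q_k = w_kp_k$.
\item For the second term, $|\hat w_k - w_k|\le B$, so it is at most $B|\hat p_k-p_k|$.
\item In the degenerate case $\hat p_k=0$, we have $\hat w_k=B$ and $p_k|\hat w_k-w_k|\le Bp_k = B|\hat p_k-p_k|$.
\end{itemize}
Summing over the cells in $A$ gives $\E_P[|\hat w-w|\mathbf{1}_A]\le F_A^Q + 2B\,F_A^P$, where $F_A^Q=\sum_{k\subseteq A}|\hat q_k-q_k|$ and $F_A^P=\sum_{k\subseteq A}|\hat p_k-p_k|$.

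\emph{Step 2 (expectation).} By Jensen, $\E|\hat q_k - q_k|\le\sqrt{q_k(1-q_k)/m_w}\le\sqrt{q_k/m_w}$. Cauchy--Schwarz over the $|A|_\cK$ cells then gives $\E F_A^Q\le\sqrt{|A|_\cK\,q(A)/m_w}$, and similarly $\E F_A^P\le\sqrt{|A|_\cK\,p(A)/n_w}$.

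\emph{Step 3 (concentration and union bound).} Changing one target draw moves at most two cell counts, so $F_A^Q$ has bounded differences $2/m_w$. McDiarmid then gives $F_A^Q\le\E F_A^Q+\sqrt{2\log(1/\delta')/m_w}$, and the same holds for $F_A^P$ with $n_w$. I would allocate $\delta'=\delta_w/(4(|\lat|+1))$ to each of the two sides and each region. The resulting logarithm is at most $L_w$, so with probability $\ge1-\delta_w$ the bound holds simultaneously for all regions. The stated factor $2$ in front of the $q$-bracket leaves slack, since Step 1 only produces a factor $1$ there.

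\emph{Step 4 (plug-in version).} This is the step needing the most care, because $q(A)$ and $p(A)$ are unknown. On the same good event, $q(A)=\sum_{k\subseteq A}q_k\le\hat q(A)+F_A^Q$. I would substitute this into the right-hand side and solve the resulting inequality of the form $x\le a+\sqrt{c(\hat q(A)+x)}$. That yields a computable bound with the empirical mass and slightly inflated constants (the ``plug-in corrected'' mass). No extra confidence budget is needed, because the same event is reused. The main obstacle I foresee is keeping the constants inside the stated factor $2$ for this plug-in form rather than the population form. If the constants do not fit, I would define the corrected mass $\tilde q(A)=\hat q(A)+\sqrt{2L_w/m_w}+\cdots$ explicitly, as an upper confidence bound for $q(A)$, and absorb the excess into that definition.
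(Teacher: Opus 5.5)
Your proof is correct, and its skeleton is the paper's: reduce to per-cell count deviations, bound the mean by Jensen and Cauchy--Schwarz, apply McDiarmid with differences $2/m_w$ and $2/n_w$, and union over the $|\lat|+1$ fixed regions at level $\delta_w/(4(|\lat|+1))$. You depart from the paper in two sub-steps, and both are worth recording.

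\emph{Per-cell inequality.} The paper first secures a multiplicative Chernoff event $\{\hat p_k \ge p_k/2\ \forall k\}$ and then writes $p_k|\hat w_k - w_k| \le (p_k/\hat p_k)\,|(\hat q_k - q_k) - w_k(\hat p_k - p_k)| \le 2|\hat q_k - q_k| + 2B|\hat p_k - p_k|$. That event is the only place in the proof where Assumption~\ref{ass:bprime}(ii) is used, and it consumes $\delta_w/4$ of the budget. Your split $p_k = \hat p_k + (p_k - \hat p_k)$, combined with $|\hat w_k - w_k| \le B$, makes the inequality deterministic (the $\hat p_k = 0$ case included), with constant $1$ rather than $2$ on the target term. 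Your route therefore proves the population bound without (ii) at all. It would also drop the all-cell floor $n_w \gtrsim \log(4K/\delta_w)/p_{\min}$, which the paper carries into Theorem~\ref{thm:U2}(ii) and Corollary~\ref{cor:bprime-minimax} and explicitly attributes to that Chernoff step.

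\emph{Plug-in form.} The paper fixes $\tilde q(A) = 2\hat q(A) + 4L_w/m_w$ (and $\tilde p$ analogously). It certifies $q(A) \le \tilde q(A)$ by a separate Bernstein--Chernoff lower-tail bound per region and marginal, each of probability at most $e^{-2L_w}$. You instead reuse the McDiarmid event: $q(A) \le \hat q(A) + F^Q_A$ and the quadratic in $\sqrt{q(A)}$ give $q(A) \le 2\hat q(A) + 2\sqrt{2L_w/m_w} + |A|_\cK/m_w$ at no extra confidence cost. The right-hand side of the bound is increasing in $q(A)$ and $p(A)$, so any such on-event upper bound can be substituted with the stated constants unchanged. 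The constant-fitting obstacle you anticipate therefore does not arise. The price is that your correction is of order $\sqrt{L_w/m_w}$ rather than $L_w/m_w$, hence coarser when accepted masses are small. It also does not reproduce the specific budgets \eqref{eq:nuisancebudget} for which the paper's power analysis is written.
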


\looseness=-1 Theorem~\ref{thm:U2}'s power guarantee is \emph{conditional} on $D_w$ (form (i), sharp in the realized \emph{estimated}-weight variance proxy $\bar V_{\lambda_s}$, with the small-nuisance event $\{\rho_{\lambda_s} \le \kappa s/32\}$ and the random size event discharged by form (ii)), and is \emph{unconditional} at the deterministic population-mass split sizes (form (ii)).
Proposition~\ref{prop:I1} with the upper-tail closure of Appendix~\ref{app:bprime} secures the good event with probability $\ge 1 - \delta_w$ at those split sizes, folding it into the overall $\delta$-budget rather than leaving a separate high-probability event uncontrolled.
\looseness=-1 In the regime $s \ge 32\nb/\kappa$, Theorem~\ref{thm:U2} reproduces the Model-B lower-bound orders of Theorem~\ref{thm:law} up to constants, logarithms, and the shift-model dimension $K$ (nuisance split only); the $\nb/\kappa$ nuisance floor is \emph{not} proved information-theoretically necessary in Model-B$'$ (its $K$-free core necessary, the histogram $B^2K$ only sufficient, unknown-$\eta$ open; Remark~\ref{rmk:scope}, Theorem~\ref{thm:t3-target}, Appendix~\ref{app:necessity}).
The match on the two \emph{main} axes is moreover genuinely \emph{within} Model-B$'$, not merely cross-model. The reason: the lower-bound constructions of Theorem~\ref{thm:lower} are weight-simple ($w \equiv 1$ on the $n$-axis, piecewise-constant on a fixed three-cell partition on the $m$-axis), so for any pre-registered $\cK$ refining those pieces they are $\cK$-measurable, hence in the sample-independent Model-B$'$ class (Assumption~\ref{ass:bprime}'s $p_{\min}$ is the certificate's feasibility condition, not a class-membership requirement). Knowing $\cK$ does not defeat the Le~Cam pairs: the $m$-axis differs only in $Q_X$, detectable only once $m \gtrsim \beta/s^2$ (Remark~\ref{rmk:bprime-match}).

\looseness=-1 The upshot, stated formally as Corollary~\ref{cor:bprime-minimax} below, is a two-tier match within Model-B$'$: a \emph{pointwise} correspondence on the \emph{totals} $(n, m)$ (off the weight-split-dominated regime, instance-dependent, not a uniform minimax over the class), and only a componentwise \emph{rate} correspondence on the risk/floor \emph{test} sub-blocks $(n_r, m_f)$; the histogram nuisance $B^2 K$ is the lone unmatched axis, open under unknown $\eta$.
\begin{corollary}[Two-axis rate correspondence within Model-B$'$ (on totals)]
\label{cor:bprime-minimax}
\looseness=-1 \emph{Operating regime.} Work on the Theorem~\ref{thm:lower} rectangle ($B \ge 2$, $\alpha \in [0.2, 0.9]$, $\beta \in (0, 0.6]$, $\kappa < \tfrac12\min\{\alpha, 1-\alpha\}$); fix $\delta \le 1/8$, a regular-margin level $(\kappa, s_0)$, and a target slack $s \le s_0 \wedge c_0\beta$.

\emph{Compatible partition and lattice.} Pre-register a score-refined partition $\cK$ and a risk-score-aligned nested-threshold lattice $\lat$ of mesh $< s/4$ that satisfies $\mathrm{LR}_{\mathrm{margin}}(s)$ and carries the Theorem~\ref{thm:lower} construction blocks as $\cK$-cells and $\lat$-thresholds. One such $(\cK, \lat)$ exists for this $s$ (the blocks are intervals and the aligned grid resolves them, Remark~\ref{rmk:bprime-match}); this is \emph{not} a claim uniform in $s$ or over arbitrary partitions.

\emph{Parameter class.} Let $\cW_{B'}(\kappa, s_0; \cK, \lat)$ be the Model-B$'$ worlds: $w \le B$, $\cK$-measurable, $L \in [0,1]$, regular margin $(\kappa, s_0)$ on $\lat$, a parameter class fixed \emph{independently of any procedure's sample sizes} (Assumption~\ref{ass:bprime}'s $p_{\min}$ and union-of-cells conditions are feasibility requirements on Algorithm~\ref{alg:bprime} below, not on the class). Then, at fixed $B$ and up to constants and logarithms:
\begin{itemize}
\setlength{\itemsep}{1pt}
\item \emph{(lower, in B$'$)} every $(\alpha, \beta, \delta)$-valid procedure refuses with probability $\ge \tfrac12$ on some world of $\cW_{B'}$ of slack \emph{exactly} $s$ (the Theorem~\ref{thm:lower} $n$-axis construction has relaxed frontier $\betastar = \beta + s$ exactly, Appendix~\ref{app:naxis}, so it matches the upper bullet's target slack) once $n \lesssim \beta/(\kappa s)^2$ or $m \lesssim \beta/s^2$ (the Theorem~\ref{thm:lower} pairs, re-run inside B$'$);
\item \emph{(upper, in B$'$)} under $\mathrm{LR}_{\mathrm{margin}}(s)$ and Assumption~\ref{ass:bprime}'s feasibility ($p_{\min}$, union-of-cells), Algorithm~\ref{alg:bprime} certifies every \emph{$p_{\min}$-feasible} slack-exactly-$s$ world of $\cW_{B'}$ with probability $\ge 1-\delta$ once the risk- and floor-test splits obey $n_r \gtrsim \beta\log(|\lat|/\delta)/(\kappa s)^2$ and $m_f \gtrsim \beta\log(|\lat|/\delta)/s^2$, \emph{plus} the weight-estimation splits $n_w, m_w$ that secure $\rho_{\lambda_s} \le \kappa s/32$ on the good event (Theorem~\ref{thm:U2}(ii), deterministic form; carrying $L_w$ and the $B^2K$ factor).
\end{itemize}
The within-B$'$ correspondence on the \emph{totals} $(n, m)$ in the regime where the weight splits do not dominate (so $n \asymp n_r$, $m \asymp m_f$; displayed below) is a \emph{pointwise} rate match: there Algorithm~\ref{alg:bprime}'s total requirement meets the total-sample lower bound up to constants and logs. But the upper split sizes depend on the unknown world ($p(A_{\lambda_s}), q(A_{\lambda_s}), K_{\lambda_s}, p_{\min}$), so this is instance-dependent, not a uniform minimax over the class. A uniform statement would need a fixed subclass with $p_{\min} \ge p_0$ and accepted-mass bounds.

\looseness=-1 On the algorithm's \emph{test-sample} sub-blocks $(n_r, m_f)$ alone the correspondence is weaker: a componentwise \emph{rate} match (the test terms carry the same $\beta/(\kappa s)^2$, $\beta/s^2$ dependence as the total-sample lower bounds), \emph{not} a minimax lower bound on $(n_r, m_f)$ themselves. The reason: an unrestricted B$'$ procedure may reuse the weight-split blocks $D_w^Q$ (resp.\ $D_w^P$) for the floor (resp.\ risk) test, so a lower bound on the total $m$ (resp.\ $n$) does not transfer to the single block $m_f$ (resp.\ $n_r$); e.g.\ a procedure with $m_f = 0$ but $m_w \gg \beta/s^2$ is not bound by the $m_f$ threshold. It is not a single minimax theorem for the nuisance either.

\looseness=-1 The weight splits are the separate, unmatched axis:
\[
  n_w \asymp B^2 K\, p(A_{\lambda_s})/(\kappa s)^2,
  \qquad
  m_w \asymp K\, q(A_{\lambda_s})/(\kappa s)^2,
\]
together with the separate Assumption~\ref{ass:bprime}(ii) all-cell feasibility floor $n_w \gtrsim \log(4K/\delta_w)/p_{\min}$ (which can dominate when a pre-registered cell, even one irrelevant to $A_{\lambda_s}$, has vanishing source mass); the $K$-free core is necessary (lattice-valued; Theorem~\ref{thm:t3-target}) but the full $B^2K$ rate is proved only sufficient (unknown-$\eta$ open). The totals coincide with the test axes ($n \asymp n_r$, $m \asymp m_f$) only when these weight splits do not dominate, which, carrying the additive $L_w$ and feasibility terms \emph{in full} (not as mere log multipliers), reads
\[
  \begin{aligned}
    \text{(source)}\quad & B^2\big(K p(A_{\lambda_s}) + L_w\big)/(\kappa s)^2 + \log(4K/\delta_w)/p_{\min} \lesssim B\beta/(\kappa s)^2, \\
    \text{(target)}\quad & \big(K q(A_{\lambda_s}) + L_w\big)/(\kappa s)^2 \lesssim \beta/s^2.
  \end{aligned}
\]
The bare $L_w/(\kappa s)^2$ and all-cell $p_{\min}$ terms can dominate even when $B K p(A_{\lambda_s}) \lesssim \beta$ and $K q(A_{\lambda_s}) \lesssim \kappa^2\beta$, so these are not ``up to logs'' simplifications.

\looseness=-1 This sample-size comparison is distinct from the accuracy condition $\rho_{\lambda_s} \le \kappa s/32$. One pre-registered $(\cK, \lat)$ serves both bounds, reconciling Theorem~\ref{thm:lower}'s after-$s$ lattice with Theorem~\ref{thm:U2}'s pre-registered one (proof: Appendix~\ref{app:bprime}).
\end{corollary}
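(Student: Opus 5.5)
The proof assembles the corollary from three results already stated: the two Le~Cam pairs of Theorem~\ref{thm:lower}, the power theorem Theorem~\ref{thm:U2}(ii), and the nuisance bound Proposition~\ref{prop:I1}. The only new work is to exhibit one pre-registered $(\cK,\lat)$ that serves both sides at the target slack $s$.

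\emph{Compatible pair.} Fix $s$. The $n$-axis construction has $w\equiv 1$, with an inframarginal slice of mass $\beta/2$ on which the loss is raised by $4\kappa s/\beta$. The $m$-axis construction is piecewise constant on three blocks of the score axis, and moving mass $\varepsilon\asymp s$ out of the safe block changes only $Q_X$. All blocks are intervals of the frozen router score. I take $\cK$ to be the common refinement of these block endpoints with a risk-score-aligned grid of coverage mesh $<s/4$. I take $\lat$ to be the nested thresholds at those cell boundaries, so Assumption~\ref{ass:bprime}(iii) holds by construction. Next I check that $\mathrm{LR}_{\mathrm{margin}}(s)$ holds. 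The regular margin gives $\budget(\betastar - s)\ge\kappa s$. On a grid of mesh $<s/4$, some threshold has coverage in $[\beta+s/4,\betastar-s/4]$. Its risk budget is at least $\kappa s/8$, because the budget is linear on that range and at most a mesh-width of margin is lost. Finally I check that every world in both Le~Cam pairs is $\cK$-measurable with $w\le B$ and keeps margin $(\kappa,s_0)$. The worlds are then in $\cW_{B'}(\kappa,s_0;\cK,\lat)$, a class defined without reference to sample sizes.

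\emph{Lower bullet.} Given membership in the class, the Le~Cam argument of Theorem~\ref{thm:lower} carries over unchanged. A valid procedure that is powered on the base world with probability $\ge 1/2$ would also certify on the alternative with probability exceeding $\delta$. This is because the KL divergence ($\asymp n(\kappa s)^2/\beta$) or the $\chi^2$ divergence ($\asymp m s^2/\beta$) is bounded, while every coverage-$\ge\beta$ threshold is infeasible in the alternative. Knowing $\cK$ does not help: the pair agree on $\cK$, and on the $m$-axis they differ only through $Q_X$. For the $n$-axis world I also record that the relaxed frontier is exactly $\beta+s$ (Appendix~\ref{app:naxis}), so the slack matches the upper bullet's.

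\emph{Upper bullet and totals.} I invoke Theorem~\ref{thm:U2}(ii) with $n_w,m_w$ taken from Proposition~\ref{prop:I1}, so that $\rho_{\lambda_s}\le\kappa s/32$ holds together with the envelope $\bar V_{\lambda_s}\lesssim B\beta$. At fixed $B$ this yields the stated $n_r,m_f$ thresholds. The totals are $n=n_w+n_r$ and $m=m_w+m_f$. Under the displayed non-domination inequalities, which I carry in full including the $L_w$ and $p_{\min}$ terms, the totals are of order $n\asymp n_r$ and $m\asymp m_f$. They therefore meet the lower bounds up to constants and logarithms. The match is pointwise and instance-dependent, because $p(A_{\lambda_s})$, $q(A_{\lambda_s})$ and $p_{\min}$ depend on the world. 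The caveat about sub-blocks comes from a direct counterexample. A procedure that reuses $D_w^Q$ as its floor sample can have $m_f=0$ while $m_w\gg\beta/s^2$, so no lower bound applies to $m_f$ alone.

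\emph{Main obstacle.} The delicate step is showing that a single $(\cK,\lat)$ serves both bullets. It must resolve the hard-instance blocks and also satisfy $\mathrm{LR}_{\mathrm{margin}}(s)$, and the perturbed $m$-axis world must keep the margin on the same lattice without shifting its witness out of the coverage window. I would handle this by putting the $\varepsilon$-mass move strictly inside the block that lies above the frontier. Then $\eta^*_Q\ge\alpha+\kappa$ persists on $[\betastar-s_0,\betastar]$, and the aligned grid's witness coverages move by at most $\varepsilon\ll s/4$.
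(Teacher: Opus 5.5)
Your main line is the paper's own proof. You build a score-refined $(\cK,\lat)$ whose cells and thresholds carry the block boundaries, and you re-run both pairs of Theorem~\ref{thm:lower} inside $\cW_{B'}$: membership in the class, plus the unchanged $\TV\le 1/4$ bound, since knowing $\cK$ adds nothing to the data. You then apply Theorem~\ref{thm:U2}(ii) with the envelope $\bar V_{\lambda_s}\lesssim B\beta$, compare the totals under the non-domination inequalities carried in full, and use the $m_f=0$, $m_w\gg\beta/s^2$ example for the sub-blocks. The error is in your ``main obstacle'' paragraph. It misidentifies what must be checked, and it proposes a change that would destroy the $m$-axis pair.

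\textbf{What $W_2$ actually needs.} $\mathrm{LR}_{\mathrm{margin}}(s)$ is a hypothesis of the upper bullet only. It concerns only the feasible base world $W_1$ shared by both pairs, where $Q_1=P_X$; its witness is just the aligned threshold nearest $\betastar-s/2$, with budget $\budget(\betastar-s')\ge\kappa s'\ge\kappa s/4$ since $s'\ge s/4$. The perturbed world $W_2$ is the lattice-infeasible alternative. It must \emph{not} have a point of coverage $\ge\beta+s/4$ with budget $\ge\kappa s/8$: such a point would be certifiable, contradicting the infeasibility your lower bullet uses. $W_2$ needs only membership in $\cW_{B'}$:
\begin{itemize}
\item $w\le B$;
\item $\cK$-measurability, which holds because its ratio is constant on $G$, $[g,g')$ and $H'$;
\item a regular margin at its \emph{own} frontier. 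This holds because $\eta^*_{Q_2}$ is still $\alpha-a$ followed by $\alpha+\kappa$; the frontier drops to $\beta-s-\varepsilon$ and is preceded by a hot stretch of length $\beta/2-s\ge s_0$.
\end{itemize}

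\textbf{Why your fix fails.} The infeasibility of $W_2$ comes from removing $\varepsilon$ of target mass from the safe block $G$, which lies \emph{below} the frontier. Only the destination $H'=[g',1]$, with $g'>\betastar_1$, lies above it. A mass move confined to the above-frontier block leaves the $Q$-law of $\eta$ unchanged, so $W_2$ stays feasible and the pair collapses.

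Nor can $\varepsilon$ be $\ll s/4$. Each unit of safe mass removed lowers the budget at a given coverage by $a+b\le\tfrac94\kappa$. Wiping out $\budget_1(\beta)=\kappa s$ therefore needs $\varepsilon\ge\tfrac49 s$; the paper takes $\varepsilon=2bs/a\in[1.6s,2s]$. With $\varepsilon\le s/4$, the floor-clearing point of your aligned lattice nearest coverage $\beta$ keeps budget at least $\kappa s-\tfrac94\kappa\varepsilon-\tfrac14\kappa s>0$, so $W_2$ stays certifiable. The $\Theta(s)$ loss of safe mass, and the matching downward shift of the witness's coverage, is exactly what makes $W_2$ infeasible; it is not something to prevent.

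Replace that paragraph with the membership check for $W_2$ above, and your argument coincides with the paper's.
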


\looseness=-1 $\mathrm{LR}_{\mathrm{margin}}$ is likewise a substantive practical assumption: risk-score-aligned lattices inherit it (Appendix~\ref{app:modelA}); a misaligned score can leave the lattice margin-free; then the certificate refuses, and the refusal is observable, though its test-side label does not by itself identify the missing lattice margin as the cause.

\subsection{Nuisance necessity, the open edge, and what B$'$ does not certify}
\label{sec:notcertify}
Validity in Theorem~\ref{thm:U1} is conditional on the pre-registered $K$-cell stratified-shift model, the \emph{same} epistemic status as the covariate-shift assumption itself, with the added pre-registered $\cK$-measurability stated here, and we map its boundary rather than assume it away: if $w$ is not $\cK$-measurable, the guarantee degrades to the $\cK$-projection of $w$ with an additive bias term not estimable from covariates alone (Remark~\ref{rmk:misspec}).
On the real workload the \S\ref{sec:real} sensitivity experiment shows \emph{refusal-status invariance only}, not correctness of the shift model.
Separately, a \emph{deliberately adversarial} synthetic sweep (Appendix~\ref{app:revision}) constructs a within-cell $w$--$\eta$ correlation that stays in $\cW_B$ but, by design, leaves the B$'$ $\cK$-measurable class.
That perturbation is therefore invisible to B$'$'s own certificate, and it nonetheless drives true violations once the receding frontier crosses the locked operating point, while oracle-A stays valid: an adversarial characterization of the conditional guarantee's boundary, \emph{not} an off-partition robustness guarantee, which the paper does not claim.
The nuisance-axis necessity is only \emph{partially} settled (Theorem~\ref{thm:t3-target}, Appendix~\ref{app:necessity}): a $K$-free core necessary (lattice-valued) in the known-$\eta$ submodel (a localized restatement of the $m$-axis target bound plus the existing $n$-axis reduction; the worst-slice $B^2p(A)$ envelope stated, not fully re-derived), histogram $B^2 K$ not proved necessary, unknown-$\eta$ the one open edge, no complete characterization claimed.

\paragraph{Honest-failure semantics.}
\looseness=-1 A refusal $\nocert$ carries a deterministic axis attribution: two labels the certificate itself computes (risk-starved, floor-starved) and two conditional diagnoses (nuisance-limited, frontier-infeasible). For the data-starved labels, Corollary~\ref{cor:phase} reads off what to buy under that reading \citep{kato2023double, yang2024doubly, wang2026weight, laghuvarapu2026kmmcp, wang2025subpopulation}, while when a separate frontier diagnosis supports frontier infeasibility, no $n$ or $m$ purchase fixes the refusal (raise capability or renegotiate the service-level agreement).
Call the \emph{floor-pass set} those $\lambda \in \lat$ whose floor LCB is at least $\beta$ and positive, exactly the floor-pass set Algorithm~\ref{alg:bprime} computes. When several attributions bind at once the report follows a fixed priority: nuisance-limited ($\rho_{\lambda} > \kappa s/32$ at the floor-LCB-maximal candidate) $\succ$ risk-starved (the floor-pass set is nonempty but no member clears the risk UCB) $\succ$ floor-starved (the floor-pass set is empty); frontier-infeasible is reported in place of these only when the separate frontier diagnosis (\S\ref{sec:frontier_diag}) certifies $\betastar < \beta$; this convention fixes only the displayed reason and never alters the certify/refuse decision or its validity.
\looseness=-1 The labels differ in what they require. Risk-starved and floor-starved are read off the certificate's own statistics, since both the floor-pass set and the risk test are computed inside Algorithm~\ref{alg:bprime}. The nuisance-limited comparison additionally involves $(\kappa, s)$, which the algorithm does not receive and which the data does not identify, so it can be reported only where those are supplied as planning values; no run reported here uses it, and absent it a refusal is displayed as risk-starved or floor-starved by the priority above. Frontier-infeasible likewise comes from the separate diagnosis of \S\ref{sec:frontier_diag}, not from the certificate's own output. A risk-starved label can therefore stand in for an undiagnosed nuisance-limited refusal, so the purchase guidance of Corollary~\ref{cor:phase} is conditional on the displayed reading.
Escalate-all rows are reported as a no-automatic-answer guarantee, never as a win.

\section{Experimental Evaluation}
\label{sec:experiments}

\looseness=-1 We evaluate the map on registered synthetic generators (an audit of validity, practical constants, and mechanism consistency; \S\ref{sec:synthetic}, \S\ref{sec:validity}) and then audit the certificate's honest refusal on a real workload (\S\ref{sec:real}).

\subsection{Synthetic mechanisms and rate-shape evidence}
\label{sec:synthetic}
\looseness=-1 This suite is an audit of the map on registered generators for \emph{validity, practical constants, and mechanism consistency}: theorem-carried evidence, not a claim of broad empirical generality for the B$'$ certificate; the scope guard is part of each claim.
\looseness=-1 Every block below is claim-driven, with protocol and registered generators in Appendix~\ref{app:protocol} and number-to-file traceability maintained through the authors' frozen results manifest (reporting pipeline: Appendix~\ref{app:repro}). One reporting discipline holds throughout: $\delta = 0.05$ (certificate confidence, pooled criterion), $\dtol = 0.0625$ (powered per-cell tolerance), and 95\% Clopper--Pearson upper confidence bounds (CP-UCB) on violation rates \citep{clopper1934use} are three separate quantities, never interchanged, with Holm's correction \citep{holm1979simple} for multiplicity. For interval estimates we use ordinary least squares (OLS) standard-error normal approximations for the log--log exponents and bootstrap intervals \citep{efron1979bootstrap} for rank correlations.

\paragraph{Bite divergence near the boundary (Claim C1).}
\label{sec:bite}

\begin{figure}[t]
    \begin{minipage}[c]{0.48\textwidth}
    \centering
    \includegraphics[width=\linewidth]{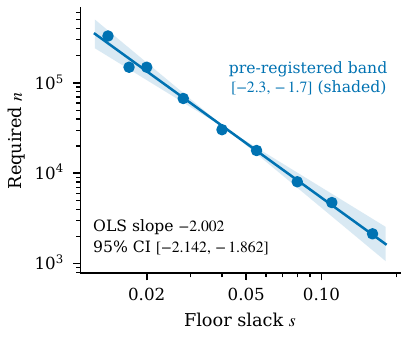}
    \end{minipage}\hfill
    \begin{minipage}[c]{0.48\textwidth}
    \caption{\textbf{Bite divergence in the registered synthetic bite family.}
    Required labeled-sample size $n$ to certify at floor slack $s$ (log--log). The
    log-space OLS fit has slope $-2.002$, 95\% CI $[-2.142, -1.862]$ (normal
    approximation from the OLS standard error), inside the pre-registered acceptance
    band $[-2.3, -1.7]$ (shaded fan, anchored at the fit midpoint); the
    quadratic-curvature CI contains 0. The divergence $n \propto s^{-2}$ near the
    certifiable boundary matches the map's labeled-risk axis. Scope guard: registered
    family; not a broad multi-family empirical proof.}
    \label{fig:bite_divergence}
    \end{minipage}
\end{figure}

\looseness=-1 In the registered synthetic bite family, the labeled-sample requirement diverges quadratically near the certifiable boundary, exactly as the map's labeled axis predicts: required-$n$ vs.\ slack log--log slope $-2.002$, 95\% CI $[-2.142, -1.862] \subset$ pre-registered band $[-2.3, -1.7]$; the curvature CI contains 0 (Figure~\ref{fig:bite_divergence}).
The fit rests on 9 slack design points; a deterministic pairs bootstrap ($20{,}000$ resamples) gives a 95\% slope CI $[-2.11, -1.89]$, consistent with the OLS normal-approximation interval, with log-space residuals $\le 0.10$ in magnitude (raw points and bootstrap script in Appendix~\ref{app:experiments}).
Breadth is now measured, not deferred: the labeled bite holds across three structurally distinct families: continuum (the registered regular-margin family, $-2.002$, in band), kcell ($-1.617$, CI $[-1.76, -1.47]$), and a non-theorem-aligned family ($-1.845$, CI $[-2.02, -1.67]$); the steep, near-quadratic divergence is robust across all three (slopes in $[-2.0, -1.6]$). The exact $-2$ is specific to the regular-margin family. The exponent ordering tracks the accepted-region margin geometry (margin-vs-slack exponent $b \approx 1.10/0.91/0.99$ for continuum/kcell/non-theorem), consistent with the localized-functional thesis (Claim C4); the precise exponent additionally reflects finite-sample range terms and lattice discreteness. The relaxed-crossing/lattice-plateau dichotomy behind this ordering, together with the separation of the empirical margin-vs-slack $b$ from the relaxed budget's linear frontier-margin slope, is given by Proposition~\ref{prop:exponent} with Lemma~\ref{lem:plateau} (Appendix~\ref{app:exponent}); the finite-sample record is in Appendix~\ref{app:experiments}.

\paragraph{Two-axis form and contours: consistency checks, not proof (Claims C2, C3).}
\label{sec:twoaxis}

\looseness=-1 The two-axis functional form itself is carried by Theorem~\ref{thm:law}; the experiments here are consistency checks.
Held-out cross-validation (CV) of candidate $s^*(n, m)$ surfaces gives the additive model the best point estimate (surface CV $R^2 = 0.9202$).
At the registered density (a 41-cell surface) the CI-separation criterion failed (bootstrap CIs overlapped) and the axis-exponent CIs missed containment, so we reported the form as theorem-carried and registered the failure as \emph{sample-limited}; a denser surface bears that out.
Re-running the \emph{same} registered model-comparison procedure (byte-identical criterion; only design density and Monte-Carlo depth raised) at increasing cell count, the probability of strict non-overlap against \emph{every} alternative rises $0.12$ (40-cell base rung, the sweep resolution nearest the registered 41) $\to 0.75$ (134) $\to 1.00$ (635 cells, 800 reps).
At 635 cells the additive form (held-out cross-validation root-mean-square error, CV-RMSE $0.0122$, 95\% CI $[0.0109, 0.0134]$) separates from all four candidate forms (the nearest $2.2\times$ worse), unanimously across all 24 re-splits, with a monotonically growing gap (Figure~\ref{fig:ci_separation}; Appendix~\ref{app:surfacecv}).
The CI-separation criterion is thus met and the 41-cell failure was statistical power, not non-identifiability (the per-axis exponent-containment sub-test is a separate check, not revisited here); the $R^2 = 0.9202$ is the $s^*(n,m)$ surface-CV result, not contour evidence.
Contour transfer is qualitative rate-shape only: two-axis constants fit on the first shift intensity and applied unchanged give in-sample $L1$ correlation $.997$ and held-out $L2$--$L4$ correlations $.981/.963/.890$, but the strict pre-registered $\pm 3\%$ bands failed at all intensities, with held-out median factor errors $\times 1.24/\times 1.91/\times 3.40$ (Figure~\ref{fig:contour_shape}): constants do not transfer; ordering and rate shape do.
The non-transfer is itself systematic, not noise: the factor errors are strictly monotone in shift intensity (rank correlation $1.0$) and grow $\times 3.2$ from L1 (in-sample, $\times 1.07$) to L4 ($\times 3.40$) while the accepted-region variance proxy grows only $\times 1.3$: a predictable, intensity-driven constant inflation with rate-shape preserved (log-correlation $\ge .889$ at every intensity), not an unstructured breakdown (Appendix~\ref{app:contours}; numeric record in Appendix~\ref{app:revision}).

\paragraph{Localized geometry, restricted (Claim C4).}
\label{sec:geometry}

\looseness=-1 The geometry reading of Theorem~\ref{thm:law} (Remark~\ref{rmk:localized}; the normalized form is heuristic, not separately derived) suggests that certification cost tracks the localized accepted-region functional $\locfun{\Sacc}$ rather than global ESS, and the B$'$ arm of family 1 shows this cleanly: across 640 worlds, B$'$ required-$n$ has Spearman $.936$ (95\% bootstrap CI $[.927, .944]$) against the localized functional and $.026$ against global ESS (Figure~\ref{fig:rank_correlation}).
The restriction is part of the claim: empirical support is limited to B$'$ family 1 plus qualified family-2 within-pair evidence (within-pair ratio agreement $.995$ but pooled Spearman $.503$ under $\kappa$/censoring confounds, 162/640 worlds censored), and oracle-arm pooled correlations are not supportive (a range-constant artifact; Appendix~\ref{app:geometry}); no broad-class claim is made.

\begin{figure}[t]
    \centering
    \includegraphics[width=0.95\textwidth]{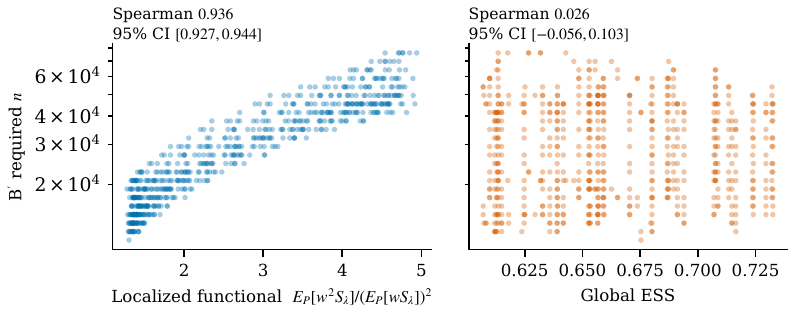}
    \caption{\textbf{The localized accepted-region functional, not global ESS, tracks
    B$'$ certification cost} (family 1, B$'$ arm; 640 worlds = 8 ratio levels
    $\times$ 40 paired draws $\times$ 2 members, at slack $s = .05$ with
    $m = 10^5$ fixed). (a)~B$'$ required $n$ against the localized functional
    $E_P[w^2 S_\lambda]/(E_P[w S_\lambda])^2$: Spearman $.936$, 95\% bootstrap CI
    $[.927, .944]$. (b)~The same required $n$ against global ESS: Spearman $.026$
    $[-.056, .103]$. Family 2 shows within-pair ratio agreement $.995$ but pooled
    Spearman only $.503$ under $\kappa$/censoring confounds (162/640 worlds censored
    at the required-$n$ cap); oracle-arm pooled correlations are not supportive
    (range-constant artifact). Evidence is restricted to B$'$ family 1 plus qualified
    family-2 within-pair structure; no broad-class claim.}
    \label{fig:rank_correlation}
\end{figure}

\subsection{Validity audit and practical constants}
\label{sec:validity}

\looseness=-1 In the full B1 grid audit (4 intensities $\times$ 256 cells, 1{,}000 replications each, with five higher-variance edge cells re-run at 2{,}000--8{,}000), the formal-guarantee arms are valid where they certify: oracle-A certifies 583/1{,}024 cells and B$'$ certifies 8/1{,}024, for 591 arm-cell certifications in total.
Those 591 certifications carry 0 violations, 0 Holm rejections, pooled CP-UCBs $9.0\times10^{-6}$ / $1.8\times10^{-4} \le \delta$, and powered per-cell CP-UCBs $\le \dtol$ (Table~\ref{tab:validity_constants}; Figure~\ref{fig:hero}b). These are the released kernels' runs: Remark~\ref{rmk:releasedradius} states the nuisance radius the B$'$ kernel computes and proves the guarantee for it, and Remark~\ref{rmk:releasedoracle} states the two-family variant the oracle-A kernel runs and proves its validity at the same $\delta$.
A deep-audit subset (6 cells $\times$ 4{,}000 replications, both formal arms) recorded 0 violations in 48{,}000 replications, pooled CP-UCB $6.2\times10^{-5}$.
Two denominators appear in this paper and we label each interval by the one it uses. The \emph{unconditional} convention divides by the replication count and bounds $\Prob(\text{some invalid } \lambda \text{ is certified})$, which is the quantity the $(\alpha, \beta, \delta)$ guarantee caps at $\delta$; the \emph{conditional} convention divides by the number of certifying replications and bounds the violation rate among issued certificates. Every interval in the B1 grid audit and in the deep-audit subset just quoted is unconditional, including the $48{,}000$ above, so none of them is a rate among issued certificates. The one place we report the conditional convention is the targeted existence run below, where the denominator is written explicitly as certifying replications.
As a fairness check on the lower-bound thesis we substitute the risk test inside B$'$'s exact construction: a \emph{SCoRE-inspired} floor-augmented betting arm, in the spirit of \citet{bai2026score}, in which a weighted betting e-value replaces the empirical-Bernstein risk test and every other block is verbatim-identical to B$'$ (Proposition~\ref{prop:score}). This is an equal-budget \emph{risk-test ablation}, not a source-faithful SCoRE implementation: \citet{bai2026score} builds conformal risk-adjusted e-values for individual test instances and thresholds them, whereas this arm keeps B$'$'s weight estimator, nuisance radius, floor LCB, four-block split, and output rule and bets on the calibration block. We therefore do not present it as the strongest available same-target baseline, but as the equal-construction alternative we tested against B$'$'s risk test, which is what the lower-bound thesis needs. It is valid and competitive: across the 1{,}024-cell grid its certify-frequency is $\ge$ B$'$'s in every cell (strictly greater in 8) at the \emph{same} powered-cell count, with 0 violations. In this fixed matched-replication run it therefore weakly dominates B$'$ in \emph{observed} certify-frequency without enlarging the \emph{powered} set; we report no paired test or interval for that ordering, so it is an ordering observed in this Monte Carlo run rather than an estimated power gap. That an \emph{otherwise} verbatim-identical alternative shares B$'$'s conservatism is the evidence we want: the conservatism is a shared price of the matched estimated-weight construction, not specific to the empirical-Bernstein risk test. This is exactly why the certificate's contribution is the lower-bound theory and the priced nuisance, not a finite-sample power edge (full head-to-head and pre-registration: the SCoRE head-to-head record of Appendix~\ref{app:scorehh}; self-contained validity proof: Appendix~\ref{app:score}).
Demonstration arms violate at scale: floor-free certification draws 108 Holm rejections, weighted conformal \citep{tibshirani2019conformal, barber2023conformal} draws 40, and plug-in draws 662; the distributionally robust optimization (DRO)-box baseline certifies 0 cells: an escalate-all, no-automatic-answer guarantee, not validity evidence (Proposition~\ref{prop:separation}, Appendix~\ref{app:separation}).

\looseness=-1 This is also a practical-constants result, and we keep the constants visible: B$'$ is much more conservative than oracle-A: at $\kappa s = 0.042$, oracle-A first certifies at $n_{\mathrm{total}} = 4{,}096$ and fully certifies at $16{,}384$, B$'$ at $524{,}288$ and $1{,}048{,}576$ ($128\times$ at the first-certification onset, $64\times$ at full certification); the registered $K$-sweep prices the shift-model dial directly (Table~\ref{tab:validity_constants}, bottom). These multipliers are measured against the released two-family oracle kernel (Remark~\ref{rmk:releasedoracle}), which differs from B$'$ in its confidence allocation and floor construction as well as in knowing $w$, so they bundle those differences with weight estimation rather than isolating it; read them as the cost of the whole estimated-weight construction, an upper bound on the weight-estimation cost alone, and not as an artifact of the risk test \citep{wang2026weight, kimura2024short}. Isolating weight estimation would need a comparator matched to B$'$ in every block except the weights, which we do not run.

\looseness=-1 These constants are one slice of a characterized envelope, not an isolated harsh point: B$'$ certifies across the partition sweep $K \in \{4, \dots, 64\}$ with an onset premium over oracle-A that scales linearly in the partition dimension ($K^{0.955}$, 95\% CI $[0.755, 1.154] \ni 1$), while the declared bound $B$ is second-order and the B$'$ labeled onset obeys the same $s^{-2}$ bite as oracle-A. This is exactly the reading of Theorem~\ref{thm:t3-target}: this $K$-scaling premium is the histogram estimator's $\ell^1$ weight-recovery price (Proposition~\ref{prop:I1}), \emph{not} a necessary certification cost; the necessary core is $K$-free (lattice-valued) and matched from below (full envelope, with the $B^{0.62}$ and $s^{-2}$ records: Appendices~\ref{app:necessity}, \ref{app:experiments}).

\looseness=-1 A targeted companion run confirms B$'$'s conservatism is not an all-refusal failure mode: on one on-class $K = 4$ grid, B$'$ reaches certify-frequency $1.0$ at $n = m = 262{,}144$ ($s = 0.20$) and $n = m = 1{,}048{,}576$ ($s = 0.12$), with $0/16{,}297$ B$'$ certifying replications violating: positive existence evidence only, not a breadth or real-workload claim (Appendix~\ref{app:experiments}).

\begin{table}[t]
\centering
\caption{\textbf{Full-grid validity audit and practical constants} (B1 grid: 4
shift intensities $\times$ 256 cells $=$ 1{,}024 cells, 1{,}000 replications per
cell; five edge cells re-run at 2{,}000--8{,}000). Three quantities are reported separately and are not interchangeable:
$\delta = 0.05$ (certificate confidence; pooled criterion), $\delta_{\mathrm{tol}}
= 0.0625$ (powered per-cell tolerance, evaluated over cells with $\ge 47$
certifying replications), and 95\% Clopper--Pearson upper confidence bounds (CP-UCB)
on violation rates. Holm correction runs across certifying cells within each arm.
Formal-guarantee arms certify only where valid; demonstration arms violate at scale.
Deep-audit subset (6 grid cells $\times$ 4{,}000 reps, both formal arms): 0 violations in 48{,}000 replications, pooled CP-UCB $6.2\times10^{-5}$ $\le \delta$. Every interval in this table uses the unconditional denominator (replications, not issued certificates), so it bounds the failure probability the guarantee caps at $\delta$.
Bottom: absolute-budget excerpts pricing B$'$'s conservatism (synthetic grid world at
$\kappa s = 0.042$; $K$-price from the registered $K$-sweep; family-2 censoring
at the required-$n$ cap). $^{\ddagger}$DRO-box certifies 0 cells: an escalate-all,
no-automatic-answer guarantee, not validity evidence.
The equal-construction alternative we tested against B$'$'s risk test (a SCoRE-inspired floor-augmented betting arm, Proposition~\ref{prop:score}; a risk-test ablation, not a source-faithful SCoRE implementation) is reported here rather than as a row because its head-to-head runs at \emph{matched} replications (the B$'$ row above uses the registered escalated reps): at matched reps SCoRE's certify-frequency is $\ge$ B$'$'s in every one of the 1{,}024 cells (strictly $>$ in 8), with 0 violations, the same powered-cell count, and a non-trivial-certification count of 10 vs.\ B$'$'s 9 (Appendix~\ref{app:experiments}).}
\label{tab:validity_constants}
\small
\resizebox{0.96\textwidth}{!}{%
\begin{tabular}{llcccccc}
\toprule
Arm & Type & Cert.\ cells & Viol. & Holm rej. & Powered cells &
Max powered-cell CP-UCB & Pooled CP-UCB \\
\midrule
Oracle-A & formal & 583/1{,}024 & 0 & 0 & 432 & 0.061 & $9.0\times10^{-6}$ \\
B$'$ & formal & 8/1{,}024 & 0 & 0 & 8 & 0.013 & $1.8\times10^{-4}$ \\
Floor-free & demo. & 112/1{,}024 & 107{,}157 & 108 & 108 & 1 & 0.824 \\
Weighted-conf. & demo. & 40/1{,}024 & 60{,}469 & 40 & 38 & 1 & 1 \\
Plug-in & demo. & 1{,}024/1{,}024 & 190{,}914 & 662 & 1{,}024 & 0.56 & 0.184 \\
DRO-box$^{\ddagger}$ & demo. & 0/1{,}024 & 0 & 0 & 0 & -- & -- \\
\bottomrule
\end{tabular}}

\vspace{0.6em}
\begin{tabular}{lcc}
\toprule
Absolute-budget excerpt & Oracle-A & B$'$ \\
\midrule
Certification onset (first cert.\ rate $>0$), $n_{\mathrm{total}}$ & 4{,}096 & 524{,}288 (rate 0.145) \\
Full certification (cert.\ rate $=1.0$), $n_{\mathrm{total}}$ & 16{,}384 & 1{,}048{,}576 ($64\times$) \\
$K$-price: B$'$ cert.\ rate at $n_{\mathrm{total}}=2^{20}$, $K=4/8/16/32/64$ & -- & 1.00/0.72/0.00/0.00/0.00 \\
$K$-price: B$'$ cert.\ rate at $n_{\mathrm{total}}=2^{22}$, $K=4/8/16/32/64$ & -- & 1.00/1.00/1.00/1.00/0.00 \\
Family-2 required-$n$ censored at cap (of 640 worlds) & 0 & 162 \\
\bottomrule
\end{tabular}
\end{table}

\subsection{Real-workload negative feasibility audit}
\label{sec:real}
\looseness=-2 This section reports a negative feasibility audit (it exercises the certificate's refusal semantics on real data), not a second validation block, and not a deployment demonstration; the service-level agreement (SLA) is a pre-registered operational scenario, fixed before any data was described (not an external mandate or benchmark), and the scope is a single-corpus domain-shift audit.

\paragraph{Setup.}
\looseness=-2 \label{sec:realsetup}
The workload is extractive question answering under one domain-shift axis: SQuAD as calibration source, NewsQA as shifted target, both via the MRQA distribution \citep{rajpurkar2016squad, trischler2017newsqa, fisch2019mrqa}, $N_{\mathrm{eval}} = 4{,}212$ evaluation items.
Correctness is non-judge (token-level F1 $\ge 0.5$ against gold answers, avoiding model-judge biases \citep{zheng2023judging}), selection scores follow standard uncertainty families \citep{farquhar2024detecting}, the pipeline is frozen and cached, and the SLA scenario was fixed pre-data at $\alpha = .10$, $\beta = .60$, $\delta = .05$.
Two capability legs run on the same corpus: leg 1 Llama-3-8B \citep{grattafiori2024llama} (availability-triggered, recorded deviation), leg 2 DeepSeek-V4-Flash, a conditional capability-ladder re-entry adjudicated under the registered escalation rule, \emph{not} independent validation; an access guard ensured evaluation labels were touched exactly once, by the final evaluation (both recorded deviations are logged in Appendix~\ref{app:realrecords}).

\paragraph{Pre-deployment certificate output: honest refusal.}
\looseness=-1 \label{sec:refusal}
The B$'$ certificate's output in this audit is pre-deployment honest refusal, consuming no evaluation labels: 1{,}000/1{,}000 refusals per leg, 2{,}000/2{,}000 total, 0 violations, each carrying a risk-side axis attribution (Figure~\ref{fig:real_audit}a).

\paragraph{Post-hoc frontier diagnosis, kept separate.}
\looseness=-2 \label{sec:frontier_diag}
Separately from the certificate's output, a post-hoc evaluation-set frontier diagnosis prices the frontier at $\betastarhat(.10) = .050$ (leg 1) and $.209$ (leg 2), both far below the floor $\beta = .60$, which lies outside both bootstrap intervals. Sweeping the risk level $\alpha \in \{.05, .10, .15, .20\}$ traces the full frontier-vs-floor map (Figure~\ref{fig:real_audit}a): the floor $\beta = .60$ is uncrossed (and outside every bootstrap interval) at \emph{every} tested $\alpha$ for both legs (even the stronger leg 2 reaches only $\betastarhat(.20) = .525$, CI $[.464, .597]$).
So the post-hoc evaluation-set frontier estimate stays below the floor across the tested operating plane, not only at the SLA point (per-$\alpha$ 500-resample bootstrap intervals; no simultaneous-coverage claim across the $\alpha$-grid or over threshold selection, and the closest case is leg~2 at $\alpha = .20$, whose interval ends at $.597$), consistent with frontier-infeasibility across the tested plane rather than a population proof of it.
Capability escalation is monotone for $\alpha \ge .10$ (leg 2 above leg 1) but moves the frontier without ever crossing the floor.
We keep the two outputs separate: the certificate refuses pre-deployment, while the frontier number is post-hoc evaluation-set diagnosis; the certificate does not prove the frontier, and we do not claim it does.

\begin{figure}[t]
    \centering
    \begin{subfigure}[b]{0.42\textwidth}
        \centering
        \includegraphics[width=\linewidth]{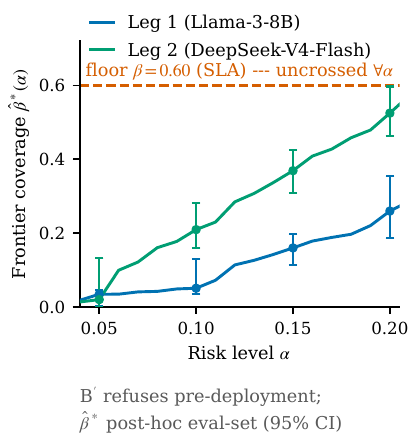}
        \caption{}
    \end{subfigure}
    \hfill
    \begin{subfigure}[b]{0.52\textwidth}
        \centering
        \includegraphics[width=\linewidth]{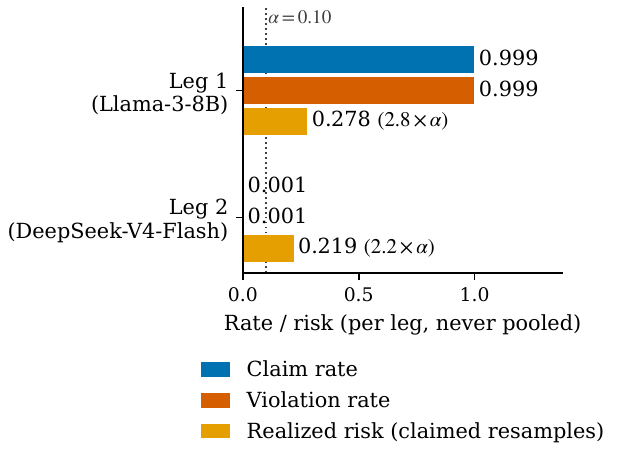}
        \caption{}
    \end{subfigure}
    \caption{\textbf{Real-workload feasibility audit under a pre-registered
    operational SLA scenario} (single-corpus SQuAD$\to$NewsQA domain shift,
    $N_{\mathrm{eval}} = 4{,}212$, F1 $\ge 0.5$ non-judge correctness,
    $\alpha = .10$, $\beta = .60$, $\delta = .05$; two capability legs on the same
    corpus; leg 2 is a conditional capability-ladder re-entry, not independent
    validation). (a)~The Model-B$'$ certificate's pre-deployment output is honest
    refusal, consuming no eval labels: 1{,}000/1{,}000 refusals per leg (2{,}000/2{,}000 total,
    0 violations), attributed to the risk axis. Separately, the post-hoc eval-set
    frontier diagnosis traces $\hat\beta^*(\alpha)$ over $\alpha \in [.05, .20]$
    (curves; 95\% bootstrap CIs at the marked points, 500 resamples over eval items):
    at the SLA $\alpha = .10$, $\hat\beta^* = .050$ $[.035, .130]$ for leg 1
    (Llama-3-8B) and $.209$ $[.160, .281]$ for leg 2 (DeepSeek-V4-Flash); the floor
    $\beta = .60$ is uncrossed (and outside every bootstrap CI) at every tested
    $\alpha$ for both legs (even leg 2 reaches only $\hat\beta^*(.20) = .525$), so
    capability escalation moves the frontier but never across the floor. (b)~Plug-in baseline, reported leg-split only (never pooled): leg 1 claims
    in 99.9\% of resamples and 99.9\% violate, with realized risk on claimed
    resamples $.278 = 2.8\times\alpha$; leg 2 claims in 0.1\% (a single claiming
    resample) and the lone claim violates, realized risk $.219 = 2.2\times\alpha$ at
    realized coverage $.614$. Dotted line: $\alpha = .10$.}
    \label{fig:real_audit}
\end{figure}

\paragraph{Plug-in contrast, leg-split only.}
\looseness=-2 \label{sec:plugin}
The plug-in baseline fails leg-specifically, and we report it leg-split only (Figure~\ref{fig:real_audit}b): leg 1 claims 99.9\%, violates 99.9\%, risk $.278 = 2.8 \times \alpha$; leg 2 claims 0.1\%, and the lone claim violates, risk $.219 = 2.2 \times \alpha$ at coverage $.614$.
The two legs fail in opposite modes (silent violation at scale versus claim collapse): precisely the information a pooled summary would destroy.

\paragraph{Refusal robustness: status only (Claim C6).}
\looseness=-1 \label{sec:sensitivity}
The leg-2 refusal is invariant over registered nuisance choices: $K \in \{8, 16, 32\} \times B \in \{5, 10, 20\}$ plus lock-quantile jitter, 11/11 no-cert, with the risk side failing 64/64 lattice points in every variant (Table~\ref{tab:sensitivity}, Appendix~\ref{app:sensitivity}).
This is refusal-status robustness only, not validation of the stratified-shift model: on the real workload misspecification-to-breakage was not reached, though a synthetic adversarial sweep exhibits the breakage boundary in $\cW_B$ but off the B$'$ $\cK$-measurable class (Appendix~\ref{app:revision}), and the nuisance necessity narrows to the unknown-$\eta$ edge (\S\ref{sec:notcertify}, Theorem~\ref{thm:t3-target}).

\paragraph{What an operator learns.}
\looseness=-2 Honest refusal is the pre-committed first-class outcome here: the attribution says \emph{which test} failed; the \emph{separate} post-hoc frontier diagnosis (\S\ref{sec:frontier_diag}) is what supports reading this SLA scenario as frontier-infeasible, and under that reading no $n$ or $m$ purchase helps (raise capability or renegotiate), while the leg gap prices exactly that.

\section{Discussion}
\label{sec:discussion}

\paragraph{On the assumptions.}
\looseness=-1 The map rests on load-bearing assumptions. Bounded-ratio covariate shift ($w \le B$) is what makes the floor estimable label-free while leaving the risk unidentified in Model-B; the resulting inconsistency over $\cW_B$ (Theorem~\ref{thm:modelB-incon}) is precisely why the map is stated \emph{across} models rather than as a single-model minimax law; the restriction that restores consistency (oracle weights in Model-A, a pre-registered $K$-cell partition in Model-B$'$) is the object of study, not a convenience. The regular frontier margin $(\kappa, s_0)$ is generic (Lemma~\ref{lem:genericity}) and, in its lattice realization $\mathrm{LR}_{\mathrm{margin}}(s)$ for the upper bounds (Definition~\ref{def:lrmargin}), isolates the local regime $s \le s_0 \wedge c_0\beta$ in which the budget law is linear; the stratified-shift partition $\cK$ is a dial, not a disguise (Remark~\ref{rmk:dial}), with its cost priced explicitly through the histogram nuisance.

\paragraph{Conditional validity and misspecification.}
\looseness=-1 Model-B$'$ validity (Theorem~\ref{thm:U1}) is conditional on the pre-registered $K$-cell model: the same epistemic status as the covariate-shift assumption itself. When $w$ is not $\cK$-measurable the guarantee degrades to its $\cK$-projection with an additive bias that covariates alone cannot estimate (Remark~\ref{rmk:misspec}); a \emph{deliberately adversarial} synthetic sweep maps this breakage boundary directly: a perturbation in $\cW_B$ but, by construction, off the B$'$ $\cK$-measurable class, invisible to the certificate, that nonetheless drives true violations once the receding frontier crosses the locked operating point (Appendix~\ref{app:revision}). The nuisance necessity is settled only on its $K$-free necessary core, with the remainder sharply localized: a $K$-free core is \emph{necessary} for lattice-valued certification in the known-$\eta$ submodel (and matched from below), the histogram $B^2K$ rate is proved sufficient but \emph{not proved} necessary, and exactly one edge, the unknown-$\eta$ case, remains open (Theorem~\ref{thm:t3-target}).

\paragraph{Practical reading.}
\looseness=-1 The certificate's first-class output is a certify-or-refuse decision carrying a deterministic, observable test-side attribution: a refusal says \emph{which test} failed, and for the data-starved axes Corollary~\ref{cor:phase} reads off which additional sampling extends the guarantee, while a separate frontier diagnosis signals when no purchase helps and the finer nuisance-limited label requires $(\kappa, s)$ as planning values (\S\ref{sec:notcertify}). The practical constants are harsh and reported in full, as one slice of a characterized envelope rather than an isolated point: B$'$ certifies 8/1{,}024 audit cells against oracle-A's 583/1{,}024, and at the audited slice needs $64\times$ the oracle-A budget to fully certify ($128\times$ at the first-certification onset); across the registered $K$-sweep this onset premium scales as $K^{0.955}$ (95\% CI $[0.755, 1.154]$). This premium is measured against the released two-family oracle kernel (Remark~\ref{rmk:releasedoracle}), which also differs from B$'$ in confidence allocation and floor construction, so it prices the whole estimated-weight construction and upper-bounds, rather than isolates, the histogram estimator's $\ell^1$ weight-recovery cost (Proposition~\ref{prop:I1}). The premium is \emph{not} a proved necessary certification cost (the established necessary core is $K$-free for lattice-valued certification, unknown-$\eta$ still open; Theorem~\ref{thm:t3-target}). Nor is it specific to the empirical-Bernstein risk test: the equal-construction alternative we tested, a SCoRE-inspired floor-augmented betting arm, is valid by Proposition~\ref{prop:score} and, in the matched-replication head-to-head, weakly dominates B$'$ in observed certify-frequency without enlarging the powered set while sharing its conservatism (an ordering observed in that run, with no paired test reported; it is a risk-test ablation rather than a source-faithful SCoRE implementation, \S\ref{sec:validity}). The certificate's contribution is therefore the lower-bound theory, the two-resource map, and the explicitly priced nuisance, not a finite-sample power advantage.

\paragraph{Limitations.}
\looseness=-2 (i) Two-axis support: at the registered 41-cell density the CI-separation criterion failed, but it is \emph{recovered} at higher design density (unanimous strict non-overlap by 635 cells, with the additive form $2.2\times$ better than the nearest alternative), confirming the registered sample-limited reading. The per-axis exponent-containment sub-test remains the finer residual check (Appendix~\ref{app:revision}).
(ii) Contour constants do not transfer across shift intensities (rate-shape and ordering only; the drift is systematic and monotone in intensity, Appendix~\ref{app:revision}).
(iii) Localized-geometry evidence is restricted to B$'$ family 1 plus qualified family-2 within-pair structure.
(iv) B$'$ validity is conditional on the pre-registered shift model and off-partition robustness is not claimed: the real workload shows refusal-status invariance only, and the synthetic breakage sweep above (Appendix~\ref{app:revision}) is a sensitivity characterization, not a guarantee.
(v) The real-workload scope is a single-corpus domain-shift audit with no positive certification, with the post-hoc evaluation-set frontier estimate below the floor across the whole tested operating plane ($\alpha \le .20$, both legs; per-$\alpha$ bootstrap, not a population proof).
(vi) The implementable certificate is far more sample-hungry than oracle weights: those practical constants (reported in full above) are harsh but fully characterized, one slice of an envelope rather than an isolated point (Appendix~\ref{app:revision}).

\section{Conclusion}
\label{sec:conclusion}

\looseness=-1 A hard coverage floor is a statistical object in its own right: it makes the feasibility frontier operational, splits certification cost, in the model-split sense of Theorem~\ref{thm:law}, into labeled-source and unlabeled-target resources, and surfaces a localized accepted-region functional (not global ESS) that drives the upper-bound variance proxy and organizes the lower-bound hard-slice geometry. This correspondence rests on the matched bound structure and the family-1 experiments, and a fixed-ESS separation theorem is left open. Certification with honest refusal is implementable under a pre-registered stratified-shift model: the formal-guarantee arms are valid by theorem, with 0 violations across the 591 cells they certify on the synthetic grid; on the real workload its output is honest refusal (2{,}000/2{,}000), a feasibility verdict, not validity evidence.

\paragraph{Outlook.}
\looseness=-1 The pre-registered empirical program (Appendix~\ref{app:futurework}) is largely complete, and two directions remain. First, beyond the clean B$'$ family-1 evidence, the localized-geometry comparison on family 2 remains censored; an uncensored family-2 rerun with a fixed-range-constant oracle test would test whether the localized accepted-region functional governs the rate beyond B$'$ family~1. Second, the nuisance theory is settled only on one side: the $K$-free necessary core is proved for lattice-valued certification in the \emph{known-$\eta$} submodel, and no lower bound forces a separate source-covariate split for estimating $p$ at the population level (Theorem~\ref{thm:t3-target}); whether \emph{unknown-$\eta$} Model-B$'$ admits a $K$-free certificate, or instead forces $K$ back through the unknown-$\eta$ source$\leftrightarrow$target alignment, is the central open edge.

\label{end:mainbody}

\section*{Acknowledgment}
The research was supported by the Young Doctoral Talent Incubation Program of the Second Affiliated Hospital, Army Medical University (2025YQB034).

\bibliography{references}
\bibliographystyle{plainnat}

\newpage
\appendix
\setcounter{figure}{0}
\renewcommand{\thefigure}{A\arabic{figure}}
\setcounter{table}{0}
\renewcommand{\thetable}{A\arabic{table}}
\section{Frontier Characterization and Regular-Margin Lemmas}
\label{app:frontier}

This appendix proves the frontier-level statements behind \S\ref{sec:setup}: the genericity of the regular margin, the linear budget law, the population frontier characterization, and the lattice bookkeeping packaged by $\mathrm{LR}_{\mathrm{margin}}$.

\paragraph{Notation bridge.}
The main text writes $\betastar(\alpha, Q)$ for the feasibility frontier defined through the budget functional in \S\ref{sec:setup}; this is the \emph{relaxed} (fractional-knapsack) frontier: the supremum of $\CQ$ over all \emph{randomized} acceptance rules $S : \cX \to [0,1]$ satisfying the ratio constraint; maximizing the linear functional $\CQ = \E_Q[S]$ subject to the single linear budget constraint $\E_Q[S(\eta - \alpha)] \le 0$ is a generalized Neyman--Pearson problem, solved by the increasing-rearrangement threshold rule \citep{dantzig1951fundamental}.
It coincides with the supremum over deterministic $\{0,1\}$ selectors when $Q_X$ is nonatomic on the score level sets (no integrality gap); when $Q_X$ has atoms straddling the frontier the deterministic selector frontier can be strictly smaller, since a fractional sliver of an atom is not deterministically attainable (the sharp-threshold construction of Proposition~\ref{prop:overlap} is exactly such an atomic instance).
This relaxation gap never affects the achievability theorems: their coverage slack $s = \betastar - \beta$ is measured against the \emph{relaxed} frontier $\betastar$, but the certificate uses only the margin-bearing lattice \emph{witness} $\lambda_s$ (never the equality $\betastar_\lat = \betastar$), so a residual $\betastar_\lat \le \betastar$ gap is harmless; the witness-only role of $\mathrm{LR}_{\mathrm{margin}}$ is spelled out below.
For a fixed lattice $\lat$ the operative object is the \emph{lattice frontier}
\begin{equation*}
\betastar_{\lat}(\alpha, Q) \;=\; \sup\{\, \CQ(\lambda) : \lambda \in \lat, \; \RQ(\lambda) \le \alpha \,\} \qquad (\sup \emptyset := 0),
\end{equation*}
and $\betastar_{\lat} \le \betastar$ always. Equality, up to grid resolution, is a lattice-design property requiring \emph{two} conditions: the threshold family lies on a score whose order agrees with $\eta$ on the relevant range, and its \emph{coverage} grid is both sufficiently fine \emph{and} positioned to bracket the operative frontier neighborhood. Quantitatively, if the coverage mesh $h_C$ (the largest gap between attainable lattice coverages) satisfies $h_C < s/8$ \emph{and} some attainable coverage lies in $[\betastar - h_C, \betastar]$ (the grid reaches the frontier, not merely the interior neighborhood), then $0 \le \betastar - \betastar_{\lat} \le h_C$; a fine mesh that brackets only $\betastar - s/2$ controls the \emph{witness} (below) but not $\betastar_{\lat}$ itself; a grid whose attainable coverages stop short of $\betastar$ (say at $\betastar - 2h_C$) leaves $\betastar - \betastar_{\lat} \ge 2h_C$ despite $h_C < s/8$.
The power theorems invoke $\mathrm{LR}_{\mathrm{margin}}(s)$ only as the explicit existence of a margin-bearing lattice witness $\lambda_s$; the condition alone does \emph{not} imply $\betastar_{\lat} = \betastar$, and a coarse lattice can satisfy it while still having $\betastar_{\lat} < \betastar$ (e.g.\ a single in-window witness at coverage $\beta + s/4$ with the next feasible point far below $\betastar$).
We keep $\betastar$ (relaxed) and $\betastar_\lat$ (lattice) notationally distinct, write $\Delta_\lat := \betastar - \betastar_\lat \ge 0$ for the discretization gap (so $0 \le \Delta_\lat \le h_C$ under the bracketing condition above, and $\Delta_\lat = 0$ up to grid resolution when both lattice-design conditions hold), and write $s = \betastar - \beta$ for the relaxed coverage slack in the power statements; in any construction that needs the lattice slack (notably the inconsistency lattice of Theorem~\ref{thm:modelB-incon}), we verify the lattice realizes the relevant relaxed frontier explicitly (i.e.\ $\Delta_\lat$ is controlled).

\subsection{Genericity of the regular margin}

\begin{lemma}[Genericity]
\label{lem:genericity}
Suppose $\eta^*_Q(u) < \alpha$ on a set of positive measure and $\betastar < 1$.
Then $\budget$ is strictly positive on an interval $(0, c_\alpha]$ (where $\eta^*_Q < \alpha$), strictly decreasing wherever $\eta^*_Q > \alpha$, and at the zero-crossing $\betastar$ the essential left limit satisfies $\eta^*_Q({\betastar}^{-}) > \alpha$ strictly (equivalently $\eta^*_Q \ge \alpha + \kappa$ a.e.\ on a left-neighborhood of $\betastar$), \emph{unless} $\budget \equiv 0$ on an interval (a degenerate tie $\eta^*_Q \equiv \alpha$).
Hence the regular frontier margin holds except in the degenerate boundary case.
\end{lemma}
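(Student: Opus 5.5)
The plan is to work directly with the budget functional $\budget(c)=\alpha c-\int_0^c\eta^*_Q(u)\,du$. Its properties follow from two facts. First, $\budget$ is absolutely continuous with a.e.\ derivative $\budget'(c)=\alpha-\eta^*_Q(c)$. Second, $\eta^*_Q$ is nondecreasing, being the increasing rearrangement.

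\emph{Step 1: positivity near the origin.} Monotonicity turns the set $\{u:\eta^*_Q(u)<\alpha\}$ into an initial interval $[0,c_\alpha)$ or $[0,c_\alpha]$. By hypothesis it has positive measure, so $c_\alpha>0$. On $(0,c_\alpha]$ we have $\budget(c)=\int_0^c(\alpha-\eta^*_Q)\,du>0$, because the integrand is strictly positive a.e.\ on $(0,c)$.

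\emph{Step 2: strict decrease.} On any interval where $\eta^*_Q>\alpha$ a.e., the a.e.\ derivative is negative. Absolute continuity then gives that $\budget$ is strictly decreasing there.

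\emph{Step 3: behaviour at the zero crossing.} Next I locate $\betastar$. By monotonicity, $\eta^*_Q\ge\alpha$ on $[c_\alpha,1]$, so $\budget$ is nonincreasing on $[c_\alpha,1]$ and positive at $c_\alpha$. Since $\betastar<1$, continuity gives $\budget(\betastar)=0$ with $\betastar>c_\alpha$, and $\budget(c)<0$ for $c>\betastar$ near $\betastar$. Now consider the essential left limit $\ell:=\eta^*_Q({\betastar}^-)=\lim_{u\uparrow\betastar}\eta^*_Q(u)$, which exists by monotonicity and satisfies $\ell\ge\alpha$. Suppose $\ell=\alpha$. Then $\eta^*_Q\le\alpha$ on $[0,\betastar)$ by monotonicity, and combined with $\eta^*_Q\ge\alpha$ on $[c_\alpha,\betastar)$ this forces $\eta^*_Q\equiv\alpha$ a.e.\ on $[c_\alpha,\betastar]$. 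There $\budget$ is constant equal to $\budget(c_\alpha)>0$, which contradicts $\budget(\betastar)=0$.

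I expect the main obstacle to be bookkeeping the degenerate case as the statement phrases it. The same argument shows the tie can occur only if there is no strictly-below-$\alpha$ region at all, i.e.\ $\budget\equiv0$ on $[0,\betastar]$. In that case $c_\alpha=0$, which is excluded by hypothesis but retained to match the main-text phrasing. I will therefore present the dichotomy explicitly: either $\ell>\alpha$, or $\budget\equiv0$ on an interval, which is the tie $\eta^*_Q\equiv\alpha$.

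\emph{Step 4: the margin.} Given $\ell>\alpha$, set $\kappa:=(\ell-\alpha)/2>0$. By definition of the left limit there is $s_0\in(0,\betastar-c_\alpha]$ with $\eta^*_Q(u)\ge\alpha+\kappa$ for a.e.\ $u\in[\betastar-s_0,\betastar]$. Together with $\budget(\betastar)=0$ and $0<s_0\le\betastar$, this is exactly the regular frontier margin of Definition~\ref{def:regmargin}. Integrating the margin over $[\betastar-s,\betastar]$ then yields the linear budget law $\budget(\betastar-s)\ge\kappa s$ for $s\le s_0$, as a remark.
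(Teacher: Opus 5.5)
Your argument is correct and is essentially the paper's. Both proofs use that $\budget$ must fall from $\budget(c_\alpha)>0$ to $\budget(\betastar)=0$ on $(c_\alpha,\betastar)$, so $\eta^*_Q>\alpha$ somewhere there, and monotonicity of the rearrangement then carries this to a left-neighborhood of $\betastar$. The only difference is packaging: the paper picks such a point $c$ directly and sets $\kappa=\eta^*_Q(c)-\alpha$, $s_0=\betastar-c$, whereas you argue by contradiction on the left limit $\ell$ and take $\kappa=(\ell-\alpha)/2$, additionally making explicit that $\budget(\betastar)=0$ follows from $\betastar<1$ and that the tie exception cannot occur under the stated hypothesis.
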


\begin{proof}
$\budget'(c) = \alpha - \eta^*_Q(c)$ for a.e.\ $c$.
By assumption $\budget(c_\alpha) = \int_0^{c_\alpha} (\alpha - \eta^*_Q) \, du > 0$ for $c_\alpha$ the supremum of the region where $\eta^*_Q < \alpha$, and $\betastar > c_\alpha$.
Since $\budget(\betastar) = 0$ with $\budget(c_\alpha) > 0$, the derivative must be negative somewhere on $(c_\alpha, \betastar)$, i.e.\ $\eta^*_Q(c) > \alpha$ for some $c$ in that interval; monotonicity of the rearrangement then gives $\eta^*_Q(u) \ge \eta^*_Q(c) > \alpha$ for a.e.\ $u \in (c, \betastar)$. Since $\eta^*_Q$ is defined only up to a.e.\ equivalence, we read the crossing through this a.e.\ left-neighborhood margin (equivalently the essential left limit $\eta^*_Q({\betastar}^{-})$), not a single point value.
Strictness at the crossing fails only if $\eta^*_Q = \alpha$ on a neighborhood of $\betastar$, which is the excluded degenerate tie.
Concretely, fixing such a $c \in (c_\alpha, \betastar)$ with $\eta^*_Q(c) > \alpha$ and setting $\kappa := \eta^*_Q(c) - \alpha > 0$ and $s_0 := \betastar - c > 0$, monotonicity of $\eta^*_Q$ gives $\eta^*_Q(u) \ge \eta^*_Q(c) = \alpha + \kappa$ for a.e.\ $u \in [\betastar - s_0, \betastar)$: an explicit regular margin $(\kappa, s_0)$, obtained from the rearrangement's monotonicity alone (no continuity of $\eta^*_Q$ at $\betastar$ is needed).
\end{proof}

\subsection{The linear budget law}

For a world with regular margin $(\kappa, s_0)$ and $0 \le s \le s_0$,
\begin{equation}
\label{eq:budgetlaw_app}
\budget(\betastar - s) \;=\; \budget(\betastar - s) - \budget(\betastar) \;=\; \int_{\betastar - s}^{\betastar} \big(\eta^*_Q(u) - \alpha\big)\, du \;\ge\; \kappa s,
\end{equation}
using $\budget(\betastar) = 0$ and $\eta^*_Q \ge \alpha + \kappa$ a.e.\ on the integration range; this is the linear budget law quoted in \S\ref{sec:setup}.
The matching upper bound is $\budget(\betastar - s) \le \bar K s$ with $\bar K = \sup_{u \le \betastar} (\eta^*_Q(u) - \alpha)_+ \le 1 - \alpha$.
The risk budget available at coverage slack $s$ is therefore linear in $s$: the single fact that fixes the exponent $-2$ on the labeled axis (Appendix~\ref{app:lower}) and supplies the achievability margins (Appendix~\ref{app:modelA}).
The endpoint-only condition $\eta^*_Q(\betastar) - \alpha \ge \kappa$ would \emph{not} suffice: monotonicity of $\eta^*_Q$ runs the wrong way, so the left-neighborhood form in the definition is load-bearing.

\subsection{The margin-bearing lattice assumption}

The assumption $\mathrm{LR}_{\mathrm{margin}}(s)$ used by Theorems~\ref{thm:law}, \ref{thm:U2}, and~\ref{thm:modelA} reads: there exists $\lambda_s \in \lat$ with
\begin{equation*}
\CQ(\lambda_s) \in [\beta + s/4, \; \betastar - s/4]
\qquad\text{and}\qquad
-\GQ(\lambda_s) \;=\; \E_Q\big[S_{\lambda_s}(\alpha - \eta)\big] \;\ge\; \kappa s / 8 .
\end{equation*}
A coverage-only form ($\exists \lambda$ with the right coverage and $\RQ(\lambda) \le \alpha$) is \emph{not} sufficient for the power proofs: an arbitrary lattice point at that coverage may have risk budget arbitrarily close to $0$, because the budget law~\eqref{eq:budgetlaw_app} bounds the \emph{rearranged-optimal} set's budget, not an arbitrary $\lambda$'s.
$\mathrm{LR}_{\mathrm{margin}}(s)$ holds automatically for risk-score-aligned nested-threshold lattices \emph{under a regular margin $(\kappa, s_0)$ with $s \le s_0$, whose grid is finer than $s/8$ and whose attainable coverages bracket $\betastar - s/2$ on both sides}, namely thresholds on a score whose order agrees with $\eta$ on the relevant range (so each accepting set is a risk \emph{sublevel} set $\{\eta < t_\lambda\}$, equivalently a rearranged-optimal set), because there the lattice's acceptance sets \emph{are} the rearranged-optimal sets and inherit the budget $\budget$ exactly.
Concretely, the aligned threshold whose coverage is nearest $\betastar - s/2$ lands in the window $[\betastar - 3s/4,\, \betastar - s/4] = [\beta + s/4,\, \betastar - s/4]$ (width $s/2$, which exceeds the $<s/8$ coverage mesh, so once the aligned lattice's coverage range brackets $\betastar - s/2$, which is automatic for a grid spanning the operative frontier neighborhood, such a threshold exists); writing its slack $s' = \betastar - \CQ(\lambda_s) \in [s/4,\, 3s/4] \subseteq [0, s_0]$, the budget law~\eqref{eq:budgetlaw_app} gives $-\GQ(\lambda_s) = \budget(\betastar - s') \ge \kappa s' \ge \kappa s/4 \ge \kappa s/8$, so both clauses of $\mathrm{LR}_{\mathrm{margin}}(s)$ hold with room to spare.
For misaligned scores the shortfall is precisely a representation gap, and certification can fail for lattice-design reasons rather than statistical ones.

\subsection{Frontier characterization}

\begin{theorem}[Frontier characterization]
\label{thm:frontier}
Fix a world $(P, Q_X, \eta) \in \cW_B$, $\alpha \in (0,1)$, and $\beta > 0$.
\begin{enumerate}
    \item[(a)] If $\beta > \betastar_{\lat}(\alpha, Q)$, then every $\lambda \in \lat$ violates $\RQ(\lambda) \le \alpha$ or $\CQ(\lambda) \ge \beta$; consequently any $(\alpha, \beta, \delta)$-valid procedure satisfies $\Prob(\cA \ne \nocert) \le \delta$ in that world.
    \item[(b)] If $0 < \beta \le \betastar_{\lat}(\alpha, Q)$ and the supremum is attained by some $\lambda$ with $\CQ(\lambda) > 0$, at least one certifiable $\lambda$ exists.
\end{enumerate}
The hypothesis $\beta > 0$ matches the certify convention of \S\ref{sec:setup}: a coverage-$0$ policy has $\RQ = +\infty$ and is never feasible, so it is excluded from the supremum defining $\betastar_{\lat}$, and part (b)'s positive-coverage attainment hypothesis already covers the $\beta = 0$ boundary.
\end{theorem}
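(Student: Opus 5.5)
The proof is a direct unpacking of the definition of $\betastar_{\lat}$ together with the validity requirement of Definition~\ref{def:valid}. No probabilistic estimate is needed beyond the validity inequality itself. Both parts work at the population level: membership of $(P,Q_X,\eta)$ in $\cW_B$ is used only to make $\RQ(\lambda)$ and $\CQ(\lambda)$ well-defined target functionals, and the finite lattice $\lat$ makes every supremum a maximum over finitely many points.

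For part (a), I would argue by contradiction. Suppose some $\lambda\in\lat$ satisfies both $\RQ(\lambda)\le\alpha$ and $\CQ(\lambda)\ge\beta$. Since $\beta>0$, we have $\CQ(\lambda)>0$, so $\lambda$ lies in the set $\{\lambda\in\lat:\RQ(\lambda)\le\alpha\}$ defining $\betastar_{\lat}$. The convention $\RQ=+\infty$ at zero coverage is exactly what keeps coverage-zero points out of that set. Hence $\betastar_{\lat}\ge\CQ(\lambda)\ge\beta$, which contradicts $\beta>\betastar_{\lat}$. So every lattice point is \emph{bad}, meaning it violates the risk constraint or the floor. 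It follows that the event $\{\cA\ne\nocert\}$ equals the event $\{\cA\ne\nocert\wedge(\RQ(\cA)>\alpha\vee\CQ(\cA)<\beta)\}$ pointwise on the sample space, because $\cA$ takes values in $\lat\cup\{\nocert\}$. Validity bounds the probability of the latter event by $\delta$ in every world, and in particular in this one, which gives $\Prob(\cA\ne\nocert)\le\delta$.

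For part (b), take the attaining $\lambda^\star\in\lat$ with $\RQ(\lambda^\star)\le\alpha$, $\CQ(\lambda^\star)=\betastar_{\lat}>0$. Then $\CQ(\lambda^\star)\ge\beta$ because $\beta\le\betastar_{\lat}$, so $\lambda^\star$ meets both constraints and is certifiable in the population sense. Attainment is automatic on a finite lattice once the feasible set is nonempty, and nonemptiness follows from $\betastar_{\lat}\ge\beta>0$ together with the convention $\sup\emptyset:=0$.

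The only step needing care is the bookkeeping around degenerate cases. One must confirm that zero-coverage points can never enter the supremum and that the $\sup\emptyset=0$ convention is consistent with $\beta>0$; both are immediate from the conventions of \S\ref{sec:setup}. I do not expect any genuine obstacle here.
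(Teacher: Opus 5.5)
Your proof is correct and matches the paper's argument: in part (a) you unpack the definition of $\betastar_{\lat}$ by contradiction, where the paper argues directly that any $\lambda$ with $\CQ(\lambda)\ge\beta>\betastar_{\lat}$ must have $\RQ(\lambda)>\alpha$; it then concludes, as you do, that every lattice output is invalid, so validity caps certification at $\delta$. Part (b) is immediate from the attaining point, as in the paper; the finiteness remark is harmless but not needed, since attainment is already a hypothesis.
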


\begin{proof}
(a) Take any $\lambda$ with $\CQ(\lambda) \ge \beta$; since $\beta > 0$ this excludes the $\CQ(\lambda) = 0$ convention case.
Then $\CQ(\lambda) \ge \beta > \betastar_{\lat}$ implies $\lambda$ is not in the feasible set $\{\RQ \le \alpha\}$, by definition of the supremum; hence $\RQ(\lambda) > \alpha$.
So every certified output is incorrect in this world, and validity caps the certification probability at $\delta$.
(b) Immediate from the definitions.
\end{proof}

\begin{remark}[Population vs.\ statistical impossibility]
\label{rmk:popvsstat}
Theorem~\ref{thm:frontier}(a) is population-level: above the frontier there is nothing to certify.
The \emph{statistical} impossibility (that near the frontier no procedure can know which side it is on) is Theorem~\ref{thm:lower}, and the two are deliberately separated.
\end{remark}

\begin{remark}[Discrete lattices break the linear map within cells]
\label{rmk:lattice}
Between adjacent lattice coverages the certifiable slack is piecewise constant, so the budget available to a fixed lattice cell does not scale linearly in $s$ within the cell: positive-budget cells behave as plateaus, and zero-margin cells drive the required $n$ to infinity.
This is a lattice-design diagnostic rather than a pathology of the map; it generates the constant-budget plateau lower bound of Lemma~\ref{lem:plateau} (Appendix~\ref{app:exponent}) and is observed empirically (Appendix~\ref{app:experiments}).
\end{remark}

\section{The Two-Axis Lower Bound}
\label{app:lower}

This appendix proves the formal version of Theorem~\ref{thm:law}, Claim 1: the Model-B two-axis lower bound, each axis witnessed by a Le~Cam two-point pair at bounded ratio, and the additive combination obtained inside a single construction family per $(n, m)$ as two simultaneously witnessed necessary conditions (max form; the additive display follows up to a factor of $2$, Appendix~\ref{app:additive}), not a single joint KL summation.

\begin{theorem}[Two-axis minimax lower bound, Model-B]
\label{thm:lower}
There exist absolute constants $c_1, c_2 > 0$ and, for each $B \ge 2$, $\alpha \in [0.2, 0.9]$, $\beta \in (0, 0.6]$, a family of worlds in $\cW_B$ with regular margin $(\kappa, s_0)$, $\kappa \in (0, \tfrac{1}{2}\min\{\alpha, 1-\alpha\})$ (the $\kappa < \alpha/2$ bound makes the margin $s_0 > 0$, so the family is non-vacuous; the $\kappa \le (1-\alpha)/2$ bound keeps the Bernoulli levels in range), Bernoulli conditional losses $L \mid X \sim \Bern(\eta(X))$, and (for each $s$ in the range below, chosen after $s$), a lattice $\lat$ given by a finite nested grid of score thresholds with mesh finer than $s/4$ containing the construction's block boundaries, such that the following holds in Model-B.
Let $s \in (0, s_0]$ with additionally $s \le c_0 \beta$ and $\beta + s \le 1$ (the local-regime and valid-coverage conditions; for this family $c_0 = s_0/\beta$ depends on $(\alpha, \kappa)$, so $s \le c_0\beta$ is already implied by $s \le s_0$ and is displayed only to match the standing local-regime form of \S\ref{sec:setup}), and suppose a procedure $\cA$ is $(\alpha, \beta, \delta)$-valid on $\cW_B$ with $\delta \le 1/8$.
If
\begin{equation*}
s \;\le\; c_1\, \frac{1}{\kappa} \sqrt{\frac{\beta}{n}} \quad \text{($n$-axis)}
\qquad\text{or}\qquad
s \;\le\; c_2\, \sqrt{\frac{\beta}{m}} \quad \text{($m$-axis)},
\end{equation*}
then there is a world with coverage slack $\ge s$ on which $\Prob(\cA = \nocert) \ge 1/2$; in fact the slack is \emph{exactly} $s$ for the displayed construction families, whose relaxed frontier is $\betastar = \beta + s$ (Appendices~\ref{app:naxis}, \ref{app:maxis}), so a certifiable $\lambda$ exists with room to spare.
Consequently the minimax no-cert window has width at least the maximum of the two expressions (hence at least half their sum), and certifying at slack $s$ requires both $n \gtrsim \beta/(\kappa s)^2$ and $m \gtrsim \beta/s^2$.
\end{theorem}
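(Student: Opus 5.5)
The proof runs two Le~Cam two-point arguments, one per axis, on a shared base world. The key reduction is the same in both. Suppose world $W_0$ has slack exactly $s$ (relaxed frontier $\betastar = \beta + s$, lattice witness present). Suppose world $W_1$ has lattice frontier $\betastar_{\lat}(W_1) < \beta$. By Theorem~\ref{thm:frontier}(a), validity forces $\Prob_{W_1}(\cA \ne \nocert) \le \delta \le 1/8$. If the joint laws of the $(n,m)$ samples satisfy $\TV(W_0^{n,m}, W_1^{n,m}) \le 3/8$, then $\Prob_{W_0}(\cA \ne \nocert) \le 1/8 + 3/8 = 1/2$. So $W_0$ is refused with probability at least $1/2$. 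Each axis therefore reduces to two tasks: exhibit a pair that flips lattice feasibility, and keep the total divergence (KL or $\chi^2$, via Pinsker or Le~Cam) at or below a small constant.

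\textbf{Base world.} Use $\cX = [0,1]$ with a score equal to $x$, and give $\eta$ three blocks:
\begin{itemize}
\item a safe block of $Q$-mass $c_\alpha$ with $\eta = \alpha - \kappa'$;
\item a marginal block with $\eta = \alpha + \kappa$ on $[c_\alpha, \betastar]$;
\item a reject region with $\eta$ near $1$ above $\betastar$.
\end{itemize}
Choose $c_\alpha$ so that $\budget(\betastar) = 0$ with $\betastar = \beta + s$. The regular margin $(\kappa, s_0)$ then holds, with $s_0$ the marginal-block length, and this needs $\kappa < \alpha/2$ so that the safe block can fund it. Keeping every Bernoulli parameter in $[\alpha/2, (1+\alpha)/2]$ uses $\kappa \le (1-\alpha)/2$ and keeps the KL and $\chi^2$ constants absolute. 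The lattice consists of the thresholds on a grid of mesh $< s/4$ that includes the block endpoints, and it is chosen after $s$.

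\textbf{$n$-axis.} Set $w \equiv 1$, so $Q_X = P_X$ and the unlabeled law is identical across the pair. In $W_1$, raise $\eta$ by $\Delta = 4\kappa s/\beta$ on an inframarginal slice of mass $\rho = \beta/2$ inside the safe block. Any threshold with coverage $\ge \beta$ contains that slice, so its budget drops by $\rho\Delta = 2\kappa s$. The budget law~\eqref{eq:budgetlaw_app}, together with its upper companion $\budget(\betastar - s) \le \bar K s$, bounds the budget at coverage $\beta$. Using that $\eta = \alpha + \kappa$ exactly on the marginal block, this budget is precisely $\kappa s < 2\kappa s$. Hence every coverage-$\ge\beta$ lattice point becomes infeasible. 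The per-sample KL is at most $C\rho\Delta^2 \asymp (\kappa s)^2/\beta$. Requiring $n$ times this to be at most a constant gives $s \le c_1 \kappa^{-1}\sqrt{\beta/n}$.

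\textbf{$m$-axis.} Keep $(P_X, \eta)$ fixed, so the labeled laws coincide. In $W_1$, move target mass $\varepsilon \asymp s$ from the safe block into the reject region, adjusting $w$ on three cells with $w \le B$; this is where $B \ge 2$ is used. Coverage at each threshold up to the safe block falls by $\varepsilon$, and the funding lost from the safe block pushes $\betastar_{\lat}$ below $\beta$ once $\varepsilon \ge Cs$. The per-draw $\chi^2$ between the two $Q_X$ is of order $\varepsilon^2/\beta$, because the moved mass sits in a block of mass of order $\beta$. This needs $\beta \le 0.6$ so that the complementary mass stays of constant order. Tensorizing gives $s \le c_2\sqrt{\beta/m}$.

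\textbf{Combining.} Both pairs are built on the same base world, so a single family per $(n,m)$ witnesses both necessary conditions. The maximum of the two widths is at least half their sum.

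\textbf{Main obstacle.} The hard step is the exact bookkeeping in the $m$-axis pair. The requirements conflict:
\begin{itemize}
\item the perturbation must actually flip feasibility on the lattice, not just the relaxed frontier, so block boundaries must be grid points and $\Delta_{\lat}$ must be controlled;
\item it must keep $w \le B$ and $\int w\,dP = 1$;
\item it must preserve $\betastar(W_0) = \beta + s$ exactly;
\item it must keep the $\chi^2$ at order $\varepsilon^2/\beta$ rather than $\varepsilon^2$.
\end{itemize}
The first attempt would be to remove mass from the low-$\eta$ end of the safe block while enlarging the reject region. The next step is to check, by direct computation of $\budget$ on the lattice, that the safe-block budget lost, $\varepsilon\kappa'$, exceeds the reserve $\kappa s$. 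This requires taking $\kappa' \ge \kappa$, which is available on the stated parameter rectangle.
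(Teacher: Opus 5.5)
Your proposal is correct and follows the paper's proof essentially step for step: a shared base world with slack exactly $s$; an $n$-axis pair at $w \equiv 1$ raising $\eta$ by $\Delta = 4\kappa s/\beta$ on an inframarginal slice of mass $\beta/2$ (per-sample KL $\asymp (\kappa s)^2/\beta$); an $m$-axis pair moving target mass $\varepsilon \asymp s$ from the safe block to a far region above the frontier (per-draw $\chi^2 \asymp \varepsilon^2/\beta$); and the combination via the maximum being at least half the sum. The only difference is bookkeeping: the paper fixes the safe-block mass at $\beta/2$ and tunes its margin to $a = \kappa(1+2s/\beta)$ so that $\betastar = \beta + s$, whereas you fix the margin $\kappa'$ and tune the mass, which implicitly requires $\kappa' \le \kappa(1+2s/\beta)$ so the slice fits and the block mass stays of order $\beta$ (take $\kappa' = \kappa$), and your requirement $\kappa' \ge \kappa$ is harmless but not actually needed, because at fixed coverage the perturbed budget drops by $\varepsilon(\kappa + \kappa')$, so $\varepsilon \ge s$ already suffices.
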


The $\sqrt{\beta}$ factor on the $m$-axis is intrinsic to the construction family (the perturbable safe block has mass $\asymp \beta$); it makes both axes vanish as $\beta \to 0$ (Remark~\ref{rmk:beta0}) and exactly matches the variance-aware floor rate $\sqrt{\beta(1-\beta)/m}$ on the achievability side (Theorem~\ref{thm:modelA}).

\subsection{The \texorpdfstring{$n$}{n}-axis pair: labels are the only channel}
\label{app:naxis}

\paragraph{Construction.}
Take $\cX = [0,1]$ and $P_X = Q_X = \mathrm{Unif}[0,1]$, so $w \equiv 1 \le B$: the shiftless instance lies inside $\cW_B$, and the risk axis does not even need shift.
Losses are Bernoulli, $L \mid X \sim \Bern(\eta(X))$; the score is the identity and $\lat$ consists of thresholds $\{\text{accept } x \le t\}$.
Fix $g = \beta/2$ and $b = \kappa \le (1-\alpha)/2$, and set the deep-safe margin exactly by the frontier requirement:
\begin{equation*}
a \;=\; b\Big(1 + \frac{2s}{\beta}\Big)
\quad\Big(\text{so } \betastar_1 = g + \frac{a g}{b} = \beta + s\Big),
\qquad
a \le \alpha/2 \;\text{ for } s \le s_0 := \frac{\beta}{2}\Big(\frac{\alpha}{2b} - 1\Big)_{\!+}.
\end{equation*}
All construction constraints ($a \le \alpha/2$, the $m$-axis $\varepsilon \le \varepsilon_0$, the cap $\Delta \le (1-\alpha)/2$, and $\beta + s \le 1$ with the $m$-axis $1 - g' \ge 1/4$) hold once $s \le s_{\mathrm{reg}} := \min\{\beta/8,\ \tfrac{\beta}{2}(\tfrac{\alpha}{2b}-1)_+,\ \beta(1-\alpha)/(8b)\}$ (recall $b = \kappa$), with local-regime constant $c_0 = \min\{1/8,\ s_{\mathrm{reg}}/\beta\} = s_{\mathrm{reg}}/\beta$, the first term never binding since $s_{\mathrm{reg}} \le \beta/8$ (so $\varepsilon \le 2s \le \beta/4$ and $g' = 3/4$ is admissible); we use $s_0 := s_{\mathrm{reg}} \le \beta/8 < 1$ as the regular-margin radius.
World $W_1$:
\begin{equation*}
\eta_1(x) =
\begin{cases}
\alpha - a & x \in [0, g) \quad \text{(deep-safe block $G$)}\\
\alpha + b & x \in [g, 1] \quad \text{(hot block $H$)},
\end{cases}
\end{equation*}
so $\budget_1(c) = a c$ on $[0, g]$, $\budget_1(g + u) = a g - b u$, the frontier is $\betastar_1 = \beta + s$, the regular margin $(\kappa, s_0)$ holds in left-neighborhood form on all of $(g, \betastar_1]$ (there $\eta^* \equiv \alpha + b$), and in this construction
\begin{equation*}
\budget_1(\beta) = g(a - b) = b s = \kappa s,
\qquad
\bar K = \sup_{u \le \betastar_1} (\eta^*_1(u) - \alpha)_+ = b = \kappa \quad \text{(exact)}.
\end{equation*}
All Bernoulli levels stay in $[0.05, 0.95]$ by the constraints $\alpha \in [0.2, 0.9]$, $a \le \alpha/2$, $b \le (1-\alpha)/2$; for the fixed $s$ a finite nested-threshold grid with mesh $< s/4$ containing the block boundaries $\{g,\, \beta,\, \beta + s\}$ realizes the same frontier and budget (the blocks are intervals, so a finer grid only refines the rearrangement), discharging the resolution condition (we display continuum thresholds for brevity); and $\beta + s \le 1$ is required so that $H$ can host the frontier.

World $W_2$ modifies only the conditional risk on the inframarginal slice $U = [0, \rho) \subseteq G$ of mass $\rho = g = \beta/2$ (the whole safe block):
\begin{equation*}
\eta_2 = \eta_1 + \Delta\, \mathbf{1}_U,
\qquad
\Delta = \frac{2\, \budget_1(\beta)}{\rho} = \frac{2 \kappa s}{\beta/2} = \frac{4 \kappa s}{\beta},
\end{equation*}
with the range check $\eta_2|_U = \alpha - a + \Delta \le 1 \iff \Delta \le 1 - \alpha + a$, which holds whenever $s \le \beta(1-\alpha)/(4\kappa)$, folded into $s_0$; we additionally cap $\Delta \le (1-\alpha)/2$ so that all Bernoulli levels lie in $[0.1, 0.95]$, tightening $s_0$ by a constant.

\paragraph{$W_2$ is globally infeasible at $(\alpha, \beta)$.}
For any threshold $\lambda_t$ with $\CQ(\lambda_t) = t \ge \beta > g$, the slice $U$ is fully accepted, so the budget drops uniformly:
$\budget_2(t) = \budget_1(t) - \Delta \rho = \budget_1(t) - 2\budget_1(\beta) \le -\budget_1(\beta) < 0$,
since $\budget_1(t) \le \budget_1(\beta)$ for $t \in [\beta, \betastar_1]$ ($\budget_1$ is nonincreasing on $[g, \betastar_1]$) and $\budget_1(t) < 0 \le \budget_1(\beta)$ for $t > \betastar_1$.
Hence every $t \ge \beta$ has $\RQ(\lambda_t) > \alpha$ under $\eta_2$, and every $t < \beta$ violates the floor: all of $\lat$ is invalid, so validity forces $\Prob_{W_2}(\cA \ne \nocert) \le \delta$.

\paragraph{Indistinguishability.}
The two worlds share $P_X, Q_X$ (hence the unlabeled sample's law is identical) and differ only in $L \mid X \in U$.
Per labeled sample,
\begin{equation*}
\KL(W_1 \,\|\, W_2)
= \E_{P_X}\!\big[\KL\big(\Bern(\eta_1(X)) \,\|\, \Bern(\eta_2(X))\big)\big]
\;\le\; \rho \cdot \frac{\Delta^2}{\big(\eta_2(1-\eta_2)\big)\big|_{\min}}
\;\le\; C_0\, \rho\, \Delta^2,
\end{equation*}
with $C_0$ absolute because all Bernoulli levels lie in $[0.1, 0.95]$ under the constraints above.
Plugging $\rho = \beta/2$ and $\Delta = 4\kappa s/\beta$:
\begin{equation*}
\KL\big(W_1^{\otimes n} \,\|\, W_2^{\otimes n}\big) \;\le\; 8 C_0\, n\, \frac{(\kappa s)^2}{\beta}.
\end{equation*}
If $s \le c_1 \kappa^{-1}\sqrt{\beta/n}$ with $c_1 = (64 C_0)^{-1/2}$, then $\KL \le 8 C_0 c_1^2 = 1/8$, hence by Pinsker $\TV(W_1^{(n,m)}, W_2^{(n,m)}) \le 1/4$.

\paragraph{Le~Cam step.}
Suppose $\Prob_{W_1}(\cA \ne \nocert) > 1/2$.
Then
\begin{equation*}
\Prob_{W_2}(\cA \ne \nocert) \;\ge\; \Prob_{W_1}(\cA \ne \nocert) - \TV \;>\; \tfrac{1}{2} - \tfrac{1}{4} = \tfrac{1}{4} > \delta,
\end{equation*}
contradicting validity in $W_2$, where any certification is invalid.
Hence $\Prob_{W_1}(\cA = \nocert) \ge 1/2$.
\hfill$\square$\;($n$-axis)

The adversary's optimum also fixes the localized functional: the KL cost per unit of frontier flip is minimized by spreading $\Delta$ over the largest available inframarginal \emph{accepted} mass ($\rho \le \beta$), and in the shifted variant by placing the slice where $w$ is large; both choices are properties of the accepted region, which is why the bound's variance proxy is of $\E_P[w^2 \Sacc]$-type rather than global (Remark~\ref{rmk:localized}).

\subsection{The \texorpdfstring{$m$}{m}-axis pair: unlabeled draws are the only channel}
\label{app:maxis}

\paragraph{Construction.}
Fix $\eta = \eta_1$ as above and $P_X = \mathrm{Unif}[0,1]$.
Both worlds share $(P_X, \eta)$ (hence the labeled sample's law is identical) and differ only in $Q_X$:
\begin{itemize}
    \item $Q_1 = P_X$ (so $w_1 \equiv 1$), frontier $\betastar_1 = \beta + s$ as before;
    \item $Q_2$ moves mass $\varepsilon$ from $G$ to a far hot region: $dQ_2/dP_X = 1 - \varepsilon/g$ on $G$, $= 1 + \varepsilon/(1 - g')$ on $H' = [g', 1]$ for a fixed $g' \in (\betastar_1, 1)$ with $1 - g' \ge 1/4$, and $= 1$ elsewhere.
\end{itemize}
Membership in $\cW_B$ is explicit: $w_2 \in [1/B, B]$ requires $\varepsilon \le g(1 - 1/B)$ (lower side, on $G$) and $\varepsilon \le (B-1)(1-g')$ (upper side, on $H'$); both hold for $\varepsilon \le \varepsilon_0 := \min\{g(1 - 1/B),\, (B-1)/4,\, g/2\}$.

Under $Q_2$ the safe mass shrinks and the hot mass grows: the rearranged budget satisfies $\budget_2(c) \le \budget_1(c) - a\varepsilon$ for $c \ge g$ (the lost $G$-mass $\varepsilon$ had margin $a$; mass added beyond $g'$ only hurts).
Choosing $\varepsilon = 2\bar K s/a = 2bs/a \le 2s$ makes $\budget_2(c) < 0$ for all $c \ge \beta$, while $W_1$ has slack $s$.

\paragraph{$W_2$ invalidity, made explicit.}
Under $Q_2$, mass was removed only below $g < \beta$ and added only beyond $g' > \betastar_1 > \beta$; hence for $t < g'$ the $Q_2$-coverage of $[0, t]$ is at most its $Q_1$-coverage, so any threshold $\lambda_t$ with $C_{Q_2}(\lambda_t) \ge \beta$ has $t \ge \beta$.
For all such $t$, $\budget_2(t) < 0$ gives $R_{Q_2}(\lambda_t) > \alpha$; thresholds with $C_{Q_2} < \beta$ violate the floor by definition.
Every $\lambda \in \lat$ is invalid at $(\alpha, \beta)$ under $W_2$, so validity forces $\Prob_{W_2}(\cA \ne \nocert) \le \delta$.

\paragraph{Indistinguishability.}
The labeled channel contributes zero KL (identical laws); in Model-B the procedure does not receive $w$, and this is exactly where Models A and B part ways.
For the unlabeled channel, $dQ_2/dQ_1 \in [1 - \varepsilon/g,\, 1 + 4\varepsilon]$ is bounded below by $1/2$ (since $\varepsilon \le g/2$), so
\begin{align*}
\KL\big(Q_1^{\otimes m} \,\|\, Q_2^{\otimes m}\big)
\;\le\; m\, \chi^2(Q_1 \,\|\, Q_2)
&\;\le\; 2m \Big[\frac{(\varepsilon/g)^2\, g}{1} + \frac{(\varepsilon/(1-g'))^2 (1-g')}{1}\Big]\\
&\;\le\; 2 m \varepsilon^2 \Big(\frac{2}{\beta} + 4\Big)
\;\le\; \frac{C'\, m\, \varepsilon^2}{\beta},
\end{align*}
with $C' \le 2(2 + 4\beta) \le 10$ absolute for $\beta \le 0.6$.
With $\varepsilon \le 2s$ and $s \le c_2 \sqrt{\beta/m}$ for $c_2 = (320)^{-1/2}$, we get $\KL \le 1/8$, $\TV \le 1/4$, and the same Le~Cam step gives $\Prob_{W_1}(\cA = \nocert) \ge 1/2$.
The $\sqrt{\beta}$ in the $m$-axis rate is intrinsic to this family: the perturbable safe block has mass $g = \beta/2$, so the $\chi^2$ cost of moving $\varepsilon$ of target mass scales as $\varepsilon^2/\beta$.
\hfill$\square$\;($m$-axis)

\subsection{Additive combination inside one construction family}
\label{app:additive}

A single procedure facing the class must survive both pairs; both pairs share the base world $W_1$, so the family $\{W_1, W_2^{(n)}, W_2^{(m)}\}$, indexed by $(n, m)$ through the choices of $\Delta$ and $\varepsilon$, witnesses both axes simultaneously.
The two bounds hold simultaneously, so the no-cert window is at least the maximum of the two axis widths, hence at least half their sum:
\begin{align*}
s \;\le\; \tfrac{1}{2}\Big(c_1 \kappa^{-1}\sqrt{\beta/n} + c_2\sqrt{\beta/m}\Big)
\;&\Longrightarrow\;
s \le \max\Big(c_1 \kappa^{-1}\sqrt{\beta/n},\; c_2\sqrt{\beta/m}\Big)\\
&\Longrightarrow\;
\Prob_{W_1}(\cA = \nocert) \ge \tfrac{1}{2},
\end{align*}
which is the additive form of Theorem~\ref{thm:law}, Claim 1.
Each axis also holds separately on the general class, since each pair lies in $\cW_B$ with $w \in [1/B, B]$ bounded by construction and $Q$-support contained in $P$-support everywhere: nothing here touches positivity.

\begin{remark}[The two main axes match \emph{within} Model-B$'$]
\label{rmk:bprime-match}
\looseness=-1 Both lower-bound families are \emph{weight-simple}: the $n$-axis pair has $w \equiv 1$ (Appendix~\ref{app:naxis}) and the $m$-axis pair has $w$ piecewise-constant on the fixed three-piece partition $\{G,\, [g, g'),\, H'\}$ (Appendix~\ref{app:maxis}). Hence for any pre-registered partition $\cK$ refining these pieces every world here is $\cK$-measurable and so lies in the \emph{sample-independent} Model-B$'$ class; a nested-threshold lattice cutting at cell boundaries gives union-of-cells acceptance, and Assumption~\ref{ass:bprime}'s $p_{\min}$ bound (so $\cK$ may not be arbitrarily fine) is the certificate's upper-bound feasibility condition, separate from class membership. Knowing $\cK$ does not defeat the pairs: the $n$-axis pair differs only in the labels and the $m$-axis pair only in $Q_X$, so a Model-B$'$ procedure's per-cell statistics are measurable functions of the \emph{same} $(n, m)$ draws and inherit the same $\TV \le 1/4$ bound below $n \asymp \beta/(\kappa s)^2$ and $m \asymp \beta/s^2$. Therefore, on the two \emph{main} axes the lower bound holds \emph{within} Model-B$'$ (not only cross-model): combined with the achievability of Theorem~\ref{thm:U2}, under its standing hypotheses ($\mathrm{LR}_{\mathrm{margin}}$, a regular margin $(\kappa, s_0)$ with $s$ in the local regime, and the nuisance-budget condition $\rho_{\lambda_s} \le \kappa s/32$), the risk and floor axes are same-model \emph{rate}-matched up to constants and logarithms over the compatible-$\cK$ subclass \emph{as totals} $(n, m)$ (a pointwise, instance-dependent total-rate correspondence, not a uniform minimax over the class), in the regime where the weight splits do not dominate (so $n \asymp n_r$, $m \asymp m_f$; Corollary~\ref{cor:bprime-minimax}); on the algorithm's test sub-blocks $(n_r, m_f)$ alone this is a rate correspondence, not a sub-block lower bound, since a B$'$ procedure may reuse the weight-split draws for testing. The lone unmatched axis is the histogram nuisance $B^2 K$, whose necessity in the unknown-$\eta$ case is the single open edge (Theorem~\ref{thm:t3-target}, Appendix~\ref{app:necessity}). This narrows the cross-model gap of Theorem~\ref{thm:law} to the nuisance axis (on the totals, in that regime), not the two resource axes.
\end{remark}

\subsection{No matching Model-B upper bound: inconsistency over \texorpdfstring{$\cW_B$}{W\_B}}
\label{app:modelB-incon}

Theorem~\ref{thm:lower} lower-bounds Model-B at the same two-axis rate that the oracle-weight certificate of Theorem~\ref{thm:modelA} attains, so the \emph{rate} matches across the models. It is natural to ask whether the match is also \emph{within} Model-B: is there an unknown-weight procedure (knowing only $w \le B$, estimating everything from $n$ source labels and $m$ unlabeled target covariates) that certifies at the Model-A rate uniformly over $\cW_B$? Theorem~\ref{thm:modelB-incon} answers in the negative, and decisively: over the full bounded-ratio class, Model-B is not merely rate-separated but \emph{inconsistent} at the operating point $\alpha=\beta=1/2$; its minimax certifiable slack there does not vanish at any sample size. This is why the implementable certificate of \S\ref{sec:setup} restricts to a pre-registered finite shift model (Model-B$'$): a complexity restriction on the weights is necessary for consistency, not a convenience.

\begin{theorem}[Model-B inconsistency]
\label{thm:modelB-incon}
\looseness=-1 Fix $\alpha = \beta = 1/2$, $B \ge 2$, $\gamma = 1/2$. There are an absolute constant $s_* = 1/4$ and two nested thresholds, $\lambda_\tau$ accepting $[0, 9/16]$ and $\lambda_{t^\star}$ accepting $[0, 3/4]$, both independent of $b$, $n$, and $m$, such that for every $b \in (0, 1/4]$, every pair $(n, m)$ of sample sizes, and every finite nested-threshold lattice $\lat \supseteq \{\lambda_\tau, \lambda_{t^\star}\}$, there exist two priors $\pi^+, \pi^-$ supported on worlds of $\cW_B$ with regular margin $(\kappa, s_{\mathrm{reg}})$, $\kappa = b$, such that: every world in $\mathrm{supp}(\pi^-)$ has frontier slack $\ge s_*$ and admits a \emph{certifiable} lattice point (coverage $\ge \beta$, risk margin $\ge b\gamma/2$); in every world of $\mathrm{supp}(\pi^+)$ every lattice point with coverage $\ge \beta$ is risk-infeasible; and the laws of the observed data $(D_n^{\mathrm{src}}, U_m^{\mathrm{tgt}})$ under the two priors satisfy $\TV \le 1/4$. Consequently no $(\alpha, \beta, \delta)$-valid Model-B procedure with $\delta \le 1/8$ certifies the feasible $\pi^-$ worlds with probability $> 1/2$. Hence on any such lattice, fixed and pre-registered in advance, no \emph{valid} Model-B procedure certifies all frontier-slack-$s_*$ worlds of $\cW_B$ at any $(n, m)$: Model-B is inconsistent over $\cW_B$, in contrast to Model-A (Theorem~\ref{thm:modelA}) and Model-B$'$ (Theorem~\ref{thm:U2}).
\end{theorem}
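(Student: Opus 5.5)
The plan is a two-prior (fuzzy-hypotheses) Le~Cam argument. Neither channel alone can tell the priors apart; the only difference is how $w$ is \emph{coupled} with $\eta$. The labeled source sample reveals $(X,L)$ under $P_X$, so it sees the $\eta$-pattern but not $w$. The unlabeled target sample reveals $X\sim Q_X$, so it sees the $w$-pattern but never $\eta$. In Model-B nothing ties the two together. I would hide the coupling at a spatial scale finer than either sample can resolve jointly. Once $\TV\le 1/4$ is established, the conclusion follows from the Le~Cam step of Appendix~\ref{app:naxis}. In every $\pi^+$ world all certifications are invalid, so validity gives $\Prob_{\pi^+}(\cA\ne\nocert)\le\delta\le 1/8$. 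Hence $\Prob_{\pi^-}(\cA\ne\nocert)\le 1/8+1/4<1/2$.

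\emph{Construction.} Take $\cX=[0,1]$, $P_X=\mathrm{Unif}[0,1]$, identity score, and the given lattice $\lat\supseteq\{\lambda_\tau,\lambda_{t^\star}\}$. Since $\lat$ is finite and fixed before the worlds are built, I choose an integer $M$ (depending on $n,m,\lat$) and a grid of $M$ equal cells that refines every lattice boundary. Each block between consecutive lattice thresholds is then a union of cells, which I pair up inside the block.
\begin{itemize}
\item On $[0,3/4]$, the loss pattern puts $\eta=\alpha+b$ on one cell of each pair and $\eta=\alpha-b$ on the other. The weight pattern puts $w=1+\gamma$ on one cell of each pair and $w=1-\gamma$ on the other.
\item Beyond $3/4$, $\eta=\alpha+b$ and $w=1$.
\item Both patterns are \emph{balanced}, so every lattice coverage $\CQ(\lambda)$ is deterministic, identical under both priors, and equal to the $P$-coverage. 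In particular $\CQ(\lambda_{t^\star})=3/4$ and $\CQ(\lambda_\tau)=9/16\ge\beta$.
\item Under $\pi^-$, $w$ is high exactly on the low-$\eta$ cell of each pair. Then $\E_Q[S(\alpha-\eta)]=b\gamma\cdot(\text{accepted mass})\ge b\gamma/2$ on $[0,3/4]$, so the relaxed frontier is at least $3/4$, i.e.\ slack $\ge s_*=1/4$, and $\lambda_\tau,\lambda_{t^\star}$ are certifiable with the stated margin.
\item Under $\pi^+$, $w$ is high exactly on the high-$\eta$ cell, so every lattice prefix of coverage $\ge 1/2$ has negative budget and is risk-infeasible. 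Because pairs never straddle lattice boundaries, this holds at every lattice point and not just asymptotically.
\end{itemize}
Each prior draws, independently and uniformly, which cell of each pair carries the high $\eta$ and which carries the high $w$, subject to the $\pm$ coupling. I would check $w\le 1+\gamma\le B$ and $\int w\,dP_X=1$. I would also record the regular margin $(\kappa,s_{\mathrm{reg}})=(b,\cdot)$ from the fact that $\eta^*_Q=\alpha+b$ on a left-neighbourhood of the frontier, as in Lemma~\ref{lem:genericity}. Bernoulli levels stay in range since $b\le 1/4$.

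\emph{Indistinguishability (the main step).} Under either prior, the $\eta$-pattern alone is uniform over balanced pair assignments, and the same is true of the $w$-pattern alone; only their joint law differs. I would condition on the event $E$ that no pair contains both a source draw and a target draw. On $E$, the pairs touched by source draws and the pairs touched by target draws are disjoint. Assignments of distinct pairs are independent, so the observed data have the same conditional law under $\pi^+$ and $\pi^-$. Within-channel collisions are harmless because each channel sees only its own marginal pattern. The probability of $E^c$ is the same under both priors because $P_X$ and the $Q_X$ pair-masses are prior-free, and it is at most $nm\cdot\max_{\text{pair}}P(\text{pair})\,Q(\text{pair})\cdot M/2\lesssim nm/M$. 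Hence $\TV\le\Prob(E^c)\le 1/4$ once $M\ge Cnm$.

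The obstacle I expect is exactly this coupling computation. First, I must make sure $\Prob(E^c)$ is prior-independent and that the conditional laws truly coincide on $E$. Labeled draws see $L$, which depends on $\eta$ only through the pair's realized orientation, so the argument needs a careful independence-across-pairs statement. Second, I need $M$ to refine an arbitrary finite lattice while keeping the pair balance exact inside each lattice block. If the balance is exact, the lattice-exact feasibility claims and the $\TV$ bound both go through with $s_*=1/4$ independent of $b,n,m$.
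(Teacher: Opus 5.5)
\textbf{Gap: the $\pi^-$ worlds have no regular frontier margin.} With the paired gadget filling all of $[0,3/4]$ and hot mass only on $(3/4,1]$, your $\pi^-$ worlds never reach a frontier crossing, so they fail the regular margin the statement requires on $\mathrm{supp}(\pi^-)$.

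Under $\pi^-$ the low-loss cells carry $Q$-mass $\tfrac34\cdot\tfrac{1+\gamma}{2}=\tfrac{9}{16}$. So $\budget$ climbs to $9b/16$ and then falls at rate $b$ over the remaining hot $Q$-mass $7/16$, ending at $\budget(1)=b/8>0$. Hence $\betastar=1$ with $\budget(\betastar)>0$, which is exactly the non-crossing boundary case. Definition~\ref{def:regmargin} requires $\budget(\betastar)=0$, and Lemma~\ref{lem:genericity}, which you invoke, assumes $\betastar<1$. The symptom is that $\lambda_{t^\star}$ comes out with strictly positive margin instead of sitting on the frontier; writing ``the relaxed frontier is at least $3/4$'' hides this.

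The repair is to shrink the gadget: pair cells on $[0,1/2]$ only, and put $\eta=\alpha+b$, $w=1$ on $[1/2,1]$. Then under $\pi^-$:
\begin{itemize}
\item the low-loss $Q$-mass is $3/8$, so $\budget$ peaks at $3b/8$ and crosses zero exactly at $\betastar=3/4$, with $\eta^*_Q=\alpha+b$ on $(3/8,1]$;
\item $\lambda_{t^\star}$ has budget $0$, so $\betastar_{\lat}=\betastar$ and the slack is exactly $1/4$;
\item $\lambda_\tau$ has budget $3b/16$ at coverage $9/16$, i.e.\ risk margin $b/3\ge b\gamma/2$.
\end{itemize}
These are the paper's $\pi^-$ numbers, which it reaches instead with a deep-safe block of margin $d=b\gamma/4$ placed before a gadget of mass $5/16$. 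Under $\pi^+$ every lattice prefix still has negative budget, and the frontier crosses at $1/4$ with margin $b$.

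A minor point: globally equal cells cannot refine irrational thresholds. You only need equal masses within each pair, so subdivide each lattice block separately.

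\textbf{The rest of your route is sound and differs from the paper's.} The paper tensorizes $1+\chi^2$ over pairs and bounds each factor by $\exp(c\,b^2\gamma^2N_rM_r)$ via Lemma~\ref{lem:pair-chi2}. It then controls $\E[e^{tW}]$ for the collision count $W=\sum_r N_rM_r$ crudely, which forces a cell count exponential in $nm$.

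Your argument conditions on the pair assignment, whose law is the same under both priors. On $\{W=0\}$ the conditional data law is then exactly prior-free: source-only pairs mix over $Z_r$, and target-only pairs mix over $\sigma Z_r$, which has the same law. This gives directly $\TV\le\Prob(W\ge 1)\le\E[W]=O(nm/K)$, with no $\chi^2$ lemma and only polynomially many cells.

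Aligning pairs with the lattice also makes the $\pi^+$ infeasibility exact at every lattice point, without the paper's residual-cell accounting. The cost is that your worlds depend on $\lat$. The statement's quantifier order (priors chosen after $\lat$) allows this, whereas the paper's single construction serves every lattice containing the two thresholds.
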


\paragraph{Construction.}
Let $\gamma = 1/2$, $g = \beta/2 = 1/4$, $\zeta = 1/16$, $\tau = g + h$ with $h = g + \zeta = 5/16$ the mass of the gadget band, and a deep-safe margin $d = b\gamma/4$. Labels are Bernoulli, $L \mid X \sim \Bern(\eta(X))$ as in Appendix~\ref{app:naxis}. Take $K$ even (chosen below), cell mass $p = h/K$ with $p \le \gamma g / (4(1 + \gamma^2))$. On $\cX = [0,1]$ with $P_X = \mathrm{Unif}$:
\begin{itemize}
\item $G = [0, g)$: $w \equiv 1$, $\eta = \alpha - d$ (a deep-safe block that forces the floor; its margin $d < b\gamma$ is deliberately \emph{smaller} than the gadget amplitude: this is the crux of global infeasibility, and is where this construction departs from the surplus-anchored $W_1$ of Appendix~\ref{app:naxis}).
\item $\Phi = [g, \tau)$: partitioned into $K$ cells of mass $p$ grouped into $K/2$ adjacent \emph{pairs}. Per pair $r$ draw $Z_r \in \{\pm 1\}$ i.i.d.\ uniform; the two cells of pair $r$ carry loss-signs $v = (Z_r, -Z_r)$, i.e.\ $\eta = \alpha + b v$, and weight-signs $u$ with $w = 1 + \gamma u$ (so $w \in \{1/2, 3/2\} \subset [1/B, B]$ for $B \ge 2$). Under $\pi^+$ set $u = v$; under $\pi^-$ set $u = -v$. Pairing forces $\sum_{i \in \Phi} u_i = \sum_{i \in \Phi} v_i = 0$ exactly.
\item $H = [\tau, 1]$: $w \equiv 1$, $\eta = \alpha + b$ (hot).
\end{itemize}
Thresholds are the nested policies $\lambda_t = \{$accept $[0,t]\}$; the theorem fixes two of them, the $\Phi$--$H$ boundary $\lambda_\tau$ and the hot-block frontier threshold $\lambda_{t^\star}$ of (R2), and any lattice containing this pair may be used (R4). Because the per-pair weight-signs sum to zero, the $Q$-mass of each pair equals its $P$-mass $2p$ in both priors; hence $C_Q(\lambda_\tau) = g + h = \beta + \zeta$ identically, and the pair-occupancy law is the same under $\pi^+$ and $\pi^-$.

\paragraph{(R1) Lattice infeasibility under $\pi^+$.}
For one aligned pair (cells $(\eta, w) = (\alpha + b, 1 + \gamma)$ and $(\alpha - b, 1 - \gamma)$, each of $P$-mass $p$), the $Q$-budget contribution is
\begin{equation*}
p(1+\gamma)\big(\alpha - (\alpha + b)\big) + p(1-\gamma)\big(\alpha - (\alpha - b)\big) = pb\big[-(1+\gamma) + (1-\gamma)\big] = -2pb\gamma = -b\gamma \cdot (\text{pair mass}).
\end{equation*}
Consider any threshold $t$ with $C_Q(\lambda_t) \ge \beta$. Thresholds inside $G$ have coverage at most $g = \beta/2 < \beta$, so $t$ reaches into $\Phi$; write its accepted $\Phi$-part as matched pair portions of total $P$-mass $\ell$ plus one residual cell portion of mass $\le p$ (worst case a low-weight safe cell, contributing $\le b(1-\gamma)p$ to the budget). Cutting inside a pair is covered: if the two cells of the active pair contribute accepted masses $a$ and $c$, the matched portions of mass $\min(a,c)$ each enter $\ell$ and the residue $|a-c| \le p$ is the residual portion. Coverage $\ge \beta = 2g$ forces the accepted $\Phi$ $Q$-mass $\ge g$, and since the partial cell adds $Q$-mass $\le (1+\gamma)p$, we have $\ell + (1+\gamma)p \ge g$. Therefore
\begin{equation*}
\budget_Q(\lambda_t) \;\le\; \underbrace{dg}_{G} \;-\; b\gamma \ell \;+\; b(1-\gamma)p
\;\le\; (d - b\gamma)g + b(1 + \gamma^2)p
\;\le\; -\tfrac{1}{2}b\gamma g \;<\; 0,
\end{equation*}
using $d = b\gamma/4$ and $p \le \gamma g/(4(1+\gamma^2))$. At $t = \tau$, $\budget_Q(\lambda_\tau) = dg - b\gamma h < 0$, and accepting into $H$ only subtracts more. So under $\pi^+$ every lattice point with coverage $\ge\beta$ has $\RQ > \alpha$; any non-refusal is therefore invalid, and validity forces $\Prob(\cA \ne \nocert) \le \delta$.

\paragraph{(R2) Feasibility with constant slack under $\pi^-$.}
For one anti-aligned pair (cells $(\alpha + b, 1 - \gamma)$ and $(\alpha - b, 1 + \gamma)$) the $Q$-budget contribution is $+2pb\gamma$. Hence at $\lambda_\tau$ (accept $G \cup \Phi$), $C_Q(\lambda_\tau) = \beta + \zeta \ge \beta$ and
\begin{equation*}
\budget_Q(\lambda_\tau) = dg + b\gamma h > 0,
\qquad\text{so}\qquad
\alpha - \RQ(\lambda_\tau) = \frac{\budget_Q(\lambda_\tau)}{\beta + \zeta} = \frac{b\gamma(h + g/4)}{2g + \zeta} \;\ge\; \tfrac{1}{2}b\gamma,
\end{equation*}
a constant risk margin (numerically $b/3$). The frontier sits at $\betastar = g + (1+\gamma)h + (d/b)g$, so the frontier slack is $s_* = \betastar - \beta = \gamma g + (1+\gamma)\zeta + (d/b)g = 1/4$, a fixed positive constant independent of $b, n, m$. \emph{Lattice realization of the frontier.} The second threshold the theorem fixes is the hot-block threshold $\lambda_{t^\star}$ at $t^\star = \betastar = 3/4$: since $w \equiv 1$ on $H$, $C_Q(\lambda_{t^\star}) = t^\star = \betastar$ and $\budget_Q(\lambda_{t^\star}) = \budget_Q(\lambda_\tau) - b\,(t^\star - \tau) = \tfrac{3b}{16} - b\cdot\tfrac{3}{16} = 0$, i.e.\ $\RQ(\lambda_{t^\star}) = \alpha$ (feasible at equality). Hence $\betastar_\lat = \betastar = 3/4$ on every admissible lattice, since $\betastar_\lat \le \betastar$ always, and $s_* = 1/4$ is a genuine \emph{lattice}-frontier slack, not merely the relaxed one. Requiring $\lambda_{t^\star}$ is exactly what excludes a lattice leaving $H$ uncut, on which $\betastar_\lat$ would instead collapse to the integer coverage $\beta + \zeta = 9/16$, a lattice-design artifact. Its presence leaves R1 and R3 untouched: under $\pi^+$ it accepts hot mass and is only more infeasible, and the data law depends on $(P, Q, \eta)$, not on $\lat$; the displayed certifiable lattice point with strictly positive margin remains $\lambda_\tau$ (coverage $9/16$, margin $b/3 \ge b\gamma/2$), the point certified in the Le~Cam step. The interval just below the frontier lies in the hot level $\eta = \alpha + b$, so the regular margin $(\kappa, s_{\mathrm{reg}})$ holds with $\kappa = b$. Thus the $\pi^-$ worlds are genuinely certifiable at constant slack. (Under $\pi^+$ the same hot left-neighborhood gives regular margin $\kappa = b$ as well; there the frontier sits at $\betastar = g + (1-\gamma)h + (d/b)g = 7/16 < \beta$, so the world is infeasible, consistent with R1.)

\paragraph{(R3) Indistinguishability.}
$G$ and $H$ are identical under $\pi^+$ and $\pi^-$ ($w \equiv 1$, fixed $\eta$), contributing likelihood ratio $1$; only $\Phi$'s pairs carry the $\pi^+$-versus-$\pi^-$ signal. The observations are: for each source draw, its cell and label; for each target draw, its cell. Given the pair assignment (which pair each point falls in, \emph{not} the within-pair cell identity, which carries the $\pi^+$-vs-$\pi^-$ signal) the prior factorizes over pairs (the $Z_r$ are independent), and the pair-occupancy law is $\pi$-independent (R1 paragraph), so writing $N_r, M_r$ for the source and target occupancies of pair $r$ and $\chi^2_r(N_r, M_r)$ for the conditional pairwise $\chi^2$ between the two priors,
\begin{equation}
\label{eq:incon-product}
1 + \chi^2(P_+ \,\|\, P_-) \;=\; \E_{\mathrm{assign}}\Big[\,\textstyle\prod_r \big(1 + \chi^2_r(N_r, M_r)\big)\Big].
\end{equation}

\emph{Per-pair bound.} The pairwise factor is controlled by the following self-contained lemma; the construction enters only through its hypotheses.

\begin{lemma}[Pairwise correlation $\chi^2$]
\label{lem:pair-chi2}
Fix $N, M \in \mathbb{N}_0$ and $\theta, \gamma \in [0, \tfrac12]$, and let $Z \sim \mathrm{Unif}\{\pm 1\}$. Conditional on $Z$, let $X_1, \dots, X_N \in \{\pm 1\}$ be i.i.d.\ with $\E[X_i \mid Z] = \theta Z$, and independently let $Y_1, \dots, Y_M \in \{\pm 1\}$ be i.i.d.\ with $\E[Y_j \mid Z] = \sigma \gamma Z$, where $\sigma \in \{\pm 1\}$ indexes the two laws $P_\sigma$ of $(X_{1:N}, Y_{1:M})$ (each marginalizing over $Z$). Then for an absolute constant $c_{\chi}$,
\begin{equation*}
1 + \chi^2(P_+ \,\|\, P_-) \;\le\; \exp\!\big(c_{\chi}\, \theta^2 \gamma^2\, N M\big),
\end{equation*}
and $\chi^2(P_+ \,\|\, P_-) = 0$ whenever $\theta = 0$ or $\gamma = 0$.
\end{lemma}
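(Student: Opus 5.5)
The plan is to condition on the labeled coordinates $X_{1:N}$, which have the same law under $P_+$ and $P_-$. That reduces the problem to a one-dimensional posterior quantity. The remaining dependence on $N$ is then integrated out with an Ingster-type second-moment computation.

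\emph{Step 1 (reduction to the posterior mean).} The $X$-marginal is identical under both laws, because $\sigma$ only enters through the $Y$'s. Hence the chain rule gives
\[
1+\chi^2(P_+\,\|\,P_-)=\E_X\big[\,1+\chi^2\big(P_+(\cdot\mid X)\,\|\,P_-(\cdot\mid X)\big)\big].
\]
Given $X$, the variable $Z$ has posterior mean $\mu(X)=\E[Z\mid X]$. Under $P_\sigma$ the vector $Y_{1:M}$ is a two-component mixture of $R_{+}:=\Bern$-type product laws with $\E[Y_j]=\gamma$ and $R_-$ with $\E[Y_j]=-\gamma$. The mixture weights are $(1\pm\sigma\mu)/2$. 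Writing $e=(r_++r_-)/2$ and $o=(r_+-r_-)/2$ for densities with respect to uniform measure on $\{\pm1\}^M$, we get $p_\sigma(\cdot\mid X)=e+\sigma\mu o$. Consequently
\[
\chi^2_{\mathrm{cond}}=\int \frac{4\mu^2 o^2}{e-\mu o}.
\]
This already yields the vanishing claims. If $\gamma=0$ then $o\equiv0$. If $\theta=0$ then $\mu\equiv0$. In either case $\chi^2=0$.

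\emph{Step 2 (conditional bound).} Next I would show $1+\chi^2_{\mathrm{cond}}\le \exp(c\,\gamma^2 M\mu^2)$, or a closely related form. The route is to avoid the denominator $e-\mu o$ by rewriting it as the mixture $\tfrac{1-\mu}{2}r_++\tfrac{1+\mu}{2}r_-$. I would then use the identity
\[
\int \frac{p_+^2}{p_-}\le \E_{\text{coupling}}\Big[\int \frac{r_Ar_{A'}}{r_{B}}\Big],
\]
which follows from joint convexity of $(a,b)\mapsto a^2/b$. Each term in the coupling is an explicit product over $j\le M$ of one-coordinate factors, of the form $(1+c\gamma^2)^M$ or $1$. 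Choosing the coupling that pairs components agreeing with probability $1-|\mu|$ should give a bound of the shape $1+\mu^2\big((1+c\gamma^2)^M-1\big)$. This is at most $\exp(c\gamma^2 M\mu^2)$ only when $\gamma^2 M\lesssim 1$. In general I would keep the form $1+\mu^2\big(e^{c\gamma^2M}-1\big)$.

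\emph{Step 3 (averaging over $X$; the main obstacle).} It remains to show that $\E_X$ of the Step-2 expression is at most $\exp(c_\chi\theta^2\gamma^2NM)$. I would write $\mu^2=\E[ZZ'\mid X]$, where $Z$ and $Z'$ are conditionally independent posterior draws. Powers and exponentials of $\mu^2$ can then be expanded into moments $\E_X\,\E[(ZZ')^k\mid X]$. Integrating out $X$ uses the standard Ingster identity $\E_{Z,Z'}\big[(1+\theta^2ZZ')^N\big]$. The target is
\[
\E_X\big[\mu(X)^2\big]\le \tfrac12\big((1+\theta^2)^N-(1-\theta^2)^N\big)\wedge 1\lesssim \theta^2N\wedge1.
\]
Combining this with Step 2 requires the elementary inequality $1+\min(1,a)\,(e^{b}-1)\le e^{c\,ab}$. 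That inequality fails when $a\ll1\ll b$, which is exactly the obstacle. In that regime I expect the sharp $\theta^2\gamma^2NM$ form to require a joint Ingster computation over $(Z,Z')$ rather than the sequential bound. I would first try to compute $1+\chi^2$ directly from the closed-form density $F_eG_e+\sigma F_oG_o$. If the denominator can be handled, for instance by restricting to $\theta,\gamma\le\tfrac12$ so that each factor lies in $[1/2,3/2]$, the result should be $\E\big[(1+\theta^2ZZ')^N(1+\gamma^2ZZ')^M\big]$-type expressions that factor cleanly. Only if that fails would I fall back to the sequential Steps 2--3 and accept a weaker constant in the exponent.
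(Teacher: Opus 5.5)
Your Steps 1--2 and the estimate $\E_X[\mu^2]\lesssim\theta^2N\wedge1$ are sound, and they handle two of the three regimes. Write $C_\gamma:=(1+3\gamma^2)/(1-\gamma^2)$. When $\theta^2N,\gamma^2M\le1$, the product $\E_X[\mu^2]\,(C_\gamma^M-1)$ is $O(\theta^2\gamma^2NM)$. When $\theta^2N>1$, the trivial bound $\mu^2\le1$ gives $1+\chi^2\le C_\gamma^M\le e^{(16/3)\gamma^2M}\le e^{(16/3)\theta^2\gamma^2NM}$.

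The genuine gap is the remaining regime $\theta^2N\le1<\gamma^2M$. You flag it but do not close it, and neither remedy you propose closes it.
\begin{itemize}
\item \emph{The joint Ingster route breaks at the denominator.} We have $P_-(x,y)=F_eG_e\,(1-m(x)n(y))$, where $m,n$ are the posterior means of $Z$ given the sums $x=\sum_iX_i$, $y=\sum_jY_j$; for instance $m(x)=\tanh\big(\tfrac{x}{2}\log\tfrac{1+\theta}{1-\theta}\big)$. These tend to $\pm1$ on extreme outcomes as $N,M$ grow. Keeping each per-coordinate factor in $[\tfrac12,\tfrac32]$ therefore does not keep $1-mn$ away from $0$.
\item \emph{The Ingster expressions measure the wrong thing.} The quantities $\E[(1+\theta^2ZZ')^N(1+\gamma^2ZZ')^M]$ equal $1+\chi^2$ of $P_\sigma$ against the uniform product law. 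They do not even depend on $\sigma$, whereas the signal separating $P_+$ from $P_-$ sits only in the cross term $F_oG_o$.
\item \emph{The fallback does not merely cost a constant.} As you observed, $1+\min(1,\theta^2N)(e^{c\gamma^2M}-1)$ is not bounded by $\exp(C\theta^2\gamma^2NM)$ for any $C$ when $\theta^2N\ll1\ll\gamma^2M$.
\end{itemize}

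The missing idea is that the two channels play symmetric roles. The $Y$-marginal is also $\sigma$-free: it is $\tfrac12(R_++R_-)$ under both laws. So you may condition on $Y$ and run your Steps 1--2 with the roles exchanged.

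Equivalently, keep a second conditional bound in Step 2. By the sign symmetry $e(-y)=e(y)$, $o(-y)=-o(y)$, your expression symmetrizes to $\chi^2_{\mathrm{cond}}=4\mu^2\int e\,\frac{n^2}{1-\mu^2n^2}$ with $n=o/e$. This yields both bounds you need:
\begin{itemize}
\item Using $\mu^2\le1$, it is at most $\mu^2(C_\gamma^M-1)$. This is your Step-2 target. The joint-convexity coupling as you sketched it does not deliver it: that coupling gives only $1+|\mu|(C_\gamma^M-1)$, which is linear in $|\mu|$.
\item Using $n^2\le1$, it is at most $4\mu^2/(1-\mu^2)$. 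This is the data-processing bound obtained by revealing $\sigma Z$ given $X$.
\end{itemize}
Let $q_\pm$ be the law of $X_{1:N}$ given $Z=\pm1$ and $C_\theta:=(1+3\theta^2)/(1-\theta^2)$. The second bound averages to exactly $\E_X[4\mu^2/(1-\mu^2)]=\chi^2(q_+\|q_-)=C_\theta^N-1$. Hence for $\gamma^2M>1$ you get $1+\chi^2\le C_\theta^N\le e^{(16/3)\theta^2N}\le e^{(16/3)\theta^2\gamma^2NM}$, which closes the gap.

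This is precisely the paper's argument, done in one stage. It writes $1+\chi^2=1+4\E[U^2/(1-U^2)]$ with $U=m(x)n(y)$ under the product of the two $\sigma$-free marginals. It then uses the product bound when both $\theta^2N,\gamma^2M\le1$, and the one-channel data-processing bound on whichever side has informativeness above $1$.
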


\begin{proof}
The signed sums $x = \sum_i X_i$, $y = \sum_j Y_j$ are sufficient for $Z$. Let $q_{\pm}, r_{\pm}$ be the laws of $x, y$ given $Z = \pm 1$, and $q = \tfrac12(q_+ + q_-)$, $r = \tfrac12(r_+ + r_-)$ the marginals; define posterior-mean coordinates $m(x) = \frac{q_+(x) - q_-(x)}{q_+(x) + q_-(x)}$ and $n(y) = \frac{r_+(y) - r_-(y)}{r_+(y) + r_-(y)}$, both in $(-1, 1)$ for $\theta, \gamma < 1$, so $q_{\pm} = q(1 \pm m)$ and $r_{\pm} = r(1 \pm n)$. Conditional independence of $x, y$ given $Z$ gives, under $P_\sigma$,
\begin{equation*}
g^{\sigma}(x, y) = \tfrac12\!\sum_{z = \pm 1} q_z(x)\, r_{\sigma z}(y) = q(x) r(y)\big(1 + \sigma\, m(x) n(y)\big),
\end{equation*}
the linear-in-$z$ term cancelling. Write $U = m(x) n(y)$ under $q \otimes r$; then $|U| < 1$, and $U \overset{d}{=} -U$ (the map $x \mapsto -x$ exchanges $q_+ \leftrightarrow q_-$, flipping the sign of $m$). Hence
\begin{equation*}
1 + \chi^2(P_+ \| P_-) = \E_{q \otimes r}\!\Big[\tfrac{(1+U)^2}{1 - U}\Big]
= \E\!\Big[\tfrac{1 + 3U^2}{1 - U^2}\Big]
= 1 + 4\,\E\!\Big[\tfrac{U^2}{1 - U^2}\Big],
\end{equation*}
the middle step by symmetrizing the integrand under $U \mapsto -U$; this vanishes when $\theta = 0$ or $\gamma = 0$ (then $m \equiv 0$ or $n \equiv 0$, so $U \equiv 0$). Two bounds on $\E[U^2/(1 - U^2)]$ follow. Since $u \mapsto u/(1-u)$ is increasing and $m^2 n^2 \le m^2$, the \emph{data-processing} bound $\E[U^2/(1-U^2)] \le \E_q[m^2/(1 - m^2)]$ holds (and symmetrically $\le \E_r[n^2/(1-n^2)]$). And from $\tfrac{m^2 n^2}{1 - m^2 n^2} \le \tfrac{m^2}{1 - m^2}\,\tfrac{n^2}{1 - n^2}$ (cross-multiplying, $(1 - m^2)(1 - n^2) \le 1 - m^2 n^2$) with independence, the \emph{product} bound $\E[U^2/(1-U^2)] \le \E_q[\tfrac{m^2}{1-m^2}]\,\E_r[\tfrac{n^2}{1-n^2}]$.

The single-channel moment has a closed form. With $\psi = \big(\tfrac{1+\theta}{1-\theta}\big)^{x}$ one has $q_+/q_- = \psi$, so $m = \tfrac{\psi - 1}{\psi + 1}$ and $\tfrac{m^2}{1 - m^2} = \tfrac{(\psi - 1)^2}{4\psi} = \tfrac14(\psi - 2 + \psi^{-1})$. Averaging over $q = \tfrac12(q_+ + q_-)$ and using the per-bit identity $\E_{q_+}[\psi^{\pm 1}] $: under $q_+$, $\E[\psi] = \big(\tfrac{1+3\theta^2}{1-\theta^2}\big)^{N} =: C_\theta^N$ (each bit contributes $\tfrac{1+\theta}{2}\tfrac{1+\theta}{1-\theta} + \tfrac{1-\theta}{2}\tfrac{1-\theta}{1+\theta} = \tfrac{1 + 3\theta^2}{1 - \theta^2}$) while $\E_{q_-}[\psi] = 1$, and symmetrically for $\psi^{-1}$, gives
\begin{equation*}
\E_q\!\Big[\tfrac{m^2}{1 - m^2}\Big] = \tfrac14\big(C_\theta^N - 1\big),
\qquad
\E_r\!\Big[\tfrac{n^2}{1 - n^2}\Big] = \tfrac14\big(C_\gamma^M - 1\big),
\quad C_\gamma = \tfrac{1 + 3\gamma^2}{1 - \gamma^2}.
\end{equation*}
For $\theta, \gamma \le \tfrac12$, $\log C_\theta \le C_\theta - 1 = \tfrac{4\theta^2}{1 - \theta^2} \le \tfrac{16}{3}\theta^2$, so $C_\theta^N \le e^{(16/3)\theta^2 N}$. Two regimes finish the proof. If $\theta^2 N \le 1$ and $\gamma^2 M \le 1$, then $C_\theta^N - 1 \le \tfrac{16}{3}\theta^2 N\, e^{(16/3)\theta^2 N} \le \tfrac{16}{3} e^{16/3}\,\theta^2 N$ (using $e^t - 1 \le t e^t$), and the product bound gives $\chi^2 \le \tfrac14 (\tfrac{16}{3})^2 e^{32/3}\,\theta^2\gamma^2 NM$, whence $1 + \chi^2 \le \exp(c\,\theta^2\gamma^2 NM)$. If instead $\theta^2 N > 1$, the data-processing bound gives $1 + \chi^2 \le C_\gamma^M \le e^{(16/3)\gamma^2 M} \le e^{(16/3)\theta^2\gamma^2 NM}$, and symmetrically if $\gamma^2 M > 1$. Taking $c_{\chi}$ the larger of the two absolute constants proves the claim.
\end{proof}

\noindent Within a pair, conditional on $(N_r, M_r)$: each source point falls in one of the pair's two cells with an observed cell-sign $c \in \{\pm1\}$, and (loss-signs $v = (Z_r, -Z_r)$ across the pair) its label has mean $\alpha + c\,b\,Z_r$. The cell-occupancy law is identical under $\pi^+, \pi^-$ (R1 paragraph), so $c$ is an ancillary, prior-independent label of the datum; the deterministic map $(c, L) \mapsto (c, X)$ with $X = c\,(2L - 1)$ is therefore a bijection of the observed pair that fixes $\chi^2(P_+\|P_-)$ (the ancillary $c$ has a prior-independent law and factors out), and at $\alpha = 1/2$ the bit $X \in \{\pm1\}$ has $\E[X \mid Z_r] = 2b\,Z_r$ in \emph{both} cells. Thus the $N_r$ source data reduce to i.i.d.\ bits of mean $\theta Z_r$, $\theta = 2b$. Symmetrically, each target point's cell-identity within the pair is a $\{\pm1\}$ bit whose bias is set by the weight-signs $u = \sigma v$, of mean $\gamma\sigma Z_r$ ($\sigma = +1$ under $\pi^+$, $\sigma = -1$ under $\pi^-$). Lemma~\ref{lem:pair-chi2} applied with these $(\theta, \gamma, N_r, M_r)$ then yields, for an absolute constant $c_{\chi}$,
\begin{equation}
\label{eq:incon-perpair}
1 + \chi^2_r(N_r, M_r) \;\le\; \exp\!\big(c_{\chi}\, b^2 \gamma^2\, N_r M_r\big)
\qquad\text{for all } N_r, M_r.
\end{equation}

\emph{Collision-count moment generating function.} Substituting~\eqref{eq:incon-perpair} into~\eqref{eq:incon-product} and writing $t = c_{\chi} b^2 \gamma^2$ and $W = \sum_r N_r M_r$ (the source--target collision count over pairs),
\begin{equation*}
1 + \chi^2(P_+ \,\|\, P_-) \;\le\; \E\big[e^{t W}\big].
\end{equation*}
Because $W$ is integer-valued with $0 \le W \le nm$, and the source and target occupancies are independent with $\E[W] = \sum_r \E[N_r]\E[M_r] = (K/2)(n \cdot 2h/K)(m \cdot 2h/K) = 2h^2 nm / K$,
\begin{equation}
\label{eq:incon-mgf}
\E\big[e^{tW}\big]
\;\le\; 1 + (e^{t nm} - 1)\,\Prob(W \ge 1)
\;\le\; 1 + (e^{t nm} - 1)\,\E[W]
\;=\; 1 + (e^{t nm} - 1)\,\frac{2 h^2 nm}{K}.
\end{equation}
(The MGF is dominated by the rare high-collision events, so~\eqref{eq:incon-mgf} is the honest bound; it is exponential in $nm$, but only the \emph{existence} of a finite $K$ is needed.) Choosing any even
\begin{equation*}
K \;\ge\; K_*(n, m) \;:=\; \max\Big\{\, \tfrac{4h(1+\gamma^2)}{\gamma g},\ \ \frac{(e^{t nm} - 1)\, 2 h^2 nm}{\log(5/4)} \,\Big\}
\end{equation*}
(finite for every $n, m$; the first term enforces the cell-size constraint $p \le \gamma g/(4(1+\gamma^2))$) yields $\E[e^{tW}] \le 5/4$, hence $\chi^2(P_+\|P_-) \le 1/4$ and $\TV(P_+, P_-) \le \tfrac12\sqrt{\chi^2} \le 1/4$.

\paragraph{(R4) The lattice is fixed, not adversarial.}
Neither the cell boundaries nor the cell count $K$ enters R1--R3 through $\lat$. R1 bounds $\budget_Q(\lambda_t)$ for \emph{every} threshold $t$ with $C_Q(\lambda_t) \ge \beta$, so it applies verbatim to any nested-threshold lattice, however refined: additional thresholds only supply further points that R1 has already ruled infeasible under $\pi^+$. R2 exhibits the certifiable point as $\lambda_\tau$ (coverage $\beta + \zeta = 9/16$ in both priors, each pair's $Q$-mass equalling its $P$-mass, with risk margin $\ge b\gamma/2$) and the frontier realization as $\lambda_{t^\star}$ (coverage $t^\star = \betastar = 3/4$, since $w \equiv 1$ on $H$); $\tau = g + h$ is the $\Phi$--$H$ boundary and $t^\star$ lies inside $H$, so both are absolute constants, unmoved by $K$, $n$, or $m$. R3 is lattice-free by construction, the data law depending on $(P, Q, \eta)$ alone. The construction is therefore existential in the \emph{worlds}, whose cell count $K_*(n, m)$ does grow with the sample sizes, as this construction requires, but \emph{not} in the lattice, which may be fixed and pre-registered before any data is seen, as Definition~\ref{def:valid} requires. In this respect Theorem~\ref{thm:modelB-incon} is stronger than Theorem~\ref{thm:lower}, whose hard instance supplies its own compatible lattice per slack.

\paragraph{Le~Cam conclusion.}
A valid procedure certifies with probability $\le \delta \le 1/8$ under every $\pi^+$ world (R1), hence under the $\pi^+$ data-mixture. If it certified each feasible $\pi^-$ world with probability $> 1/2$ it would certify under the $\pi^-$ mixture with probability $> 1/2$; since $\TV(P_+, P_-) \le 1/4$, it would certify under the $\pi^+$ mixture with probability $> 1/2 - 1/4 = 1/4 > 1/8$, a contradiction. Hence some feasible $\pi^-$ world is refused with probability $\ge 1/2$. As $K_*(n, m)$ is finite at every $(n, m)$ and $\cW_B$ admits arbitrarily fine finite weight oscillation (with weights confined to $\{1/2, 3/2\}$), a hard instance exists at every sample size, and the certifiable slack stays $\ge s_*$. \hfill$\square$

\begin{remark}[What separates Model-B from Model-A, exactly]
\label{rmk:modelB-mechanism}
\looseness=-1 The barrier is the \emph{bilinear} functional $\E_P[w\,\Sacc\,(\eta - \alpha)]$: its sign is fixed by the alignment between the weight-signs $u$ (seen only through the target covariate channel, which observes $Q_X = wP_X$) and the loss-signs $v$ (seen only through the source label channel). Neither marginal channel reveals the alignment $\sum_i p_i u_i v_i$; the two priors share both marginal laws and differ only in the sign-coupling, which is detectable only on cells occupied by \emph{both} channels: a two-sample correlation-detection problem whose signal scales as the collision count, and which the adversary defeats with weight complexity $K$ beyond any fixed $(n, m)$. Model-A escapes because exact $w$ reveals $u$, collapsing the risk test to the unbiased weighted labeled-source mean of Theorem~\ref{thm:modelA}; Model-B$'$ escapes because a pre-registered $K$-cell partition caps the oscillation, so the per-split nuisance becomes estimable at the finite sample sizes of Theorem~\ref{thm:U2}. The inconsistency is thus the information-theoretic shadow of the nuisance dimension that B$'$ pays for explicitly; it does not assert a polynomial sample-complexity rate, only that no $(n, m)$-uniform guarantee exists over the unrestricted class. This estimability boundary is the certification counterpart of the learnability/estimability lines for arbitrary covariate shift: reject-to-learn via a reliable-learner oracle \citep{kalai21a} and structural recovery of $\E_Q[f]$ from source labels and target covariates \citep{adil26a}, with the pre-registered $K$-cell partition of Model-B$'$ our analogue of the structure those works impose to regain tractability.
\end{remark}

\subsection{Non-triviality remarks}

\begin{remark}[$\beta \to 0$ kills the impossibility; positivity persists]
\label{rmk:beta0}
As $\beta \to 0$: (i) the above-frontier region $\{\beta > \betastar\}$ of Theorem~\ref{thm:frontier} becomes empty; (ii) the $n$-axis window $c_1 \kappa^{-1}\sqrt{\beta/n} \to 0$; (iii) the $m$-axis window $c_2\sqrt{\beta/m} \to 0$ as well; both axes carry $\sqrt{\beta}$ and vanish with the floor.
Meanwhile the class $\cW_B$ and its overlap structure are unchanged: nothing about positivity moves.
In Model-A the deep-safe block is then certifiable from labeled data alone at $n = O(\log(1/\delta))$ with no dependence on the frontier slack; in Model-B a vestigial unlabeled requirement remains, confirming $\CQ(\lambda) > 0$ needs $O(1/\CQ(\lambda))$ draws, but it is $s$-independent and does not scale with the frontier.
The bite threshold is a property of $(\beta, \budget)$, not of overlap.
\end{remark}

\begin{remark}[Impossibility is loss-profile-driven, not support-driven]
\label{rmk:lossdriven}
In $\cW_B$ all target mass is identified (full overlap): the unidentified mass is zero in every construction, identically.
Yet feasibility boundaries differ across worlds through $\budget$, a functional of the loss profile $\eta$ and $Q$: the $n$-axis pair has identical $(P_X, Q_X)$ (identical overlap geometry) and different feasibility verdicts at $(\alpha, \beta)$.
The impossibility here is therefore structurally disjoint from positivity results such as \citet{damour2021overlap}.
\end{remark}

\begin{remark}[The floor is the $m$-axis; the model split is load-bearing]
\label{rmk:msplit}
Remove the floor ($\beta = 0$): certification reduces to exhibiting $\lambda$ with $\RQ(\lambda) \le \alpha$ and $\CQ(\lambda) > 0$.
In the $m$-axis pair, any threshold inside $G$ has $\RQ = \alpha - a$ under every $Q \in \cW_B$ sharing $\eta$ (coverage changes; risk on $G$ does not), so in Model-A (where $\CQ(\lambda) = \E_P[w\, \mathbf{1}_{[0,t]}]$ is computable from the known $w$), the deep-safe block is certifiable from labeled data alone with no dependence on $s$.
The $s$-scaled $m$-axis hardness exists only because the floor forces acceptance up to the frontier.
In Model-A the floor information moreover pools across channels (a labeled-sample route to $\CQ$ opens), so the clean $m$-only floor axis is a Model-B phenomenon; a matching Model-A floor-axis lower bound would require a pooled-rate analysis that we do not claim; its constants are conjecture-grade and we leave it open.
\end{remark}

\subsection{The overlap-created boundary, for contrast}
\label{app:overlap}

For completeness we record what \emph{overlap-created} impossibility looks like, outside the bounded-ratio class: it is governed by a sharp threshold in the unsupported target mass, with no role for sample sizes.

\begin{proposition}[Sharp threshold under unsupported target mass]
\label{prop:overlap}
Drop the bounded-ratio assumption.
Let $A \subseteq \cX$ be measurable with $P_X(A) = 0$ and $Q_X(A) = q_A$, and let the loss class be $\cF_A = \{\eta : \eta|_{A^c} = \eta_0 \text{ (fixed, identifiable)},\ \eta|_A \text{ arbitrary}\}$.
A procedure sees source labels $D_n \sim P^n$ and unlabeled target covariates $U_m \sim Q_X^m$, and outputs either a rule $S$ with the certificate ``$\RQ(S) \le \alpha \wedge \E_Q[S] \ge \beta$'' or $\nocert$.
\begin{enumerate}
    \item[(a)] (Lower bound.) If $q_A > 1 - (1 - \alpha)\beta$, then for every procedure there is a member of $\cF_A$ on which, whenever the procedure certifies, the certificate is false; consequently any $(\alpha, \beta, \delta)$-valid procedure certifies with probability at most $\delta$ there (it must refuse with probability $\ge 1 - \delta$).
    \item[(b)] (Population/oracle achievability under a spare off-$A$ budget.) Assume in addition that $Q|_A$ is nonatomic, or that acceptance may be randomized on $A$, so that an accepted sub-mass of any size $\le q_A$ is attainable. Suppose there is an identifiable subset $S_0 \subseteq A^c$ with target mass $D_0 = \E_Q[S_0]$ and (world-independent) off-$A$ risk numerator $N_0 = \E_Q[\eta_0 S_0]$, and set $a = \max\{0,\, \beta - D_0\}$. If $a \le q_A$ and $N_0 + a \le \alpha(D_0 + a)$, then an explicit rule certifies validly over $\cF_A$. This is a \emph{population/oracle} achievability statement: the rule uses the true masses $D_0, N_0, q_A$ and an exactly sized accepted sliver of $A$, so a finite-sample certificate from $(D_n, U_m)$ would additionally require these estimated with confidence margins and a strict (non-equality) spare-budget slack. (Without an atomless or randomized $A$ the threshold can shift: an indivisible $A$-atom of mass $> a$ cannot supply exactly $a$, so the deterministic feasible set is smaller; the gap is the integrality gap of Appendix~\ref{app:frontier}.)
    In the risk-free special case $N_0 = 0$, $D_0 = 1 - q_A$ this condition reduces to $q_A \le 1 - (1-\alpha)\beta$, the exact complement of the lower bound (a); so $q_A^\star = 1 - (1-\alpha)\beta$ is the tight threshold in that regime, while a positive off-$A$ risk $N_0$ tightens it.
\end{enumerate}
\end{proposition}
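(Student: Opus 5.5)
The plan is to handle the two parts separately. The lower bound (a) follows from one adversarial world. It rests on the observation that $\eta|_A$ is invisible to the data. The achievability (b) is an explicit rule whose worst case over $\cF_A$ is the all-loss completion $\eta|_A \equiv 1$. No Le~Cam argument should be needed: the overlap failure makes the problem a population-level (zero-information) one, unlike Theorem~\ref{thm:lower}.

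\emph{Part (a).} I would take the member $\eta_1 \in \cF_A$ with $\eta_1|_A \equiv 1$ and $\eta_1|_{A^c} = \eta_0$. Fix any rule $S$ the procedure might output with $C := \E_Q[S] \ge \beta$, and split its accepted mass into the part on $A^c$ and the part on $A$. Since $Q_X(A^c) = 1 - q_A$, the accepted $A$-mass is
\[
a_A := \E_Q[S \mathbf{1}_A] \ge C - (1 - q_A).
\]
Using $\eta_1 \ge 0$ off $A$ and $\eta_1 = 1$ on $A$, this gives
\[
\RQ(S) \;\ge\; \frac{a_A}{C} \;\ge\; 1 - \frac{1 - q_A}{C} \;\ge\; 1 - \frac{1 - q_A}{\beta} \;>\; \alpha ,
\]
where the last strict inequality is exactly the hypothesis $q_A > 1 - (1-\alpha)\beta$. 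Hence every certificate issued in world $\eta_1$ is false. By the validity definition (Definition~\ref{def:valid}, adapted to rule-valued outputs), the certification probability in that world is at most $\delta$.

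One point needs to be made explicit for the phrase ``for every procedure''. The world $\eta_1$ is the same for all procedures. Moreover $P_X(A) = 0$, so no source draw lands in $A$, and target draws are unlabeled. The data law is therefore identical across all of $\cF_A$, and no procedure can detect that it is facing $\eta_1$.

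\emph{Part (b).} The rule I would use is $S = S_0 \cup T$, where $T \subseteq A$ is a sliver of $Q$-mass exactly $a$. Such a sliver exists because $a \le q_A$ and $Q|_A$ is nonatomic, or acceptance on $A$ may be randomized. Its coverage is $D_0 + a \ge \beta$ by the definition of $a$, and this is positive since $\beta > 0$. For any $\eta \in \cF_A$, the risk numerator is $N_0 + \E_Q[\eta \mathbf{1}_T] \le N_0 + a$, because $L \in [0,1]$ forces $\eta \le 1$. Therefore
\[
\RQ(S) \le \frac{N_0 + a}{D_0 + a} \le \alpha
\]
uniformly over $\cF_A$, so the certificate is true in every world and is valid deterministically. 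I would add a remark that this uses the true $D_0$, $N_0$, $q_A$, i.e.\ it is an oracle statement, as the proposition says.

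\emph{Tightness in the special case.} Set $N_0 = 0$ and $D_0 = 1 - q_A$. If $\beta \le 1 - q_A$, then $a = 0$ and the condition holds trivially; also $q_A \le 1 - \beta \le 1 - (1-\alpha)\beta$. Otherwise $a = \beta - 1 + q_A \le q_A$ because $\beta \le 1$. The condition $a \le \alpha(1 - q_A + a)$ rearranges to $(1-\alpha)a \le \alpha(1 - q_A)$. Substituting $a$, this becomes $(1-\alpha)(\beta - 1 + q_A) \le \alpha(1 - q_A)$, i.e.\ $q_A \le 1 - (1-\alpha)\beta$. This is the exact complement of (a).

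The only delicate step is in (a): making precise that the adversarial member is chosen independently of the procedure, and that validity applies to rule-valued rather than lattice-valued outputs. The remaining steps are one-line algebra. The parenthetical about atomic $A$ needs only the observation that, without divisibility, a sliver of mass exactly $a$ may be unavailable, which links to the integrality gap of Appendix~\ref{app:frontier}.
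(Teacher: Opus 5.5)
Your proof is correct and follows the paper's route. For (a) you use the all-loss completion $\eta|_A \equiv 1$ with $a_A \ge C - (1-q_A)$, minimized at $C = \beta$. For (b) you take $S_0$ plus an exactly sized $A$-sliver with worst case $\E_Q[\eta\mathbf{1}_T] \le a$, and the tightness algebra is the same. The only difference is cosmetic: the paper frames (a) with two indistinguishable worlds $\eta^0, \eta^1$, whereas you rightly observe that the single world $\eta_1$ already falsifies every certificate, so indistinguishability is needed only for the stronger reading that refusal is forced throughout $\cF_A$.
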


\begin{proof}
Write $D = \E_Q[S]$ for accepted mass, $a_A = \E_Q[S \mathbf{1}_A]$ for accepted target mass in $A$, and $N_0 = \E_Q[L S \mathbf{1}_{A^c}]$ for the off-$A$ numerator, which is world-independent since $\eta_0$ is fixed.
Consider two worlds $\eta^0$ ($\eta \equiv 0$ on $A$) and $\eta^1$ ($\eta \equiv 1$ on $A$), both in $\cF_A$ and identical off $A$.

\emph{Indistinguishability.}
Labels in $D_n$ are observed only at the $X_i$; since $P_X(A) = 0$, almost surely no $X_i \in A$, so no label from $A$ is ever seen; $U_m$ carries covariates only; the off-$A$ law is shared.
Hence the joint law of the observed data is identical under $\eta^0$ and $\eta^1$, and any (possibly randomized) procedure emits the same certificate/rule law under both.

\emph{Ratio inflation.}
$\RQ^{(j)}(S) = (\E_Q[L S \mathbf{1}_{A^c}] + \E_Q[L S \mathbf{1}_A])/D$; the first term equals $N_0$ in both worlds, while $\E_Q[L S \mathbf{1}_A] = \E_Q[\eta S \mathbf{1}_A]$ equals $0$ under $\eta^0$ and $a_A$ under $\eta^1$.
Subtracting, $\RQ^{1} - \RQ^{0} = a_A/D$.

\emph{Sharp threshold.}
Suppose the procedure certifies with rule $S$ (the same $S$ in both worlds, by indistinguishability) and the certificate is valid in both.
Validity gives $D \ge \beta$ and $\RQ^1(S) \le \alpha$.
Since $\E_Q[S \mathbf{1}_{A^c}] \le Q_X(A^c) = 1 - q_A$, we have $a_A = D - \E_Q[S \mathbf{1}_{A^c}] \ge D - (1 - q_A)$, and with $N_0 \ge 0$,
\begin{equation*}
\RQ^1(S) = \frac{N_0 + a_A}{D} \;\ge\; \frac{a_A}{D} \;\ge\; 1 - \frac{1 - q_A}{D}.
\end{equation*}
This lower bound is increasing in $D$, so it is minimized at the smallest feasible coverage $D = \beta$:
$\RQ^1(S) \ge 1 - (1 - q_A)/\beta$.
Forced violation $\RQ^1 > \alpha$ holds whenever $1 - (1 - q_A)/\beta > \alpha$, i.e.\ exactly when $q_A > 1 - (1 - \alpha)\beta$.
Thus above the threshold, every certifying procedure is false in world $\eta^1$; since $\{\text{certify}\} \subseteq \{\text{invalid certificate}\}$ there, $(\alpha, \beta, \delta)$-validity forces $\Prob(\cA \ne \nocert) \le \delta$ in world $\eta^1$.

\emph{Achievability.}
Suppose the spare-budget condition of (b) holds: an identifiable $S_0 \subseteq A^c$ with $D_0 = \E_Q[S_0]$, $N_0 = \E_Q[\eta_0 S_0]$, and $a = \max\{0, \beta - D_0\} \le q_A$ satisfy $N_0 + a \le \alpha(D_0 + a)$.
Take $S$ to accept $S_0$ plus a sliver of $A$-mass $a$ (available since $a \le q_A = Q_X(A)$; we assume $Q|_A$ is nonatomic, or equivalently allow randomized acceptance on $A$, so that an accepted sub-mass of exactly $a$ exists).
Then $D = D_0 + a \ge \beta$ (either $D_0 \ge \beta$ with $a = 0$, or $a = \beta - D_0$ and $D = \beta$), and the accepted-$A$ risk numerator is $\E_Q[\eta S \mathbf 1_A] \le a$ for \emph{every} $\eta|_A$ (loss $\le 1$ on accepted mass $a$); hence, uniformly over $\cF_A$,
\begin{equation*}
\RQ(S) = \frac{N_0 + \E_Q[\eta S \mathbf 1_A]}{D} \;\le\; \frac{N_0 + a}{D_0 + a} \;\le\; \alpha,
\end{equation*}
the last step by the spare-budget hypothesis. This is the correct worst case: the original ``$N_0 = 0$'' worst case is valid only when $\eta_0$ carries no risk on the accepted $A^c$-mass; in general the off-$A$ numerator $N_0$ adds to the ratio (e.g.\ $\alpha = \beta = \tfrac12$, $q_A = 0.6 < 0.75$ with $\eta_0 = \tfrac12$ on $A^c$ forces $N_0 = \tfrac12 D_0$ and $\RQ \ge 0.6 > \alpha$ for the unconditional rule), which is exactly what the condition $N_0 + a \le \alpha(D_0 + a)$ guards against.
In the risk-free regime $N_0 = 0$, $D_0 = 1 - q_A$: when $q_A > 1 - \beta$ one has $a = \beta - D_0 > 0$, $D = \beta$, and $\RQ = a/\beta = 1 - (1-q_A)/\beta$; when $q_A \le 1 - \beta$ one has $a = 0$, $D = D_0$, and $\RQ = 0$. Compactly $\RQ = a/(D_0 + a) = \max\{0,\, 1 - (1-q_A)/\beta\} \le \alpha$ exactly when $q_A \le 1 - (1-\alpha)\beta$, recovering the tight threshold of (a).
A finite-sample version with $P_X(A) > 0$ but $n P_X(A) \ll 1$ holds up to a $1 - n P_X(A)$ event, by the same argument conditioned on no source point landing in $A$.
\end{proof}

The contrast with Theorem~\ref{thm:lower} is the point: Proposition~\ref{prop:overlap}'s impossibility is created by unsupported mass and is insensitive to $(n, m)$, while the law's impossibility lives entirely inside the full-overlap class, scales with $(n, m)$, and vanishes as $\beta \to 0$.
Within $\cW_B$ the overlap mechanism cannot occur; the floor mechanism is what remains, and it is the subject of this paper.

\section{Model-A Achievability and the Phase Diagram}
\label{app:modelA}

This appendix proves the formal version of Theorem~\ref{thm:law}, Claim 2, together with the operational corollaries: the one-sided feasibility estimator and the which-data-to-buy boundary.
Throughout, Model-A means the procedure receives the ratio function $w$ exactly.
Standing parameter domain (with Theorem~\ref{thm:law}): $\delta \le 1/8$, $B \ge 2$, $s > 0$, $\kappa > 0$, $0 < c_0 \le 1$, $|\lat| \ge 1$, and all empirical-Bernstein sample sizes $n, m \ge 2$ (the Maurer--Pontil endpoints carry $N - 1$ denominators; set $\LCB_m := 0$ when $m < 2$, the labeled $n$-route then carrying the floor). The router score and lattice $\lat$ are pre-registered (fixed independently of $D_r, D_f$), so the fixed-function empirical-Bernstein bounds below apply (\S\ref{sec:setup}, Assumption~\ref{ass:bprime}(i)).

\begin{theorem}[Model-A achievability]
\label{thm:modelA}
In Model-A (known $w \le B$), under $\mathrm{LR}_{\mathrm{margin}}(s)$ and a regular margin $(\kappa, s_0)$ with $s \le s_0 \wedge c_0\beta$ (the standing local-regime condition of Theorem~\ref{thm:law}), the procedure below is $(\alpha, \beta, \delta)$-valid on all of $\cW_B$ and certifies with probability $\ge 1 - \delta$ on every world with slack exactly $s = \betastar - \beta$ (the standing definition; the world's actual slack, not a lower bound on it), provided
\begin{gather*}
n \;\ge\; C\Big(\frac{\bar V_{\lambda_s}\,\log(|\lat|/\delta)}{(\kappa s)^2} + \frac{B\,\log(|\lat|/\delta)}{\kappa s}\Big)
\qquad\text{and}\\
\Big(m \;\ge\; C\,\frac{\beta\,\log(|\lat|/\delta)}{s^2}
\;\;\text{or}\;\;
n \;\ge\; C\Big(\frac{B\beta\,\log(|\lat|/\delta)}{s^2} + \frac{B\,\log(|\lat|/\delta)}{s}\Big)\Big),
\end{gather*}
where $\bar V_{\lambda_s} := \Var_P(w S_{\lambda_s} (L - \alpha)) \le \E_P[w^2 S_{\lambda_s}]$ is the \emph{population} (deterministic) accepted-region weighted variance proxy at the $\mathrm{LR}_{\mathrm{margin}}(s)$ witness $\lambda_s$ ($\betastar$ the relaxed frontier of \S\ref{sec:setup}, $s = \betastar - \beta$; if several witnesses exist, read the condition at the one minimizing $\bar V_{\lambda_s}$), the $B \log(\cdot)/(\kappa s)$-type terms are the empirical-Bernstein range terms, and the floor condition uses a variance-aware lower confidence bound at coverage level $\approx \beta$, matching the $m$-axis lower bound of Theorem~\ref{thm:lower} including the $\beta$ factor. (Under the standing $\delta \le 1/8$ the per-bound radii carry $\log(2/\delta') = \log(8|\lat|/\delta) \le 2\log(|\lat|/\delta)$, absorbed into $C$, so the displayed $\log(|\lat|/\delta)$ is correct up to the constant.)
\end{theorem}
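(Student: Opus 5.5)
We first fix the procedure, then prove validity, then prove power at the witness. For each $\lambda \in \lat$ the procedure computes an empirical-Bernstein upper bound $\UCB_n(\lambda)$ for $\GQ(\lambda) = \E_P[w\Sacc(L-\alpha)]$. It uses the i.i.d.\ source draws $Z_i^\lambda = w(X_i)S_\lambda(X_i)(L_i-\alpha) \in [-B,B]$ and the Maurer--Pontil bound with per-test confidence $\delta' = \delta/(4|\lat|)$. It also computes two floor lower bounds. The first is a variance-aware (empirical-Bernstein) $\LCB_m(\lambda)$ from the target indicators $S_\lambda(X'_j)$. The second is a source-weighted $\LCB_n^{\mathrm{pool}}(\lambda)$ from $w(X_i)S_\lambda(X_i)\in[0,B]$, using $\CQ(\lambda) = \E_P[wS_\lambda]$. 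The floor bound is the maximum of the two. The procedure certifies the floor-LCB-maximal $\lambda$ with $\UCB_n(\lambda) \le 0$ and floor bound $\ge \beta$, and otherwise returns $\nocert$. The union bound gives total budget $4|\lat|\delta' = \delta$. The router score and $\lat$ are pre-registered, so each bound is a fixed-function bound and the data-dependent selection is covered by simultaneity.

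\emph{Validity.} On the complement of the failure event, every $\lambda$ satisfies $\GQ(\lambda) \le \UCB_n(\lambda)$ and $\CQ(\lambda) \ge$ its floor bound. A certified $\hat\lambda$ therefore has $\GQ(\hat\lambda) \le 0$ and $\CQ(\hat\lambda) \ge \beta > 0$. Covariate shift gives $\E_P[wS(L-\alpha)] = \E_{Q_X}[S(\eta-\alpha)]$, so $\RQ(\hat\lambda) \le \alpha$. The floor-positivity requirement excludes the $\CQ = 0$ convention. Hence the failure probability is at most $\delta$ on every world in $\cW_B$; no margin or slack is used here.

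\emph{Power.} Apply the same simultaneous event to the witness $\lambda_s$ from $\mathrm{LR}_{\mathrm{margin}}(s)$.
\begin{itemize}
\item \emph{Risk test.} $\GQ(\lambda_s) \le -\kappa s/8$. Maurer--Pontil bounds the width $\UCB_n - \GQ$ by $2\sqrt{2\hat V\log(2/\delta')/n} + O(B\log(2/\delta')/n)$, and the empirical variance $\hat V$ concentrates to $\bar V_{\lambda_s}$ up to $O(B^2\log/n)$. So the width is at most $\kappa s/8$ once $n \ge C(\bar V_{\lambda_s}\log/(\kappa s)^2 + B\log/(\kappa s))$. The square root of the variance-deviation term is absorbed into the range term by AM--GM.
\item \emph{Floor, $m$-route.} $\CQ(\lambda_s) \ge \beta + s/4$ and $\CQ(\lambda_s) \le \betastar \le \beta + s \le 2\beta$ (using $s \le c_0\beta \le \beta$). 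The Bernoulli variance is therefore $O(\beta)$, and the variance-aware LCB deficit is at most $s/4$ once $m \ge C(\beta\log/s^2 + \log/s)$.
\item \emph{Floor, $n$-route.} The variance of $wS_{\lambda_s}$ is at most $B\CQ(\lambda_s) \le 2B\beta$, which gives the alternative $n \ge C(B\beta\log/s^2 + B\log/s)$.
\end{itemize}
Thus $\lambda_s$ passes both tests and the procedure does not refuse, with probability $\ge 1-\delta$. The selected point need not be $\lambda_s$, but it is valid on the same event. The constants are adjusted using $\log(8|\lat|/\delta) \le 2\log(|\lat|/\delta)$ for $\delta \le 1/8$.

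\emph{Main obstacle.} The delicate step is making the risk-test threshold depend on the population $\bar V_{\lambda_s}$ rather than on the empirical variance. Since $\hat V$ is random, I would use Maurer--Pontil's companion bound $\sqrt{\hat V} \le \sqrt{\bar V} + \sqrt{2B^2\log(2/\delta')/(n-1)}$. That bound costs one more $\delta'$ per $\lambda$, which is already included in the four-way split. Everything else is bookkeeping of the margins $\kappa s/8$ and $s/4$ granted by $\mathrm{LR}_{\mathrm{margin}}(s)$.
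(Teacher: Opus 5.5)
Your proof follows the paper's argument in all essentials. You use the same pooled floor statistic $\max\{\LCB_m,\LCB_n\}$ at $\delta'=\delta/(4|\lat|)$, and you get validity from a union over the bound families together with the covariate-shift identity. For power at the $\mathrm{LR}_{\mathrm{margin}}(s)$ witness you use the same two ingredients: an endpoint overshoots the mean by at most twice its own radius, and the empirical variance concentrates to the population variance. The side bounds $\CQ(\lambda_s)\le\beta+s\le 2\beta$ and $\Var_P(wS_{\lambda_s})\le B\,\CQ(\lambda_s)$ also match.

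The step that does not go through as written is the probability bookkeeping for power. You cannot apply ``the same simultaneous event'' to the witness. That event only asserts $\GQ(\lambda)\le\UCB_n(\lambda)$ and $\CQ(\lambda)\ge\LCB(\lambda)$, and these point the wrong way for power:
\begin{itemize}
\item Bounding $\UCB_n(\lambda_s)-\GQ(\lambda_s)$ needs the opposite one-sided tail, $\bar Z\le\GQ(\lambda_s)+\text{radius}$.
\item Bounding the floor deficit needs the lower mean tail for the floor statistic, plus a tail controlling the floor's empirical variance.
\end{itemize}
None of these is among the events your four-way split pays for. Your split instead spends a variance tail at every $\lambda$, although only the witness needs one.

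The fix is the paper's: decouple the two claims. Validity uses only the three bound families, at cost $3|\lat|\delta'=3\delta/4$. Power needs four tails at the single witness, costing $4\delta'=\delta/|\lat|\le\delta$. These are the risk mean overshoot and the risk variance, plus the floor mean deficit and the floor variance for whichever floor route is used.

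A smaller point: the $\log/s$ term in your $m$-condition is absorbed into $\beta\log/s^2$ because $s\le c_0\beta\le\beta$. This is why the theorem displays only $m\ge C\beta\log(|\lat|/\delta)/s^2$.
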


\paragraph{Procedure (floor-aware weighted learn-then-test).}
For each $\lambda \in \lat$, with one shared per-bound budget $\delta' = \delta/(4|\lat|)$ (the power proof's four-tail union for the witness needs $4\delta' = \delta/|\lat| \le \delta$; validity, unioning the three bound families, then costs at most $3\delta/4 \le \delta$):
\begin{enumerate}
    \item \emph{Risk test}: a Maurer--Pontil empirical-Bernstein one-sided upper confidence bound \citep{maurer2009empirical} for the linear functional $\GQ(\lambda) = \E_P[w \Sacc (L - \alpha)]$ from the $n$ labeled samples (range $\le B$); pass iff $\UCB \le 0$.
    Note $\RQ(\lambda) \le \alpha \iff \GQ(\lambda) \le 0$ whenever $\CQ(\lambda) > 0$: the test is linear, with no fractional program anywhere.
    \item \emph{Floor test (pooled)}: pass iff $\LCB(\lambda) := \max\{\LCB_m, \LCB_n\} \ge \beta$, where $\LCB_m$ is a variance-aware empirical-Bernstein lower bound on $\CQ(\lambda)$ from the $m$ unlabeled draws at level $\delta'$ (deviation $\asymp \sqrt{\CQ(1 - \CQ)\log(1/\delta')/m} + \log(1/\delta')/m$, the $\sqrt{\beta}$-scaled rate), and $\LCB_n$ is an empirical-Bernstein lower bound on $\E_P[w \Sacc]$ from the labeled draws at $\delta'$ (Model-A only; range $B$). (We use the empirical-Bernstein floor LCB throughout, the route the power proof's Lemma~\ref{lem:lcbdeficit} bounds; a Clopper--Pearson LCB is also valid but its deficit constants differ and we do not invoke that route.)
    Additionally require $\LCB > 0$.
    \item Output the passing $\lambda$ of maximal $\LCB$ coverage (any tie-break); else $\nocert$.
\end{enumerate}

\begin{proof}
\emph{Validity.}
For fixed $\lambda$, $\Prob(\UCB < \GQ(\lambda)) \le \delta'$ and $\Prob(\LCB_\cdot > \CQ(\lambda)) \le \delta'$ each, by the one-sided empirical-Bernstein and binomial bounds (bounded ranges, i.i.d.\ samples).
Union over $\lat$ and the three bound families: with probability $\ge 1 - \delta$, every passing $\lambda$ truly satisfies $\GQ(\lambda) \le 0$ and $\CQ(\lambda) \ge \beta$, and selection among simultaneously valid tests adds no error, by the learn-then-test argument \citep{angelopoulos2021learn, laufergoldshtein2023efficiently}: $\Prob(\exists \text{ certified invalid } \lambda) \le \delta$.
The requirement $\LCB > 0$ ensures a certified $\hat\lambda$ has $\CQ(\hat\lambda) > 0$, so the ratio is well-defined and the $\beta = 0$ degenerate edge is excluded by the convention of Theorem~\ref{thm:frontier}(b).

\emph{Power at slack $s$.}
By $\mathrm{LR}_{\mathrm{margin}}(s)$ pick $\lambda_s$ with $\CQ(\lambda_s) \in [\beta + s/4,\, \betastar - s/4]$ and $-\GQ(\lambda_s) \ge \kappa s/8$; for risk-score-aligned threshold lattices the risk margin is inherited from the budget law~\eqref{eq:budgetlaw_app} applied to the rearranged-optimal sets, which the lattice's acceptance sets are.
Then:
\begin{itemize}
    \item \emph{Risk margin.} By Lemma~\ref{lem:overshoot} applied to the bounded variable $w S_{\lambda_s}(L-\alpha)$ (range $\le B$; here unconditionally, as $w$ is known), with probability $\ge 1 - 2\delta'$ the UCB overshoots $\GQ$ by at most $\mathrm{Rad}(\lambda_s) = 3\sqrt{2 \bar V_{\lambda_s}\log(2/\delta')/n} + 11B\log(2/\delta')/(n-1)$. This radius couples the two empirical-Bernstein root terms (an upper-confidence endpoint overshoots the mean by at most \emph{twice} its own radius, $\UCB - \GQ \le 2[\sqrt{2\hat V\log(2/\delta')/n} + 7B\log(2/\delta')/(3(n-1))]$) with the empirical-to-population variance concentration $\hat V \le 2\bar V_{\lambda_s} + 4B^2\log(2/\delta')/(n-1)$. Under the stated $n$ the population-variance and range terms are each $\le \kappa s/16$ (for $C$ large), so $\mathrm{Rad}(\lambda_s) \le \kappa s/8$ and the risk test passes.
    \item \emph{Floor margin.} $\CQ(\lambda_s) - \beta \ge s/4$, and by Lemma~\ref{lem:lcbdeficit} (applied to $Y = S_{\lambda_s} \in [0,1]$, $R = 1$, $V = \CQ(1-\CQ)$) the $\LCB_m$ deficit is at most $3\sqrt{2 \CQ(1 - \CQ)\log(2/\delta')/m} + 11\log(2/\delta')/(m-1) \le s/4$ (each term $\le s/8$, the deficit being twice the radius) under the stated $m \gtrsim \beta \log(|\lat|/\delta)/s^2$, using $\CQ(\lambda_s) \le \betastar = \beta + s \le 2\beta$ from the hypothesis $s \le c_0\beta$; this is where the variance-aware bound buys the matching $\beta$ factor.
    Alternatively (the \emph{$n$-route}, available in Model-A because $w$ is known) the same floor functional is reachable from labeled data: $S_{\lambda_s}$ is a function of the frozen router score, hence of $X$, so for every $\lambda$ the covariate-shift change of measure gives $\E_P[w S_\lambda] = \E_{Q_X}[S_\lambda] = \CQ(\lambda)$, and $\LCB_n$ (an empirical-Bernstein lower bound on $\E_P[w S_{\lambda_s}]$) is therefore a valid lower confidence bound on $\CQ(\lambda_s)$ (this identity also routes the $\LCB_n$ branch of \emph{Validity} above, since $\LCB_n \ge \beta$ then forces $\CQ \ge \beta$). For the bounded variable $Z = w S_{\lambda_s} \in [0, B]$, Lemma~\ref{lem:lcbdeficit} (with $R = B$, $V = \Var_P(Z)$) bounds the $\LCB_n$ deficit by $3\sqrt{2\Var_P(Z)\log(2/\delta')/n} + 11B\log(2/\delta')/(n-1)$, with $\Var_P(Z) \le \E_P[w^2 S_{\lambda_s}] \le B\,\E_P[w S_{\lambda_s}] = B\,\CQ(\lambda_s) \le B(\beta + s) \le 2B\beta$ (using $S_{\lambda_s}^2 = S_{\lambda_s}$, $0 \le w \le B$, and $\CQ(\lambda_s) \le \betastar = \beta + s \le 2\beta$); hence the deficit is $\le 6\sqrt{B\beta\log(2/\delta')/n} + 11B\log(2/\delta')/(n-1)$, each summand $\le s/8$ under the stated $n \gtrsim B\beta\log(|\lat|/\delta)/s^2 + B\log(|\lat|/\delta)/s$, so $\LCB_n \ge \CQ(\lambda_s) - s/4 \ge \beta$. The extra $B$ under the root (range $B$ here, versus range $1$ for the unlabeled $\LCB_m$) is exactly the $\sqrt{B\beta/n}$-versus-$\sqrt{\beta/m}$ price of the source-weighted floor recorded in Corollary~\ref{cor:buy}.
\end{itemize}
The power proof for $\lambda_s$ unions four tails for the single witness: the UCB mean overshoot and its variance concentration (Lemma~\ref{lem:overshoot}), and the floor-LCB mean deficit and its variance concentration (Lemma~\ref{lem:lcbdeficit}; one floor route suffices). The procedure therefore certifies with probability $\ge 1 - 4\delta' \ge 1 - \delta$ using the shared per-tail budget $\delta' = \delta/(4|\lat|)$, which gives $4\delta' = \delta/|\lat| \le \delta$ for every $|\lat| \ge 1$ (no $|\lat| \ge 2$ assumption is needed: $\mathrm{LR}_{\mathrm{margin}}(s)$ does not guarantee a feasible lattice point distinct from the witness, as the witness may itself attain $\betastar_\lat$), the budget absorbing all four tails.
\end{proof}

\begin{remark}[Released oracle kernel: the evaluated two-family variant]
\label{rmk:releasedoracle}
The procedure displayed above allocates one shared per-bound budget $\delta' = \delta/(4|\lat|)$ across \emph{three} bound families and takes the pooled floor statistic $\max\{\LCB_m, \LCB_n\}$.
The released implementation runs a different registered variant, which we call the \emph{two-family oracle kernel} and which produces every reported oracle-A number.
It differs in exactly three registered respects.
\emph{(i)~Two families.} It runs only the risk UCB and the target-sample floor LCB, at the common per-bound level $\delta'' := \delta/(2|\lat|)$.
\emph{(ii)~Target-only floor.} It does not construct the labeled-source branch $\LCB_n$; its floor test is $\LCB_m \ge \beta$ together with $\LCB_m > 0$.
\emph{(iii)~Realized range.} Its empirical-Bernstein range constant is the realized $B_{\mathrm{eff}} := \max_k w_k$ of the world rather than the declared bound $B$; this is Model-A information, since $w$ is supplied exactly, and it only shrinks the range term.

\emph{Validity.}
Two bound families at level $\delta''$, unioned over $\lat$, cost $2|\lat|\,\delta'' = \delta$; the argument of \emph{Validity} above is otherwise unchanged, and the $\LCB_n$ branch it routes is simply not exercised.
So the two-family kernel is $(\alpha, \beta, \delta)$-valid on all of $\cW_B$, at the same $\delta$.

\emph{Power.}
The witness argument unions the same four tails, now at $\delta''$, for a total of $4\delta'' = 2\delta/|\lat| \le \delta$ whenever $|\lat| \ge 2$ (both evaluated lattices have $|\lat| \in \{50, 64\}$).
Dropping the $n$-route removes an \emph{alternative} floor route, so the $m$-route sample-size condition of Theorem~\ref{thm:modelA} still suffices verbatim; the theorem's disjunctive floor condition is what the displayed procedure buys and the kernel does not.

\emph{Neither variant dominates the other.}
The kernel's per-bound level is looser than the displayed one ($\log(2/\delta'') = \log(4|\lat|/\delta)$ against $\log(8|\lat|/\delta)$, that is $\log 4{,}000$ against $\log 8{,}000$ at the audited $\delta = 0.05$, $|\lat| = 50$), which makes it certify more often; the displayed procedure carries the extra $n$-route floor, which the kernel lacks.
We therefore claim no pointwise domination in either direction, and we report the oracle numbers as records of the two-family kernel rather than of the displayed procedure.
Both are $(\alpha, \beta, \delta)$-valid, so the $0$-violation tallies are statements about the arm that was run; the certification counts, onsets, and the oracle-versus-B$'$ premium are likewise properties of this kernel.
\end{remark}

\begin{corollary}[One-sided feasibility estimator]
\label{cor:feasibility}
$\hat\beta^*_{\LCB} := \max\big\{0,\ \max\{\LCB(\lambda) : \UCB_{\mathrm{risk}}(\lambda) \le 0\}\big\}$ (clipped at $0$, with the inner $\max\emptyset := 0$; consistent with refusal and nonnegative coverage) satisfies $\Prob(\hat\beta^*_{\LCB} > \betastar) \le \delta$ and, at slack $s$ and the sample sizes of Theorem~\ref{thm:modelA}, $\hat\beta^*_{\LCB} \ge \beta$ with probability $\ge 1 - \delta$: an operational ``is $(\alpha, \beta)$ certifiable here'' test.
\end{corollary}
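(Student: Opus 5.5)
The plan is to obtain both halves of Corollary~\ref{cor:feasibility} directly from the two halves of the proof of Theorem~\ref{thm:modelA}. Throughout, $\LCB(\lambda)$ denotes the pooled floor statistic $\max\{\LCB_m,\LCB_n\}$ and $\UCB_{\mathrm{risk}}(\lambda)$ the empirical-Bernstein risk bound on $\GQ(\lambda)$, both computed at the shared per-bound level $\delta'=\delta/(4|\lat|)$. Consequently, the good event $E$ of the \emph{Validity} argument is available: on $E$, for every $\lambda\in\lat$ simultaneously, $\UCB_{\mathrm{risk}}(\lambda)\ge\GQ(\lambda)$ and $\LCB(\lambda)\le\CQ(\lambda)$. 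Here we use the change-of-measure identity $\E_P[wS_\lambda]=\CQ(\lambda)$ for the $\LCB_n$ branch. Since $E$ is an intersection of three bound families over $\lat$, it has $\Prob(E^c)\le 3|\lat|\delta'\le\delta$.

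For the \emph{upper} (one-sided) claim, I will work on $E$ and split into two cases. In the first case no $\lambda$ passes the risk test, or every $\LCB(\lambda)\le 0$; then $\hat\beta^*_{\LCB}=0\le\betastar$ trivially, since $\betastar\ge 0$. In the second case, the maximizing $\lambda$ has $\UCB_{\mathrm{risk}}(\lambda)\le 0$ and $\LCB(\lambda)>0$. On $E$ this gives $\GQ(\lambda)\le 0$ and $\CQ(\lambda)\ge\LCB(\lambda)>0$, so $\RQ(\lambda)\le\alpha$ by the linearization equivalence (valid because $\CQ>0$). Thus $\lambda$ is a risk-feasible lattice point, and therefore
\[
\hat\beta^*_{\LCB}=\LCB(\lambda)\le\CQ(\lambda)\le\betastar_{\lat}\le\betastar .
\]
Hence $\{\hat\beta^*_{\LCB}>\betastar\}\subseteq E^c$, which gives probability at most $\delta$. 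The estimator in fact lower-bounds the lattice frontier $\betastar_\lat$, and that is the stronger statement I will record.

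For the \emph{power} claim, I reuse the witness $\lambda_s$ from $\mathrm{LR}_{\mathrm{margin}}(s)$. On the four-tail event of probability at least $1-4\delta'\ge 1-\delta$ from the Model-A power proof (Lemmas~\ref{lem:overshoot} and~\ref{lem:lcbdeficit}), the stated sample sizes give two facts. First, $\UCB_{\mathrm{risk}}(\lambda_s)\le\GQ(\lambda_s)+\kappa s/8\le 0$, so $\lambda_s$ enters the inner maximum. Second, $\LCB(\lambda_s)\ge\CQ(\lambda_s)-s/4\ge\beta$, using either floor route. It follows that $\hat\beta^*_{\LCB}\ge\LCB(\lambda_s)\ge\beta$.

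I expect the main obstacle to be bookkeeping rather than anything substantive. The estimator drops the explicit ``$\LCB>0$'' requirement that the certificate imposes, so the upper-claim argument must handle maximizers with $\LCB\le0$ separately; the clip at $0$ absorbs exactly these. I also need to make sure the union budget covers the three families in the validity step and the four tails in the power step, each separately and each at most $\delta$, rather than jointly.
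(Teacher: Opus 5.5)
Your proposal is correct and takes essentially the paper's route. The paper's proof is the one line ``immediate from the same events as in the proof of Theorem~\ref{thm:modelA},'' and you have unpacked it faithfully: the simultaneous three-family validity event gives the one-sided bound (in the stronger form $\hat\beta^*_{\LCB}\le\betastar_{\lat}\le\betastar$, with the clip at $0$ covering maximizers with nonpositive $\LCB$), and the four-tail witness event at $\lambda_s$ gives the power half.
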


\begin{proof}
Immediate from the same events as in the proof of Theorem~\ref{thm:modelA}.
\end{proof}

\begin{corollary}[Which-data-to-buy boundary]
\label{cor:buy}
Define the \emph{deterministic} radii
\begin{align*}
s_n &= C'\Big(\kappa^{-1}\sqrt{\bar V_{\lambda_s} \log(|\lat|/\delta)/n} + B\kappa^{-1}\log(|\lat|/\delta)/n\Big),\\
s_m &= C'\Big(\sqrt{\beta \log(|\lat|/\delta)/m} + \log(|\lat|/\delta)/m\Big)
\end{align*}
in the population variance proxy $\bar V_{\lambda_s}$ of Theorem~\ref{thm:modelA} (the Model-B floor route; the Model-A source-weighted floor term carries the analogous $\sqrt{B\beta\log(\cdot)/n} + B\log(\cdot)/n$ form).
Certification at slack $s$ is \emph{guaranteed} once $s \gtrsim s_n + s_m$, a sufficient condition holding with probability $\ge 1 - \delta$ (the deficits of Theorem~\ref{thm:modelA} then fit inside the margins $\kappa s/8$ and $s/4$, all radii being deterministic in $\bar V_{\lambda_s}$); the matching necessity is the Model-B construction of Theorem~\ref{thm:lower}, and in Model-A the source-weighted floor route can certify from labeled data even when the $m$-axis is starved, so this is a comparison-of-radii diagnostic rather than a necessary-and-sufficient theorem for a fixed world. The binding axis is $n$ when $s_n > s_m$; in the variance-dominated regime (neglecting the empirical-Bernstein range terms $B\log(\cdot)/(\kappa n)$ and $\log(\cdot)/m$) this reads $m \gtrsim \beta n \kappa^2/\bar V_{\lambda_s}$, while in general the binding axis is governed by the full inequality $s_n > s_m$ (e.g.\ at $\bar V_{\lambda_s} = 0$ the range term keeps $s_n > 0$, so the displayed ratio form does not apply); replacing $\bar V_{\lambda_s}$ by its empirical plug-in $\hat V$ gives an \emph{estimable} readout of this boundary: a data-driven diagnostic computed from the run, not itself a certification threshold (the guarantee above is the deterministic one in $\bar V_{\lambda_s}$).
This matches Theorem~\ref{thm:lower}'s two axes up to constants and $\log(|\lat|)$ factors.
\end{corollary}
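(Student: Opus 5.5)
The plan is to read Corollary~\ref{cor:buy} off the power half of the proof of Theorem~\ref{thm:modelA}, inverting its sample-size thresholds into slack radii. The event structure is unchanged. On the four-tail good event for the witness $\lambda_s$ (probability $\ge 1 - 4\delta' \ge 1-\delta$), the risk UCB overshoots $\GQ(\lambda_s)$ by at most $\mathrm{Rad}(\lambda_s) = 3\sqrt{2\bar V_{\lambda_s}\log(2/\delta')/n} + 11B\log(2/\delta')/(n-1)$ (Lemma~\ref{lem:overshoot}). Likewise the target floor LCB falls short of $\CQ(\lambda_s)$ by at most $3\sqrt{2\CQ(1-\CQ)\log(2/\delta')/m} + 11\log(2/\delta')/(m-1)$ (Lemma~\ref{lem:lcbdeficit}). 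Both radii are deterministic functions of $(n,m,\bar V_{\lambda_s},B,|\lat|,\delta)$, which is the point of stating the corollary in the population proxy $\bar V_{\lambda_s}$ rather than in $\hat V$.

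\emph{Step 1 (sufficiency).} I would show that $s \ge C''(s_n+s_m)$ for a suitable constant implies both margin conditions. First, $s \ge C'' s_n$ rearranges to $\kappa s \ge C''C'\big(\sqrt{\bar V_{\lambda_s}\log(|\lat|/\delta)/n} + B\log(|\lat|/\delta)/n\big)$. Using $\log(2/\delta') \le 2\log(|\lat|/\delta)$ under $\delta\le 1/8$, and $n-1\ge n/2$ for $n\ge 2$, each of the two summands of $\mathrm{Rad}(\lambda_s)$ is then at most $\kappa s/16$. This gives $\UCB(\lambda_s) \le \GQ(\lambda_s) + \kappa s/8 \le 0$ by the $\mathrm{LR}_{\mathrm{margin}}(s)$ risk clause. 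Second, $s \ge C'' s_m$, together with $\CQ(\lambda_s)(1-\CQ(\lambda_s)) \le \betastar \le 2\beta$ (from $s\le c_0\beta$), makes each floor-deficit summand at most $s/8$. Hence $\LCB_m(\lambda_s) \ge \CQ(\lambda_s) - s/4 \ge \beta$, and also $\LCB_m(\lambda_s) > 0$. So $\lambda_s$ passes both tests, and the procedure certifies (possibly at another passing point). I would also note the Model-A alternative: replacing $s_m$ by the source-weighted radius $\sqrt{B\beta\log(\cdot)/n} + B\log(\cdot)/n$ gives the same conclusion via the $\LCB_n$ branch of Theorem~\ref{thm:modelA}. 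This is the caveat recorded in the statement, and it is why no necessity is asserted for a fixed world.

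\emph{Step 2 (binding axis and the ratio reading).} This is pure algebra on the radii. Dropping the range terms $B\kappa^{-1}\log(\cdot)/n$ and $\log(\cdot)/m$, the inequality $s_n > s_m$ becomes $\kappa^{-1}\sqrt{\bar V_{\lambda_s}/n} > \sqrt{\beta/m}$, that is, $m > \beta n\kappa^2/\bar V_{\lambda_s}$. I would add the counterexample $\bar V_{\lambda_s}=0$: there $s_n$ is the positive range term, so the ratio form is inapplicable and the full inequality must be used. The plug-in readout follows by substituting $\hat V$. I would stress that this substitution is only a diagnostic, because the guarantee of Step~1 was stated with the deterministic $\bar V_{\lambda_s}$.

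\emph{Step 3 (matching).} I would cite Theorem~\ref{thm:lower} and bound $\bar V_{\lambda_s} \le \E_P[w^2S_{\lambda_s}] \le B\,\CQ(\lambda_s) \lesssim B\beta$. At fixed $B$ this makes $s_n \asymp \kappa^{-1}\sqrt{\beta\log/n}$ up to constants. That matches the $n$-axis window $c_1\kappa^{-1}\sqrt{\beta/n}$ and the $m$-axis window $c_2\sqrt{\beta/m}$ up to constants and $\log|\lat|$.

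I expect the main obstacle to be bookkeeping in Step~1. The constant $C'$ has to be chosen uniformly so that the $n-1$ versus $n$ denominators, the doubled endpoint radius in Lemma~\ref{lem:overshoot}, and the $\log(2/\delta')$ versus $\log(|\lat|/\delta)$ conversion are all absorbed without imposing extra side conditions such as $n,m\ge 2$. I would handle this by fixing $C' = 64$ times the absolute constants in the two lemmas. I would then check the small-$n$ edge separately: $s\gtrsim s_n$ with $s\le 1$ already forces $n \gtrsim B\log(|\lat|/\delta)/\kappa$, which is at least $2$.
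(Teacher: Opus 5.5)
Your proposal is correct and follows the paper's own route. The paper likewise proves the corollary by inverting the witness deficit bounds from the power half of Theorem~\ref{thm:modelA} (risk overshoot $\le \kappa s/8$, floor deficit $\le s/4$) into the radii $s_n, s_m$ and then comparing the two radii; your constant bookkeeping and the $\bar V_{\lambda_s}=0$ remark simply make explicit a step the paper calls immediate.

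One small tightening for Step~3. The envelope $\bar V_{\lambda_s}\lesssim B\beta$ gives only $s_n \lesssim \kappa^{-1}\sqrt{\beta\log(|\lat|/\delta)/n}$ plus a range term, not $\asymp$. That range term is absorbed at fixed $B$ by the witness-margin inequality $\kappa s/8\le \CQ(\lambda_s)\le(1+c_0)\beta$, exactly as in the assembly of Theorem~\ref{thm:law}, Claim~2.
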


\begin{proof}
Immediate from the deficit bounds in the power proof of Theorem~\ref{thm:modelA}: certification is guaranteed when both deficits fit inside the margins $\kappa s/8$ and $s/4$, the displayed sufficient condition up to constants; comparing $s_n$ with $s_m$ gives the diagnostic boundary.
\end{proof}

\begin{proof}[Proof of Corollary~\ref{cor:phase}]
The guaranteed-certification region is read directly off Theorem~\ref{thm:modelA}'s sufficient conditions: the risk condition $n \gtrsim N_{\mathrm{risk}} := \bar V_{\lambda_s}\log(|\lat|/\delta)/(\kappa s)^2 + B\log(|\lat|/\delta)/(\kappa s)$ (variance \emph{and} empirical-Bernstein range term), together with the floor \emph{disjunction} $m \gtrsim M_{\mathrm{floor}} := \beta\log(|\lat|/\delta)/s^2$ \emph{or} (the source-weighted $n$-route, available in Model-A because $w$ is known) $n \gtrsim N_{\mathrm{floor}} := B\beta\log(|\lat|/\delta)/s^2 + B\log(|\lat|/\delta)/s$. The region is therefore $\{n \gtrsim N_{\mathrm{risk}}\} \cap (\{m \gtrsim M_{\mathrm{floor}}\} \cup \{n \gtrsim N_{\mathrm{floor}}\})$, i.e.\ the union of the $m$-route quadrant $\{n \gtrsim N_{\mathrm{risk}},\ m \gtrsim M_{\mathrm{floor}}\}$ and the $n$-only half-strip $\{n \gtrsim N_{\mathrm{risk}} \vee N_{\mathrm{floor}}\}$ (no constraint on $m$), the claimed shape; the half-strip is the Model-A enlargement, absent when $w$ is unknown (then the floor is reachable only through the unlabeled $m$-route). When $N_{\mathrm{floor}} \ge N_{\mathrm{risk}}$ the union has a genuine step-corner at $(N_{\mathrm{floor}},\ M_{\mathrm{floor}})$, the break-even between the unlabeled $m$-route and the source-weighted labeled $n$-route for the \emph{floor}: for $n \in [N_{\mathrm{risk}}, N_{\mathrm{floor}})$ the \emph{sufficient} $n$-route guarantee does not yet cover the floor, so the available guarantee uses unlabeled target $m$; once $n \gtrsim N_{\mathrm{floor}}$ the $n$-route's sufficient threshold clears it and no $m$ is needed. This corner is a sufficient-region boundary, not a per-world necessity threshold: the unknown $\gtrsim$-constants make its exact location schematic. (When instead $N_{\mathrm{risk}} > N_{\mathrm{floor}}$, the $n$-route clears the floor already within the risk requirement, so the region is the half-strip $\{n \gtrsim N_{\mathrm{risk}}\}$ alone and the $m$-route quadrant is redundant; the Model-A enlargement is then total.) The separate formal sufficient boundary comparing the binding \emph{radii} (the labeled \emph{risk} radius $s_n$ versus the unlabeled \emph{floor} radius $s_m$) is Corollary~\ref{cor:buy} (binding axis $n$ when $s_n > s_m$), and the feasibility readout is Corollary~\ref{cor:feasibility}; when instead the nuisance budget binds in Model-B$'$ ($\nb \le \kappa s/32$ fails, Theorem~\ref{thm:U2}), buying weight-block samples $n_w, m_w$ \emph{does} shrink the nuisance radius $\nb$ (at Proposition~\ref{prop:I1}'s rate), so the attribution is ``buy weight-block samples''; only an irreducible shift-model misspecification (weights not $\cK$-measurable) is uncurable by any sample purchase.
\end{proof}

\begin{remark}[The variance functional is localized on both sides]
\label{rmk:localized}
On the lower-bound side, the adversary's KL cost per unit of frontier flip is $\asymp (\kappa s)^2/(\rho\, \bar w_U^2 \cdots)$ minimized over slices $U$ inside the accepted region, so the binding quantity is an accepted-region weighted mass ($\rho \le \beta$, weights $w|_U$); a shifted variant places the slice where $w$ is large, and the channel scales like localized second moments of $\E_P[w^2 S]$-type.
On the upper-bound side, the empirical-Bernstein variance in Theorem~\ref{thm:modelA} is $\Var_P(w \Sacc (L - \alpha)) \le \E_P[w^2 \Sacc]$, the same localized family.
Global $\E_P[w^2]$ (global ESS) appears on neither side.
Exact constant matching between the two sides is open: the lower-bound constant involves the slice geometry, the upper the full accepted-set variance; we flag this and do not claim it.
\end{remark}

\begin{remark}[Relation to the floor-free minimax bound]
\label{rmk:vsnonmonotone}
The matching single-axis lower bound for conformal risk control under non-monotone losses \citep{aldirawi2026conformal} concerns excess risk when selecting among a finite grid of tuning-parameter values, with no coverage constraint: one resource axis, no coverage-floor parameter.
Our lower bound differs in (i) the $m$-axis existing at all (it is the floor's shadow, Remark~\ref{rmk:msplit}), (ii) $\beta$ appearing inside the $n$-axis constant through the inframarginal mass, and (iii) the target being a certification (decide-or-refuse) task.
The techniques are neighbors in the same minimax toolkit \citep{tsybakov2009introduction}, theirs a Fano argument over the tuning-parameter grid and ours Le~Cam two-point pairs, one per axis, but neither statement implies the other.
\end{remark}

\paragraph{Summary of the three claims.}
Table~\ref{tab:claims} (\S\ref{sec:theory}, beside the map) states, for each claim of Theorem~\ref{thm:law}, its information model, the assumptions it adds beyond the standing ones, what it proves, and the gap it leaves open. This makes the cross-model (not within-model) structure of the match scannable.

\paragraph{Assembly: proof of Theorem~\ref{thm:law}.}
Claim 1 is Theorem~\ref{thm:lower} with the additive single-family combination of Appendix~\ref{app:additive}.
Claim 2 is Theorem~\ref{thm:modelA}: in the worst case over the class, the $\mathrm{LR}_{\mathrm{margin}}$ witness has $\bar V_{\lambda_s} \le \E_P[w^2 S_{\lambda_s}] \le B \E_P[w S_{\lambda_s}] = B\, \CQ(\lambda_s) \le B(\beta + s) \asymp B\beta$ in the local regime $s \le c_0\beta$; the empirical-Bernstein \emph{range} term is dominated via the witness margin $\kappa s/8 \le -\GQ(\lambda_s) \le \CQ(\lambda_s) \le (1+c_0)\beta$, giving $B/(\kappa s) \le 8(1+c_0)\,B\beta/(\kappa s)^2$, so with $B$ treated as a constant the displayed conditions reduce to $n \gtrsim \beta \log(|\lat|/\delta)/(\kappa s)^2$ and $m \gtrsim \beta \log(|\lat|/\delta)/s^2$ (the oracle Model-A upper attains the same two-axis rates as the Model-B lower bound, up to constants and logarithms, while no distribution-free unknown-weight matching upper is claimed), and indeed none can attain the Model-A rate over $\cW_B$ (Theorem~\ref{thm:modelB-incon}); the formal statement's pooled floor option (the $n$-route) only weakens the requirement.
Claim 3 is Theorems~\ref{thm:U1}--\ref{thm:U2} with Proposition~\ref{prop:I1} (proofs in Appendix~\ref{app:bprime}); the geometry sentence is Remark~\ref{rmk:localized}; the phase-diagram sentence is Corollary~\ref{cor:phase} (proved in this appendix from Theorem~\ref{thm:modelA}'s disjunctive sample condition and Corollaries~\ref{cor:feasibility}--\ref{cor:buy}); the within-Model-B$'$ two-resource-axis match asserted in the headline is Corollary~\ref{cor:bprime-minimax}; and the nuisance-necessity clause is Theorem~\ref{thm:t3-target} (Appendix~\ref{app:necessity}): the $K$-free core necessary (lattice-valued) in the known-$\eta$ submodel, the histogram $B^2 K$ split not proved necessary, unknown-$\eta$ open.
\hfill$\square$

\paragraph{Why no matching Model-B upper exists.}
In Model-B the floor side is estimable label-free but the risk side is not: $\GQ$ depends on $w$, unidentified beyond $w \le B$ (\S\ref{sec:models}).
The routes one might try either import estimable structure (B$'$'s pre-registered partition, at Proposition~\ref{prop:I1}'s nuisance price) or guard against the worst weight in the box, and Proposition~\ref{prop:separation} exhibits an instance where that envelope certifies no population-feasible point (refusing with probability $\ge 1 - |\lat|\delta_f'$) while B$'$ certifies at finite samples.
This is not an accident of those two routes: Theorem~\ref{thm:modelB-incon} shows that \emph{no} Model-B procedure attains the Model-A guarantee over $\cW_B$: over the full unknown-weight class the problem is not consistently solvable at any $(n,m)$, because certifying the risk requires detecting the alignment between weight-signs (seen only through target covariates) and loss-signs (seen only through source labels), a two-sample correlation-detection task the adversary defeats with weight complexity beyond any fixed sample size (Remark~\ref{rmk:modelB-mechanism}). The cross-model gap is therefore fundamental, and the B$'$ complexity restriction is necessary for consistency, not a convenience.
Positive certification thus requires \emph{either} oracle weights (Model-A) \emph{or} an explicit shift-complexity restriction (Model-B$'$'s pre-registered finite $\cK$): the inconsistency is a statement about the \emph{unrestricted} bounded-ratio class; it does not assert that a fixed-complexity unknown-weight model is inconsistent, and B$'$ is exactly such a fixed-complexity restriction, certifying at finite samples at its nuisance price.

\section{Relaxed Linear Crossings versus Discrete-Lattice Plateaus}
\label{app:exponent}

The exponent $-2$ in the bite divergence is tied to the \emph{linear} budget law~\eqref{eq:budgetlaw_app}, which is generic (Lemma~\ref{lem:genericity}).
Under bounded loss the relaxed budget in fact admits \emph{no other} nondegenerate exponent: every \emph{nondegenerate} frontier crossing is asymptotically linear, the only alternative being a degenerate zero-local-budget tie (Proposition~\ref{prop:exponent}).
The flat ``plateau'' behavior an early coarse-lattice pilot suggested is therefore not a property of the relaxed budget (a $\gamma = \infty$ relaxed exponent is a category error for a concave Lipschitz budget) but of a \emph{fixed finite lattice}, which we record separately as a constant-budget lower-bound lemma (Lemma~\ref{lem:plateau}).
We then explain the pilot's intermediate fitted slope as a mixture of the two.

\begin{proposition}[Relaxed crossings are linear: a dichotomy]
\label{prop:exponent}
Let $\budget$ be the relaxed budget of a world in $\cW_B$ with $\alpha \in [0,1]$, and assume a \emph{crossing} frontier: $0 < \betastar \le 1$ with $\budget(\betastar) = 0$ (the regular-margin setting; the non-crossing case $\betastar = 1$, $\budget(1) > 0$ and the empty-frontier case $\betastar = 0$ are excluded and treated as boundary cases below).
Since $\eta^*_Q \in [0,1]$, $\budget$ is concave and $1$-Lipschitz on $[0, \betastar]$ (its a.e.\ derivative $\alpha - \eta^*_Q \in [-1,1]$) with $\budget \ge 0$ there.
Exactly one of the following holds as $s \downarrow 0$:
\begin{enumerate}
    \item[(i)] \emph{(nondegenerate crossing)} $\budget(\betastar - s) > 0$ for some $s \in (0, \betastar]$; then $\budget(\betastar - s) = \kappa_* s + o(s)$ with $\kappa_* = -\budget'_-(\betastar) \in (0, 1]$. A regular margin $(\kappa, s_0)$, $\eta^*_Q \ge \alpha + \kappa$ a.e.\ on $[\betastar - s_0, \betastar]$, then holds for every fixed $\kappa \in (0, \kappa_*)$ once $s_0$ is small enough (in general only $\kappa \le \kappa_*$, with the attainable $\kappa \uparrow \kappa_*$ as $s_0 \downarrow 0$; the lower-bound constant $\kappa$ need \emph{not} equal the limiting slope $\kappa_*$). The labeled-sample scale $n \gtrsim \beta/(\kappa s)^2$ in this regime is \emph{not} a corollary of the dichotomy but Theorem~\ref{thm:lower}'s two-point bound under its own hypotheses (Remark~\ref{rmk:exp-rate}): a necessary condition for that construction, not a matching $\asymp$;
    \item[(ii)] \emph{(degenerate tie)} $\budget \equiv 0$ on $[0, \betastar]$, i.e.\ $\eta^*_Q = \alpha$ a.e.\ there (the zero-local-budget case, left open).
\end{enumerate}
No relaxed exponent $\gamma \neq 1$ arises: $\gamma < 1$ would force $\budget(\betastar - s)/s \to \infty$, violating Lipschitzness, while $\gamma > 1$ (a fortiori $\gamma = \infty$) would force $\budget(\betastar - s) = o(s)$, i.e.\ case (ii).
\end{proposition}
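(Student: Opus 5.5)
The plan is to work entirely with the concave-function structure of $\budget$ on $[0,\betastar]$. Since $\eta^*_Q$ is nondecreasing with values in $[0,1]$, $\budget(c) = \alpha c - \int_0^c \eta^*_Q$ is absolutely continuous with a.e.\ derivative $\alpha - \eta^*_Q \in [-1,1]$. That derivative is nonincreasing, so $\budget$ is concave and $1$-Lipschitz. Next I would show $\budget \ge 0$ on $[0,\betastar]$. Suppose some $c < \betastar$ had $\budget(c) < 0$. Concavity together with $\budget(0) = 0$ and $\budget(\betastar) = 0$ would then be contradicted, since a concave function lies above the chord joining $(0,0)$ and $(\betastar,0)$.

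For the dichotomy I would take the left derivative $\budget'_-(\betastar)$. For a concave function it exists and equals the essential left limit of $\alpha - \eta^*_Q$, so it lies in $[-1,1]$. Concavity also makes the difference quotient $q(s) := \budget(\betastar - s)/s$ nonincreasing in $s$ on $(0,\betastar]$. Its limit as $s \downarrow 0$ is therefore its supremum, and this limit equals $-\budget'_-(\betastar) =: \kappa_*$.

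\emph{Case (ii).} If $\budget(\betastar - s) = 0$ for every $s \in (0,\betastar]$, then $\budget \equiv 0$ on $[0,\betastar]$. Hence $\eta^*_Q = \alpha$ a.e.\ there, which is exactly (ii).

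\emph{Case (i).} Otherwise $q(s_1) > 0$ for some $s_1$. Monotonicity of $q$ then gives $\kappa_* \ge q(s_1) > 0$, and $1$-Lipschitzness gives $\kappa_* \le 1$. The expansion $\budget(\betastar - s) = \kappa_* s + o(s)$ is simply the statement that $q(s) \to \kappa_*$. The two cases are mutually exclusive by construction.

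To obtain the regular margin in case (i), fix $\kappa < \kappa_*$. Because $\kappa_*$ is the essential left limit of the nondecreasing function $\eta^*_Q - \alpha$ at $\betastar$, there is a left-neighborhood $[\betastar - s_0, \betastar]$ on which $\eta^*_Q - \alpha \ge \kappa$ a.e. This neighborhood is produced in the same way as in the proof of Lemma~\ref{lem:genericity}: pick a point $c$ close to $\betastar$ with $\eta^*_Q(c) - \alpha > \kappa$ and use monotonicity. Conversely, any margin $\kappa$ satisfies $\kappa \le \kappa_*$ by taking limits. I would state explicitly that the sample-size consequence is not derived here; it is borrowed from Theorem~\ref{thm:lower}.

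For the final ``no other exponent'' clause, suppose $\budget(\betastar - s) \asymp s^\gamma$. If $\gamma < 1$, then $q(s) \to \infty$, which contradicts the bound $q \le 1$ coming from Lipschitzness. If $\gamma > 1$, then $q(s) \to 0$. But $q(s) \to \kappa_*$, so $\kappa_* = 0$, and since $q$ is nonincreasing with supremum $0$ this forces $q \equiv 0$ and hence case (ii).

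The step I expect to need the most care is the identification of $\kappa_*$ with both the limit of the difference quotient and the essential left limit of $\eta^*_Q - \alpha$. This is needed because $\eta^*_Q$ is defined only up to null sets. I would handle it through the monotone right-continuous version of the rearrangement, arguing with integrals rather than point values.
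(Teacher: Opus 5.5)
Your proposal is correct and follows essentially the same route as the paper. Both arguments rest on the same steps: the nonincreasing ratio $\budget(\betastar - s)/s$ (from concavity and $\budget(\betastar)=0$), its limit $\kappa_* = -\budget'_-(\betastar)\in[0,1]$ via Lipschitzness, the monotone-integrand argument for the regular margin, and the exclusion of $\gamma \neq 1$ through that ratio. Your differences are cosmetic: you derive $\budget \ge 0$ from the chord inequality instead of taking it as given, and you state case (ii) as the literal negation of (i), which makes the paper's extra concavity step (vanishing near $\betastar$ forces vanishing on all of $[0,\betastar]$) unnecessary.
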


\begin{proof}
Set $f(s) := \budget(\betastar - s)$ for $s \in [0, \betastar]$: $f$ is concave, $1$-Lipschitz, $f(0) = 0$, and $f \ge 0$.
By concavity and $f(0) = 0$, for $0 < s < t \le \betastar$ we have $f(s) \ge \tfrac{s}{t} f(t)$, so $s \mapsto f(s)/s$ is nonincreasing on $(0, \betastar]$; the $1$-Lipschitz bound gives $0 \le f(s)/s \le 1$.
Hence $\kappa_* := \lim_{s\downarrow0} f(s)/s = f'_+(0)$ exists in $[0,1]$, with $\kappa_* = -\budget'_-(\betastar)$.
If $f(t) > 0$ for some $t \in (0, \betastar]$, then $\kappa_* \ge f(t)/t > 0$, giving (i) with $f(s) = \kappa_* s + o(s)$.
For the regular-margin claim of (i), write $f(s) = \int_0^s (\eta^*_Q(\betastar - v) - \alpha)\, dv$ (since $f'(v) = \eta^*_Q(\betastar - v) - \alpha$ a.e.\ and $f(0) = 0$); the integrand is nonincreasing in $v$ (the rearrangement $\eta^*_Q$ is nondecreasing) and tends to $\kappa_*$ as $v \downarrow 0$, so for every fixed $\kappa < \kappa_*$ it is $\ge \kappa$ a.e.\ on $(0, s_0)$ for $s_0$ small enough, a regular margin $(\kappa, s_0)$, while any regular-margin $\kappa$ gives $f(s)/s \ge \kappa$, hence $\kappa \le \kappa_*$.
Otherwise $f(t) = 0$ for all small $t > 0$; a nonnegative concave $f$ with $f(0) = 0$ that vanishes on an interval at the origin vanishes on all of $[0, \betastar]$ (else concavity $f(\lambda u) \ge \lambda f(u)$ is violated), so $f \equiv 0$, i.e.\ $\eta^*_Q = \alpha$ a.e.\ on $[0, \betastar]$, which is (ii).
The cases are exhaustive and mutually exclusive.
The exponent claim follows since $f(s) \asymp s^\gamma$ forces $f(s)/s \to \infty$ ($\gamma < 1$) or $f(s)/s \to 0$ ($\gamma > 1$), the latter making the nonincreasing ratio identically $0$, i.e.\ case (ii).
\end{proof}

\begin{remark}[The labeled-axis rate is imported from Theorem~\ref{thm:lower}, not the dichotomy]
\label{rmk:exp-rate}
Proposition~\ref{prop:exponent}(i) fixes only the \emph{geometry} of the relaxed budget, $\budget(\betastar - s) = \kappa_* s + o(s)$. The labeled-sample scale $n \gtrsim \beta/(\kappa s)^2$ is the separate two-point (Le~Cam) lower bound of Theorem~\ref{thm:lower} (Appendix~\ref{app:naxis}) and inherits \emph{all} of its hypotheses: a world in $\cW_B$ with regular margin $(\kappa, s_0)$, the local regime $s \in (0,\, s_0 \wedge c_0 \beta]$, valid coverage $\beta + s \le 1$, confidence $\delta \le 1/8$, and the construction's interior Bernoulli levels in $[0.1, 0.95]$. It is a necessary condition for that construction family, not a matching $\asymp$, and does \emph{not} follow from concavity or the regular margin alone; at a boundary Bernoulli level the per-sample KL can be linear, not quadratic, in the perturbation.
\end{remark}

A lower bound on the required sample size that does \emph{not} carry the linear-crossing $s^{-2}$ factor is thus a phenomenon of \emph{coarse-lattice} certification (where, between adjacent lattice coverages, the operative budget does not shrink with the within-cell floor position; Remark~\ref{rmk:lattice}) rather than of the relaxed budget; we record a per-floor lower bound exhibiting it directly.

\begin{lemma}[Finite-lattice constant-budget plateau]
\label{lem:plateau}
Fix $\alpha \in [0.2, 0.9]$, $\delta \le 1/8$, and a constant $\budget_0 > 0$.
Fix a floor cell $[\beta_-, \beta_+] \subseteq (0, 0.6]$ with $\budget_0 \le \tfrac18 \beta_- \min\{\alpha, 1-\alpha\}$.
For \emph{each} floor $\beta \in [\beta_-, \beta_+]$ there are worlds $W_{1,\beta}, W_{2,\beta} \in \cW_B$ and a finite nested-threshold lattice containing the coverage-$\beta$ threshold, whose accepted block carries risk budget $\budget_0$, on which no $(\alpha, \beta, \delta)$-valid lattice procedure certifies the feasible world $W_{1,\beta}$ with probability $\ge 1/2$ once $n \lesssim \beta/\budget_0^{2}$.
This per-$\beta$ two-point lower bound is $\Omega(\beta/\budget_0^{2})$ and \emph{contains no $s^{-2}$ factor}: holding the accepted block's budget fixed at $\budget_0$ (instead of letting it shrink to $\kappa s$ as on a refining grid) removes the $s^{-2}$ growth of the linear-crossing scale of Proposition~\ref{prop:exponent}(i). The relaxed slack of each construction is itself bounded below by a constant ($\betastar = 3\beta/2$ below, so $s = \beta/2$), so the absence of an $s^{-2}$ factor is genuine, not a vanishing-$s$ artifact. (We do \emph{not} claim a single fixed world or one lattice point serving the whole cell $[\beta_-, \beta_+]$; each $\beta$ uses its own $W_{1,\beta}$ and coverage-$\beta$ threshold.)
\end{lemma}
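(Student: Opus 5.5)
I would run the same Le~Cam two-point argument as the $n$-axis pair of Appendix~\ref{app:naxis}, with one change: the perturbed slice carries a \emph{fixed} budget $\budget_0$ instead of one that shrinks like $\kappa s$. Fix $\beta \in [\beta_-, \beta_+]$ and take $\cX = [0,1]$ with $P_X = Q_X = \mathrm{Unif}[0,1]$, so $w \equiv 1$ and the unlabeled channel carries no information. Losses are Bernoulli, and the lattice consists of identity-score thresholds. World $W_{1,\beta}$ is a three-block profile:
\begin{itemize}
\item a safe block $G = [0,\beta)$ with $\eta = \alpha - a$ and $a := \budget_0/\beta$;
\item a hot block $H = [\beta, 1]$ with $\eta = \alpha + a$.
\end{itemize}
Then $\budget_1(\beta) = a\beta = \budget_0$, and the budget crosses zero at $\betastar = 2\beta$. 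To get the stated $\betastar = 3\beta/2$ I would instead take the hot level $\alpha + 2a$, so that $\budget_1(\beta + u) = \budget_0 - 2au$ vanishes at $u = \beta/2$. The slack is then $s = \beta/2$, a constant fraction of $\beta$, as the lemma claims.

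The hypothesis $\budget_0 \le \tfrac18\beta_-\min\{\alpha,1-\alpha\}$ gives $a \le \min\{\alpha,1-\alpha\}/8$, so every Bernoulli level stays in a compact subinterval of $(0,1)$; for example $[0.1, 0.95]$ for $\alpha \in [0.2, 0.9]$. The lattice is any finite nested grid that contains the thresholds at $\beta$ and at $3\beta/2$. The coverage-$\beta$ threshold accepts exactly $G$, which carries budget $\budget_0$, so $W_{1,\beta}$ is feasible at $(\alpha,\beta)$ with a certifiable lattice point.

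World $W_{2,\beta}$ raises $\eta$ on the whole accepted block $G$ by $\Delta = 2\budget_0/\beta = 2a$, which keeps levels in range. As in the $W_2$-infeasibility paragraph of Appendix~\ref{app:naxis}:
\begin{itemize}
\item every threshold $t \ge \beta$ fully accepts $G$, so $\budget_2(t) = \budget_1(t) - 2\budget_0 \le -\budget_0 < 0$, using $\budget_1(t) \le \budget_0$ for all $t \ge \beta$;
\item every threshold $t < \beta$ fails the floor.
\end{itemize}
Hence every lattice point is invalid in $W_{2,\beta}$, and validity caps certification there at $\delta$.

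For indistinguishability, the two worlds differ only in $L \mid X \in G$. The per-sample KL is at most $C_0\beta\Delta^2 = 4C_0\budget_0^2/\beta$, with $C_0$ absolute because the levels are interior. If $n \le \beta/(32C_0\budget_0^2)$, the total KL is at most $1/8$, so $\TV \le 1/4$ by Pinsker. The Le~Cam step, with $1/2 - 1/4 > 1/8 \ge \delta$, then forces $\Prob_{W_{1,\beta}}(\cA = \nocert) \ge 1/2$.

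The resulting threshold $\beta/\budget_0^2$ contains no $s$. I do not expect a real obstacle here, because this is a near-verbatim rerun of the $n$-axis proof. The only care needed is bookkeeping: the hot level must realize $\betastar = 3\beta/2$ exactly, all Bernoulli levels must stay interior uniformly over the cell under the stated cap on $\budget_0$, and the budget argument must show that $\budget_1(t) \le \budget_0$ for every $t \ge \beta$, which holds because $\budget_1$ is decreasing on $H$.
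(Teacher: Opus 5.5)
Your proof is correct and is essentially the paper's argument: the same shiftless $n$-axis Le~Cam pair with the slice budget frozen at $\budget_0$, the perturbation $\Delta = 2\budget_0/\rho$ on the whole safe block, the bound $\budget_2(t) \le -\budget_0$ for every floor-clearing threshold, and per-sample KL $\asymp \budget_0^2/\beta$. The only difference is parameterization: you take a safe block of mass $\beta$ with levels $\alpha - \budget_0/\beta$ and $\alpha + 2\budget_0/\beta$, so the coverage-$\beta$ threshold accepts exactly $G$, whereas the paper takes $g = \beta/2$ with $a = 4\budget_0/\beta$ and $b = 2\budget_0/\beta$; both give $\betastar = 3\beta/2$ and the same $\Omega(\beta/\budget_0^2)$ threshold.
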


\begin{proof}
Fix $\beta \in [\beta_-, \beta_+]$ and instantiate the $n$-axis pair of Appendix~\ref{app:naxis} with the constant budget $\budget_0$ in place of $\kappa s$.
Take $\cX = [0,1]$, $P_X = Q_X = \mathrm{Unif}[0,1]$ (so $w \equiv 1$) and thresholds accepting $[0,t]$; both worlds have Bernoulli conditional losses $L \mid X \sim \Bern(\eta_j(X))$ and share $(P_X, Q_X)$. World $W_1$ sets $\eta_1 = \alpha - a$ on the deep-safe block $G = [0, g)$ ($g = \beta/2$) and $\eta_1 = \alpha + b$ on the hot block $H = [g,1]$, with $b = 2\budget_0/\beta$, $a = 4\budget_0/\beta$, so $\budget_1(\beta) = g(a - b) = \budget_0$; the restriction $\budget_0 \le \tfrac18\beta\min\{\alpha, 1-\alpha\}$ (which holds since $\beta \ge \beta_-$) gives $a \le \alpha/2$, $b \le (1-\alpha)/2$.
World $W_2$ differs from $W_1$ \emph{only} in the conditional loss on the slice $U = [0, g) = G$, namely $\eta_2 = \eta_1 + \Delta\,\mathbf 1_U$ (every other component identical), so $\rho = P_X(U) = Q_X(U) = g$ and $\Delta = 2\budget_0/\rho = 4\budget_0/\beta$.
Using $\alpha \in [0.2, 0.9]$:
\emph{(range)} $\eta_2|_U = \alpha - a + \Delta = \alpha$ and all Bernoulli levels lie in $[0.1, 0.95]$ ($\alpha - a \ge \alpha/2 \ge 0.1$, $\alpha + b \le 0.95$);
\emph{(global infeasibility)} for any threshold of coverage $t \ge \beta > g$ the slice is fully accepted, so $\budget_2(t) = \budget_1(t) - \Delta\rho = \budget_1(t) - 2\budget_0 \le \budget_0 - 2\budget_0 < 0$: every floor-clearing threshold is risk-infeasible in $W_2$;
\emph{(KL)} only the mass-$g$ slice changes and $w \equiv 1$, so the per-sample $\KL \asymp g\Delta^2 \asymp \budget_0^2/\beta$ and the $n$-sample $\KL \asymp n\budget_0^2/\beta$;
\emph{(Le~Cam)} $W_1$ is feasible at $\beta$ ($\budget_1(\beta) = \budget_0 > 0$, coverage $\ge \beta$); if a valid procedure certified it with probability $\ge 1/2$, then since validity forces refusal in $W_2$ with probability $\ge 1 - \delta \ge 7/8$, Pinsker ($\TV \le \sqrt{\KL_n/2} \le 1/4$ for $\KL_n \le 1/8$) yields a contradiction once $n \le c\,\beta/\budget_0^2$ for an absolute $c$ (indeed $P_1(\mathrm{cert}) \le P_2(\mathrm{cert}) + \TV \le \tfrac18 + \tfrac14 = \tfrac38 < \tfrac12$, ruling out equality too).
The pair is built per $\beta$: its $a, b, g, \Delta$ scale with $\beta$, and its relaxed frontier is $\betastar = g + ag/b = 3\beta/2$ (a constant relaxed slack $s = \beta/2$, never $\downarrow 0$). The conclusion is a \emph{per-floor} lower bound: for each $\beta \in [\beta_-, \beta_+]$ this two-point bound is $\Omega(\beta/\budget_0^2)$ with no $s^{-2}$ factor; we make no single-fixed-world or single-lattice-point claim across the cell.
\end{proof}

\paragraph{Resolution of the coarse-grid pilot exponent.}
Sweeping $\beta$ across the cells of a coarse lattice mixes \emph{linear} relaxed segments (Proposition~\ref{prop:exponent}(i), Le~Cam lower-bound scale $\gtrsim s^{-2}$) with \emph{constant-budget plateau} segments (Lemma~\ref{lem:plateau}, per-floor two-point lower-bound scale $\Omega(\beta/\budget_0^2)$ with no $s^{-2}$ factor), so a log--log fit across such a sweep can return a fitted slope strictly between $-2$ and $0$.
We regard this as a \emph{plausible} discreteness explanation for the intermediate exponent seen in an early coarse-lattice pilot (not a theorem about the regression coefficient, whose fitted value also depends on cell widths, sampling weights, and censoring), and the coarse-lattice required-$n$ behavior driving it is the archived pilot record (Appendix~\ref{app:experiments}: zero-margin cells with required-$n = \infty$).
The attribution was registered as a prediction before the refined run: on a continuum-threshold profile with regular margin the fitted exponent was predicted to approach $-2$; on a coarse lattice, to flatten within cells.
The refined registered family then returned slope $-2.002$ with 95\% CI $[-2.142, -1.862]$ inside the pre-registered band (\S\ref{sec:bite}), consistent with Proposition~\ref{prop:exponent}(i); we treat the intermediate-exponent attribution as a registered, plausible discreteness explanation rather than proven.
Zero-margin cells (required-$n = \infty$) are the extreme of the plateau phenomenon and are observed empirically (Appendix~\ref{app:experiments}).

\section{The Non-Derivability Analysis behind Table~\ref{tab:theory_comparison}}
\label{app:nonderiv}

Table~\ref{tab:theory_comparison} compresses a line-by-line analysis of whether each nearest verified prior line yields, by direct adaptation, any of the paper's five objects: the hard floor as a \emph{finite-sample} certified object under covariate shift, $\beta$-indexed lower bounds, the two-resource $(n, m)$ map, the localized accepted-region functional, and the nuisance-priced implementable upper bound.
This appendix expands each row.

\paragraph{SCoRE \citep{bai2026score}.}
SCoRE controls general selection-conditioned risks through weighted e-values, including a covariate-shift treatment with an estimated-weight robustness analysis.
Adding a floor test to SCoRE plausibly recovers a U1-style validity statement (this is the concession that opens \S\ref{sec:related}), but nothing in the framework produces a lower bound, a feasibility frontier, or rate theory: the robustness analysis prices estimated weights into the e-value, not into a certifiable-slack threshold, and no quantity indexed by $\beta$ appears.
The two-resource split cannot be read off either, because no result separates the labeled and unlabeled sample roles.

\paragraph{LEC \citep{wang2025lec} and BalanceRAG \citep{jia2026balancerag}.}
LEC owns the ratio linearization this paper inherits ($\RQ \le \alpha \iff \E[S(L - \alpha)] \le 0$); BalanceRAG owns cascaded two-dimensional lattice calibration with multiplicity accounting.
Both are exchangeable: no shift, hence no weights, no frontier in $Q$, and no $\beta$-indexed statement.
BalanceRAG's $\alpha$-infeasibility declaration (refusing when no lattice point passes) is an honest-refusal mechanism but not a coverage floor: it certifies nothing about $\CQ$, and no lower bound accompanies it.

\paragraph{Weighted and marginal conformal risk control under shift \citep{tibshirani2019conformal, zecchin2025generalization}.}
This family owns the weight machinery: weighted exchangeability, weighted CRC generalization bounds, and the estimated-weight repair toolkit \citep{yang2024doubly, kato2023double, wang2026weight, laghuvarapu2026kmmcp}.
The certified estimand is marginal risk or coverage (not the selection-conditioned ratio), and no floor constraint co-exists with the risk statement.
The complexity quantities that appear are global ESS-type functionals of $w$; the localized accepted-region functional cannot be recovered because no accepted set is distinguished by the analysis.
No lower bound appears in the cited works of this line (we make no claim about the broader literature).

\paragraph{High-probability risk control under shift \citep{almeida2025high}.}
We separate this work from the marginal line above, because its estimand reaches further than that grouping suggests.
Theorem 9 gives high-probability control under target covariate shift for a \emph{conditional} risk $\E_{Q}[L(X, Y; \lambda) \mid (X, Y) \in A(\lambda)]$, and instantiates the result for the false discovery rate; taking $A(\lambda)$ to be the accepted set makes that object exactly the selection-conditioned ratio $\E[L S_\lambda]/\E[S_\lambda]$ we certify.
So the ratio estimand under shift is within its reach, and we mark that capability present in Table~\ref{tab:theory_comparison}.
What does not follow is the rest of the map: the risk statement stands alone, with no hard coverage floor certified jointly against it, hence no second resource and no $\beta$-indexed lower bound; complexity is not reported through a localized accepted-region functional; and the estimated-weight treatment proceeds under an assumed weight accuracy rather than a finite-sample weight-estimation-error guarantee, so the nuisance is not priced as an explicit increment matched from below.
The non-derivability claim for this line is therefore about the floor-indexed map, not about the ratio estimand.

\paragraph{Non-monotone CRC \citep{aldirawi2026conformal}.}
The nearest line in distribution-free lower-bound theory proves a matching minimax lower bound for conformal risk control under non-monotone losses, using a Fano argument over a finite tuning-parameter grid.
It is single-axis (excess risk in $n$) and floor-free; the analysis is extended to distribution shift by importance weighting, but no coverage-floor parameter appears, so there is no frontier, no second resource, and no unlabeled axis.
Remark~\ref{rmk:vsnonmonotone} details why neither statement implies the other despite the shared minimax toolkit.

\paragraph{Reject-option classifiers \citep{franc2023optimal}; selective QA under shift \citep{kamath2020selective}.}
The reject-option line poses the floor as a \emph{population} object: the bounded-abstention model fixes a coverage floor and minimizes selective risk, the bounded-improvement model fixes risk and maximizes coverage, and the optimal strategy is characterized (a Bayes classifier with a randomized selection function).
It supplies no finite-sample certificate, no covariate-shift weighting, no $\beta$-indexed lower bound, and no labeled/unlabeled resource split; it is the population precedent for the \emph{object}, not for any of the five finite-sample contributions here.
Selective QA under domain shift \citep{kamath2020selective} studies exactly the deployment regime of our real-workload audit (abstain under shift to keep accuracy high), but with a learned calibrator and no validity guarantee or floor certificate, so it motivates the audit rather than competing with its certificate.

\paragraph{Inherited machinery, credited.}
We inherit and do not claim: weighted e-values and learn-then-test multiplicity \citep{angelopoulos2021learn, laufergoldshtein2023efficiently}, cascade multiplicity accounting \citep{jia2026balancerag}, the ratio linearization \citep{wang2025lec}, empirical-Bernstein and betting-based concentration \citep{maurer2009empirical, waudbysmith2023estimating}, and the Le~Cam two-point technique \citep{tsybakov2009introduction}.
What is new is not the floor as a \emph{population} object (the reject-option line \citep{franc2023optimal} already poses guaranteed-coverage selective risk) but the floor as a \emph{finite-sample} certified object under covariate shift, and everything the floor then creates: the frontier, the $\beta$-indexed two-resource map, the localized functional, and the budgeted-nuisance implementable upper bound.

\section{Proofs for the Model-\texorpdfstring{B$'$}{B'} Certificate}
\label{app:bprime}

This appendix proves Theorems~\ref{thm:U1} and~\ref{thm:U2} and Proposition~\ref{prop:I1}, and records the separation of the budgeted certificate from the trivial worst-case-box baseline.
The setting is Assumption~\ref{ass:bprime} with the four-block splitting of Algorithm~\ref{alg:bprime}: the labeled source sample splits into $D_w^P$ (size $n_w$) and $D_r$ (size $n_r$); the unlabeled target sample splits into $D_w^Q$ (size $m_w$) and $D_f$ (size $m_f$); all four blocks are mutually independent, and only covariates are used from $D_w^P$.
Throughout, $A_\lambda$ denotes the union of cells with $\Sacc = 1$, $K_\lambda = |A_\lambda|_\cK$ the number of accepted cells, and $Z_i^\lambda = \hat w(X_i)\, \Sacc(X_i)\, (L_i - \alpha)$ the risk-test variable on $D_r$, with range contained in $[-\alpha B, (1-\alpha)B]$ of width $\le B$.
Standing parameter domain: $\delta \le 1/8$ (the standing confidence of Theorem~\ref{thm:law}), $B \ge 2$, $\alpha \in [0,1]$ (and $0 < \alpha < 1$ wherever a betting denominator appears, Proposition~\ref{prop:score}), $s, \kappa > 0$, $0 < c_0 \le 1$, and all empirical-Bernstein sample sizes $n_r, m_f, n_w, m_w \ge 2$ (the Maurer--Pontil endpoints carry $N - 1$ denominators). The power-side radii carry $\log(2/\delta_r'') = \log(4/\delta_f') = \log(16|\lat|/\delta)$; under $\delta \le 1/8$ (so $\log(|\lat|/\delta) \ge \log 8$) these are $\le \tfrac{7}{3}\log(|\lat|/\delta)$, absorbed into the displayed absolute constant $C$, so conditions written with $\log(|\lat|/\delta)$ are correct up to $C$.
The budgets are
\begin{equation}
\label{eq:nuisancebudget}
\nb \;=\; 2\Big[\sqrt{\tfrac{K_\lambda\, \tilde q(A_\lambda)}{m_w}} + \sqrt{\tfrac{2 L_w}{m_w}}\Big]
\;+\; 2B\Big[\sqrt{\tfrac{K_\lambda\, \tilde p(A_\lambda)}{n_w}} + \sqrt{\tfrac{2 L_w}{n_w}}\Big],
\qquad
L_w := \log\tfrac{4(|\lat| + 1)}{\delta_w},
\end{equation}
with the plug-in corrections $\tilde q(A) := 2\hat q(A) + 4L_w/m_w$ and $\tilde p(A) := 2\hat p(A) + 4L_w/n_w$; Proposition~\ref{prop:I1}(iii) guarantees $q(A) \le \tilde q(A)$ and $p(A) \le \tilde p(A)$ on the same event, and the simultaneity log-factor $L_w$ is carried in full.
We write $r_\lambda := \E_P[|\hat w - w|\,\Sacc]$ for the \emph{true} localized weight error and keep $\nb$ (the right-hand side of~\eqref{eq:nuisancebudget}) for the \emph{computable radius}; the good event is $E_w = \{\forall\lambda:\ r_\lambda \le \nb\}$, and every computable risk test uses the margin $-\nb$, never the unknown true error $r_\lambda$.

\subsection{Proof of Proposition~\ref{prop:I1} (stratified-shift histogram estimator)}

\begin{propositionrestate}[I1, restated]
Under Assumption~\ref{ass:bprime}, with $\hat p, \hat q$ the empirical cell frequencies from $n_w$ source and $m_w$ target draws and $\hat w_k = \clip(\hat q_k/\hat p_k, 0, B)$ (with $\hat w_k := B$ when $\hat p_k = 0$, as in Algorithm~\ref{alg:bprime}, so $\hat w$ is defined on every outcome): with probability $\ge 1 - \delta_w$, simultaneously for every region $A \in \{A_\lambda\}_{\lambda \in \lat} \cup \{\cX\}$,
\begin{equation*}
\E_P\big[\,|\hat w - w|\, \mathbf{1}_A\,\big]
\;=\; \sum_{k :\, \cX_k \subseteq A} p_k\, |\hat w_k - w_k|
\;\le\; 2\Big[\sqrt{\tfrac{|A|_\cK\, q(A)}{m_w}} + \sqrt{\tfrac{2 L_w}{m_w}}\Big]
+ 2B\Big[\sqrt{\tfrac{|A|_\cK\, p(A)}{n_w}} + \sqrt{\tfrac{2 L_w}{n_w}}\Big],
\end{equation*}
and (iii) the same bound holds with $(p, q)$ replaced by $(\tilde p, \tilde q)$, making the budgets~\eqref{eq:nuisancebudget} computable.
\end{propositionrestate}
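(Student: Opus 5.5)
The plan is a cellwise linearization. We reduce the weighted $\ell^1$ error on $A$ to two empirical-histogram $\ell^1$ deviations, one for the target masses and one for the source masses, and control each with an expectation bound plus a bounded-difference concentration step. The confidence $\delta_w$ is split into four pieces, which is the $4$ in $L_w$. The pieces are: a multiplicative lower-tail event for the source cell masses, the target $\ell^1$ deviations, the source $\ell^1$ deviations, and the plug-in mass corrections of (iii). Each piece is unioned over the $|\lat|+1$ regions $\{A_\lambda\}\cup\{\cX\}$, or over the $K$ cells for the first one.

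\emph{Step 1 (denominator control).} We first place ourselves on the event $E_p=\{\hat p_k \ge p_k/2\ \forall k\}$. By the multiplicative Chernoff lower tail, $\Prob(\hat p_k < p_k/2) \le \exp(-n_w p_k/8)$. Assumption~\ref{ass:bprime}(ii), $p_k \ge 8\log(4K/\delta_w)/n_w$, together with a union over the $K$ cells, gives $\Prob(E_p^c)\le \delta_w/4$. On $E_p$ every $\hat p_k>0$, so the convention $\hat w_k:=B$ is never triggered.

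\emph{Step 2 (cellwise linearization).} Since $w_k\in[0,B]$ and clipping to $[0,B]$ is a contraction onto a set containing $w_k$, we have $|\hat w_k-w_k|\le |\hat q_k/\hat p_k - w_k| = |\hat q_k - w_k\hat p_k|/\hat p_k$. Writing $\hat q_k - w_k\hat p_k = (\hat q_k-q_k) + w_k(p_k-\hat p_k)$ and using $p_k/\hat p_k\le 2$ on $E_p$, we get
\[
p_k|\hat w_k-w_k| \le 2|\hat q_k-q_k| + 2B|\hat p_k-p_k| .
\]
The equality $\E_P[|\hat w-w|\mathbf 1_A]=\sum_{\cX_k\subseteq A}p_k|\hat w_k-w_k|$ holds because $A$ is a union of cells (Assumption~\ref{ass:bprime}(iii)) and $w$, $\hat w$ are cell-constant. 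It then suffices to bound $\sum_{k\in A}|\hat q_k-q_k|$ and $\sum_{k\in A}|\hat p_k-p_k|$.

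\emph{Step 3 (localized histogram deviation).} For the target term, Jensen gives $\E|\hat q_k-q_k|\le\sqrt{q_k/m_w}$. Cauchy--Schwarz over the $|A|_\cK$ cells then gives $\E\sum_{k\in A}|\hat q_k-q_k|\le\sqrt{|A|_\cK\,q(A)/m_w}$; this is the localization, since only accepted-cell mass enters. The map $D_w^Q\mapsto\sum_{k\in A}|\hat q_k-q_k|$ has bounded differences $2/m_w$. McDiarmid therefore gives a deviation of $\sqrt{2\log(1/\delta'')/m_w}$, and we take $\delta''=\delta_w/(4(|\lat|+1))$ so that $\log(1/\delta'')=L_w$. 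The source term is identical with $n_w$ in place of $m_w$ and the factor $B$ carried along. Combining with Step 2 yields the displayed bound.

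\emph{Step 4, part (iii).} For each region, apply a Bernstein (or relative-Chernoff) upper-tail bound to the binomial $\hat q(A)$: with probability $1-\delta''$, $q(A)\le \hat q(A)+\sqrt{2q(A)L_w/m_w}+L_w/m_w$. Solving this quadratic in $\sqrt{q(A)}$ gives $q(A)\le 2\hat q(A)+4L_w/m_w=\tilde q(A)$, and likewise $p(A)\le\tilde p(A)$. The bound is monotone in $q(A)$ and $p(A)$, so substituting the plug-in masses keeps it valid.

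The main obstacle is bookkeeping rather than any single estimate. The four unions must fit exactly into $\delta_w$ with the stated $L_w$, and the regime $\hat p_k<p_k/2$ must be excluded without paying an extra $K$-dependent log outside Assumption~\ref{ass:bprime}(ii). If the Chernoff constant $8$ turns out slightly too tight, I would first replace it with the lower-tail form $\exp(-n_wp_k/8)$ used above and check that this is exactly what (ii) delivers.
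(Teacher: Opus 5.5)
Your Steps 1--3 match the paper's proof essentially line for line:
\begin{itemize}
\item the same multiplicative Chernoff event at level $\delta_w/(4K)$ per cell;
\item the same factor-$2$ linearization $p_k|\hat w_k-w_k|\le 2|\hat q_k-q_k|+2B|\hat p_k-p_k|$ on $\{\hat p_k\ge p_k/2\}$;
\item the same Jensen--Cauchy--Schwarz expectation bound, followed by McDiarmid with differences $2/m_w$ (resp.\ $2/n_w$) at level $\delta''=\delta_w/(4(|\lat|+1))$ per region and marginal.
\end{itemize}

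The gap is in the confidence accounting for part (iii), which is exactly the bookkeeping you flagged. Your fourth piece contains $2(|\lat|+1)$ events: $q(A)\le\tilde q(A)$ and $p(A)\le\tilde p(A)$ for each region. Your Step 4 charges each of them $\delta''$, so that piece costs $\delta_w/2$, not $\delta_w/4$. The total failure probability is then $\tfrac34\delta_w+\tfrac12\delta_w=\tfrac54\delta_w$, and the claimed probability $\ge 1-\delta_w$ does not follow. Simply halving the per-event level does not recover $\tilde q(A)$ exactly either. Solving your Bernstein inequality as you did (with its additive $L_w/m_w$ term) then gives $2\hat q(A)+4(L_w+\log 2)/m_w$, which exceeds $\tilde q(A)$.

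The missing observation is that the plug-in inequality has a great deal of slack.
\begin{itemize}
\item Its failure event $\{q(A)>2\hat q(A)+4L_w/m_w\}$ is a lower-tail event for $\hat q(A)$. With $Y=m_w\hat q(A)\sim\mathrm{Bin}(m_w,q(A))$ and $u=m_wq(A)/2+2L_w$, it is the event $\{Y<m_wq(A)-u\}$.
\item It is empty when $m_wq(A)<4L_w$.
\item Otherwise $u\le m_wq(A)$. The Chernoff lower tail gives the bound $\exp(-u^2/(2m_wq(A)))$, and AM--GM gives $u^2\ge 4m_wq(A)L_w$. Together these bound the event's probability by $e^{-2L_w}=\delta''^2$, the square of the per-event level.
\end{itemize}
Hence all $2(|\lat|+1)$ plug-in events together cost at most $2(|\lat|+1)\delta''^2=\delta_w^2/(8(|\lat|+1))\le\delta_w/16$. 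The total stays below $\tfrac34\delta_w+\tfrac1{16}\delta_w<\delta_w$. This is how the paper closes the argument, and with this replacement your proof goes through.
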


\begin{proof}
\emph{Step 1 (good event).}
By multiplicative Chernoff, $\Prob(\hat p_k < p_k/2) \le \exp(-n_w p_k/8) \le \delta_w/(4K)$ under Assumption~\ref{ass:bprime}(ii); a union over $k$ gives $\hat p_k \ge p_k/2$ for all cells with probability $\ge 1 - \delta_w/4$.

\emph{Step 2 (decomposition).}
On that event, for each $k$, since clipping to $[0, B] \ni w_k$ can only help,
\begin{equation*}
p_k |\hat w_k - w_k|
\;\le\; p_k \Big|\frac{\hat q_k}{\hat p_k} - \frac{q_k}{p_k}\Big|
\;=\; \frac{p_k}{\hat p_k} \Big|(\hat q_k - q_k) - w_k (\hat p_k - p_k)\Big|
\;\le\; 2 |\hat q_k - q_k| + 2B |\hat p_k - p_k|.
\end{equation*}

\emph{Step 3 (simultaneous localized $\ell^1$ concentration).}
For a fixed union-of-cells region $A$ (the bound below is then made simultaneous over the stated family), write $D_A = \sum_{k \in A} |\hat q_k - q_k|$.
By Jensen and Cauchy--Schwarz, $\E D_A \le \sum_{k \in A} \sqrt{q_k/m_w} \le \sqrt{|A|_\cK\, q(A)/m_w}$.
$D_A$ has bounded differences $2/m_w$ in each target draw, so McDiarmid's inequality gives $D_A \le \E D_A + \sqrt{2 \log(1/\varepsilon)/m_w}$ with probability $\ge 1 - \varepsilon$ (we set $\varepsilon = \delta_w/(4(|\lat|+1))$ per region below).
To keep both localization and simultaneity rigorous, apply McDiarmid separately to the $|\lat| + 1$ regions $\{A_\lambda\}_{\lambda \in \lat} \cup \{\cX\}$ at level $\delta_w/(4(|\lat| + 1))$ each: the deviation term becomes $\sqrt{2 L_w/m_w}$ with $L_w = \log(4(|\lat|+1)/\delta_w)$, which is exactly the form carried in~\eqref{eq:nuisancebudget}.
The same argument applies to the source frequencies with prefactor $2B$ and denominator $n_w$.

\emph{Step 4 (combine).}
Sum the Step-2 bound over $k \in A$ and apply the Step-3 bounds to the two $\ell^1$ terms.

\emph{(iii) Plug-in correction.}
For $Y = m_w \hat q(A) \sim \mathrm{Bin}(m_w, q(A))$, the event $\{q(A) > 2\hat q(A) + 4L_w/m_w\}$ equals $\{Y < m_w q(A)/2 - 2L_w\}$.
If $m_w q(A) < 4L_w$ this event is empty (its threshold $m_w q(A)/2 - 2L_w < 0$ while $Y \ge 0$), so assume $m_w q(A) \ge 4L_w$, whence $u := m_w q/2 + 2L_w \le m_w q$ lies in the lower-tail domain.
By the Bernstein--Chernoff lower tail $\Prob(Y \le m_w q - u) \le \exp(-u^2/(2 m_w q))$, and AM--GM giving $u^2 \ge 4 \cdot (m_w q/2)(2 L_w) = 4 m_w q L_w$, the probability is $\le e^{-2L_w} \le (\delta_w/(4(|\lat|+1)))^2$, comfortably within the per-region allocation; likewise for $p(A)$ with $n_w$ (the event empty when $n_w p(A) < 4L_w$).
Call $G_w$ the intersection of the full collection of good events (Chernoff good event, per-region McDiarmid for two marginals, per-region plug-in for two marginals); its complement stays within $\delta_w$, i.e.\ $\Prob(G_w) \ge 1 - \delta_w$, and on $G_w$ \emph{all} of the Step-1, Step-3, and part-(iii) deviation bounds (in particular the two-sided cellwise controls $|\hat q(A) - q(A)| \le D^q_A$, $|\hat p(A) - p(A)| \le D^p_A$ used by the power-side bridge below) hold simultaneously.
Substituting $\tilde q(A) \ge q(A)$ and $\tilde p(A) \ge p(A)$ into the Step-3/4 bound yields computable budgets dominating the population bound on the same event.
\end{proof}

\emph{Rate remark.}
Globally ($A = \cX$) the computable radius scales as $O\big(\sqrt{(K + L_w)/m_w} + B\sqrt{(K + L_w)/n_w}\big)$ in the regime $m_w, n_w \gtrsim \min\{K, L_w\}$ (otherwise the plug-in cross terms $\sqrt{K L_w}/m_w$ and $B\sqrt{K L_w}/n_w$, from $\tilde q, \tilde p$ at $A = \cX$, add; an upper bound, with the $L_w$ simultaneity term shown explicitly; we reserve $\asymp$ for a matched two-sided rate), the $\sqrt{K/N}$ rate of a $K$-parameter shift model up to the confidence term; the localized budgets shrink with the accepted masses $q(A_\lambda), p(A_\lambda)$: the nuisance, like the variance, is an accepted-region quantity. The \emph{true} error $r_\lambda$ can be smaller (even zero for matched histograms); only the radius $\nb$ is rate-controlled.

\begin{lemma}[Empirical-Bernstein endpoint with the biased variance]
\label{lem:ebbiased}
Let $Z, Z_1, \dots, Z_n$ be i.i.d.\ in $[0,1]$ with $n \ge 2$, let $\gamma \in (0,1)$, put $L := \log(2/\gamma)$, and let $\hat V := \tfrac1n \sum_{i}(Z_i - \bar Z)^2$ be the \emph{biased} empirical variance. Then with probability at least $1 - \gamma$,
\begin{equation}
\label{eq:ebbiased}
\E[Z] \;\le\; \bar Z + \sqrt{\tfrac{2 \hat V L}{n}} + \tfrac{7 L}{3(n-1)} .
\end{equation}
\end{lemma}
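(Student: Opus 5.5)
The plan is to derive \eqref{eq:ebbiased} from the proof of the Maurer--Pontil bound \citep{maurer2009empirical} rather than from its final statement. The statement alone does not suffice. Maurer--Pontil's Theorem~4 uses the \emph{unbiased} variance $V_n = \tfrac{n}{n-1}\hat V$, so $\hat V \le V_n$. Replacing $V_n$ by $\hat V$ therefore makes the bound \emph{stronger}, and that is the direction we cannot get for free.

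I would split the failure budget as $\gamma/2 + \gamma/2$ and combine two ingredients. The first is Bennett's one-sided inequality: with probability $\ge 1-\gamma/2$, $\E Z \le \bar Z + \sqrt{2\sigma^2 L/n} + L/(3n)$, where $\sigma^2 = \Var Z$. The second is the self-bounding concentration of the sample standard deviation (Maurer--Pontil, Theorem~10): with probability $\ge 1-\gamma/2$, $\sigma \le \sqrt{V_n} + \sqrt{2L/(n-1)}$. Rescaling by $\sqrt{(n-1)/n}$ turns this into a statement about the biased variance, $\sqrt{(n-1)\sigma^2/n} \le \sqrt{\hat V} + \sqrt{2L/n}$.

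Substituting into Bennett gives $\sqrt{2\sigma^2L/n} \le \sqrt{2\hat V L/(n-1)} + 2L/(n-1)$. The range part then totals $L/(3n) + 2L/(n-1) \le 7L/(3(n-1))$, which leaves a slack of $L/(3n(n-1))$. The remaining discrepancy is the leading coefficient: the substitution produces $\sqrt{2\hat V L/(n-1)}$, whereas \eqref{eq:ebbiased} demands $\sqrt{2\hat V L/n}$. The excess is at most $\sqrt{2\hat V L}\,(n-1)^{-1/2}\big(1-\sqrt{1-1/n}\big) \le \sqrt{2L}\,/\big(2n\sqrt{n-1}\big)$, using $\hat V \le 1/4$ and $1-\sqrt{1-x} \le x$.

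The main obstacle is absorbing this excess into the available slack $L/(3n(n-1))$. That absorption needs roughly $\sqrt{L} \gtrsim \sqrt{n}$, which fails for small $L$ and large $n$. My first attempt would be to sharpen the variance step so it produces the $1/n$ normalization directly. Concretely, I would apply the self-bounding inequality to $f = n\hat V$, which has the same self-bounding constants as $(n-1)V_n$, and run Bennett with $\sigma^2$ controlled through $\E[\hat V] = \tfrac{n-1}{n}\sigma^2$. I would then check whether the factor $n/(n-1)$ can be moved from the root term into the $7L/(3(n-1))$ term, using the fact that $\tfrac73 L/(n-1)$ exceeds Bennett's own $L/(3n)$ by a full $2L/(n-1)$.

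If that bookkeeping still does not close for every $L$, the fallback is to keep the constant structure of \eqref{eq:ebbiased} but enlarge only the additive term by an absolute factor. Every downstream use (Lemmas~\ref{lem:overshoot} and \ref{lem:lcbdeficit}) absorbs such a factor into $C$, so this weakening would not propagate. I would state that weakened form explicitly if it is what the proof delivers.
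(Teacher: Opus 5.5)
There is a genuine gap. The chain you set up cannot reach the stated constants, and neither of your two exits repairs it.

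With the symmetric split, Bernstein runs at logarithm $L$, so its leading term is $\sqrt{2\sigma^2L/n}$. The Maurer--Pontil bound controls $\sigma$ only through $\sqrt{V_n}=\sqrt{n\hat V/(n-1)}$ at leading order, which leaves $\sqrt{2\hat VL/(n-1)}$. The excess over the target is $\sqrt{2\hat VL}\,\bigl((n-1)^{-1/2}-n^{-1/2}\bigr)\approx\sqrt{2\hat VL}/(2n^{3/2})$, which is linear in $\sqrt{\hat V}$. The room left under $7L/(3(n-1))$ is only of order $L/n^2$. So the absorption fails whenever $\hat V\gg L/n$ (e.g.\ $\Bern(1/2)$ data, fixed $\gamma$, large $n$), as you suspected.

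Your first remedy does not change this. Since $n\hat V=(n-1)V_n$ identically, the self-bounding argument applied to $n\hat V$ returns exactly the rescaled inequality $\sqrt{(n-1)/n}\,\sigma\le\sqrt{\hat V}+\sqrt{2L/n}$ you already have. Feeding it into Bernstein at logarithm $L$ reproduces the same $\sqrt{n/(n-1)}$ loss.

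Your fallback proves a different, weaker lemma, and that is not harmless here. The lemma exists to certify the exact closed form $\sqrt{2\hat VL/n}+7L/(3(n-1))$ that the released code evaluates for its cell-mass bands, risk UCB, and floor LCB (Remark~\ref{rmk:releasedradius}). Inflating the additive constant would no longer validate the implementation as run.

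Your instinct to move the $n/(n-1)$ from the root term into the additive term is right, but the lever is the confidence split, not the variance step. The leading coefficient is $\sqrt{2V_na/n}=\sqrt{2\hat Va/(n-1)}$ with $a=\log(1/\delta_A)$. It matches $\sqrt{2\hat VL/n}$ only if $a\le L(n-1)/n$, i.e.\ only if Bernstein receives \emph{more} than $\gamma/2$. The paper's argument runs as follows.
\begin{itemize}
\item First dispose of the case $7L/(3(n-1))\ge1$, where the right side is at least $1\ge\E[Z]$. Otherwise $x:=L/n<3/7<\log 2$.
\item Give Bernstein $\delta_A=e^{-L+x}$ and the variance bound $\delta_B=e^{-L}(2-e^{x})>0$; these sum to $\gamma$.
\item Then $a=L(n-1)/n$, so the leading term is exactly $\sqrt{2\hat VL/n}$, and the cost of the tilt lands only in the $\hat V$-free additive terms.
\item For $b=\log(1/\delta_B)$, the inequality $-\log(2-e^x)\le 2x$ on $[0,3/7]$ (convexity plus the endpoint check) gives $b\le L(1+2/n)$, hence $\sqrt{b/L}\le 1+1/n$.
\item Therefore $2\sqrt{ab/(n(n-1))}+a/(3n)=2\sqrt{Lb}/n+L(n-1)/(3n^2)\le 2L(n+1)/n^2+L(n-1)/(3n^2)=L(7n+5)/(3n^2)<7L/(3(n-1))$.
\end{itemize}
With this split the stated constants hold verbatim, and no weakening is needed.
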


\begin{proof}
If $7L/(3(n-1)) \ge 1$ the right-hand side is at least $\bar Z + 1 \ge \E[Z]$ and there is nothing to prove; assume $7L/(3(n-1)) < 1$ and set $x := L/n < L/(n-1) < 3/7$.
Split the confidence \emph{asymmetrically}: $\delta_A := e^{-L + x}$ and $\delta_B := e^{-L}(2 - e^{x})$, both positive since $x < 3/7 < \log 2$, with $\delta_A + \delta_B = 2e^{-L} = \gamma$.
Write $a := \log(1/\delta_A) = L(n-1)/n$ and $b := \log(1/\delta_B) = L - \log(2 - e^{x})$.
Bernstein's inequality for variables of range $1$ at level $\delta_A$ gives $\E[Z] - \bar Z \le \sqrt{2\sigma^2 a/n} + a/(3n)$ with $\sigma^2 = \Var(Z)$; the variance bound of \citet{maurer2009empirical} at level $\delta_B$ gives $\sigma \le \sqrt{V} + \sqrt{2b/(n-1)}$ with $V := \tfrac{1}{n-1}\sum_i (Z_i - \bar Z)^2 = \tfrac{n}{n-1}\hat V$. A union bound costs $\delta_A + \delta_B = \gamma$, and on the intersection
\begin{equation*}
\E[Z] - \bar Z \;\le\; \sqrt{\tfrac{2 V a}{n}} + 2\sqrt{\tfrac{ab}{n(n-1)}} + \tfrac{a}{3n},
\qquad \text{where} \qquad
\tfrac{2 V a}{n} \;=\; \tfrac{2}{n}\cdot\tfrac{n \hat V}{n-1}\cdot\tfrac{L(n-1)}{n} \;=\; \tfrac{2\hat V L}{n},
\end{equation*}
so the root term is exactly the one in~\eqref{eq:ebbiased}: the asymmetric split is chosen precisely to make $a$ cancel the $n/(n-1)$ of the unbiased variance.
It remains to bound the two additive terms by $7L/(3(n-1))$, i.e.\ to show $C := 2\tfrac{n-1}{n}\sqrt{b/L} + \tfrac{(n-1)^2}{3n^2} \le \tfrac73$.
Since $-\log(2 - e^{x}) \le 2x$ on $[0, 3/7]$ we have $b \le L + 2x = L(1 + 2/n)$, hence $\sqrt{b/L} \le \sqrt{1 + 2/n} \le 1 + 1/n$ and
\begin{equation*}
C \;\le\; 2\tfrac{n-1}{n}\big(1 + \tfrac1n\big) + \tfrac{(n-1)^2}{3n^2}
\;=\; \tfrac{7n^2 - 2n - 5}{3n^2} \;=\; \tfrac73 - \tfrac{2}{3n} - \tfrac{5}{3n^2} \;<\; \tfrac73 . \qedhere
\end{equation*}
\end{proof}

\begin{remark}[Released kernel: the registered per-cell radius]
\label{rmk:releasedradius}
The radius analyzed above, \eqref{eq:nuisancebudget}, unions over the $|\lat| + 1$ \emph{regions} $\{A_\lambda\}_{\lambda \in \lat} \cup \{\cX\}$, two McDiarmid events per region (one per marginal), which is the source of its $L_w = \log(4(|\lat| + 1)/\delta_w)$.
The released implementation computes a different registered closed form, written $\rho^{\vtwo}_\lambda$, which unions over \emph{cells} instead.
With per-event confidence $\gamma_w = \delta_w/(4K)$ and $L_w^{\vtwo} := \log(2/\gamma_w) = \log(8K/\delta_w)$, set the empirical-Bernstein mass bands
\begin{equation*}
e^P_k = \sqrt{\tfrac{2 \hat p_k (1 - \hat p_k) L_w^{\vtwo}}{n_w}} + \tfrac{7 L_w^{\vtwo}}{3(n_w - 1)},
\qquad
e^Q_k = \sqrt{\tfrac{2 \hat q_k (1 - \hat q_k) L_w^{\vtwo}}{m_w}} + \tfrac{7 L_w^{\vtwo}}{3(m_w - 1)},
\end{equation*}
put $\tilde p_k := \max\{\hat p_k - e^P_k,\, 0\}$ and let the per-cell ratio error be $\mathrm{err}_k := B$ when $\tilde p_k = 0$ and $\mathrm{err}_k := \min\{B,\ (e^Q_k + \hat w_k e^P_k)/\tilde p_k\}$ otherwise; then
\begin{equation}
\label{eq:nuisancebudgetv2}
\rho^{\vtwo}_\lambda \;=\; \sum_{k \,:\, \cX_k \cap A_\lambda \neq \emptyset} (\hat p_k + e^P_k)\, \mathrm{err}_k .
\end{equation}

\emph{The good event.}
Let $G^{\vtwo}_w$ be the event that all $4K$ one-sided mass deviations obey their bands ($K$ cells $\times$ two marginals $\times$ two sides, at level $\gamma_w$ each, so the union costs $4K\gamma_w = \delta_w$).
Cell indicators are $[0,1]$-valued and the bands above are exactly the endpoint of Lemma~\ref{lem:ebbiased} at $\gamma_w$, evaluated at the biased empirical variance $\hat p_k(1 - \hat p_k)$ that the released code uses; that lemma is stated for this normalization, so per cell and by the union, $\Prob(G^{\vtwo}_w) \ge 1 - \delta_w$.
The same lemma covers the released risk UCB and floor LCB, which use the same $(1/n)$ convention.

\emph{Domination on $G^{\vtwo}_w$.}
Fix a cell $k$ and write $u_k = \hat q_k/\hat p_k$ when $\hat p_k > 0$.
If $\tilde p_k = 0$ (in particular when $\hat p_k = 0$, where the algorithm sets $\hat w_k = B$) the branch $\mathrm{err}_k = B$ applies and $|\hat w_k - w_k| \le B$ because $w_k, \hat w_k \in [0, B]$.
Otherwise $\hat p_k > 0$, and there are two cases.
If $u_k \le B$ then $\hat w_k = u_k$, and the identity $\hat q_k p_k - q_k \hat p_k = \hat p_k(\hat q_k - q_k) + \hat q_k(p_k - \hat p_k)$ gives $|\hat w_k - w_k| \le (e^Q_k + \hat w_k e^P_k)/p_k$.
If $u_k > B$ then $\hat w_k = B \ge w_k$ (Model B$'$ assumes $w \le B$) and $B\hat p_k - \hat q_k < 0$, so $p_k(B - w_k) = Bp_k - q_k \le B|p_k - \hat p_k| + |\hat q_k - q_k| \le Be^P_k + e^Q_k$, giving the same bound.
In both cases $p_k \ge \tilde p_k$ on $G^{\vtwo}_w$, so $|\hat w_k - w_k| \le (e^Q_k + \hat w_k e^P_k)/\tilde p_k$, and since $|\hat w_k - w_k| \le B$ always, $|\hat w_k - w_k| \le \mathrm{err}_k$ for every cell.
Because $\hat w$ and $w$ are constant on cells, and $P(\cX_k \cap A_\lambda) \le p_k \le \hat p_k + e^P_k$ on $G^{\vtwo}_w$,
\begin{equation*}
r_\lambda \;=\; \sum_{k \,:\, \cX_k \cap A_\lambda \neq \emptyset} P(\cX_k \cap A_\lambda)\, |\hat w_k - w_k| \;\le\; \rho^{\vtwo}_\lambda .
\end{equation*}
This covers the union-of-cells acceptance regions Assumption~\ref{ass:bprime}(iii) requires and, as an extension beyond that assumption, lattices whose $A_\lambda$ cuts cell interiors: each intersecting cell is charged its full upper mass, which over-counts and is therefore conservative.

\emph{What transfers, and what does not.}
Theorem~\ref{thm:U1} uses the radius only through the domination event $E_w = \{\forall \lambda:\ r_\lambda \le \nb\}$ at budget $\delta_w$, the power-side bridge of Theorem~\ref{thm:U2} being the step that needs the fuller good event $G_w$; since $G^{\vtwo}_w$ supplies exactly $E_w$ at the same budget, the validity statement holds verbatim with $\nb$ replaced by $\rho^{\vtwo}_\lambda$, and the violation counts we report for the B$'$ arm, and for the SCoRE arm that shares the radius, are counts for the released radius.
Because $G^{\vtwo}_w$ is \emph{cellwise}, this domination is simultaneous over every union of cells at once, hence over all of $\lat$ with no per-region union; that is why $L_w^{\vtwo}$ is indexed by $K$ rather than by $|\lat|$.
Neither radius dominates the other in general: in the well-resolved unsaturated regime the cellwise root terms of $\rho^{\vtwo}_\lambda$ carry an extra $\sqrt{L_w^{\vtwo}}$ relative to~\eqref{eq:nuisancebudget}, while at fixed $B$ and coarse, saturated, well-sampled configurations (for instance $K = 1$, $\hat p = \hat q = 1$, $n_w = m_w$ large, where $\rho^{\vtwo}_\lambda = O(L_w^{\vtwo}/n_w)$ against $\Theta(n_w^{-1/2})$) it is the smaller of the two.
Accordingly the power and sample-complexity statements (Theorem~\ref{thm:U2}, Corollary~\ref{cor:bprime-minimax}, and the rate remark above) are proved for~\eqref{eq:nuisancebudget}: a reader reproducing the rate analysis should use it, and a reader running the released code will obtain~\eqref{eq:nuisancebudgetv2}.
\end{remark}

\subsection{Proof of Theorem~\ref{thm:U1} (validity)}

\begin{theoremrestate}[U1, restated]
Under Assumption~\ref{ass:bprime}, for every world in $\cW_B$ with $\cK$-measurable $w$, Algorithm~\ref{alg:bprime} satisfies
$\Prob\big(\text{certify } \hat\lambda \wedge (\RQ(\hat\lambda) > \alpha \vee \CQ(\hat\lambda) < \beta)\big) \le \delta$.
\end{theoremrestate}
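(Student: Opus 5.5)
The proof intersects three good events, one per failure channel, whose probabilities sum to the shared budget $\delta = \delta_w + |\lat|\delta_r' + |\lat|\delta_f'$. On that intersection, every $\lambda$ that passes both tests of Algorithm~\ref{alg:bprime} is shown to satisfy $\GQ(\lambda) \le 0$ and $\CQ(\lambda) \ge \beta > 0$. Since $\RQ(\lambda) \le \alpha \iff \GQ(\lambda) \le 0$ whenever $\CQ(\lambda) > 0$, the certified $\hat\lambda$ is then valid however the output rule selects among passing points. Selection is free by the learn-then-test argument: the events are simultaneous over $\lat$, so the data-dependent choice cannot escape them.

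\emph{Step 1 (weight event).} Let $E_w = \{\forall \lambda \in \lat:\ r_\lambda \le \nb\}$, where $r_\lambda = \E_P[|\hat w - w|\,\Sacc]$. By Proposition~\ref{prop:I1}, including part (iii) with the plug-in corrected masses $(\tilde p, \tilde q)$ that make $\nb$ computable, $\Prob(E_w) \ge 1 - \delta_w$. The region family in Proposition~\ref{prop:I1} covers every $A_\lambda$. By Assumption~\ref{ass:bprime}(iii) each $A_\lambda$ is a union of cells, so $\E_P[|\hat w - w|\mathbf 1_{A_\lambda}] = r_\lambda$ exactly.

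\emph{Step 2 (risk event).} Condition on $D_w = (D_w^P, D_w^Q)$, which fixes $\hat w$ and $\nb$. Because $D_r$ is independent of $D_w$, the variables $Z_i^\lambda = \hat w(X_i)\Sacc(X_i)(L_i - \alpha)$ are conditionally i.i.d. They have range width at most $B$, since $\hat w$ is clipped to $[0,B]$. Apply Lemma~\ref{lem:ebbiased} to the affinely rescaled variable at level $\delta_r'$ for each $\lambda$. This gives, with conditional probability at least $1 - |\lat|\delta_r'$, the bound $\E_P[Z^\lambda \mid D_w] \le \UCB_{D_r}(Z^\lambda)$ for all $\lambda$. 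Integrating over $D_w$ preserves this bound. Next use the bias identity with $\E[L \mid X] = \eta$ (covariate shift):
\[
\GQ(\lambda) = \E_P[\hat w \Sacc(\eta - \alpha)] - \E_P[(\hat w - w)\Sacc(\eta - \alpha)] \le \E_P[Z^\lambda \mid D_w] + r_\lambda,
\]
where $|\eta - \alpha| \le 1$ because $L \in [0,1]$ (Assumption~\ref{ass:bprime}(iv)). On $E_w$ this yields $\GQ(\lambda) \le \UCB + \nb$. So a risk-pass, $\UCB \le -\nb$, forces $\GQ(\lambda) \le 0$.

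\emph{Step 3 (floor event).} $D_f$ consists of i.i.d.\ draws from $Q_X$, and the floor test involves no weights. A per-$\lambda$ empirical-Bernstein lower bound on $\CQ(\lambda) = \E_{Q_X}[\Sacc]$, again via Lemma~\ref{lem:ebbiased} applied to $1 - \Sacc$ at level $\delta_f'$, gives $\CQ(\lambda) \ge \LCB_{D_f}(\Sacc)$ for all $\lambda$ with probability at least $1 - |\lat|\delta_f'$. A floor-pass then gives $\CQ(\lambda) \ge \beta$. The additional requirement $\LCB > 0$ gives $\CQ(\lambda) > 0$, which excludes the $\RQ = +\infty$ convention.

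A union bound over the three complements costs $\delta_w + |\lat|\delta_r' + |\lat|\delta_f' = \delta$, which proves the theorem. Two points need care. The first is the conditioning in Step 2: $\nb$ and $\hat w$ are random, so the Bernstein bound must be applied conditionally on $D_w$ using the four-block independence, and only afterwards intersected with $E_w$. The second is that Step 1 genuinely needs $\cK$-measurability of $w$. If $w$ were not constant on cells, the quantity estimated by $\hat w$ would not be $w$ and $r_\lambda$ would carry an unestimable bias. This hypothesis is where the stratified-shift model enters.
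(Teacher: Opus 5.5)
Your proposal is correct and follows the paper's proof of Theorem~\ref{thm:U1} essentially step for step. It uses the same three failure events: the Proposition~\ref{prop:I1} weight event $E_w$, per-$\lambda$ risk UCBs taken conditionally on $D_w$ and unioned over $\lat$, and per-$\lambda$ weight-free floor LCBs on $D_f$. It also uses the same bias identity, with $|\eta-\alpha|\le 1$ absorbed by the test margin $-\nb$, and the same union bound totalling $\delta_w + |\lat|\delta_r' + |\lat|\delta_f' = \delta$. The one cosmetic difference is that you instantiate both confidence bounds with Lemma~\ref{lem:ebbiased}, whereas the paper cites the Maurer--Pontil bound for the risk test and a one-sided binomial bound for the floor; either choice supplies the per-$\lambda$ coverage the argument needs.
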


\begin{proof}
Define three failure events:
\begin{itemize}
    \item $E_w^c$ (the complement of the good event $E_w$): the nuisance guarantee fails.
    By Proposition~\ref{prop:I1}, $\Prob(E_w^c) \le \delta_w$, where $E_w = \{\forall \lambda :\ r_\lambda \le \nb\}$ (with $r_\lambda = \E_P[\,|\hat w - w|\, \Sacc\,]$ the true error and $\nb$ the computable radius~\eqref{eq:nuisancebudget}) is the $L^1$-bias \emph{consequence} of the full good event $G_w$ of the proposition's proof (so $G_w \subseteq E_w$ and $\Prob(E_w) \ge 1 - \delta_w$); validity uses only $E_w$, while the power-side bridge of Theorem~\ref{thm:U2} uses the full $G_w$ at the same budget. The budgets~\eqref{eq:nuisancebudget} are exactly Proposition~\ref{prop:I1}'s high-probability bounds.
    \item $E_r^c$: some risk UCB undershoots.
    Conditional on $D_w$, the function $\hat w$, the budgets $\nb$, and the scan set are all fixed; the $Z_i^\lambda$ are i.i.d.\ and bounded, and the Maurer--Pontil empirical-Bernstein bound \citep{maurer2009empirical} gives $\Prob(\UCB(\lambda) < \E_P[Z^\lambda] \mid D_w) \le \delta_r'$ per $\lambda$; a union gives $\Prob(E_r^c) \le |\lat| \delta_r'$.
    The bound is stated with the empirical variance, applied per $\lambda$ and unioned, hence valid under post-scan selection; the random threshold $-\nb(D_w)$ is harmless because the concentration event concerns $\E_P[Z^\lambda] \le \UCB(\lambda)$ conditional on $D_w$, and both sides of the pass test are $D_w$-measurable functions composed with $D_r$-randomness.
    \item $E_f^c$: some floor LCB overshoots: $\Prob(\LCB(\lambda) > \CQ(\lambda)) \le \delta_f'$ per $\lambda$ (one-sided binomial bound); union $\le |\lat| \delta_f'$.
\end{itemize}
On $E_w \cap E_r \cap E_f$, of probability $\ge 1 - \delta_w - |\lat|\delta_r' - |\lat|\delta_f' \ge 1 - \delta$, suppose $\hat\lambda$ is certified.
Floor: $\CQ(\hat\lambda) \ge \LCB(\hat\lambda) \ge \beta > 0$, which also rules out the $\CQ = 0$ convention case.
Risk: the bias identity (tower property, $\E[L - \alpha \mid X] = \eta(X) - \alpha$, splits independent) gives
\begin{equation*}
\E_P[Z^{\hat\lambda}]
\;=\; \E_P[\hat w\, S_{\hat\lambda}\, (\eta - \alpha)]
\;=\; \underbrace{\E_P[w\, S_{\hat\lambda}\, (\eta - \alpha)]}_{=\, \GQ(\hat\lambda)}
\;+\; \E_P[(\hat w - w)\, S_{\hat\lambda}\, (\eta - \alpha)],
\end{equation*}
and, with $r_{\hat\lambda} = \E_P[|\hat w - w| S_{\hat\lambda}]$ the true error, $|\E_P[(\hat w - w) S_{\hat\lambda}(\eta - \alpha)]| \le r_{\hat\lambda} \cdot \max(\alpha, 1 - \alpha) \le r_{\hat\lambda} \le \rho_{\hat\lambda}$ on $E_w$, using $\eta, \alpha \in [0, 1]$ (so the bias factor $\max(\alpha, 1 - \alpha) \le 1$); here $\rho_{\hat\lambda} = \nb|_{\lambda = \hat\lambda}$ is the computable radius.
The risk-pass condition gives $\E_P[Z^{\hat\lambda}] \le \UCB(\hat\lambda) \le -\rho_{\hat\lambda}$ on $E_r$ (the test margin is the computable radius, not the unknown true error $r_{\hat\lambda}$), hence $\GQ(\hat\lambda) \le \E_P[Z^{\hat\lambda}] + r_{\hat\lambda} \le -\rho_{\hat\lambda} + r_{\hat\lambda} \le 0$ on $E_w$ (as $r_{\hat\lambda} \le \rho_{\hat\lambda}$), i.e.\ $\RQ(\hat\lambda) \le \alpha$ (well-defined since $\CQ(\hat\lambda) > 0$).
\end{proof}

\subsection{Proof of Theorem~\ref{thm:U2} (power)}

\begin{lemma}[Power-side overshoot]
\label{lem:overshoot}
Conditional on $D_w$, for each $\lambda$, with probability $\ge 1 - 2\delta_r''$:
$\UCB(\lambda) \le \E_P[Z^\lambda] + \mathrm{Rad}(\lambda)$ with
$\mathrm{Rad}(\lambda) = 3\sqrt{2 \bar V_\lambda \log(2/\delta_r'')/n_r} + 11 B \log(2/\delta_r'')/(n_r - 1)$,
where $\bar V_\lambda = \Var_P(Z^\lambda)$.
\end{lemma}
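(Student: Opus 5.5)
The plan is to work conditionally on $D_w$ throughout. Then $\hat w$ is a fixed cell-constant function and, for the fixed $\lambda$, the variables $Z_1^\lambda,\dots,Z_{n_r}^\lambda$ on $D_r$ are i.i.d.\ with range inside an interval of width $\le B$. I will rescale to $Y_i := (Z_i^\lambda + \alpha B)/B \in [0,1]$, work with $Y$, and multiply back by $B$ at the end. Write $L := \log(2/\delta_r'')$. The UCB being analyzed is the empirical-Bernstein endpoint used in Algorithm~\ref{alg:bprime}, which in $Z$-units is $\UCB(\lambda) = \bar Z + \sqrt{2\hat V L/n_r} + 7BL/(3(n_r-1))$. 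Here $\hat V$ is the biased or unbiased empirical variance, following the Lemma~\ref{lem:ebbiased} convention. Since the biased version is the smaller of the two, it suffices to treat the unbiased one. The overshoot $\UCB(\lambda) - \E_P[Z^\lambda]$ then splits into three pieces: the upward deviation $\bar Z - \E_P[Z^\lambda]$, the root term, and the deterministic range term.

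\emph{Step 1 (mean deviation).} I will apply Bernstein's inequality to the upper tail of $\bar Y$ at level $\delta_r''$. In $Z$-units this gives $\bar Z - \E_P[Z^\lambda] \le \sqrt{2\bar V_\lambda L/n_r} + BL/(3n_r)$, using $\Var(Y) = \bar V_\lambda/B^2$ and the fact that $\log(1/\delta_r'') \le L$.

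\emph{Step 2 (variance concentration, upper side).} Next I will invoke the Maurer--Pontil self-bounding concentration for the sample standard deviation in the direction opposite to the one used for validity. At level $\delta_r''$ it gives $\sqrt{\hat V} \le \sqrt{\bar V_\lambda} + B\sqrt{2L/(n_r-1)}$, again after rescaling. Substituting this into the root term gives
\[
\sqrt{2\hat V L/n_r} \le \sqrt{2\bar V_\lambda L/n_r} + 2BL/\sqrt{n_r(n_r-1)} \le \sqrt{2\bar V_\lambda L/n_r} + 2BL/(n_r-1).
\]

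\emph{Step 3 (assemble).} A union bound over the two events costs $2\delta_r''$. On their intersection,
\[
\UCB(\lambda) - \E_P[Z^\lambda] \le 2\sqrt{2\bar V_\lambda L/n_r} + BL\big(\tfrac13 + 2 + \tfrac73\big)/(n_r-1).
\]
This is at most $\mathrm{Rad}(\lambda)$, because $2 \le 3$ and $14/3 \le 11$. The slack in these constants also absorbs any discrepancy from using a Bernstein level $\log(1/\delta_r'')$ in place of $L$.

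The step I expect to need the most care is Step 2. I must make sure the Maurer--Pontil variance inequality is used in its upper-deviation form, $\sqrt{\hat V} - \sqrt{\bar V_\lambda}$, rather than the lower form used for validity in Lemma~\ref{lem:ebbiased}. I also need the normalization (biased versus unbiased, $n_r$ versus $n_r-1$) to match the released endpoint. If the biased-variance form makes the algebra awkward, I will bound $\hat V_{\mathrm{biased}} \le \hat V_{\mathrm{unbiased}}$ and carry the $n_r/(n_r-1)$ factor into the generous constant $11$.
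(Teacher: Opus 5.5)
Your proof is correct and has the same skeleton as the paper's. There are two tail events at level $\delta_r''$ each, a mean deviation and the Maurer--Pontil sample-variance concentration, joined by a union bound costing $2\delta_r''$ and substituted into the endpoint's own radius. Your Step 2 is exactly the paper's event (ii) before squaring, since $\sqrt{\hat V} \le \sqrt{\bar V_\lambda} + B\sqrt{2L/(n_r-1)}$ gives $\hat V \le 2\bar V_\lambda + 4B^2L/(n_r-1)$.

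The difference is the mean event. The paper treats the empirical-Bernstein endpoint as a black box and applies it a second time in the reverse direction. So $\bar Z - \E_P[Z^\lambda]$ is bounded by the \emph{empirical} radius, the overshoot becomes twice that radius, and both $\hat V$ root terms must be converted to $\bar V_\lambda$. This costs $2\sqrt2 \le 3$ on the root term and $14/3 + 4\sqrt2 < 11$ on the range term.

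You instead bound the mean deviation by classical Bernstein with the population variance. Only the endpoint's own root term then needs conversion, and you land at $2\sqrt{2\bar V_\lambda L/n_r} + \tfrac{14}{3}BL/(n_r-1)$, strictly inside $\mathrm{Rad}(\lambda)$. Your route is slightly more elementary: Bernstein plus the upper-deviation half of the Maurer--Pontil variance bound, rather than the composite empirical-Bernstein theorem plus a variance bound. It is also sharper.

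The paper's route buys a uniform template, ``an endpoint overshoots by at most twice its radius,'' which it reuses verbatim for the floor-side deficit in Lemma~\ref{lem:lcbdeficit}. Your treatment of the biased-variance convention is also right: bounding the biased endpoint by the unbiased one is the correct direction for an upper bound on $\UCB(\lambda)$.
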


\begin{proof}
Two events: (i) the lower Maurer--Pontil tail on the mean, applied to $-Z$: $\hat G(\lambda) \le \E_P[Z^\lambda] + \sqrt{2 \hat V \log(2/\delta_r'')/n_r} + 7B\log(2/\delta_r'')/(3(n_r - 1))$ with probability $\ge 1 - \delta_r''$; (ii) empirical-to-population variance concentration for bounded variables: $\hat V(\lambda) \le 2\bar V_\lambda + 4B^2 \log(2/\delta_r'')/(n_r - 1)$ with probability $\ge 1 - \delta_r''$ (the Maurer--Pontil sample-variance bound carries $n_r - 1$).
Substituting (ii) into the UCB's own radius term and collecting constants gives the display; the conversion of the two empirical-variance root terms to $\bar V_\lambda$ costs the enlarged linear constant $11$.
\end{proof}

\begin{lemma}[Floor-side LCB deficit]
\label{lem:lcbdeficit}
Let $Y \in [0, R]$ be bounded with population variance $V = \Var(Y)$, and let $\LCB$ be the Maurer--Pontil empirical-Bernstein lower confidence bound for $\E[Y]$ at level $\delta'$ from $N \ge 2$ i.i.d.\ draws. Then with probability $\ge 1 - 2\delta'$,
\begin{equation*}
\E[Y] - \LCB \;\le\; 3\sqrt{2 V \log(2/\delta')/N} + 11 R \log(2/\delta')/(N - 1).
\end{equation*}
\end{lemma}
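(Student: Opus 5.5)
The plan is to mirror the proof of Lemma~\ref{lem:overshoot}, now on the lower side. First rescale to the unit interval: put $Y' = Y/R \in [0,1]$. Both the Maurer--Pontil endpoint and the variance concentration are scale-equivariant, with the empirical variance scaling as $R^2$ and the range term as $R$. So it suffices to prove the bound for $R = 1$ and multiply through by $R$.

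Next, write the LCB as $\LCB = \bar Y - \sqrt{2\hat V \log(2/\delta')/N} - 7R\log(2/\delta')/(3(N-1))$. This is the lower endpoint of Lemma~\ref{lem:ebbiased}, applied to $R - Y$, or equivalently the Maurer--Pontil form. The deficit then splits as
\[
\E[Y] - \LCB = (\E[Y] - \bar Y) + \sqrt{2\hat V L/N} + \tfrac{7RL}{3(N-1)}, \qquad L = \log(2/\delta').
\]
Two events, each of probability at least $1 - \delta'$, control the random pieces:
\begin{enumerate}
\item[(i)] The upper tail of the sample mean, by Bernstein's inequality with the population variance: $\E[Y] - \bar Y \le \sqrt{2VL/N} + RL/(3N)$. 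I will use the same confidence level $L$ as the bound; this is conservative, since Bernstein at level $\delta'$ uses $\log(1/\delta') \le L$.
\item[(ii)] The Maurer--Pontil sample-standard-deviation concentration in the upward direction: $\sqrt{\hat V} \le \sqrt{V} + R\sqrt{2L/(N-1)}$. Equivalently, $\hat V \le 2V + 4R^2L/(N-1)$.
\end{enumerate}
A union bound gives probability at least $1 - 2\delta'$.

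On the intersection of the two events I collect constants. The root terms give $\sqrt{2VL/N}$ from (i). From (ii) they give $\sqrt{2L/N}\,(\sqrt V + R\sqrt{2L/(N-1)}) \le \sqrt{2VL/N} + 2RL/(N-1)$. That is $2\sqrt{2VL/N}$ in total, within the allowed $3\sqrt{2VL/N}$. The linear terms are $RL/(3N) + 2RL/(N-1) + 7RL/(3(N-1)) \le (1/3 + 2 + 7/3)RL/(N-1) < 5RL/(N-1)$, comfortably below $11RL/(N-1)$. I plan to state the slack constants deliberately, matching the looser $3$ and $11$ used in Lemma~\ref{lem:overshoot}, so that either variance normalization (biased or unbiased $\hat V$, the factor $n/(n-1) \le 2$) is absorbed.

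The only delicate step is (ii). It needs the one-sided Maurer--Pontil deviation for the sample standard deviation in the direction where $\hat V$ exceeds $V$, not the direction used for validity. It also needs the biased-versus-unbiased convention handled consistently with Lemma~\ref{lem:ebbiased}. Using the unbiased version and then $\hat V_{\mathrm{biased}} \le V_{\mathrm{unbiased}}$ settles this, since the biased variance only shrinks the LCB radius and hence the deficit. If the sharp Maurer--Pontil form is inconvenient, I would fall back on the cruder squared bound $\hat V \le 2V + 4R^2L/(N-1)$, which still fits within the stated constants.
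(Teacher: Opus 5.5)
Your proof is correct and has the same skeleton as the paper's. You split $\E[Y]-\LCB$ into the mean deviation plus the endpoint's own radius, control both on the intersection of a mean-tail event and the Maurer--Pontil variance-concentration event, and union-bound to $2\delta'$. The one difference is that you bound $\E[Y]-\bar Y$ by Bernstein with the population variance rather than by the empirical-Bernstein tail. The variance substitution therefore enters only one root term, and you reach the sharper constants $2$ and $14/3$, where the paper doubles the empirical radius and gets $2\sqrt2\le 3$ and $4\sqrt2+14/3<11$.
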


\begin{proof}
This is Lemma~\ref{lem:overshoot} applied to $-Y$ (lower endpoint instead of upper). Two events: (i) the Maurer--Pontil lower tail on the mean gives $\bar Y \ge \E[Y] - [\sqrt{2\hat V\log(2/\delta')/N} + 7R\log(2/\delta')/(3(N-1))]$ with probability $\ge 1 - \delta'$; (ii) the empirical-to-population variance concentration $\hat V \le 2V + 4R^2\log(2/\delta')/(N-1)$ with probability $\ge 1 - \delta'$. Because the confidence \emph{endpoint} subtracts its own radius, $\E[Y] - \LCB = (\E[Y] - \bar Y) + (\bar Y - \LCB) \le 2\sqrt{2\hat V\log(2/\delta')/N} + \tfrac{14}{3}R\log(2/\delta')/(N-1)$: the deficit is twice the radius (\emph{both} terms doubled), not one radius. Substituting (ii) gives $2\sqrt{2\hat V\log(2/\delta')/N} \le 3\sqrt{2 V\log(2/\delta')/N} + 4\sqrt 2\,R\log(2/\delta')/(N-1)$, and $4\sqrt 2 + \tfrac{14}{3} < 11$ collects the display. (For a Bernoulli $Y = S_\lambda$ a Clopper--Pearson lower bound is also valid, with $V = \CQ(1-\CQ)$ exact and no separate variance event; its deficit constants differ from the displayed empirical-Bernstein ones, which are the route used by Algorithm~\ref{alg:bprime} and Theorem~\ref{thm:modelA}.)
\end{proof}

\begin{theoremrestate}[U2, restated]
Under Assumption~\ref{ass:bprime}, $\mathrm{LR}_{\mathrm{margin}}(s)$, and a regular margin $(\kappa, s_0)$ with $s \le s_0 \wedge c_0\beta$, Algorithm~\ref{alg:bprime} certifies every world with slack exactly $s = \betastar - \beta$ (the world's actual slack, not a lower bound on it) in two forms.
\emph{(i) Sharp (conditional) form.} Condition on $D_w$. Let $H(D_w)$ be the (random, $D_w$-measurable) size event that
$n_r \ge C\big(\bar V_{\lambda_s} \log(|\lat|/\delta)/(\kappa s)^2 + B \log(|\lat|/\delta)/(\kappa s)\big)$ and
$m_f \ge C\big(\beta \log(|\lat|/\delta)/s^2 + \log(|\lat|/\delta)/s\big)$,
where $\bar V_{\lambda_s} = \Var_P(Z^{\lambda_s} \mid D_w) \le \E_P[\hat w^2 S_{\lambda_s}] \le B\, \E_P[\hat w S_{\lambda_s}]$ is the realized estimated-weight variance proxy (a variance, bounded by the displayed accepted-region second moment, not equal to it).
For every realization $D_w \in E_w$ with $\rho_{\lambda_s} \le \kappa s/32$ and $H(D_w)$ holding, the conditional refusal probability obeys
$\Prob\big(\text{Algorithm~\ref{alg:bprime} outputs } \nocert \,\mid\, D_w\big) \le \delta_r' + \delta_f'$
(the algorithm certifies its floor-LCB-maximal passing point, not necessarily $\lambda_s$ itself; $\lambda_s$ passes both tests).
Marginalizing, $\Prob(\nocert) \le \delta_w + \Prob(\rho_{\lambda_s} > \kappa s/32) + \Prob(H(D_w)^c) + \delta_r' + \delta_f'$; form (ii) discharges the two middle terms at deterministic split sizes, giving the unconditional $1 - \delta$ guarantee.
\emph{(ii) Deterministic, unconditional.} At the population-mass split sizes $m_w \gtrsim K_{\lambda_s}\, q(A_{\lambda_s})/(\kappa s)^2 + L_w/(\kappa s)^2$ and $n_w \gtrsim B^2 K_{\lambda_s}\, p(A_{\lambda_s})/(\kappa s)^2 + B^2 L_w/(\kappa s)^2 + \log(4K/\delta_w)/p_{\min}$ (the last term the Assumption~\ref{ass:bprime}(ii) all-cell feasibility floor, needed for the Proposition~\ref{prop:I1} Chernoff event $\hat p_k \ge p_k/2$), with $n_r, m_f$ as in (i) and the deterministic envelope $\bar V_{\lambda_s} \le \tfrac54 B\,\CQ(\lambda_s) \lesssim B\beta$, the Proposition~\ref{prop:I1} good event $G_w$ secures $\rho_{\lambda_s} \le \kappa s/32$ at no extra confidence budget, so the certificate succeeds with probability $\ge 1 - \delta$ unconditionally (the bridge below).
\end{theoremrestate}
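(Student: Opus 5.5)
The plan is to run the Model-A power argument of Theorem~\ref{thm:modelA} conditionally on $D_w$, with the estimated weight $\hat w$ in place of $w$. The nuisance enters in only one place: it shifts the mean of the risk-test variable, and the margin $-\nb$ has to absorb that shift.

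\emph{Form (i).} Fix a realization $D_w \in E_w$ with $\rho_{\lambda_s} \le \kappa s/32$ and $H(D_w)$. Conditionally, $\hat w$, the budgets $\nb$ and the scan set are fixed, and $D_r$, $D_f$ are independent i.i.d.\ samples. Take the $\mathrm{LR}_{\mathrm{margin}}(s)$ witness $\lambda_s$, so that $\CQ(\lambda_s) \in [\beta+s/4,\betastar-s/4]$ and $\GQ(\lambda_s) \le -\kappa s/8$. By the bias identity from the proof of Theorem~\ref{thm:U1}, $\E_P[Z^{\lambda_s}\mid D_w] \le \GQ(\lambda_s) + r_{\lambda_s} \le -\kappa s/8 + \kappa s/32$, using $r_{\lambda_s} \le \rho_{\lambda_s}$ on $E_w$. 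Lemma~\ref{lem:overshoot} at level $\delta_r''$, with $2\delta_r'' = \delta_r'$, gives $\UCB(\lambda_s) \le \E_P[Z^{\lambda_s}\mid D_w] + \mathrm{Rad}(\lambda_s)$. Under $H(D_w)$, with $C$ large, the variance term and the range term of $\mathrm{Rad}$ are each at most $\kappa s/64$. Hence $\UCB(\lambda_s) \le -\kappa s/8 + \kappa s/32 + \kappa s/32 = -\kappa s/16 \le -\kappa s/32 \le -\rho_{\lambda_s}$, and the risk test passes. For the floor, Lemma~\ref{lem:lcbdeficit} with $R=1$ and $V = \CQ(1-\CQ) \le \CQ \le \beta+s \le 2\beta$ bounds the deficit by $s/4$ under the $m_f$ condition. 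So the floor test passes with $\LCB \ge \beta$ and $\LCB > 0$, at conditional cost $\delta_f'$. Once $\lambda_s$ passes both tests, the pass set is nonempty and the algorithm outputs some point rather than $\nocert$. This gives $\Prob(\nocert\mid D_w) \le \delta_r'+\delta_f'$, and the marginal bound follows by integrating over $D_w$ and splitting on $E_w^c$, $\{\rho_{\lambda_s} > \kappa s/32\}$ and $H^c$.

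\emph{Form (ii).} We need to show that on the good event $G_w$ of Proposition~\ref{prop:I1}, of probability at least $1-\delta_w$, the two middle terms vanish at the stated deterministic split sizes.
\begin{itemize}
\item \emph{Radius.} The computable $\rho_{\lambda_s}$ from~\eqref{eq:nuisancebudget} uses the plug-in masses $\tilde q, \tilde p$, not the population masses. On $G_w$, the two-sided McDiarmid control gives $\hat q(A) \le q(A) + D^q_A$, and similarly for $p$. This yields $\tilde q(A) \lesssim q(A) + \sqrt{K q(A)/m_w} + L_w/m_w$, so $\sqrt{K\tilde q/m_w} \lesssim \sqrt{Kq/m_w} + (K+L_w)/m_w$. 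The same holds on the source side with $B$ and $n_w$. Each resulting term is then at most a small constant times $\kappa s$ once $m_w \gtrsim (Kq(A_{\lambda_s}) + L_w)/(\kappa s)^2$ and $n_w \gtrsim B^2(Kp(A_{\lambda_s}) + L_w)/(\kappa s)^2$; the cross terms are dominated because $\kappa s \le 1$. The Chernoff step additionally needs the $p_{\min}$ floor.
\item \emph{Variance envelope.} We have $\bar V_{\lambda_s} \le \E_P[\hat w^2 S_{\lambda_s}] \le B\,\E_P[\hat w S_{\lambda_s}] \le B(\CQ(\lambda_s) + r_{\lambda_s})$. Since $r_{\lambda_s} \le \kappa s/32 \le \CQ(\lambda_s)/4$, using $\kappa s/8 \le -\GQ \le \CQ$, this gives $\bar V_{\lambda_s} \le \tfrac54 B\,\CQ(\lambda_s) \lesssim B\beta$. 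The deterministic $n_r \gtrsim B\beta\log(|\lat|/\delta)/(\kappa s)^2 + B\log(|\lat|/\delta)/(\kappa s)$ therefore implies $H(D_w)$ on $G_w$.
\end{itemize}
The total failure probability is then $\delta_w + \delta_r' + \delta_f' \le \delta$.

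The main obstacle is the radius bridge. It must convert the data-dependent plug-in radius into population-mass split sizes without paying an extra confidence budget, so it has to reuse exactly the McDiarmid and Chernoff events already contained in $G_w$. It must also keep the additive $L_w$ and cross terms explicit rather than folding them into logarithms. This is the step I would write out carefully first, checking that every constant fits inside the $\kappa s/32$ allowance.
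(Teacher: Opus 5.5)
Your architecture is the same as the paper's. Form (i) uses the $\mathrm{LR}_{\mathrm{margin}}(s)$ witness, the bias identity with $r_{\lambda_s}\le\rho_{\lambda_s}\le\kappa s/32$, and Lemmas~\ref{lem:overshoot} and~\ref{lem:lcbdeficit}. Form (ii) uses the variance envelope via $\kappa s/8\le-\GQ(\lambda_s)\le\CQ(\lambda_s)$, plus a radius bridge that reuses the reverse-direction McDiarmid control already contained in $G_w$. Form (i) and the envelope are fine.

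\textbf{The gap is in the radius bridge, the step you yourself flag.}
\begin{itemize}
\item \emph{Dropped term.} $D^q_A=\sqrt{K q(A)/m_w}+\sqrt{2L_w/m_w}$, so your bound on $\tilde q(A)$ omits a $\sqrt{L_w/m_w}$ term. This is harmless, but it creates a further mixed term of order $(K/m_w)^{1/2}(L_w/m_w)^{1/4}$.
\item \emph{Wrong justification for the cross terms.} They are not dominated merely because $\kappa s\le1$. That fact handles $L_w/m_w\le(\kappa s)^2/C$. At the stated $m_w\gtrsim(Kq(A_{\lambda_s})+L_w)/(\kappa s)^2$, however, $K/m_w$ is only bounded by $(\kappa s)^2/(C\,q(A_{\lambda_s}))$. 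This is not $O(\kappa s)$ if $q(A_{\lambda_s})\ll\kappa s$ and $K$ is large. On the source side, $BK/n_w\le(\kappa s)^2/(C B\,p(A_{\lambda_s}))$ has the same problem.
\end{itemize}

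\textbf{The fix.} Use the witness-margin relation you already invoke for the envelope: $\kappa s\le 8\,\CQ(\lambda_s)=8\,q(A_{\lambda_s})$, and, since $w\le B$, $\kappa s\le 8B\,p(A_{\lambda_s})$. The paper uses these to obtain multiplicative control, $\tilde q(A_{\lambda_s})\le(2+O(C^{-1/2}))\,q(A_{\lambda_s})$ and $\tilde p(A_{\lambda_s})\le(2+O(C^{-1/2}))\,p(A_{\lambda_s})$. Then every term of~\eqref{eq:nuisancebudget} is $O(\kappa s/\sqrt C)$, hence $\le\kappa s/128$ for large $C$. Your additive splitting also works, but only once you invoke these lower bounds on the accepted masses for the $K/m_w$, $BK/n_w$ and mixed terms.

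A smaller bookkeeping point concerns confidence levels. Lemmas~\ref{lem:overshoot} and~\ref{lem:lcbdeficit} each spend two tails, so to charge only $\delta_r'$ and $\delta_f'$ you must apply them at $\delta_r'/2$ and $\delta_f'/2$. You do this on the risk side but not on the floor side. Algorithm~\ref{alg:bprime}, however, calibrates both tests at the full levels. Transferring the conclusion therefore needs the monotonicity of the Maurer--Pontil endpoints in the confidence level (Remark~\ref{rmk:halflevel}), which you should state explicitly.
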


\begin{proof}
By $\mathrm{LR}_{\mathrm{margin}}(s)$ pick $\lambda_s$ with $\CQ(\lambda_s) \in [\beta + s/4,\, \betastar - s/4]$ and $-\GQ(\lambda_s) \ge \kappa s/8$.
On $E_w$ (Proposition~\ref{prop:I1}): $\E_P[Z^{\lambda_s}] \le \GQ(\lambda_s) + \rho_{\lambda_s} \le -\kappa s/8 + \rho_{\lambda_s}$.
Risk-pass requires $\UCB(\lambda_s) \le -\rho_{\lambda_s}$; by Lemma~\ref{lem:overshoot} it suffices that
$-\kappa s/8 + \rho_{\lambda_s} + \mathrm{Rad}(\lambda_s) \le -\rho_{\lambda_s}$, i.e.\ $\mathrm{Rad}(\lambda_s) \le \kappa s/8 - 2\rho_{\lambda_s}$,
and with the nuisance condition $\rho_{\lambda_s} \le \kappa s/32$ this reduces to $\mathrm{Rad}(\lambda_s) \le \kappa s/16$, which the risk condition guarantees (variance and range terms each $\le \kappa s/32$ for $C$ large enough).
The two Lemma-\ref{lem:overshoot} tails are folded by setting $\delta_r'' = \delta/(8|\lat|) \le \delta_r'$, so the two power-side risk tails sum to exactly $\delta_r' = \delta/(4|\lat|)$ (this per-tail allocation is internal to the \emph{power} proof and need not equal the \emph{validity} budget split $\delta_w = \delta/2$, $\delta_r' = \delta_f' = \delta/(4|\lat|)$ of Theorem~\ref{thm:U1}'s budget split that both certificates share). The half-levels $\delta_r''$ and (below) $\delta_f'/2$ are \emph{virtual endpoints internal to this power proof}: the released algorithm calibrates its risk UCB at the full $\delta_r'$ and its floor LCB at the full $\delta_f'$, as Algorithm~\ref{alg:bprime} displays. Remark~\ref{rmk:halflevel} supplies the monotone step that carries the conclusion from the virtual endpoints to the released ones, so the sample-size conditions proved here are sufficient for the algorithm as implemented.
Floor-pass: $\CQ(\lambda_s) - \beta \ge s/4$, and by Lemma~\ref{lem:lcbdeficit} applied at per-tail level $\delta_f'/2$ (to $Y = S_{\lambda_s} \in [0, 1]$, so $R = 1$ and $V = \CQ(1 - \CQ) \le \CQ$) the floor-LCB deficit is at most
$3\sqrt{2 \CQ(1 - \CQ)\log(4/\delta_f')/m_f} + 11\log(4/\delta_f')/(m_f - 1) \le s/4$
under the floor condition: each term is $\le s/8$ for $C$ large, the linear Bernstein term carried explicitly and dominated by the $\beta/s^2$ term precisely because $s \le c_0 \beta \le \beta$, and $\CQ(\lambda_s) \le \betastar = \beta + s \le 2\beta$. The two Lemma-\ref{lem:lcbdeficit} tails (mean deviation and variance concentration) sum to $\delta_f'$ via the half-split (whence the $\log(4/\delta_f')$); the doubling of the deficit relative to a single radius is what the enlarged constants and absolute $C$ absorb, exactly as on the risk side (Lemma~\ref{lem:overshoot}).
\emph{Unconditionally} (form (ii)), a union over the three events $E_w, E_r, E_f$ certifies with probability $\ge 1 - \delta_w - \delta_r' - \delta_f' \ge 1 - \delta$; \emph{conditionally} on a realization $D_w \in E_w$ (form (i)), $E_w$ is already secured, so only the two test tails $E_r, E_f$ enter and the conditional refusal probability is $\le \delta_r' + \delta_f'$ (the $\delta_w$ term reappears only when marginalizing back to the unconditional statement, as in the theorem's form-(i) display).
The histogram form of the nuisance condition is Proposition~\ref{prop:I1} solved for the split sizes, with the source term carrying $p(A_\lambda)$, not $q(A_\lambda)$; under additional two-sided overlap $w \ge 1/B$ one may substitute $p(A) \le B q(A)$, giving an all-$q$ form at a $B^3$ prefactor on the source split.
\end{proof}

\begin{remark}[Virtual half-levels versus the released endpoints]
\label{rmk:halflevel}
The power proof above evaluates the risk endpoint at $\delta_r'' = \delta_r'/2$ and the floor endpoint at $\delta_f'/2$, because each side folds two Lemma tails (a mean deviation and a variance concentration) into one budget.
The released implementation calibrates both tests at the \emph{full} levels $\delta_r'$ and $\delta_f'$.
The gap is closed by monotonicity of the endpoints in the confidence level, not by re-running anything.
Both endpoints are of the Maurer--Pontil form whose radius is increasing in $\log(2/\gamma)$, hence decreasing in $\gamma$: for $\gamma_1 \le \gamma_2$,
\begin{equation*}
\UCB_{\gamma_2}(\lambda) \;\le\; \UCB_{\gamma_1}(\lambda),
\qquad
\LCB_{\gamma_2}(\lambda) \;\ge\; \LCB_{\gamma_1}(\lambda),
\end{equation*}
on every realization.
Applying this with $\gamma_1 = \delta_r''$, $\gamma_2 = \delta_r'$ and with $\gamma_1 = \delta_f'/2$, $\gamma_2 = \delta_f'$: whenever the witness $\lambda_s$ passes the risk test at the virtual level ($\UCB_{\delta_r''}(\lambda_s) \le -\rho_{\lambda_s}$) it also passes at the released level, and whenever it passes the floor test at the virtual level ($\LCB_{\delta_f'/2}(\lambda_s) \ge \beta$) it also passes at the released level.
The certification event proved above is therefore contained in the certification event of the released algorithm, so the displayed sample-size conditions are sufficient for the algorithm as implemented, with the same probability bound.
Validity is unaffected and needs no such step: Theorem~\ref{thm:U1} is proved at the full levels $\delta_r'$, $\delta_f'$, which are exactly the levels the implementation uses.
The direction matters and only runs one way: a procedure calibrated at the full level certifies at least as often as one calibrated at the half level, so power transfers from the virtual endpoints to the released ones, while validity is established directly at the released ones.
\end{remark}

\paragraph{From random budgets to deterministic, world-dependent split-size envelopes.}
Theorem~\ref{thm:U2} is stated at the realized budget $\rho_{\lambda_s}$ and the estimated-weight variance proxy $\bar V_{\lambda_s}$, both random; we record the bridge to the displayed \emph{deterministic} split sizes \emph{on the single Proposition~\ref{prop:I1} good event $G_w$} (the full event of the proposition's proof, $\Prob(G_w) \ge 1 - \delta_w$), with no extra confidence budget.
These envelopes are world-dependent (they involve the witness $\lambda_s$ and its population masses), hence deterministic sufficient sizes rather than a computable stopping rule; the realized budget $\rho_{\lambda_s}$ itself is computable from data via Proposition~\ref{prop:I1}.
Part (iii) of Proposition~\ref{prop:I1} gives only the validity-side domination $q(A) \le \tilde q(A)$, $p(A) \le \tilde p(A)$; the power side needs the reverse direction, which is already available on $G_w$ from the same Step-3 control (no new tail event): since $|\hat q(A) - q(A)| \le D^q_A := \sqrt{K_{\lambda_s} q(A)/m_w} + \sqrt{2L_w/m_w}$ and likewise $|\hat p(A) - p(A)| \le D^p_A$, the plug-in budgets obey $\tilde q(A) = 2\hat q(A) + 4L_w/m_w \le 2q(A) + 2D^q_A + 4L_w/m_w$ and $\tilde p(A) \le 2p(A) + 2D^p_A + 4L_w/n_w$.
The witness obeys the margin relation $\kappa s/8 \le -\GQ(\lambda_s) = \E_Q[S_{\lambda_s}(\alpha - \eta)] \le \CQ(\lambda_s) = q(A_{\lambda_s})$, hence $\kappa s \le 8\, q(A_{\lambda_s})$ and, by bounded ratio, $\kappa s \le 8B\, p(A_{\lambda_s})$.
Substituting these and the displayed split sizes $m_w \gtrsim (K_{\lambda_s} q(A_{\lambda_s}) + L_w)/(\kappa s)^2$, $n_w \gtrsim B^2(K_{\lambda_s} p(A_{\lambda_s}) + L_w)/(\kappa s)^2$ into~\eqref{eq:nuisancebudget}, each of the four budget terms is $O(\kappa s/\sqrt C)$: the split sizes give $K_{\lambda_s} q(A_{\lambda_s})/m_w \le (\kappa s)^2/C$ and $L_w/m_w \le (\kappa s)^2/C$ separately, so $D^q_{A_{\lambda_s}} \le (1+\sqrt 2)\,\kappa s/\sqrt C$; with $\kappa s \le 8q(A_{\lambda_s})$ and $q(A_{\lambda_s}) \le 1$ this gives $\tilde q(A_{\lambda_s}) \le 2q(A_{\lambda_s}) + 2D^q_{A_{\lambda_s}} + 4L_w/m_w \le \big(2 + O(1/\sqrt C)\big)\, q(A_{\lambda_s})$, hence $\sqrt{K_{\lambda_s}\tilde q(A_{\lambda_s})/m_w} = O(\kappa s/\sqrt C)$; the source terms are identical under $\kappa s \le 8B\, p(A_{\lambda_s})$; so $\rho_{\lambda_s} \le \kappa s/32$ on $G_w$ for a large enough absolute constant $C$: the precise content of ``Proposition~\ref{prop:I1} solved for the split sizes,'' with no event outside $G_w$.
For the variance, the clipped estimator (Algorithm~\ref{alg:bprime}; $0 \le \hat w \le B$) and the importance identity $\CQ(\lambda_s) = \E_P[w S_{\lambda_s}]$ give, on $G_w$, $\bar V_{\lambda_s} \le \E_P[\hat w^2 S_{\lambda_s}] \le B\,\E_P[\hat w S_{\lambda_s}] \le B(\CQ(\lambda_s) + \rho_{\lambda_s})$; since $\kappa s \le 8\CQ(\lambda_s)$ forces $\rho_{\lambda_s} \le \kappa s/32 \le \CQ(\lambda_s)/4$, this is a deterministic envelope $\bar V_{\lambda_s} \le \tfrac{5}{4} B\,\CQ(\lambda_s) \lesssim B\beta$ (using $\CQ(\lambda_s) \le \beta + \tfrac34 s \le (1 + \tfrac34 c_0)\beta$ at the exact slack), so the $n_r$ condition is deterministic of order at most $B\beta\log(|\lat|/\delta)/(\kappa s)^2$; all tails fold into the single $\delta$-budget of Proposition~\ref{prop:I1} and Theorem~\ref{thm:U1}.

\paragraph{Matching discussion (binding language).}
For stratified Model-B$'$, the certification sample axes match the Model-B lower-bound orders of Theorem~\ref{thm:lower} up to constants and logarithms, with an additional histogram nuisance requirement depending on $K$ and the accepted-set masses.
The result applies in the regime $\nb \lesssim \kappa s$. The $K$-free core of this nuisance requirement \emph{is} minimax necessary for lattice-valued certification (Theorem~\ref{thm:t3-target}, Appendix~\ref{app:necessity}: a target $\Omega(q(A)/s^2)$ and the labeled $n$-axis); whether the full histogram $B^2 K$ rate is necessary reduces to the open unknown-$\eta$ question.

\paragraph{Proof of Corollary~\ref{cor:bprime-minimax} (two-axis rate correspondence within B$'$).}
\emph{Lower (re-run in B$'$).} The $n$-axis pair (Appendix~\ref{app:naxis}, $w \equiv 1$) and $m$-axis pair (Appendix~\ref{app:maxis}, $w$ piecewise-constant on $\{G, [g, g'), H'\}$) of Theorem~\ref{thm:lower} are weight-simple, hence $\cK$-measurable once the pre-registered score-refined $\cK$ aligns the three macro-boundaries to cell edges; the blocks then sit at $\cK$-cells and $\lat$-thresholds, so the worlds lie in the \emph{sample-independent} class $\cW_{B'}$ ($w$ $\cK$-measurable, $L \in [0,1]$, regular margin). Assumption~\ref{ass:bprime}'s feasibility ($p_{\min}$, met by the $\Omega(s)$ cell masses once $n_w \gtrsim \log/s$, and union-of-cells acceptance) is required only when Algorithm~\ref{alg:bprime} \emph{runs} on these worlds for the upper bound, not for lower-bound class membership. We do \emph{not} inherit Theorem~\ref{thm:lower} verbatim (a B$'$-valid procedure need not be valid on all of Model-B) but \emph{re-run} its two-point argument inside $\cW_{B'}$: both worlds of each pair are $\cK$-measurable, so a B$'$-valid procedure is bound on them, and the pair's $\TV \le 1/4$ bound is unchanged because knowing $\cK$ adds nothing to the data (per-cell weight estimates are measurable functions of the same $(n, m)$ draws; the $n$-axis pair differs only in source labels, the $m$-axis pair only in $Q_X$; Remark~\ref{rmk:bprime-match}). Le~Cam then forces refusal under the two displayed conditions.
\emph{Upper.} On $\cW_{B'}$ the hypotheses of Theorem~\ref{thm:U2} hold ($\mathrm{LR}_{\mathrm{margin}}(s)$ from the aligned $\lat$, regular margin $(\kappa, s_0)$, Assumption~\ref{ass:bprime}, and the stated $\rho_{\lambda_s} \le \kappa s/32$); it gives $n_r \asymp B\beta/(\kappa s)^2$ ($\beta/(\kappa s)^2$ at fixed $B$, via the deterministic envelope $\bar V_{\lambda_s} \le B\,\CQ(\lambda_s) \lesssim B\beta$ above) and $m_f \asymp \beta/s^2$, together with the weight splits $n_w \asymp B^2 K_{\lambda_s} p(A_{\lambda_s})/(\kappa s)^2$ and $m_w \asymp K_{\lambda_s} q(A_{\lambda_s})/(\kappa s)^2$ (plus $L_w$).
\emph{Match, totals, quantifier.} The test axes $n_r, m_f$ equal the lower-bound thresholds $\beta/(\kappa s)^2$, $\beta/s^2$ up to constants, logs, and $B$. The totals add the weight splits, with the additive $L_w$ and feasibility terms carried in full (not ``up to logs,'' since the bare $L_w/(\kappa s)^2$ or all-cell $p_{\min}$ term can dominate): $n = n_r + n_w \asymp n_r$ iff $B^2(K_{\lambda_s} p(A_{\lambda_s}) + L_w)/(\kappa s)^2 + \log(4K/\delta_w)/p_{\min} \lesssim B\beta/(\kappa s)^2$, and $m = m_f + m_w \asymp m_f$ iff $(K_{\lambda_s} q(A_{\lambda_s}) + L_w)/(\kappa s)^2 \lesssim \beta/s^2$: sample-size comparisons distinct from the accuracy condition $\rho_{\lambda_s} \le \kappa s/32$. Both bounds run on the same pre-registered $(\cK, \lat)$ of mesh $< s/4$, an existence statement for this $s$ (not partition- or $s$-uniform), reconciling Theorem~\ref{thm:lower}'s after-$s$ lattice with Theorem~\ref{thm:U2}'s pre-registered one. The correspondence on the totals $(n, m)$ in the non-dominated regime is a \emph{pointwise (instance-dependent) rate match}, not a uniform minimax statement: the upper split sizes depend on the unknown world through $p(A_{\lambda_s}), q(A_{\lambda_s}), K_{\lambda_s}$, and $p_{\min}$, so the quantifier is $\forall W\, \exists N(W, s)$, not $\exists N(s)\, \forall W$ (a class with $p_{\min} \downarrow 0$ makes the feasibility term $\log(4K/\delta_w)/p_{\min}$ unbounded). A genuinely \emph{uniform} within-B$'$ minimax match holds only over a fixed subclass imposing uniform $p_{\min} \ge p_0$ and accepted-mass bounds $K_{\lambda_s} p(A_{\lambda_s}) \le M_P$, $K_{\lambda_s} q(A_{\lambda_s}) \le M_Q$, with both lower pairs shown to lie in it, which we do not claim here. On the sub-blocks $(n_r, m_f)$ it is a rate correspondence only; the total-sample lower bound does not transfer to a single block, since a B$'$ procedure may reuse $D_w^Q, D_w^P$ for testing. The weight axis is matched only on its $K$-free core (Theorem~\ref{thm:t3-target}). \hfill$\square$

\begin{remark}[Misspecification honesty]
\label{rmk:misspec}
If $w$ is not $\cK$-measurable, the guarantee of Theorem~\ref{thm:U1} degrades gracefully: all statements hold with $w$ replaced by its $\cK$-projection $\bar w = \E_P[w \mid \cK]$ and an additive model-bias term $b_\cK(\lambda) = |\E_P[(w - \bar w)\, \Sacc\, (\eta - \alpha)]|$ added to $\nb$.
The bias $b_\cK$ is not estimable from covariates alone and must be treated as an assumption, exactly parallel to the covariate-shift assumption itself.
Validity is therefore stated as conditional on the shift model, with the same epistemic status as covariate shift.
\end{remark}

\begin{remark}[The shift model is a dial, not a disguise]
\label{rmk:dial}
Known-$w$ (Model-A) supplies the exact function; here the $K$ numbers $w_k = q_k/p_k$ must be learned from finite samples, and the resulting nuisance floor is the real price; it appears explicitly in Theorem~\ref{thm:U2} and inflates the certifiable-slack threshold by $\gtrsim \nb/\kappa$ (an additive nuisance term; we do not pin its two-sided constant).
Setting $K = 1$ forces $w \equiv 1$ (degenerate, no shift); growing $K$ approaches general covariate shift at the cost of a $\sqrt{K}$ rate factor, with the feasibility constraint $K \log(4K/\delta_w) \lesssim n_w/8$ of Assumption~\ref{ass:bprime}(ii) advertised, not hidden.
\end{remark}

\subsection{Separation from the worst-case-box baseline}
\label{app:separation}

\begin{proposition}[Separation]
\label{prop:separation}
Consider the baseline that certifies $\lambda$ iff the worst-case reweighted accepted ratio over the whole weight box passes:
\begin{equation*}
\sup\big\{\E_P[w' \Sacc \eta]/\E_P[w' \Sacc] : w' \text{ $\cK$-measurable},\, w' \in [1/B, B],\, \E_P[w'] = 1\big\} \;\le\; \alpha
\end{equation*}
(plus a floor LCB).
There is an instance on which the baseline certifies \emph{no lattice point at the population level} (every point fails either the box-risk test or the true floor), so on the floor-LCB validity event it refuses, and it can certify only via a floor-LCB \emph{overshoot}, with probability at most $|\lat|\delta_f'$; meanwhile Algorithm~\ref{alg:bprime} certifies at finite, explicit samples by Theorems~\ref{thm:U1}--\ref{thm:U2}.
\end{proposition}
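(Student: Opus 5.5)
The proof is an explicit construction on a small stratified-shift instance. Take $\cX=[0,1]$ with $P_X=\mathrm{Unif}$. Use a pre-registered partition into two cells, $\cX_1=[0,1/2)$ and $\cX_2=[1/2,1]$, each of $P$-mass $1/2$, and nested thresholds accepting $\cX_1$ or $\cX_1\cup\cX_2$ (plus the empty policy, which is never certifiable). The true weights are $w_1=3/2$, $w_2=1/2$, which lie in $[1/B,B]$ for $B\ge 2$ and have $\E_P[w]=1$. The conditional losses are constant on cells, $\eta=\bar\eta_1$ on $\cX_1$ and $\eta=\bar\eta_2$ on $\cX_2$. Fix $\alpha$ and pick $\beta$ so that the only floor-clearing policy feasible under the true $w$ is the smallest one, $\lambda_1$ (accept $\cX_1$). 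Its true coverage is $\CQ(\lambda_1)=q_1=3/4\ge\beta$ with $\bar\eta_1<\alpha$, so $\GQ(\lambda_1)=q_1(\bar\eta_1-\alpha)<0$ carries a strictly positive margin.

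\emph{Step 1: box risk on single-cell acceptances.} If $A_\lambda$ is a single cell, the box ratio is $\bar\eta_k$ regardless of $w'$, so the box baseline does not fail there. To force failure, I will refine $\cX_1$ into two subcells $\cX_{1a},\cX_{1b}$ of $P$-mass $1/4$ each. On these, set $\eta_{1a}=\alpha-a$ and $\eta_{1b}=\alpha+b$, and have every floor-clearing threshold accept both subcells. With true weights concentrated on $\cX_{1a}$, for instance $w_{1a}=2$, $w_{1b}=1$ (with $w_2$ adjusted so that $\E_P w=1$ and $w\le B$), the true ratio on $\cX_1$ is below $\alpha$. The box supremum instead puts $w'=B$ on $\cX_{1b}$ and $w'=1/B$ on $\cX_{1a}$; the constraint $\E_P[w']=1$ can be met by adjusting $w'$ on the unaccepted cell $\cX_2$, which is feasible for $B\ge 2$. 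That ratio is $(\tfrac1B(\alpha-a)+B(\alpha+b))/(\tfrac1B+B)>\alpha$ once $Bb>a/B$, i.e.\ for $b>a/B^2$, so every threshold containing $\cX_1$ fails the box-risk test.

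\emph{Step 2: remaining policies.} The smaller policy (accept $\cX_{1a}$ only) has true coverage $q_{1a}=1/2$. Choose $\beta\in(1/2,\,q_1]$ so that this policy fails the true floor. The frontier check is that $\lambda_1$ must be genuinely feasible with margin under the true $w$. In weighted form, $\tfrac14\cdot 2(-a)+\tfrac14\cdot 1\cdot b<0$, i.e.\ $b<2a$. Together with Step 1 this requires $a/B^2<b<2a$, which is nonempty.

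\emph{Step 3: probabilistic conclusion.} At the population level every lattice point fails either the box test or the true floor. The box test is a deterministic population computation, or is at least as strict once estimated. On the floor-LCB validity event $E_f$, which has probability $\ge 1-|\lat|\delta_f'$ by the union bound in Theorem~\ref{thm:U1}'s proof, no true-floor-failing point passes, so the baseline refuses. It can therefore certify only through an LCB overshoot.

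\emph{Step 4: Algorithm~\ref{alg:bprime} on the same instance.} Under the stated partition the instance is $\cK$-measurable, and $\lambda_1$ has coverage slack $s=q_1-\beta>0$ and risk budget at least $\kappa s/8$ for suitable $(\kappa,s)$. This gives $\mathrm{LR}_{\mathrm{margin}}(s)$ directly through the witness $\lambda_1$, and Theorem~\ref{thm:U2}(ii) then yields certification with probability $\ge 1-\delta$ at the explicit split sizes.

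The main obstacle is Step 1 together with the consistency check in Step 2. The supremum over the box must be pushed above $\alpha$ while $\E_P[w']=1$, $w'\in[1/B,B]$, and $\cK$-measurability are kept, and the adversarial mass has to be absorbed on an unaccepted cell without violating the bounds. Step 2's frontier placement must also leave a regular margin, which means confirming that $\eta^*_Q$ exceeds $\alpha$ just below $\betastar$. I would first try $B=2$ and concrete numbers, for example $\alpha=1/2$, $a=0.2$, $b=0.2$, and verify every inequality explicitly.
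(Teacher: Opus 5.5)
Your construction is the paper's in all essentials: a safe and a risky cell accepted together, whose true ratio is below $\alpha$ but whose box-worst-case ratio exceeds $\alpha$ once the adversary upweights the risky cell and absorbs $\E_P[w']=1$ on rejected mass, with the homogeneous safe cell the only box-passing point and failing the floor (the paper takes $B=5$, $\alpha=0.2$, true $w\equiv1$, cells $(p,\eta)=(0.25,0.10),(0.25,0.28)$, rejected $\eta=1$, $\beta=0.49$); your numbers $B=2$, $\alpha=\tfrac12$, $a=b=0.2$ work for the box half. The correction is in Step 4: Theorem~\ref{thm:U2} uses the world's actual slack $s=\betastar-\beta$, and the relaxed frontier lies beyond $q_1$, e.g.\ $\betastar=q_1+\budget(q_1)/(\bar\eta_2-\alpha)=0.85$ with $\bar\eta_2=1$ (fix this value anyway so the accept-all point fails the box test), so $s\neq q_1-\beta$ and $\beta$ must be restricted to roughly $(\tfrac12,0.716]$ for $\lambda_1$ to lie in the $\mathrm{LR}_{\mathrm{margin}}(s)$ window $[\beta+s/4,\betastar-s/4]$ with $\kappa=0.2$ — the same relaxed-frontier bookkeeping the paper does ($\betastar=0.50625$, $s=0.01625$, $\kappa=0.08$); also, $w'=B$ on $\cX_{1b}$ is normalizable on $\cX_2$ only for $B\le3$ with your masses, harmless at $B=2$.
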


\begin{proof}
Take $B = 5$, $\alpha = 0.2$; accepted cells with $(p, \eta) = (0.25, 0.10)$ and $(0.25, 0.28)$; rejected cells of total mass $0.5$ carrying $\eta = 1$, placed immediately above the accepted cells; true $w \equiv 1$.
The true accepted ratio is $(0.25 \cdot 0.10 + 0.25 \cdot 0.28)/0.5 = 0.19 < \alpha$, with slack.
Fix floor $\beta = 0.49$ and a nested-threshold lattice whose accepting point takes exactly these two cells: coverage $D = 0.5 \ge \beta$ (floor cleared) and risk budget $\budget_Q = \E_Q[\Sacc(\alpha - \eta)] = 0.25(0.2 - 0.1) + 0.25(0.2 - 0.28) = 0.005 > 0$, so $\RQ = 0.19 \le \alpha$ holds with margin. The relaxed frontier sits at $\betastar = 0.50625$; we take the lattice to also contain the threshold accepting an additional $0.00625$ of the adjacent loss-one mass (split its first cell), so the \emph{lattice} frontier coincides with it, $\betastar_\lat = 0.50625$, and the operative slack is $s = \betastar_\lat - \beta = 0.01625$ (this extra high-risk threshold only worsens the box-baseline's worst-case ratio, so it does not help the baseline). The increasing rearrangement is $\eta^*_Q = 0.10$ on $[0, 0.25)$, $0.28$ on $[0.25, 0.5)$, and $1$ on $[0.5, 1]$, so over the left-frontier neighborhood $[\betastar - s, \betastar] = [0.49, 0.50625]$ the binding sub-interval $[0.49, 0.5)$ has margin $\eta^*_Q - \alpha = 0.28 - 0.2 = 0.08$ (the segment $[0.5, \betastar]$ has the larger margin $0.8$); hence the regular margin is $\kappa = 0.08$, not $0.8$. The risk-margin hypothesis of Theorem~\ref{thm:U2} is still discharged at the witness of coverage $0.5$, $\budget_Q = 0.005 \ge \kappa s/8 = 0.0001625$, as is the floor; only the constants in the required sample sizes grow.
The baseline's adversarial reweighting is feasible ($w' = 0.2$ on the good cell, $3.4$ on the bad cell, $0.2$ on the rejected mass; all values in $[0.2, 5]$ and $\E_P[w'] = 0.25 \cdot 0.2 + 0.25 \cdot 3.4 + 0.5 \cdot 0.2 = 1$) and yields accepted ratio
$0.25(0.2 \cdot 0.10 + 3.4 \cdot 0.28)/(0.25(0.2 + 3.4)) = 0.243/0.90 = 0.27 > \alpha$.
The same reweighting defeats every coverage-$\ge\beta$ lattice point: accepting any of the $\eta = 1$ rejected cells only raises the worst-case accepted ratio, while dropping either accepted cell leaves coverage below $\beta = 0.49$. The only point with box-worst-case risk $\le \alpha$ is the one accepting the single good cell ($\eta = 0.10$ alone, box ratio $0.10$ since a single homogeneous cell's ratio is reweighting-invariant), but its true coverage is $0.25 < \beta$, so it fails the floor at the population level. Hence \emph{no} lattice point is both box-risk-feasible and floor-feasible in population. On the simultaneous floor-LCB validity event (probability $\ge 1 - |\lat|\delta_f'$, Theorem~\ref{thm:U1}'s floor step) the baseline therefore refuses every point; it can certify the good-cell-only point only by a floor-LCB \emph{overshoot} of the true coverage $0.25$ past $\beta$, an event of probability $\le |\lat|\delta_f'$. The budgeted certificate, by contrast, certifies the displayed coverage-$0.5$ point once $\nb \le \kappa s/32$ and the risk and floor conditions of Theorem~\ref{thm:U2} hold.
\end{proof}

The baseline uses no data about $w$: it optimizes over its own uncertainty set instead of estimating localized weights, and it is a distinct object from the budgeted test (the budgeted test never passes as $\nb \to \infty$). Its uncertainty set is the two-sided box $w' \in [1/B, B]$, a baseline modeling choice distinct from the paper's one-sided class $w \in [0, B]$ (widening the box to $[0, B]$ only enlarges the worst case, so it only \emph{strengthens} the separation).
The comparison scope is deliberately qualified: we do not claim the baseline is never more powerful in finite samples.
The defensible statement is that whenever the baseline certifies with strict margin, the estimated-weight test also certifies once its nuisance and sampling radii are below that margin, and there exist instances (above) where the baseline certifies no population-feasible point (refusing with probability $\ge 1 - |\lat|\delta_f'$) while the estimated-weight test certifies with probability $\ge 1 - \delta$.
The DRO-box arm certifying 0/1{,}024 cells in \S\ref{sec:validity} is the empirical face of this proposition.

\subsection{The SCoRE-inspired betting arm: self-contained validity}
\label{app:score}
The SCoRE-inspired betting arm of \S\ref{sec:validity} (Appendix~\ref{app:experiments}) replaces B$'$'s empirical-Bernstein risk UCB by a weighted betting e-value, keeping every other component of Algorithm~\ref{alg:bprime} (the $\hat w$, $\rho_\lambda$, weight-free floor LCB, four-block split, budget, and output rule) verbatim. It is an equal-budget risk-test ablation rather than a source-faithful implementation of \citet{bai2026score}, and the statement below is proved for the arm exactly as implemented. We state its validity self-contained.

\begin{proposition}[Betting risk e-value for the SCoRE-inspired arm]
\label{prop:score}
Fix the pre-registered $\cK$ and, from $D_w$, the clipped histogram $\hat w$ and budgets $\rho_\lambda$ of Algorithm~\ref{alg:bprime}; let $E_w = \{\forall\lambda:\ \E_P[|\hat w - w|\,\Sacc] \le \rho_\lambda\}$ be the Proposition~\ref{prop:I1} bias event ($\Prob(E_w^c) \le \delta_w$). The proposition uses $\rho_\lambda$ only through $E_w$, so it holds verbatim for the released per-cell radius $\rho^{\vtwo}_\lambda$ of Remark~\ref{rmk:releasedradius}, which supplies that event at the same budget; the reported SCoRE arm is the released one. For each $\lambda \in \lat$ put $Z_i = \hat w(X_i)\,\Sacc(X_i)\,(L_i - \alpha)$ and $U_i = -(Z_i + \rho_\lambda)$ on $D_r$, and over a $D_w$-measurable bet grid $\{\theta_g\}_{g=1}^G \subset [0, \theta_{\max}]$, $\theta_{\max} = \tfrac{1}{2}\big((1-\alpha)B + \max_\lambda \rho_\lambda\big)^{-1}$, define the mixture e-value
\[ \bar E_\lambda \;=\; \tfrac1G \sum_{g=1}^G \prod_{i \in D_r}\big(1 + \theta_g U_i\big). \]
The SCoRE arm certifies risk at $\lambda$ iff $\bar E_\lambda \ge 1/\delta_r'$ ($\delta_r' = \delta/(4|\lat|)$), tests the floor exactly as Algorithm~\ref{alg:bprime} (Bernstein LCB on $D_f$ at $\delta_f' = \delta/(4|\lat|)$), and outputs the floor-LCB-maximal $\lambda$ passing both, else $\nocert$. Then on $E_w$, for every world with $\GQ(\lambda) > 0$, $\E[\bar E_\lambda \mid D_w] \le 1$; hence the arm is $(\alpha, \beta, \delta)$-valid over the stratified-shift class (Theorem~\ref{thm:U1}): the identical budget $\delta = \delta_w + |\lat|\delta_r' + |\lat|\delta_f'$, only the risk test differing.
\end{proposition}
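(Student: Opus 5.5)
The argument is the standard supermartingale/e-value validity proof, conditioned on the weight blocks. First, condition on $D_w$. Then $\hat w$, the budgets $\rho_\lambda$, the bet grid $\{\theta_g\}$ and $\theta_{\max}$ are all fixed, and the $D_r$ draws are i.i.d.\ from $P$ and independent of $D_w$. For fixed $\lambda$, the factors $1 + \theta_g U_i$ are i.i.d.\ in $i$. The first check is that these factors are nonnegative. Since $\hat w \in [0,B]$ and $L \in [0,1]$, we have $Z_i \le (1-\alpha)B$, so $U_i = -(Z_i + \rho_\lambda) \ge -\big((1-\alpha)B + \max_\lambda \rho_\lambda\big)$. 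With $\theta_g \le \theta_{\max}$ this gives $1 + \theta_g U_i \ge 1/2 > 0$. Each product is therefore a nonnegative product of independent factors, and
\[
\E\Big[\textstyle\prod_i (1+\theta_g U_i) \,\Big|\, D_w\Big] = \big(1 + \theta_g\, \E_P[U \mid D_w]\big)^{n_r}.
\]

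Next, I compute the sign of $\E_P[U \mid D_w] = -\E_P[Z^\lambda] - \rho_\lambda$. By the bias identity from the proof of Theorem~\ref{thm:U1}, $\E_P[Z^\lambda] = \GQ(\lambda) + \E_P[(\hat w - w)\Sacc(\eta-\alpha)] \ge \GQ(\lambda) - r_\lambda$, where $r_\lambda = \E_P[|\hat w - w|\Sacc]$. On $E_w$ we have $r_\lambda \le \rho_\lambda$. Hence for every world with $\GQ(\lambda) > 0$ (a risk-infeasible $\lambda$, since $\CQ > 0$ is enforced by the floor test), $\E_P[U\mid D_w] \le -\GQ(\lambda) + r_\lambda - \rho_\lambda < 0$. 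Each grid term then has conditional mean at most $1$ because $\theta_g \ge 0$, and averaging over $g$ gives $\E[\bar E_\lambda \mid D_w] \le 1$. By Markov, $\Prob(\bar E_\lambda \ge 1/\delta_r' \mid D_w) \le \delta_r'$ for each risk-infeasible $\lambda$ on $E_w$.

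Then I assemble validity exactly as in Theorem~\ref{thm:U1}. Take a union over $\lat$ of the risk-test false passes, which costs at most $|\lat|\delta_r'$ after integrating over $D_w \in E_w$. Add $\Prob(E_w^c) \le \delta_w$. Add the floor-LCB overshoot events, at most $|\lat|\delta_f'$, unchanged from Algorithm~\ref{alg:bprime}. On the complement of all three, any certified $\hat\lambda$ has $\CQ(\hat\lambda) \ge \LCB \ge \beta > 0$. It also has $\GQ(\hat\lambda) \le 0$, since otherwise its e-value would have crossed $1/\delta_r'$ on the excluded event, and so $\RQ(\hat\lambda) \le \alpha$. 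Selection among the passing points is free, because the bound controls the event that some infeasible $\lambda$ passes, so the total budget is $\delta$.

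The main obstacle is making sure the data-dependent objects do not break the martingale step. These are the grid, $\theta_{\max}$ (which involves $\max_\lambda \rho_\lambda$), and the radius. All of them must be $D_w$-measurable and independent of $D_r$, which the four-block split guarantees. The reduction to the released radius $\rho^{\vtwo}_\lambda$ is immediate, since only the event $E_w$ is used, and Remark~\ref{rmk:releasedradius} supplies that event at the same $\delta_w$.
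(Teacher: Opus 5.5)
Your proposal is correct and follows the paper's proof step for step. You condition on $D_w$, check that $\theta_{\max}$ keeps every factor $1+\theta_g U_i$ at least $\tfrac12$, and use the bias identity on $E_w$ to get $\E[U\mid D_w]<0$, so each bet's product has conditional mean at most $1$; then you apply Markov and the three-event union bound of Theorem~\ref{thm:U1}. The only cosmetic difference is that you write the conditional mean exactly as $(1+\theta_g\E[U\mid D_w])^{n_r}$, where the paper bounds it by $\exp(\theta_g n_r\E[U\mid D_w])$.
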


\begin{proof}
Conditional on $D_w$ the points of $D_r$ are i.i.d., and $U_i$ is a function of $(X_i, L_i)$ and the $D_w$-fixed $(\hat w, \rho_\lambda)$; so the $U_i$ are i.i.d.\ given $D_w$, and for a fixed bet $\theta_g \ge 0$,
\[ \E\Big[\textstyle\prod_i (1 + \theta_g U_i) \,\Big|\, D_w\Big] = \prod_i\big(1 + \theta_g\,\E[U_i \mid D_w]\big) \le \exp\!\big(\theta_g\, n_r\, \E[U \mid D_w]\big), \]
using $1 + x \le e^x$. On $E_w$, $\E_P[\hat w \Sacc(L - \alpha)] = \GQ(\lambda) + \E_P[(\hat w - w)\Sacc(\eta - \alpha)] \ge \GQ(\lambda) - \rho_\lambda > -\rho_\lambda$ for a risk-unsafe world ($\GQ(\lambda) > 0$, $|\text{bias}| \le \rho_\lambda$), so $\E[U \mid D_w] = -(\E_P[\hat w \Sacc(L-\alpha)] + \rho_\lambda) < 0$ and each factor's conditional expectation is $\le 1$; averaging preserves $\E[\bar E_\lambda \mid D_w] \le 1$. By Ville/Markov, $\Prob(\bar E_\lambda \ge 1/\delta_r' \mid D_w) \le \delta_r'$ at a risk-unsafe $\lambda$; a union over the $\le |\lat|$ scanned $\lambda$ caps risk-side false certification at $|\lat|\delta_r' = \delta/4$ (selection among simultaneously valid e-value tests is free, as in Theorem~\ref{thm:U1}). With $\Prob(E_w^c) \le \delta_w = \delta/2$ and the weight-free floor bound at $|\lat|\delta_f' = \delta/4$ (Theorem~\ref{thm:U1}'s floor step, unchanged), the total false-certification probability is $\le \delta$. The grid range keeps every factor positive: the most negative $U_i = -((1-\alpha)B + \rho_\lambda) \ge -1/(2\theta_{\max})$, so $1 + \theta_g U_i \ge 1 - \theta_g/(2\theta_{\max}) \ge \tfrac12 > 0$. \hfill$\square$
\end{proof}

This is the formal arm behind the head-to-head of \S\ref{sec:validity}/Appendix~\ref{app:experiments}: a faithful \emph{batch} (counts-computable, order-free) weighted e-value, with a per-sample sequential betting e-process the higher-fidelity follow-up. Its certify-frequency weakly dominates B$'$'s on the audited grid (Appendix~\ref{app:experiments}); validity here is by Proposition~\ref{prop:score}, not by the $0$ observed violations.

\section{Additional Experimental Records}
\label{app:experiments}

This appendix carries the protocol details and the appendix-level figures and tables referenced from \S\ref{sec:synthetic} and \S\ref{sec:real}.

\subsection{Synthetic protocol}
\label{app:protocol}
The full validity grid (\S\ref{sec:validity}) is the B1 grid: 4 shift intensities $\times$ 256 parameter cells $=$ 1{,}024 cells, 1{,}000 replications per cell (five higher-variance edge cells re-run at 2{,}000--8{,}000).
Six arms run on every cell: oracle-A (known weights, run with the released two-family oracle kernel of Remark~\ref{rmk:releasedoracle}, not the three-family procedure displayed under Theorem~\ref{thm:modelA}), B$'$ (Algorithm~\ref{alg:bprime}, run with the released nuisance radius of Remark~\ref{rmk:releasedradius}), and four demonstration arms: floor-free certification with a post-hoc floor check, weighted conformal \citep{tibshirani2019conformal}, plug-in (certify whenever point estimates pass), and the DRO worst-case box of Proposition~\ref{prop:separation}.
Violation accounting per arm: a certifying replication violates if the true $(\RQ, \CQ)$ of the certified $\lambda$ breaks either constraint; Holm's correction \citep{holm1979simple} runs across certifying cells within each arm; per-cell 95\% Clopper--Pearson upper confidence bounds \citep{clopper1934use} are evaluated against $\dtol = 0.0625$ over powered cells (those with $\ge 47$ certifying replications), and the pooled CP-UCB is evaluated against $\delta = 0.05$.
These three quantities are reported separately throughout (Table~\ref{tab:validity_constants}).
Generators draw conditional-risk profiles and shifted targets from registered families; world parameters, lattice grids, and acceptance bands were fixed before the runs they judge.

\subsection{Contour transfer: qualitative rate-shape only}
\label{app:contours}

\begin{figure}[t]
    \centering
    \includegraphics[width=0.95\textwidth]{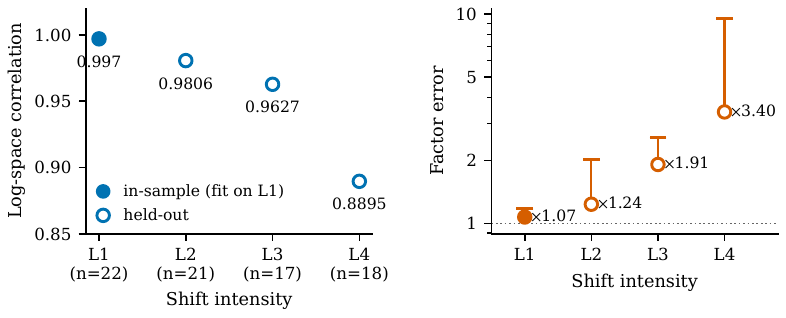}
    \caption{\textbf{Contour transfer is qualitative rate-shape only} (amended shape
    metrics: the authoritative reproducible record). Two-axis constants
    $(c_n, c_m)$ are least-squares fit in log space on the L1 contour points and
    applied \emph{unchanged} to L2--L4 (held-out). (a)~Log-space correlation between
    empirical and predicted boundary crossings per shift intensity: in-sample L1
    $.997$; held-out $.981/.963/.890$ at L2/L3/L4. (b)~Multiplicative factor error of
    the predicted boundary location: median (markers) with whiskers to the 90th
    percentile; held-out medians $\times1.24/\times1.91/\times3.40$. The strict
    pre-registered $\pm3\%$ bands failed at all intensities: ordering and rate shape
    transfer; constants do not.}
    \label{fig:contour_shape}
\end{figure}

Figure~\ref{fig:contour_shape} records the contour-transfer analysis behind \S\ref{sec:twoaxis} (Claim C3), from the amended shape metrics, which are the authoritative reproducible record.
Constants fit on the first intensity and held fixed track the shape and ordering of the certifiable boundary at higher intensities but miss its location by growing multiplicative factors; the strict pre-registered $\pm 3\%$ bands failed at all intensities.

\subsection{Localized geometry: recorded confounds}
\label{app:geometry}

The localized-geometry evidence is in the main text (Figure~\ref{fig:rank_correlation}, \S\ref{sec:geometry}, Claim~C4); two recorded confounds bound the claim's scope.
First, family-2 pooling: 162 of 640 worlds censor at the required-$n$ cap and the pooled rank correlation drops to $.503$ even though within-pair ratio agreement is $.995$; the within-pair structure is the meaningful signal, and we report it as qualified.
Second, the oracle-arm pooled correlations are not supportive of the localized functional, a range-constant artifact: the oracle arm's required-$n$ range is compressed by its much smaller constants (Table~\ref{tab:validity_constants}, bottom), leaving little rank variation for the functional to explain; we report this rather than excluding the arm.

\subsection{Two-axis CI-separation: failure at registered density, recovery at higher density}
\label{app:surfacecv}

\begin{figure}[t]
    \centering
    \includegraphics[width=0.95\textwidth]{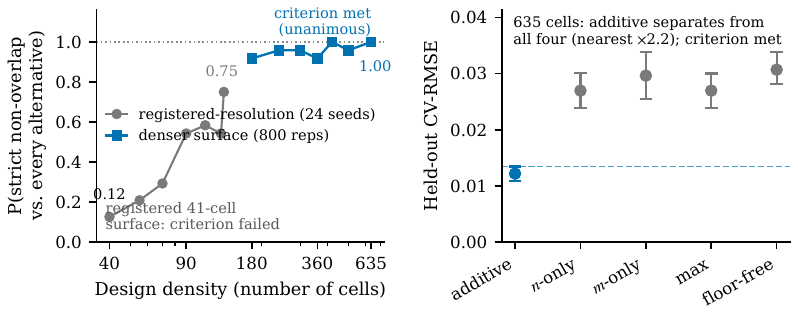}
    \caption{\textbf{Two-axis CI-separation: a sample-limited failure at the registered
    density that recovers at higher density} (Claim C2; surface CV, \emph{not} contour
    evidence). (a)~Probability that the additive form's held-out CV-RMSE strictly
    non-overlaps \emph{every} alternative, as design density grows: the
    registered-resolution run (24 seeds) rises $0.12$ (40 cells) $\to 0.75$ (134), and
    the denser surface (800 reps) continues $0.92 \to 1.00$ by 635 cells. At the
    registered 41-cell density the criterion fails (intervals overlap); it is
    met (unanimously across all 24 re-splits) by 635 cells. (b)~At 635 cells the
    additive held-out CV-RMSE ($0.0122$, 95\% CI $[0.0109, 0.0134]$; dashed guide at its
    upper CI) is disjoint from all four alternatives, the nearest $\times 2.2$ worse.
    This is the $s^*(n,m)$ surface-CV record (a Claim-C2 consistency check on the
    theorem-carried form); the CV-RMSE is not evidence of contour-constant transfer.}
    \label{fig:ci_separation}
\end{figure}

Figure~\ref{fig:ci_separation} records the model-comparison run behind the first half of \S\ref{sec:twoaxis} (Claim C2): at the registered 41-cell density the additive form attains the best held-out point estimate (surface-CV $R^2 = 0.9202$) but the registered CI-separation criterion fails (intervals overlap), while re-running the \emph{same} criterion at increasing design density resolves it; the criterion is met outright, and unanimously, by 635 cells (full numeric record: Appendix~\ref{app:revision}). The 41-cell failure was statistical power, not non-identifiability.

\subsection{\texorpdfstring{$\beta \to 0$}{beta -> 0} consistency diagnostic}
\label{app:beta0}

\begin{figure}[t]
    \centering
    \includegraphics[width=0.48\textwidth]{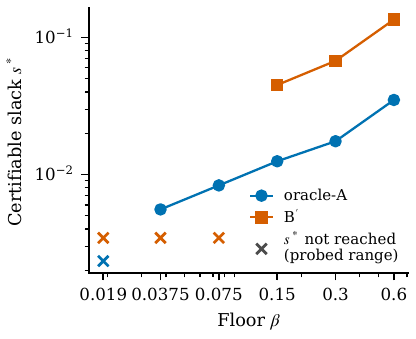}
    \caption{\textbf{$\beta \to 0$ consistency diagnostic} (registered
    theorem-aligned families): the smallest certifiable slack $s^*$ shrinks toward
    zero as the floor $\beta$ shrinks, for both oracle-A and B$'$, consistent with
    the floor-created reading of the map (both lower-bound axes vanish as
    $\beta \to 0$). Crosses mark floors where $s^*$ was not reached within the probed
    slack range (B$'$ for $\beta \le .075$; both arms at $\beta = .019$). No rate fit
    is drawn: none is recorded in the source run; the diagnostic is monotone
    vanishing, not an exponent estimate.}
    \label{fig:beta_vanishing}
\end{figure}

Because the floor-created reading of the map (Remark~\ref{rmk:floorcreated}) is front-loaded in the abstract and \S\ref{sec:intro}, Figure~\ref{fig:beta_vanishing} records its empirical consistency check: monotone vanishing of the smallest certifiable slack as $\beta$ shrinks, with censored floors marked rather than extrapolated.

\subsection{Sensitivity grid: refusal status only}
\label{app:sensitivity}

\begin{table}[t]
\centering
\caption{\textbf{Refusal-status robustness over registered nuisance choices (leg 2,
DeepSeek-V4-Flash).} Registered configuration $(K{=}16, B{=}10)$ inside the full $3\times3$ grid $K \in
\{8, 16, 32\} \times B \in \{5, 10, 20\}$ (9 cells), plus $\pm\tfrac{1}{2}$-cell lock-quantile jitter at the registered cell (2 rows; 11 variants in all);
full-calibration B$'$ on cached scores; no eval labels touched (status-only
sensitivity). All 11/11 variants return no-cert, with the risk side failing
all 64/64 lattice points and the floor side passing 15/64 in
every variant. This is refusal-status robustness only: it does not validate the stratified-shift
model (misspecification-to-breakage was not reached on this real workload; nuisance necessity is partly settled: a $K$-free core necessary in the known-$\eta$ submodel, the unknown-$\eta$ case open).}
\label{tab:sensitivity}
\small
\begin{tabular}{cccccc}
\toprule
$K$ & $B$ & Lock-quantile jitter & Status & Risk-side passes & Floor-side passes \\
\midrule
8 & 5 & 0 & no-cert & 0/64 & 15/64 \\
8 & 10 & 0 & no-cert & 0/64 & 15/64 \\
8 & 20 & 0 & no-cert & 0/64 & 15/64 \\
16 & 5 & 0 & no-cert & 0/64 & 15/64 \\
16 & 10 & 0 & no-cert & 0/64 & 15/64 \\
16 & 20 & 0 & no-cert & 0/64 & 15/64 \\
32 & 5 & 0 & no-cert & 0/64 & 15/64 \\
32 & 10 & 0 & no-cert & 0/64 & 15/64 \\
32 & 20 & 0 & no-cert & 0/64 & 15/64 \\
16 & 10 & $+\tfrac{1}{2}$ cell & no-cert & 0/64 & 15/64 \\
16 & 10 & $-\tfrac{1}{2}$ cell & no-cert & 0/64 & 15/64 \\
\bottomrule
\end{tabular}
\end{table}

Table~\ref{tab:sensitivity} gives the per-variant grid behind \S\ref{sec:sensitivity} (Claim C6): all 11 variants land in the same no-cert status, with identical risk-side ($0/64$) and floor-side ($15/64$) lattice counts throughout, so the leg-2 status does not move with the histogram resolution $K$, the bound $B$, or lock-quantile jitter.
The label is deliberate: this is refusal-status robustness only, not validation of the stratified-shift model; on this real workload misspecification-to-breakage was not reached. A dedicated synthetic sweep reaches it (the in-class breakage sweep of Appendix~\ref{app:revision}), and the nuisance necessity narrows to the unknown-$\eta$ edge (\S\ref{sec:notcertify}, Theorem~\ref{thm:t3-target}).

\subsection{Lattice diagnostic}
\label{app:latticediag}
The discrete-lattice phenomenon of Remark~\ref{rmk:lattice} and Appendix~\ref{app:exponent} is observed empirically: on coarse-lattice profiles, zero-margin cells return required-$n = \infty$ (no sample size certifies within the cell), and log--log exponent fits across cell sweeps land strictly between $-2$ and $0$, as the registered prediction states.
This is a lattice-design diagnostic: it indicates the lattice misses the frontier's margin, not that the map fails.

\subsection{Real-workload records and recorded deviations}
\label{app:realrecords}
The real-workload audit (\S\ref{sec:real}) ran on frozen, cached pipeline outputs with the SLA scenario $(\alpha, \beta, \delta) = (.10, .60, .05)$ fixed before any data was described.
Two deviations are on record, both availability-triggered and adjudicated under the registered escalation rule before any evaluation labels were read: leg 1 substituted Llama-3-8B \citep{grattafiori2024llama} for the originally registered model, and leg 2 re-entered with DeepSeek-V4-Flash as a conditional capability-ladder step, explicitly not independent validation, since it runs on the same corpus with the same eval set.
DeepSeek-V4-Flash had no citable public technical report at write time; it is therefore identified by name and by the registered deviation record rather than by a bibliography entry, and its inference outputs are hash-bound in the archived frozen cache (Appendix~\ref{app:repro}).
An access guard logged evaluation-label reads; the logs contain exactly one access, by the final evaluation.
The post-hoc frontier diagnosis uses 500 bootstrap resamples over evaluation items; the plug-in baseline's claim and violation rates are computed over its 1{,}000 per-leg certification replications (distinct from the 500 frontier bootstraps), leg-split.

\subsection{Extended empirical records: multi-family bite, B$'$ envelope, characterized failures}
\label{app:revision}
These records extend the main-text claims with the breadth, envelope, and failure-mode characterizations; every number traces to a committed result file.

\paragraph{Multi-family bite (Claim C1 breadth).}
The labeled-axis bite (required-$n$ vs.\ slack, oracle arm, floor freed) was re-run across three structurally distinct families; the steep $s^{-2}$ form holds across all three, the exact $-2$ being specific to the regular-margin family. The exponent ordering tracks the accepted-region margin geometry (margin-vs-slack exponent $b$):
\begin{center}\footnotesize
\begin{tabular}{@{}lcccc@{}}
\toprule
family & bite slope & 95\% CI & margin exp.\ $b$ & band $[-2.3,-1.7]$ \\
\midrule
continuum (registered) & $-2.002$ & $[-2.14,-1.86]$ & $1.10$ & in band \\
kcell (on-class) & $-1.617$ & $[-1.76,-1.47]$ & $0.91$ & below \\
non-theorem-aligned & $-1.845$ & $[-2.02,-1.67]$ & $0.99$ & -- \\
\bottomrule
\end{tabular}
\end{center}
The qualitative account (required-$n \propto \bar V/\mathrm{margin}^2$, so a sub-linear margin yields a shallower exponent) reproduces the ordering but not the exact exponent, which additionally carries finite-sample range terms.
\emph{Slope robustness (registered continuum family).} The continuum fit rests on 9 slack design points; a deterministic pairs bootstrap (seed $20260624$, $20{,}000$ resamples) returns a 95\% slope CI $[-2.11,-1.89]$ (median $-2.005$), consistent with the OLS normal-approximation CI $[-2.14,-1.86]$, while the log-space residuals stay $\le 0.10$ in magnitude and the quadratic-curvature CI contains $0$: no systematic departure from the linear log--log law. The two smallest-but-one slacks ($0.020,0.017$) resolve to the same rung of the required-$n$ search ladder (both $149{,}383$, while the smallest slack $0.014$ resolves to a higher rung, so this is a grid coincidence and not a ceiling), a mild flattening that does not bend the fit out of band.

\paragraph{B$'$ certifiable envelope and the $K$-premium (Claim C5; companion to Theorem~\ref{thm:t3-target}).}
B$'$ certifies across the partition sweep $K \in \{4,8,16,32,64\}$; its onset premium over oracle-A grows as $K^{0.955}$ (95\% CI $[0.755, 1.154] \ni 1$, Figure~\ref{fig:t1a_premium}), while the declared bound $B$ is second-order: onset is $B$-insensitive when the true ratios sit inside the bound and grows only as $B^{0.62}$ when they approach it (bounded-ratio coupling: heavy weight forces small source mass $p(A) \propto 1/B$, cancelling a power of $B$ in U2's $B^2 K\,p(A)$). The B$'$ labeled onset obeys the same $s^{-2}$ bite as oracle-A (slope $-2.005$, CI $[-2.20,-1.81]$). At $\kappa s = 0.042$, $K=4$ the envelope run sits in the same regime as the main-text constants (it does not re-derive them: the two use different onset conventions): its cert-frequency-$\ge\!\tfrac12$ onsets (oracle-A $6{,}367$, B$'$ $718{,}459$) fall inside the Table~\ref{tab:validity_constants} first-to-full-certification ranges ($4{,}096$--$16{,}384$ and $524{,}288$--${\sim}10^6$), and the plotted premium is measured on this $\ge\!\tfrac12$ onset ($\approx\!113\times$ at $K=4$, the same order as the $64\times$ full-certification ratio).

\begin{figure}[t]
\centering
\includegraphics[width=0.46\textwidth]{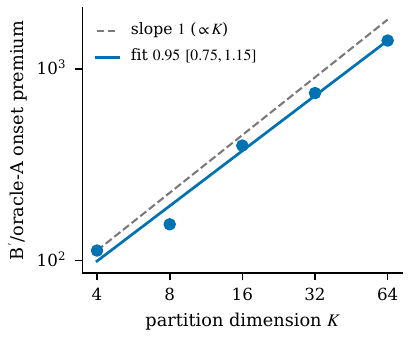}
\caption{\textbf{The B$'$ nuisance premium scales linearly in the partition dimension $K$.} Onset premium $=$ onset$_{\mathrm{B}'}(K)/$onset$_{\mathrm{oracle}}$, log--log; fitted slope $0.955$, 95\% CI $[0.755, 1.154] \ni 1$, against a reference slope-$1$ line. The $K$-premium is the histogram estimator's $\ell^1$ full-weight-vector recovery price (Proposition~\ref{prop:I1}), matched from below by the $K$-free necessary core (lattice-valued; Theorem~\ref{thm:t3-target}).}
\label{fig:t1a_premium}
\end{figure}

\paragraph{CI-separation recovers at higher design density (Claim C2).}
On the registered 41-cell $s^*(n,m)$ surface the additive form has the clearly best held-out CV-RMSE center ($0.0154$, $\approx 30\%$ below the nearest alternative, floor-free at $0.0220$) but the bootstrap CIs overlap. Re-running the \emph{same} registered model-comparison procedure (byte-identical criterion; only design density and Monte-Carlo depth raised: a pre-registration-allowed synthetic design axis, no formula or constant change) at increasing cell count, the probability of strict non-overlap against \emph{every} alternative rises $0.12, 0.21, 0.29, 0.54, 0.58, 0.54, 0.75$ at $40/55/70/90/110/130/134$ cells (the registered-resolution run, $24$ seeds per size) and continues $0.92, 0.96, 1.00$ at $180/240/635$ cells on a denser surface ($13^2 \times 5$ design, $800$ reps, $635$ certified cells), with the additive$\leftrightarrow$floor-free gap growing monotonically $+.004 \to +.016$ over this trajectory. At 635 cells the criterion is met outright: additive held-out CV-RMSE $0.0122$ $[0.0109, 0.0134]$ against the nearest (n-only / max-form) $0.0270$ $[0.0239, 0.0302]$ (a $2.2\times$ separation with the alternative's lower CI above the additive upper CI), and the verdict is unanimous across all $24$ re-splits (Figure~\ref{fig:ci_separation}). The 41-cell failure was statistical power, not non-identifiability: the additive two-axis surface is the genuinely best fit, recovered exactly as the registered sample-limited reading predicted (the per-axis exponent-containment sub-test is a separate check, not revisited here).

\paragraph{Contour-constant drift is systematic (Claim C3).}
The contour factor errors ($\times 1.07/1.24/1.91/3.40$ at L1--L4, L1 in-sample) are strictly monotone in shift intensity (rank correlation $1.0$) and grow $\times 3.2$ from L1 to L4 while the accepted-region variance proxy grows only $\times 1.3$; the log-correlation stays $\ge .889$ at every intensity. The non-transfer is a predictable, intensity-driven constant inflation with rate-shape preserved, not an unstructured breakdown; the constants drift, the ordering and shape transfer.

\paragraph{Misspecification-to-breakage, exhibited in-class (the B$'$ conditional-validity boundary).}
The real-workload sensitivity grid (\S\ref{sec:sensitivity}) reached only refusal-status invariance; a dedicated synthetic sweep exhibits the breakage boundary that B$'$'s $\cK$-measurability assumption guards. We redistribute target mass \emph{within} each accepted $\cK$-cell toward its higher-$\eta$ bins at \emph{fixed} per-cell totals: an adversarial member of $\cW_B$ ($w \le B = 10$ throughout the reported grid) that violates \emph{only} $\cK$-measurability, the within-accepted-region $w$--$\eta$ rank correlation rising from $-0.87$ (on-class) to $+0.47$. Because each cell total is preserved, the certificate's per-cell masses, weights $\hat w_k$, and nuisance budget $\rho_\lambda$ are unchanged, so B$'$ certifies the same aggressive near-frontier operating point at \emph{every} perturbation strength (certify-frequency $1.0$, spread $0.0$ across the sweep); yet truth-level violations jump from $0$ to $100\%$ once the receding true frontier crosses that locked point (onset $\epsilon = 0.8$ at $n = 4{,}194{,}304$: realized accepted risk $0.184 \!\to\! 0.208$ past $\alpha = 0.2$ as $\betastar$ falls $0.773 \!\to\! 0.715$). The bias is invisible to the certificate and is \emph{not} small-sample noise: at fixed $\epsilon = 1.0$ the violation rate \emph{rises} with $n$ ($0, 0, 1, 1$ at $n = 2^{18}\!\dots\!2^{24}$), since more data only makes B$'$ confident enough to certify the aggressive point the hidden bias then breaks. A same-worlds control pins this on misspecification rather than infeasibility: oracle-A (Model-A, true per-bin $w$) records $0$ violations at every $\epsilon$. This is an adversarial \emph{sensitivity/breakage} characterization (not an off-partition robustness guarantee and not a real-workload stratified-shift validation), and it \emph{confirms} the disclosed conditionality of Theorem~\ref{thm:U1}: validity holds for $\cK$-measurable $w$, and off-class the bias is not estimable from covariates alone (Remark~\ref{rmk:misspec}).

\paragraph{B$'$ positive certification: targeted existence check (companion to Claim C5).}
The full validity grid (\S\ref{sec:validity}) has B$'$ certify only $8/1{,}024$ cells, a harsh-constants slice. This is a targeted \emph{existence} check (not a breadth claim) that this low rate is not a fundamental inability of the implementable certificate. On the on-class \texttt{kcell} world ($K = 4$, $\mathrm{loc} = 2.0$, $B = 5$, $\alpha = 0.20$, a $50$-point lattice, $n = m$ over $2^{15}\!\dots\!2^{22}$, $2{,}000$ replications per cell), B$'$ positively certifies non-trivial operating points at both probed slacks: at $s = 0.20$ ($\beta = 0.691$) it first reaches certify-frequency $1.0$ at $n = m = 262{,}144$, with realized min true coverage $\approx 0.723$ and max true risk $\approx 0.150$; at $s = 0.12$ ($\beta = 0.771$) at $n = m = 1{,}048{,}576$ (the same onset as the main-text full-certification constant at this operating point), with min true coverage $\approx 0.805$ and max true risk $\approx 0.175$. Across all B$'$ certifying replications the violation count is $0/16{,}297$, pooled CP-UCB $1.8\times10^{-4}$. This denominator is certifying replications, so the interval is \emph{conditional} on a certificate having been issued; $\delta$ bounds the unconditional failure probability and does not control this conditional rate, so we report the interval without comparing it to $\delta$. This is targeted synthetic existence evidence only: it does \emph{not} show broad (multi-family/slack/$K$), practically easy (the onsets are large), or real-workload positive certification, and it does not overturn the $8/1{,}024$ rate; it reads it. Oracle-A is a reference arm here ($1/32{,}000$ certifying replications violated, a single near-boundary risk-side event at true risk $\approx \alpha$, pooled CP-UCB $1.5\times10^{-4}$).

\paragraph{SCoRE-inspired betting arm: the equal-construction risk-test head-to-head.}
\label{app:scorehh}
A SCoRE-inspired floor-augmented betting arm, in the spirit of \citet{bai2026score}, is evaluated here under identical conditions. We add it as a seventh arm whose $\hat w$, localized nuisance budget $\rho_\lambda$, weight-free floor LCB, four-block split, confidence budget, and output rule are \emph{verbatim-identical} to B$'$ (Algorithm~\ref{alg:bprime}); the \emph{only} difference is the risk test: a weighted betting e-value $\prod_i (1 + \lambda_{\mathrm{bet}} U_i)$, $U_i = -(Z_i + \rho_\lambda)$, mixed over a per-replicate bet grid, replaces B$'$'s empirical-Bernstein UCB. Conditional on $D_w$ the $D_r$ points are i.i.d., so for a fixed bet $\E[\prod_i(1+\lambda_{\mathrm{bet}} U_i) \mid D_w] = \prod_i(1+\lambda_{\mathrm{bet}}\E[U_i]) \le 1$ under the unsafe null on the same weight-good event B$'$ uses; the mixture is valid by averaging and the union over $|\lat|$ matches B$'$'s risk budget $\delta/4$ exactly (full statement and proof: Proposition~\ref{prop:score}; pre-registered). On the deep-audit subset (6 cells $\times$ 4{,}000 reps) SCoRE+floor records $0/24{,}000$ violations (pooled CP-UCB $1.2\times10^{-4} \le \delta$); across the 1{,}024-cell grid (matched reps) its certify-frequency is $\ge$ B$'$'s in \emph{every} cell, strictly greater in $8$, with $0$ violations and the \emph{same} powered-cell count (a non-trivial-certification count of $10$ vs B$'$'s $9$ at equal reps; the headline $8/583$ figure uses the registered escalated reps, so this matched-reps run is the head-to-head, not a re-derivation of the headline). The reading: this equal-construction e-value arm is valid and, in this fixed matched-replication run, weakly dominates B$'$ in \emph{observed} certify-frequency ($\ge$ in every cell, strictly $>$ in 8) \emph{without} enlarging the powered-cell set (a small, uniform margin); no paired test or interval is computed for that ordering, so it is an ordering observed in this Monte Carlo run and not an estimated power gap. B$'$'s practical conservatism is \emph{shared} by this alternative risk test; so the certificate's contribution is the lower-bound theory, two-resource map, and explicitly priced nuisance, not a finite-sample power advantage. (Determinism check: with the $\hat w$/$\rho$/floor block identical, B$'$ and oracle-A reproduce their $0$-violation tallies on these cells.)

\emph{Scope of the comparison.} This arm is an equal-budget \emph{risk-test ablation} inside B$'$'s architecture, not a source-faithful implementation of \citet{bai2026score}. That work builds conformal risk-adjusted e-values for individual test instances and thresholds them (with a reweighted extension under covariate shift), whereas this arm keeps B$'$'s weight estimator, nuisance radius, floor LCB, split, and output rule and bets on the calibration block; Proposition~\ref{prop:score} proves validity for the arm as implemented, and no claim is made that it reproduces SCoRE's selection rule or tuning. It is accordingly reported as the equal-construction alternative we tested against B$'$'s risk test, which is what the lower-bound thesis needs, and not as the strongest available same-target baseline; a source-faithful SCoRE comparison with its own selection rule and configuration is not run here.

\subsection{Pre-registered future work}
\label{app:futurework}
Of the three pre-registered directions, two are addressed in this work: multi-family bite sweeps on non-theorem-aligned families, and denser surface sweeps that \emph{resolve} the C2 CI-separation criterion (now met at 635 cells) and characterize the systematic C3 contour drift (Appendix~\ref{app:revision}); and one stands: an uncensored family-2 rerun with a fixed-range-constant oracle test.
The open theory edges connect to active lines: the nuisance-necessity question is now partly resolved (Theorem~\ref{thm:t3-target}, Appendix~\ref{app:necessity}: a $K$-free necessary core, lattice-valued, with unknown-$\eta$ the remaining edge) and the misspecification breakage boundary is now exhibited empirically (the in-class breakage sweep, Appendix~\ref{app:revision}); richer shift models beyond stratification \citep{laghuvarapu2026kmmcp, wang2025subpopulation, xu2025wasserstein, yang2026multidistribution, correia2025nonexchangeable}, multigroup transfer \citep{ye2026multicalibration}, and floor-aware extensions of deployment-facing risk control \citep{zollo2024prompt, wang2026inference}.

\subsection{Reproducibility and compute}
\label{app:repro}
Every headline number in the paper was produced by a pipeline in which derived-report scripts regenerate all aggregates from raw run files and figure scripts read only those files, hard-coding no numbers; rather than shipping result files, the released artifact regenerates the synthetic ones from pinned seeds and the real frontier maps from the shipped frozen evaluation arrays (the real refusal tallies additionally require rebuilding the inference cache from the external model weights).
The release carries a claim-to-command map; four of its entries need the specific subcommand that writes the artifact, rather than the parent command, and we state them here so the mapping is unambiguous: \texttt{run\_m2\_b2 analyze} writes the rank-correlation record (the sweeps must be run first, which the command hard-gates), \texttt{run\_t1a bsweep\_tight} is the sole producer of the tight-overlap B-sweep and is deliberately not part of \texttt{run\_t1a all}, \texttt{run\_t3 breakage} writes the misspecification-breakage record, and \texttt{derived\_reports sensitivity} writes the $K$/$B$/jitter sensitivity grid (the real-data lattice-stability record is the separate \texttt{real.certify\_real stability}).
The post-hoc bootstrap interval on the bite slope is computed by a script shipped alongside the run outputs rather than by a run command.
The synthetic suite runs at CPU scale (seconds to minutes per block); the two real legs consumed at most 2--4 GPU-days of cached frozen-pipeline inference in total, and the caches are hash-bound and archived read-only.

\paragraph{Generator and seeding (synthetic suite).}
All synthetic randomness derives from a single master seed $\sigma_0 = 20260611$ through NumPy \texttt{SeedSequence} spawning: each grid cell--replication pair draws an independent PCG64 stream keyed by $(\text{cell index}, \text{replication index})$, and each world-family draw is keyed by $(\text{family id}, \text{draw index})$, so every cell and replication is independently and reproducibly seeded.
A world fixes a finite bin partition with a declared $K$-cell stratified structure (the B$'$ class $\cK$) at a per-intensity bounded-ratio class constant $B \in \{2,3,4,5\}$ (one value per shift level L1--L4); on-class families set the density ratio cell-constant, off-class families perturb the ratio within cells (used only for the misspecification-direction sweeps).
The B1 validity grid is $4$ shift intensities $\times$ $256$ cells $\times$ $1{,}000$ replications, with per-cell levels $(\alpha, \delta) = (0.2, 0.05)$, a $50$-point nested-threshold lattice, and the floor $\beta$, frontier $\betastar$, and margin $\kappa$ recorded per cell in the grid files; the deep-audit subset re-runs $6$ cells at $4{,}000$ replications.
The certificate, arm, and generator definitions, the four pre-registration locks, and the per-cell grid files together fix every parameter behind the headline tables; the code, with the synthetic master seed pinned in it, and the text-free frozen real-data evaluation arrays with their numeric lock records are publicly available at \url{https://github.com/yahiko-l/certify-or-refuse} (a code-and-data release whose commands regenerate the synthetic results and the real frontier maps; reported result files are not shipped), while all datasets used are public (SQuAD and NewsQA via the MRQA distribution) and the synthetic worlds are fully parameterized here.

\section{Is the Model-B$'$ Nuisance Cost Minimax Necessary?}
\label{app:necessity}

\paragraph{The open edge, and the answer.}
Theorem~\ref{thm:U2} certifies in Model-B$'$ once the histogram split sizes satisfy the \emph{per-axis} conditions
\begin{equation}
\label{eq:t3-u2}
  m_w \;\gtrsim\; \frac{K_{\lambda_s}\, q(A_{\lambda_s})}{(\kappa s)^2},
  \qquad
  n_w \;\gtrsim\; \frac{B^2\, K_{\lambda_s}\, p(A_{\lambda_s})}{(\kappa s)^2}
  \qquad(\text{plus }L_w\text{-terms}),
\end{equation}
and Appendix~\ref{app:bprime} leaves the matching lower bound open. The honest resolution is \texttt{PARTIAL}: the $K$-bearing split sizes are \emph{not proved} necessary and are partly architecture artifacts, a $K$-free necessary core is established (for lattice-valued certification, the standing output model of \S\ref{sec:setup}), and one two-sided edge stays open:
\begin{quote}
\emph{The $K$-bearing split sizes \eqref{eq:t3-u2} are \textbf{not demonstrated} minimax necessary. (i) There is no separate population $p$-functional to estimate: by the identity below, the population feasibility and coverage functionals are independent of the source marginal $p$. Any lower bound matching the histogram's $n_w$ rate would therefore have to arise through the unknown-$\eta$ labeled-source/alignment problem, not from identifying a standalone $p$-functional; the task does not charge for $p$-estimation as a separate \emph{population} axis (a procedure may still estimate $p$, and $p$ does influence the finite-sample labeled cost through how often the label channel visits each cell). (ii) The factor $K$ on both axes is the $\ell^1$ price of recovering the entire $K$-dimensional weight vector (Proposition~\ref{prop:I1}), not an evident certification price: the \emph{established} necessary core is \emph{$K$-free} (the complete unknown-$\eta$ rate remaining open): a target term $\Omega(q(A)/s^2)$ (the floor $m$-axis localized; $\kappa$ cancels) and a labeled-source term $\Omega(\bar V/(\kappa s)^2)$ with $\bar V\le\E_P[w^2\Sacc]$ (the existing $n$-axis; worst-slice instances \emph{expected} to attain $\bar V\asymp B^2p(A)$). (iii) The $K$-free \emph{target} bound is proved in the \emph{known-$\eta$ submodel} (Theorem~\ref{thm:t3-target}); the $K$-free \emph{labeled-source} bound is the existing (varying-$\eta$) $n$-axis of Theorem~\ref{thm:lower}, read as a reduction. Whether the \emph{full unknown-$\eta$} Model-B$'$ also avoids $K$ (via a direct scalar-functional certificate) or instead reintroduces $K$ through the unknown-$\eta$ source$\leftrightarrow$target alignment (Theorem~\ref{thm:modelB-incon}) is the one genuinely open edge.}
\end{quote}
We prove the $K$-free target lower bound (Theorem~\ref{thm:t3-target}, rigorous), reduce the source lower bound to the map's $n$-axis (\S\ref{sec:t3-source}), show in which sense the $K$ in \eqref{eq:t3-u2} is an architecture artifact (\S\ref{sec:t3-Kartifact}), and record the precise verdict and the open edge in \S\ref{sec:t3-verdict}.

\subsection{The structural identity that reframes the question}
\label{sec:t3-identity}

The entire analysis turns on one exact identity. In Model-B$'$, $w$ is constant on each cell, so for any cell $\cX_k$ the within-cell target map equals the within-cell source map, $Q(\cdot\mid\cX_k)=P(\cdot\mid\cX_k)$; writing $\eta_k=\E_P[\eta\mid\cX_k]$ for the within-cell loss mean and $q_k=Q_X(\cX_k)$, the feasibility-deciding functional collapses to a target-mass-weighted sum:
\begin{equation}
\label{eq:t3-GQ}
  \GQ(\lambda)\;=\;\E_P[w\,\Sacc\,(\eta-\alpha)]\;=\;\E_Q[\Sacc(\eta-\alpha)]\;=\;\sum_{k:\,\cX_k\subseteq A_\lambda} q_k\,(\eta_k-\alpha),
  \qquad
  \CQ(\lambda)=\sum_{k\subseteq A_\lambda} q_k .
\end{equation}
\textbf{Both the feasibility verdict ($\GQ\le0$) and the coverage depend only on the cell pairs $(q_k,\eta_k)$, never on the source marginal $p$} (here $\eta_k=\E_P[\eta\mid\cX_k]$; in the construction below $\eta$ is cellwise constant, so $\eta_k$ is just its cell value). The histogram certificate of Algorithm~\ref{alg:bprime} nonetheless estimates $\GQ$ as a \emph{source} expectation $\E_P[w\Sacc(L-\alpha)]$, because labels $L$ live only on the source side ($Q_X$ is unlabeled), and therefore needs $\hat w_k=\hat q_k/\hat p_k$, hence \emph{both} $p$ (from $n_w$ source covariates) and $q$ (from $m_w$ target covariates). The $p$-dependence is thus a property of the \emph{reweighting estimator}, not of the population certification functional (though $p$ still enters the finite-sample labeled cost through how often the label channel visits each cell). This is the lever for the whole verdict.

Throughout we use the standing local regime $s\le s_0\wedge c_0\beta$, regular margin $\kappa$, $\delta\le1/8$, $B\ge2$, and the registered-partition setting of Assumption~\ref{ass:bprime}; $A$ denotes a critical accepted region (union of registered cells) with target mass $q(A)=\Theta(\beta)$ pinned by the floor and source mass $p(A)$.

\subsection{The $K$-free necessary core (the provable theorem)}
\label{sec:t3-statement}

\begin{theorem}[$K$-free target-channel necessity, known-$\eta$ submodel]
\label{thm:t3-target}
Fix $\alpha\in[0.3,0.7]$, $\beta\in(0,\beta_0]$ with absolute $\beta_0\le0.6$, $B\ge2$, $\delta\le1/8$, a regular margin $\kappa\le\kappa_0\min\{\alpha,1-\alpha\}$ (absolute $\kappa_0\in(0,\tfrac12)$), and a slack $s\in(0,c_0\beta]$ with an absolute constant $c_0\le1/8$ chosen small enough that the construction's range and floor margins below hold simultaneously (so $\varepsilon=s/4\le\beta/32$). There is a pre-registered partition $\cK$ with two \emph{atomic} band cells $\Phi_\pm$ (no lattice threshold cuts their interior), a critical accepted region $A$ with coverage $\CQ(A)\in[\beta,2\beta]$ (so $q(A)=\Theta(\beta)$), and two worlds $W^+,W^-$ of $\cW_B$ (using Assumption~\ref{ass:bprime} only through its pre-registered finite partition and cellwise-constant ratio $w$, and the one-sided membership $w\le B$; the sample-size-dependent histogram bound $p_k\ge p_{\min}$ of Assumption~\ref{ass:bprime}(ii) is an estimator-feasibility condition, not part of this information-theoretic lower-bound class), sharing the \emph{same fixed, known} source marginal $p$ and the \emph{same fixed, known} loss profile $\eta$, differing only in the target marginal $q$ on $A$, such that:
\begin{enumerate}
\item[\textnormal{(i)}] $W^-$ is feasible with frontier slack $\ge s$ and a certifiable lattice point $\lambda_A$ (coverage $\ge\beta$, risk margin $\ge\tfrac12\kappa s$);
\item[\textnormal{(ii)}] every coverage-$\ge\beta$ lattice point of $W^+$ is risk-infeasible ($\RQ>\alpha$);
\item[\textnormal{(iii)}] the laws of \emph{all} the procedure's data (source covariates+labels and target covariates) under $W^+$ and $W^-$ satisfy
\(
  \KL(W^+\,\|\,W^-)\le \tfrac18
\)
whenever the total number $M$ of target covariate draws (i.e.\ $m_w+m_f$) satisfies $M\le c\,q(A)/s^2$, for an absolute $c>0$.
\end{enumerate}
Consequently no $(\alpha,\beta,\delta)$-valid \emph{lattice-valued} procedure with $\delta\le1/8$ (per the standing definition of \S\ref{sec:setup}: outputs a point of $\lat$ or refuses; $W^+$ is lattice-infeasible by (ii)) certifies the feasible world with probability $\ge 1/2$ once $M\le c\,q(A)/s^2$:
\[
  \boxed{\;\text{total target covariate sample } \ M \;=\; \Omega\!\Big(\frac{q(A)}{s^2}\Big) \ \text{ is necessary.}\;}
\]
\textbf{No factor of $K$ appears.} The bound is exactly the floor-created $m$-axis of Theorem~\ref{thm:lower} ($m\gtrsim\beta/s^2$) read at the localized mass $q(A)\asymp\beta$; the $\kappa$ \emph{cancels} (the functional gap $\kappa\varepsilon$ and the risk-coefficient scale are both $\kappa$, so $\varepsilon\asymp s$ governs detection), and it does \emph{not} reproduce the histogram's $K\,q(A)/(\kappa s)^2$.
\end{theorem}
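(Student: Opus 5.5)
The proof is a single Le~Cam two-point argument in which the target covariate channel is the only channel that carries any signal. I will localize the $m$-axis pair of Theorem~\ref{thm:lower} (Appendix~\ref{app:maxis}) to two atomic band cells, and move target mass \emph{inside} the accepted region rather than out of it.

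\emph{Construction.} Take $\cX=[0,1]$ with $P_X=\mathrm{Unif}$. The pre-registered partition $\cK$ is ordered as $G\prec\Phi_-\prec\Phi_+\prec H$, and the nested thresholds cut only at these cell boundaries and inside $H$.
\begin{itemize}
\item $\eta$ is fixed and known: $\alpha-a$ on the deep-safe block $G$, $\alpha-\kappa$ on $\Phi_-$, $\alpha+\kappa$ on $\Phi_+$, and $\alpha+\kappa$ on the hot block $H$.
\item The source masses $p$ are fixed. I take $p(\Phi_\pm)\asymp\beta$ and $p(G)\asymp\beta$.
\item The critical region is $A=G\cup\Phi_-\cup\Phi_+$. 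In $W^-$ its target coverage $q(A)$ lies in $[\beta,2\beta]$, while $q(G\cup\Phi_-)<\beta$, so the only coverage-$\ge\beta$ lattice points are $A$ and its extensions into $H$.
\item $W^+$ equals $W^-$ except that it moves target mass $\varepsilon=s/4$ from $\Phi_-$ to $\Phi_+$. This keeps $q(A)$, $q(G)$ and $q(H)$ unchanged, so the coverages of all lattice points agree across the two worlds.
\end{itemize}

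Membership in $\cW_B$ only needs $w_k=q_k/p_k\le B$ on each cell. Because $\varepsilon\le\beta/32\ll q(\Phi_\pm)$ and all $w_k$ stay in $[1/2,2]$, this holds for $B\ge2$. I tune $a$ and $q(H)$ so that $W^-$ has budget $\budget(A)=\E_Q[\mathbf 1_A(\alpha-\eta)]$ equal to $\tfrac12\kappa s$ up to constants. Then $W^-$ has relaxed frontier $\beta+s$, realized by a threshold inside $H$, and regular margin $\kappa$ coming from the hot left-neighbourhood. This gives (i).

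\emph{Infeasibility under $W^+$.} Moving $\varepsilon$ of target mass from a cell with margin $+\kappa$ to a cell with margin $-\kappa$ lowers the budget of every set containing $\Phi_\pm$ by exactly $2\kappa\varepsilon=\kappa s/2$. With the constants chosen so that $\budget_{W^-}(A)<\kappa s/2$, the set $A$ becomes infeasible under $W^+$. Extending the threshold into $H$ only subtracts further budget. Every proper sub-threshold of $A$ has coverage $<\beta$ in both worlds. This gives (ii). The risk coefficient $\kappa$ appears both in the budget gap $2\kappa\varepsilon$ and in the target budget $\kappa s$, so $\varepsilon\asymp s$ with no power of $\kappa$; this is the cancellation the statement mentions.

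\emph{Indistinguishability.} The source law is $(p,\eta)$, which is identical in the two worlds, so the source channel contributes zero KL. For the target channel, $dQ^+/dQ^-$ lies in $[1-\varepsilon/q(\Phi_-),\,1+\varepsilon/q(\Phi_+)]\subseteq[\tfrac12,2]$. Hence
\[
\KL\big((Q^+)^{\otimes M}\,\big\|\,(Q^-)^{\otimes M}\big)\le M\chi^2(Q^+\|Q^-)\le M\varepsilon^2\big(q(\Phi_-)^{-1}+q(\Phi_+)^{-1}\big)\lesssim M s^2/q(A).
\]
This is at most $1/8$ once $M\le c\,q(A)/s^2$. The bound counts all target draws $m_w+m_f$ together, so it holds for any use of the data blocks. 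The Le~Cam step is then identical to Appendix~\ref{app:naxis}: Pinsker gives $\TV\le1/4$, and certifying $W^-$ with probability $>1/2$ would force certification in $W^+$ with probability $>1/4>\delta$, contradicting validity by (ii).

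\emph{Main obstacle.} I expect the hard part to be the simultaneous bookkeeping of the constants. One must check that $W^-$ has slack exactly at least $s$ with regular margin $\kappa$, realized on the lattice, and that the budget at $A$ sits strictly between $0$ and $2\kappa\varepsilon$. One must also keep the Bernoulli levels inside $[0.1,0.95]$ using $\alpha\in[0.3,0.7]$ and $\kappa\le\kappa_0\min\{\alpha,1-\alpha\}$, keep $q(G\cup\Phi_-)<\beta\le q(A)\le2\beta$, and keep $H$ large enough to host the frontier at $\beta+s$. I would first fix $q(\Phi_\pm)=\beta/2$ and $q(G)=\beta/4$, then solve for $a$ so that the frontier lands at $\beta+s$. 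If that fails, I would fall back to shrinking $c_0$.
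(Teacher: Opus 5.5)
Your argument is the paper's proof in all essentials. Both worlds share the fixed, known $(p,\eta)$ and differ only by a target-mass transfer between two atomic band cells of opposite sign inside $A$. So the source channel contributes no KL, the target channel costs $M\chi^2\lesssim M\varepsilon^2/q(A)$ with no $K$, and Le~Cam runs against the lattice-infeasible $W^+$. Two differences are cosmetic: you order $\Phi_-$ before $\Phi_+$, and you use a one-way transfer of $\varepsilon$ (budget gap $2\kappa\varepsilon=\kappa s/2$) instead of the paper's swap $q_0\mp\varepsilon\mapsto q_0\pm\varepsilon$ (gap $4\kappa\varepsilon=\kappa s$).

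\textbf{The constants seed fails.} Your proposed starting values do not work, and your fallback does not repair them.
\begin{itemize}
\item With $q(\Phi_\pm)=\beta/2$ and $q(G)=\beta/4$, the set $A$ alone has coverage $5\beta/4>\beta+s$, since $s\le\beta/8$.
\item So no $a\ge0$ puts the $W^-$ frontier at $\beta+s$: solving gives $a\beta/(4\kappa)=s-\beta/4<0$. Shrinking $c_0$ only makes this worse.
\item If you instead accept the larger frontier, the world has slack about $\beta/4$. That is outside the local regime $s\le c_0\beta$, and you get a plateau-type bound in the spirit of Lemma~\ref{lem:plateau}, not the slack-$s$ floor axis the theorem localizes.
\end{itemize}

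\textbf{The fix follows from your own constraints.} Everything accepted after $A$ is hot at level $\alpha+\kappa$, so the $W^-$ frontier is $\betastar=q(A)-\GQ(\lambda_A)/\kappa$. A budget $-\GQ(\lambda_A)=\theta\kappa s$ with $\theta<\tfrac12$ (needed for the flip) therefore forces $q(A)=\beta+(1-\theta)s$. The coverage of $A$ must sit within $s$ of $\beta$, as in the paper's $\CQ(A)=\beta+\varepsilon$.

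For instance, take $q(G)=\beta/2$, $q^-(\Phi_\pm)=\beta/4+(1-\theta)s/2$, $a=2\theta\kappa s/\beta$, $p_k=q^-_k$ on every cell, and $\theta=0.4$. This gives:
\begin{itemize}
\item $q(G\cup\Phi_-)<\beta$;
\item a lattice frontier at $\beta+s$, via a hot threshold of $Q$-mass $\theta s$;
\item $\GQ^{W^+}(\lambda_A)=0.1\kappa s>0$;
\item $w^+\in[7/8,9/8]$.
\end{itemize}

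\textbf{The risk-margin clause of (i) needs checking.} Your sketch does not verify it. The condition $\alpha-\RQ(\lambda_A)=\theta\kappa s/q(A)\ge\tfrac12\kappa s$ requires $q(A)\le2\theta$. This holds here, since $q(A)\le0.645$. But the one-way transfer squeezes the $W^-$ budget at $A$ into $[\tfrac12\kappa s\,q(A),\,\tfrac12\kappa s)$. The paper's doubled gap gives $-\GQ^{W^-}(\lambda_A)=3\kappa s/4$, with room to spare.

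\textbf{Minor point.} Not all lattice-point coverages agree across the two worlds: the prefix $G\cup\Phi_-$ loses $\varepsilon$ under $W^+$. This is harmless, since that prefix only moves further below $\beta$.
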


\begin{proof}
\emph{Construction.} On $\cX=[0,1]$, $P_X=\mathrm{Unif}$, with the score increasing in $x$ so the nested lattice is the set of prefix thresholds $\{$accept $[0,t]\}$. We place the blocks in the fixed score order
\[
  G,\ \Phi_+,\ \Phi_-,\ H_0,\ H_1,
\]
and take the lattice to contain \emph{only} the thresholds at these block boundaries; in particular $\Phi_+$ and $\Phi_-$ are \emph{atomic} registered cells that no lattice point subdivides (this is the Assumption~\ref{ass:bprime} union-of-cells discipline made explicit, and it is exactly what makes the $W^+$ enumeration below exhaustive). Let $G=[0,g)$, $g=\beta/2$, be a deep-safe block ($\eta\equiv\alpha-d$, small $d>0$ fixed below); let $\Phi_+$ (loss level $\eta=\alpha+\kappa$) and $\Phi_-$ (loss level $\eta=\alpha-\kappa$) be the two adjacent atomic band cells and $A=G\cup\Phi_+\cup\Phi_-$; and let the hot block split as $H=H_0\cup H_1$ (both at level $\eta=\alpha+\kappa$), with $H_0$ the first hot segment of \emph{$Q$-mass exactly $3\varepsilon$ in both worlds} (a lattice boundary sits after $A\cup H_0$) and $H_1$ the remaining hot mass that normalizes $Q_X$. The loss profile $\eta$ is \emph{identical and known} in both worlds. Set the weights so $w\le B$ everywhere and so that the two worlds put different \emph{target} masses on $\Phi_\pm$:
\[
  W^-:\ q(\Phi_+)=q_0-\varepsilon,\ q(\Phi_-)=q_0+\varepsilon;
  \qquad
  W^+:\ q(\Phi_+)=q_0+\varepsilon,\ q(\Phi_-)=q_0-\varepsilon,
\]
with $q_0$ chosen so $\CQ(A)=g+2q_0=\beta+\varepsilon$ (hence $\in[\beta,2\beta]$ for $\varepsilon\le\beta$), and $\varepsilon>0$ fixed below; off $A$ the two worlds agree. Because $w_k=q_k/p_k$ and $p$ is held fixed, the differing $q$ realizes differing $w$ on $\Phi_\pm$, legitimate as long as $w\in[1/B,B]$, which holds by taking $p(\Phi_\pm)$ comparable to $q_0$ and $\varepsilon\le \tfrac12 q_0(1-1/B)$ (so $w$ stays in range). Labels are Bernoulli, $L\mid X\sim\Bern(\eta(X))$ (as in the map's own Le~Cam constructions, Appendix~\ref{app:naxis}), so the source covariate marginal $p$ \emph{and} the conditional label map $L\mid X$ are determined by $(p,\eta)$ and are therefore \emph{identical} across the two worlds; only $Q_X$ on $\Phi_\pm$ differs.

\emph{Feasibility flip via the identity \eqref{eq:t3-GQ}.}
With $\eta=\alpha-d$ on $G$, $\eta=\alpha+\kappa$ on $\Phi_+$, $\eta=\alpha-\kappa$ on $\Phi_-$,
\[
  \GQ(\lambda_A)=\underbrace{-d\,g}_{G}+\kappa\,q(\Phi_+)-\kappa\,q(\Phi_-)
  =-d g+\kappa\big[q(\Phi_+)-q(\Phi_-)\big].
\]
The bracket is $-2\varepsilon$ under $W^-$ and $+2\varepsilon$ under $W^+$. Pick $d g=\kappa\varepsilon$ (i.e.\ $d=2\kappa\varepsilon/\beta\le\kappa$ for $\varepsilon\le\beta/2$). Then
\[
  \GQ(\lambda_A)=\begin{cases}-\kappa\varepsilon-2\kappa\varepsilon=-3\kappa\varepsilon<0 & (W^-,\ \text{feasible}),\\[2pt] -\kappa\varepsilon+2\kappa\varepsilon=+\kappa\varepsilon>0 & (W^+,\ \text{infeasible}).\end{cases}
\]
So under $W^-$, $\RQ(\lambda_A)=\alpha+\GQ/\CQ<\alpha$. Choose $\varepsilon=\tfrac{s}{4}$ (so the two worlds are separated in $\GQ$ by the full gap $|\GQ^{W^+}-\GQ^{W^-}|=4\kappa\varepsilon=\kappa s$, the certification risk-margin scale), giving (using $\CQ(A)=\beta+\varepsilon\le2\beta$ and $\beta\le0.6$) risk margin $-\GQ/\CQ(A)=3\kappa\varepsilon/\CQ(A)\ge \tfrac{3\kappa(s/4)}{2\beta}=\tfrac{3\kappa s}{8\beta}\ge\tfrac12\kappa s$ (the last step is $3/(8\beta)\ge1/2\iff\beta\le3/4$, which holds as $\beta\le0.6$) and coverage $\CQ(A)=\beta+\varepsilon\ge\beta$, so $\lambda_A$ is a certifiable lattice point of $W^-$.

\emph{Frontier slack $\ge s$ under $W^-$ (explicit).} The lattice boundary after $A\cup H_0$ is feasible at equality: since $H_0$ is hot ($\eta=\alpha+\kappa$) of $Q$-mass $3\varepsilon$,
\[
  \GQ(A\cup H_0)=\GQ^{W^-}(\lambda_A)+\kappa\cdot Q(H_0)=-3\kappa\varepsilon+3\kappa\varepsilon=0,
  \qquad
  \CQ(A\cup H_0)=(\beta+\varepsilon)+3\varepsilon=\beta+4\varepsilon=\beta+s.
\]
Every later lattice point appends only hot $Q$-mass at coefficient $+\kappa$, strictly \emph{raising} $\GQ$ above $0$ (risk-infeasible), so $A\cup H_0$ is the largest feasible prefix and the lattice frontier is exactly $\betastar_\lat=\beta+s$, giving floor slack $\betastar_\lat-\beta=s$. For the regular margin, the left-neighborhood of the frontier is $[\betastar_\lat-s,\betastar_\lat]=[\beta,\beta+s]$; under $W^-$ the only sub-$\alpha$ cells are $G$ ($\eta=\alpha-d$) and $\Phi_-$ ($\eta=\alpha-\kappa$), of total $Q$-mass $\CQ(G)+q(\Phi_-)=\tfrac\beta2+(q_0+\varepsilon)=\tfrac{3\beta}{4}+\tfrac{3\varepsilon}{2}<\beta$ (as $\varepsilon\le\tfrac\beta{32}<\tfrac\beta6$), so in the increasing rearrangement the whole segment $u\in[\beta,\beta+s]$ already sits at the hot level $\eta^*_Q(u)=\alpha+\kappa$. Hence $\eta^*_Q\ge\alpha+\kappa$ a.e.\ on $[\betastar_\lat-s,\betastar_\lat]$ and the regular margin $(\kappa,s_0)$ holds in left-neighborhood form, so $\lambda_{A\cup H_0}$ realizes frontier slack exactly $s$. Moreover the \emph{relaxed} frontier coincides with the lattice one: the total sub-$\alpha$ $Q$-mass is $r_-=\CQ(G)+q(\Phi_-)=\tfrac{3\beta}{4}+\tfrac{3\varepsilon}{2}$ with total positive budget $b_-=d\,\CQ(G)+\kappa\,q(\Phi_-)=\kappa\big(\tfrac{\beta}{4}+\tfrac{5\varepsilon}{2}\big)$ (using $d\,\CQ(G)=\kappa\varepsilon$), and all remaining accepted mass is hot ($\eta=\alpha+\kappa$), so $\betastar=r_-+b_-/\kappa=\beta+4\varepsilon=\beta+s=\betastar_\lat$; the regular margin and slack defined through the relaxed $\betastar$ are thus realized, with $s_0=s$.

\emph{$W^+$ infeasibility (exact enumeration over the atomic lattice).} Because the only lattice thresholds sit at the block boundaries $G,\Phi_+,\Phi_-,H_0,H_1$ and $\Phi_\pm$ are atomic, every lattice point is one of the prefixes $\{G\},\{G\cup\Phi_+\},\{A\},\{A\cup H_0\},\dots$ Under $W^+$ ($q(\Phi_+)=q_0+\varepsilon$, $q(\Phi_-)=q_0-\varepsilon$, $q_0=\tfrac\beta4+\tfrac\varepsilon2$):
\[
  \CQ(G)=\tfrac\beta2<\beta,
  \qquad
  \CQ(G\cup\Phi_+)=\tfrac{3\beta}{4}+\tfrac{3\varepsilon}{2}<\beta
  \quad(\text{since }\varepsilon\le\tfrac\beta{32}<\tfrac\beta6),
\]
so neither prefix clears the floor; the first floor-clearing lattice point is $A$ itself, where $\GQ^{W^+}(\lambda_A)=+\kappa\varepsilon>0$, i.e.\ $\RQ>\alpha$. Every later prefix appends only hot mass at coefficient $+\kappa$, so $\GQ$ only grows and risk-infeasibility persists. Hence \emph{every} coverage-$\ge\beta$ lattice point of $W^+$ has $\RQ>\alpha$, i.e.\ $W^+$ is \emph{lattice}-infeasible, which is exactly what the Le~Cam step needs (we do not claim $W^+$ is infeasible against arbitrary fractional/measurable rules; indeed $W^+$ is relaxed-\emph{feasible}, with budget frontier $\betastar(W^+)=\beta$: its sub-$\alpha$ $Q$-mass $\CQ(G)+q(\Phi_-)=\tfrac{3\beta}{4}-\tfrac\varepsilon2$ plus normalized positive budget $\big(d\,\CQ(G)+\kappa\,q(\Phi_-)\big)/\kappa=\tfrac\beta4+\tfrac\varepsilon2$ sum to $\beta$, so a fractional selector certifies at coverage $\beta$ at equality; on the lattice $\lat$, by contrast, no point certifies). The necessity is therefore for the \emph{lattice-valued} certification problem of \S\ref{sec:setup} (the paper's standing output model: every procedure outputs a point of $\lat$ or refuses), not an unrestricted-procedure information bound. (Range checks: all levels $\alpha-d,\alpha\pm\kappa\in[0.05,0.95]$ since $\alpha\in[0.3,0.7]$, $d\le\kappa$, $\kappa\le\tfrac12\min\{\alpha,1-\alpha\}$; $w\in[1/B,B]$ by the $\varepsilon$ bound; $Q_X$ normalized by the free width of $H_1$.)

\emph{Indistinguishability (this is where $K$ does \emph{not} enter).}
The source covariate sample and the (Bernoulli) labels have \emph{identical} laws in the two worlds (same $p$, same $\eta$, hence same $L\mid X\sim\Bern(\eta)$), contributing zero KL. The target covariate sample is i.i.d.\ from $Q_X$, which differs only by moving mass $\varepsilon$ between $\Phi_+$ and $\Phi_-$ inside a region of $Q$-mass $2q_0\asymp\beta$. For a single target draw, with $Q^\pm$ the two laws,
\[
  \chi^2(Q^+\,\|\,Q^-)=\sum_{\text{cells}}\frac{(q^+_k-q^-_k)^2}{q^-_k}
  =\frac{(2\varepsilon)^2}{q_0\!-\!\varepsilon}+\frac{(2\varepsilon)^2}{q_0\!+\!\varepsilon}
  \;\le\;\frac{8\varepsilon^2}{q_0-\varepsilon}\;\le\;\frac{C\varepsilon^2}{q(A)},
\]
using $q_0\ge\tfrac12(q(A)-g)\asymp q(A)$ and $\varepsilon\le q_0/2$, with $C$ absolute. Hence for $M$ i.i.d.\ target draws (the procedure's \emph{entire} target sample, $m_w+m_f$), $\KL(W^+\|W^-)\le M\,\chi^2(Q^+\|Q^-)\le C M\varepsilon^2/q(A)$. With the construction's own choice $\varepsilon=s/4$ this is $\le \tfrac{C}{16}\,M s^2/q(A)$, so $M\le c\,q(A)/s^2$ (with $c=2/C$) gives $\KL\le\tfrac18$ and $\TV\le\tfrac14$ by Pinsker. \emph{This carries no $K$: the perturbation is a single scalar mass-transfer, so its detectability depends on $\varepsilon^2/q(A)$, not on how finely the band is partitioned.}

\emph{The rate is $q(A)/s^2$, not $q(A)/(\kappa s)^2$: the $\kappa$ cancels.} The two worlds must be separated to flip feasibility; the flip is driven by the functional gap $|\GQ^{W^+}-\GQ^{W^-}|=4\kappa\varepsilon$, but the \emph{detectability} is set by the mass-transfer $\varepsilon$ alone (the $\chi^2$ depends on $\varepsilon^2/q(A)$, with no $\kappa$). To realize a slack-$s$ feasible world one needs $\varepsilon=\Theta(s)$ (so the coverage frontier sits $\Theta(s)$ above $\beta$), independent of $\kappa$; hence the detection threshold is $\Theta(q(A)/s^2)$ and the risk margin $\kappa\varepsilon=\Theta(\kappa s)$ rides along automatically. This is precisely the floor $m$-axis of Theorem~\ref{thm:lower} ($m\gtrsim\beta/s^2$, the $\sqrt\beta$ being intrinsic to the perturbable block of mass $\asymp\beta$), localized to $q(A)\asymp\beta$, no $\kappa$, no $K$.

\emph{Le~Cam step.} If a valid $\cA$ certified $W^-$ w.p.\ $\ge1/2$, then on the $\TV\le1/4$-close $W^+$ it would certify (some lattice point) w.p.\ $\ge1/4>\delta$, contradicting validity in the lattice-infeasible $W^+$ (there every lattice point it could output is invalid: coverage-$\ge\beta$ points violate $\RQ\le\alpha$, sub-floor points violate $\CQ\ge\beta$). Hence $\cA$ refuses the feasible $W^-$ w.p.\ $\ge1/2$ whenever $M\le c\,q(A)/s^2$. \hfill$\square$
\end{proof}

\begin{remark}[The target core is the floor $m$-axis localized, not a nuisance axis]
\label{rmk:t3-isthemaxis}
Theorem~\ref{thm:t3-target} is the same Le~Cam two-point mechanism as the $m$-axis of Theorem~\ref{thm:lower} (move target mass out of a safe block at $\chi^2$ cost $\asymp\varepsilon^2/(\text{block mass})$), now read on the localized accepted mass $q(A)\asymp\beta$, giving the identical rate $\Omega(q(A)/s^2)=\Omega(\beta/s^2)$ with neither $\kappa$ nor $K$. So the ``target nuisance axis'' of \eqref{eq:t3-u2}, stripped of its estimator-specific $K$, is the floor $m$-axis the paper already proves; it is not a new necessary phenomenon. (The histogram's $K\,q(A)/(\kappa s)^2$ is a \emph{sufficient} split size for full-vector recovery; this lower bound proves a $K$-free, $\kappa$-free \emph{necessary component} $\Omega(q(A)/s^2)$ for the certification \emph{task}; it does not establish sufficiency, nor rule out a stronger lower bound elsewhere in the class.)
\end{remark}

\subsection{The labeled-source axis is the existing $n$-axis, with no $K$}
\label{sec:t3-source}

The certificate also needs source data, and the established necessary component is $K$-free. \emph{This subsection proves no new source-channel theorem}; it records that the existing $n$-axis of Theorem~\ref{thm:lower} already supplies a $K$-free labeled-source lower bound, together with the standard shifted-slice envelope that surfaces its $B$-dependence. The aggregate risk estimand is $\GQ=\E_P[w\Sacc(L-\alpha)]$, a single bounded linear functional with population variance proxy $\bar V=\Var_P(w\Sacc(L-\alpha))\le\E_P[w^2\Sacc]$. A natural estimator of its sign at margin $\kappa s$ from $n$ i.i.d.\ source label-draws has \emph{estimator} variance $\bar V/n$; the matching \emph{lower} bound, however, is not this variance heuristic but the $n$-axis Le~Cam pair of Theorem~\ref{thm:lower} (Appendix~\ref{app:naxis}: raise the loss on an inframarginal accepted slice; per-sample KL $\asymp\rho\Delta^2$ with $\Delta\asymp\kappa s/\rho$ over a slice of accepted weighted mass), which, inheriting that theorem's hypotheses (regular margin, local regime $s\le c_0\beta$, $\delta\le1/8$), is, in its shiftless $w\equiv1$ form, already a formal lower bound: for that pair $\bar V_{\rm pair}\asymp\beta$, giving
\[
  n\;=\;\Omega\!\Big(\frac{\bar V_{\rm pair}}{(\kappa s)^2}\Big)=\Omega\!\Big(\frac{\beta}{(\kappa s)^2}\Big).
\]
We use this \emph{as a reduction}, not a fresh proof, and do \emph{not} claim a fully instance-dependent theorem for arbitrary $\bar V$ by renaming the variance. An analogous \emph{shifted} worst-slice construction (placing the inframarginal slice on cells where $w\asymp B$, admissible in $\cW_B$) is \emph{expected} to realize $\bar V_{\rm pair}\asymp B^2 p(A)$ (consistent with the upper envelope $\E_P[w^2\Sacc]\le B^2 p(A)\wedge B\,q(A)$, the $B\,q(A)$ form from $w^2\Sacc\le B\,w\Sacc$ with $\E_P[w\Sacc]=q(A)$), but that extension is \emph{stated, not re-derived here}. \textbf{This $\bar V/(\kappa s)^2$ is the \emph{oracle} ($w$-based) analogue of U2's (estimated-weight) risk-test radius} (Theorem~\ref{thm:U2}: $n_r\gtrsim\bar V_{\lambda_s}\log(|\lat|/\delta)/(\kappa s)^2$ with the \emph{estimated}-weight proxy $\bar V_{\lambda_s}\le\E_P[\hat w^2 S_{\lambda_s}]\le B\,\E_P[\hat w S_{\lambda_s}]$; on the good event $G_w$ the two agree up to the small nuisance bias $\le\kappa s/32$, Appendix~\ref{app:bprime}) and the same object as Claim~1's $n$-axis; it carries \emph{no} $K$. So the source-covariate weight-split $n_w\gtrsim B^2K p(A)/(\kappa s)^2$ of \eqref{eq:t3-u2} is not charged by the certification task as a \emph{separate} axis: by the identity \eqref{eq:t3-GQ} feasibility is $p$-independent, so a procedure need not estimate $p$; what it must pay on the labeled side is the $K$-free risk-test variance $\bar V$, already on the $n$/$n_r$ axis. (The one caveat, whether \emph{unknown $\eta$} reintroduces a $K$-dependent labeled cost, is the open micro-question of \S\ref{sec:t3-verdict}.)

\subsection{U2's linear $K$ is a full-vector estimation price (architecture-specific, not proved necessary)}
\label{sec:t3-Kartifact}

The $K$ in both U2 split sizes enters through Proposition~\ref{prop:I1}, which bounds the \emph{full-vector} $\ell^1$ weight error $\E_P[|\hat w-w|\mathbf1_A]\le 2\sqrt{K q(A)/m_w}+2B\sqrt{K p(A)/n_w}$. The $\sqrt{K}$ is the $\ell^1$ price of estimating all $K_A$ cell-ratios simultaneously (a $K$-parameter object), inherited by Algorithm~\ref{alg:bprime} because its risk test plugs the whole vector $\hat w$ into a reweighted source mean. But certification needs only the \emph{scalar} $\GQ$. The two lower bounds above are \emph{$K$-free}, which strongly suggests the $K$ is not a certification cost, but the \emph{target} bound (Theorem~\ref{thm:t3-target}) is proved in a \emph{known-$\eta$ submodel} (its construction holds $\eta$ fixed, putting all the signal in $q$), while the \emph{labeled-source} bound is the existing \emph{varying-$\eta$} $n$-axis read as a reduction (\S\ref{sec:t3-source}); neither resolves the joint unknown-$\eta$ alignment, and a $K$-free target bound in the known-$\eta$ submodel does \emph{not} by itself rule out a $K$-dependent lower bound in the full unknown-$\eta$ class. We therefore state the conclusion carefully:
\begin{itemize}
\item \emph{Target side.} When $\eta$ is known on the cells, the scalar $\sum_{k\in A}q_k(\eta_k-\alpha)$ is estimable directly by the empirical mean of $(\eta-\alpha)\mathbf1_A$ over target covariates, no $K$ (the $\ell^1$ vector error over-counts by aggregating $K$ cellwise absolute deviations whose \emph{signed} sum is all $\GQ$ needs). On the Theorem~\ref{thm:t3-target} submodel $|\eta-\alpha|\le\kappa$ on the critical cells, so this estimator has variance $\le\kappa^2 q(A)/M$ and detects the feasibility gap $\asymp\kappa s$ once $M\gtrsim\kappa^2 q(A)/(\kappa s)^2=q(A)/s^2$ (for a fixed $A$ at constant confidence; confidence and lattice-simultaneity logarithms suppressed), matching the $\Omega(q(A)/s^2)$ lower bound (the $\kappa$ cancels). For \emph{general} known $\eta$ with $|\eta-\alpha|=\Theta(1)$ the direct estimator still removes $K$ but retains $\kappa^{-2}$, giving $q(A)/(\kappa s)^2$; either way the histogram's $K\,q(A)/(\kappa s)^2$ carries the extra factor $K$.
\item \emph{Source side.} The aggregate $\E_P[w\Sacc(L-\alpha)]$ has variance $\bar V\le\E_P[w^2\Sacc]\le B^2 p(A)$ (no $K$); the histogram's $B^2K p(A)$ is a factor $\asymp K$ above it. By the identity \eqref{eq:t3-GQ} the $p$-estimation behind the $\sqrt{Kp(A)/n_w}$ term is not needed as a separate \emph{population} feasibility functional.
\end{itemize}
\textbf{Hence \eqref{eq:t3-u2} is a \emph{sufficient} split-size pair whose $K$-factor is not \emph{proved} necessary, and whose source-covariate split carries no separate \emph{population} necessity (population feasibility carries no $p$-functional, so the task does not charge a standalone $p$-estimation axis; any matching $n_w$ necessity would route through the unknown-$\eta$ alignment).} What is \emph{not} settled is whether a procedure that must be valid for \emph{unknown} $\eta$ can avoid the $K$: unknown $\eta$ couples the source-label and target-covariate channels (the alignment mechanism of Theorem~\ref{thm:modelB-incon}), and could in principle re-introduce a $K$-dependent labeled cost. So the precise status is ``$K$ is architecture-specific and reducible \emph{when $\eta$ is resolvable}; whether it is reducible uniformly over unknown-$\eta$ Model-B$'$ is open'' (\S\ref{sec:t3-verdict}), not the unconditional ``U2 overcharges by $K$''.

\begin{remark}[What the $K$ buys, and why U2 still pays it]
\label{rmk:t3-whatKbuys}
The histogram's $K$ is not wasted: it yields a \emph{simultaneous} guarantee over the whole lattice $\{A_\lambda\}$, a computable $\hat w$ for the axis-attribution diagnostics of \S\ref{sec:notcertify}, and it sidesteps needing $\eta$ \emph{known} on cells (it learns the reweighting from covariates and lets the empirical-Bernstein risk test absorb the residual). The $K$-free scalar route requires $\eta$ resolved per cell (folded into the labeled axis at variance $\bar V$) or a direct functional estimator. The two are not contradictory: U2 is a valid \emph{sufficient} condition, and Theorem~\ref{thm:t3-target} shows its $K$-factor is not \emph{demonstrably} necessary, with the honest caveat that the matching $K$-free \emph{upper} bound for unknown $\eta$ is not in hand.
\end{remark}

\subsection{Verdict, exact reach, residual gap}
\label{sec:t3-verdict}

\paragraph{Verdict: \texttt{PARTIAL}. The $K$-bearing histogram rate is \emph{not proved necessary} and is provably architecture-specific in part; a $K$-free necessary core is established (for lattice-valued certification: \S\ref{sec:setup}'s standing output model, every procedure outputting a point of $\lat$ or refusing; the present bad world $W^+$ is relaxed-\emph{feasible}, so the lattice restriction is genuinely load-bearing here, unlike the relaxed-infeasible constructions of Theorems~\ref{thm:lower} and~\ref{thm:modelB-incon}); whether full unknown-$\eta$ Model-B$'$ forces $K$ back is the open edge.}

\emph{What is settled.}
\begin{itemize}
\item \textbf{No \emph{population-level} lower bound can force the source-covariate weight-split (via the identity); this does not by itself exclude a finite-sample unknown-$\eta$ lower bound (the open edge below).} By \eqref{eq:t3-GQ} the population feasibility and coverage are functionals of $(q,\eta)$ alone; the source marginal $p$ never enters. So the $n_w\gtrsim B^2Kp(A)/(\kappa s)^2$ \emph{covariate} split of \eqref{eq:t3-u2} is not a \emph{population-level} information-theoretic necessity for the certification task; it is an artifact of the reweighting estimator (the population task does not charge for $p$-estimation as a separate axis, though $p$ still affects the finite-sample labeled cost through how often labels visit each cell, and a procedure may still estimate it). (The necessary labeled-source cost is the $K$-free risk-test variance $\bar V$, \S\ref{sec:t3-source}.)
\item \textbf{A $K$-free, $\kappa$-free target lower bound holds (Theorem~\ref{thm:t3-target}, rigorous, unconditional within the construction).} $\Omega(q(A)/s^2)$ total target covariate samples are necessary: the floor $m$-axis of Theorem~\ref{thm:lower} localized to $q(A)\asymp\beta$, \emph{not} a new nuisance axis (Remark~\ref{rmk:t3-isthemaxis}). The histogram's $K\,q(A)/(\kappa s)^2$ is a factor $\asymp K/\kappa^2$ above the proven necessity.
\item \textbf{A $K$-free labeled-source lower bound holds by reduction, at fixed $B$ (\S\ref{sec:t3-source}).} The existing $n$-axis gives $\Omega(\bar V/(\kappa s)^2)$ with $\bar V\le\E_P[w^2\Sacc]$, $K$-free (U2's risk-test radius and Claim~1's $n$-axis). The explicit worst-slice envelope $\bar V\asymp B^2p(A)$ (and the form $\bar V\le Bq(A)$) is \emph{stated} as a worst-slice placement, not re-derived in full here.
\end{itemize}

\emph{What is \emph{not} settled (the honest open edge).}
The $K$-free \emph{target} lower bound is proved in the \emph{known-$\eta$ submodel} (the construction fixes $\eta$ to isolate $q$); the $K$-free labeled-source bound is the existing \emph{varying-$\eta$} $n$-axis read as a reduction (\S\ref{sec:t3-source}). Neither proves the histogram $K$ is unnecessary for the \emph{full} Model-B$'$, where $\eta$ is unknown and must itself be learned on the cells:
\begin{itemize}
\item \textbf{Lower-bound side.} Unknown $\eta$ couples the source-label channel (which sees $\eta$) and the target-covariate channel (which sees $q$); their alignment is exactly the correlation-detection mechanism that makes Model-B inconsistent (Theorem~\ref{thm:modelB-incon}, Lemma~\ref{lem:pair-chi2}). It is conceivable (though we have neither a proof nor a refutation) that a fully unknown-$\eta$ construction reintroduces a $K$-dependent lower bound. So we do \emph{not} claim the $K$ is unnecessary in full Model-B$'$; we claim it is unnecessary \emph{once $\eta$ is resolvable}, and that no lower bound can force the source-covariate ($p$-estimation) split, the latter unconditionally at the population level, since feasibility carries no $p$-functional. (We do not assert a finite-sample unknown-$\eta$ certificate that avoids all $p$-estimation; constructing one is part of the open upper-bound edge.)
\item \textbf{Upper-bound side.} Matching the $K$-free lower bounds requires a \emph{direct scalar-functional} certificate (average $(\eta-\alpha)\mathbf1_A$ on the target side, or fold a per-cell $\eta$-plug-in into the labeled axis) that is valid uniformly over the $\cK$-measurable Model-B$'$ subclass $\{W\in\cW_B : w\ \cK\text{-measurable}\}$ (not all of $\cW_B$, which is inconsistent by Theorem~\ref{thm:modelB-incon}), lattice-simultaneous, and genuinely $K$-free in finite samples. We conjecture it exists but do not construct it here.
\end{itemize}
Thus the open item is two-sided and sharply located: \emph{does unknown-$\eta$ Model-B$'$ admit a $K$-free certificate (upper), and if not, is there a $K$-dependent lower bound (lower)?} The contribution of Theorem~\ref{thm:t3-target} is to show the question is \emph{not} settled by the histogram split sizes: their $K$ is a sufficient-condition factor not proved necessary, the established necessary core is $K$-free and coincides with the map's own two axes localized, and no lower bound charges a separate source-covariate ($p$-estimation) split (population feasibility carries no $p$-functional).

\paragraph{Net for the paper.}
The honest upgrade is to replace ``nuisance necessity open'' with:
\emph{the Model-B$'$ histogram split sizes are \textbf{not proved} minimax necessary, and are partly architecture artifacts: by the identity $\GQ=\sum_k q_k(\eta_k-\alpha)$, population feasibility is $p$-independent, so there is no separate \emph{population} source-covariate ($p$-estimation) necessity (any matching $n_w$ necessity would have to route through the unknown-$\eta$ alignment, not a standalone $p$-functional), and the factor $K$ on both axes is the $\ell^1$ full-weight-vector price of Proposition~\ref{prop:I1}. The information-theoretically necessary core is $K$-free and carries no separate nuisance axis: a target term $\Omega(q(A)/s^2)$ (Theorem~\ref{thm:t3-target}; the floor $m$-axis localized, $\kappa$ cancels) and a labeled-source term $\Omega(\bar V/(\kappa s)^2)$, $\bar V\le\E_P[w^2\Sacc]$ with worst-slice instances \emph{expected} to attain $\bar V\asymp B^2p(A)$ (the existing $n$-axis). The genuinely open edge, now sharply located, is whether the \textbf{full unknown-$\eta$} Model-B$'$ admits a $K$-free certificate (a direct scalar-functional upper bound, in the spirit of covariate-shifted mean estimation \citep{adil26a}) or instead forces a $K$-dependent lower bound through the unknown-$\eta$ source$\leftrightarrow$target alignment (Theorem~\ref{thm:modelB-incon}'s mechanism).}

\paragraph{One-line summary.}
\emph{B$'$ nuisance ``necessity'' is mostly negative: there is no separate \emph{population} source-covariate necessity (population feasibility is $p$-independent, $\GQ=\sum_k q_k(\eta_k-\alpha)$; any matching $n_w$ necessity would route through the unknown-$\eta$ alignment), and the histogram $K$ is an $\ell^1$ full-vector price matched from below by a $K$-free, $\kappa$-free target bound $\Omega(q(A)/s^2)$ (the map's localized $m$-axis) plus the $K$-free labeled $n$-axis; the one remaining open edge is whether unknown-$\eta$ Model-B$'$ admits a matching $K$-free certificate or reintroduces $K$ via the unknown-$\eta$ alignment, a two-sided question this draft localizes rather than fully closes.}

\end{document}